\newcommand{\inc}{\(\uparrow\)}
\newcommand{\dec}{\(\downarrow\)}
\newtheorem{theorem}{Theorem}
\newtheorem{corollary}{Corollary}
\newtheorem{definition}{Definition}
\newtheorem{proposition}[theorem]{Proposition}
\theoremstyle{remark}
\newtheorem{remark}{Remark}[section]
\title{Sampling via Gaussian Mixture Approximations}
\author{Yongchao Huang \footnote{Author email: yongchao.huang@abdn.ac.uk}}
\date{July 2025}
\begin{document}

\maketitle

\begin{abstract}
We present a family of \textit{Gaussian Mixture Approximation} (GMA) samplers for sampling unnormalised target densities, encompassing \textit{weights-only GMA} (W-GMA), \textit{Laplace Mixture Approximation} (LMA), \textit{expectation-maximization GMA} (EM-GMA), and further variants. GMA adopts a simple two-stage paradigm: (i) initialise a finite set of Gaussian components and draw samples from a proposal mixture; (ii) fit the mixture to the target by optimising either only the component weights or also the means and variances, via a sample-based KL divergence objective that requires only evaluations of the unnormalised density, followed by stratified resampling. The method is gradient-free, and computationally efficient: it leverages the ease of sampling from Gaussians, efficient optimisation methods (projected gradient descent, mirror descent, and EM), and the robustness of stratified resampling to produce samples faithful to the target. We show that this optimisation-resampling scheme yields consistent approximations under mild conditions, and we validate this methodology with empirical results demonstrating accuracy and speed across diverse densities.
\end{abstract}

\tableofcontents

\newpage
\section{Introduction} \label{sec:intro}

We are concerned with the Bayesian inference or sampling problem \footnote{Sampling based inference finds a way to generate samples or quasi samples which, in the limit, represent the target distribution $p(\mathbf{z})$.}: how to effectively and efficiently draw samples from a distribution, called the target distribution \footnote{In this and many other contexts, we specifically refer distribution to \textit{probability density function} (PDF).}, $p(\mathbf{z})$, based which inference can be performed. The target $p(\mathbf{z})$ can be a complex distribution, for example, a posterior distribution derived from Bayes' rule:
\[
p(\mathbf{z} | \mathcal{D}) = \frac{p(\mathcal{D} | \mathbf{z}) p(\mathbf{z})}{p(\mathcal{D})}
\]
where $\mathcal{D}$ represents data, $p(\mathbf{z})$ is the prior distribution, $p(\mathcal{D} | \mathbf{z})$ is the likelihood, $p(\mathcal{D})=\int p(\mathcal{D} | \mathbf{z}) p(\mathbf{z}) d \mathbf{z}$ is the normalising constant (also termed \textit{evidence} or \textit{marginal likelihood}). In Bayesian inference, the posterior distribution $ p(\mathbf{z} | \mathcal{D}) \propto p(\mathcal{D} | \mathbf{z}) p(\mathbf{z}) $ often exhibits complexity (e.g. a hierarchical Bayesian model), such as multi-modality or high dimensionality, rendering direct evaluation \footnote{For example, likelihood may be expensive to evaluate due to e.g. large data volume or involving solving dynamics \cite{huang_classification_2022}.} or sampling intractable. This is particularly the case when the normalising constant $p(\mathcal{D})=\int p(\mathcal{D} | \mathbf{z}) p(\mathbf{z}) d \mathbf{z}$ is analytically intractable due to e.g. high dimensional integration. Approximate inference methods such as Monte Carlo (e.g. MCMC \cite{metropolis_equation_1953,hastings_monte_1970, neal1993probabilistic}) and variational inference (VI \cite{jordan_introduction_1999,kingma_auto-encoding_2022,ranganath_black_2014}) are commonly employed to tackle this. MCMC methods can be gradient-based (e.g. HMC \cite{duane_hybrid_1987}, LMC \cite{roberts_exponential_1996}, etc) or gradient-free (e.g. MH \cite{hastings_monte_1970}, nested sampling \cite{skilling_nested_2004}, slice sampling \cite{Neal2003}, etc), while most VI methods are gradient-based \footnote{
VI turns Bayesian inference into an optimisation problem:
$$
\text{Find } q^*(\mathbf{z}) = \arg\min_{\theta} KL(q_{\theta}(\mathbf{z}) \| p(\mathbf{z} \mid \mathcal{D}))
$$
or equivalently, maximize the ELBO (evidence lower bound):
$$
q^*(\mathbf{z}) = \arg\max_{\theta} \mathcal{L}(q_{\theta}) = \arg\max_{\theta} \mathbb{E}_{q_{\theta}(\mathbf{z})}[\log p(\mathcal{D}, \mathbf{z}) - \log q_{\theta}(\mathbf{z})]
$$
Optimizing $\mathcal{L}(q_{\theta})$ via e.g. gradient-based routines requires computing gradients of the objective \textit{w.r.t.} the parameters $\theta$, usually via stochastic gradient ascent. Common VI methods include mean-field VI \cite{jordan_introduction_1999}, black-box VI \cite{ranganath_black_2014}, amortized VI \cite{kingma_auto-encoding_2022}, particle-based VI such as SVGD \cite{liu_stein_2019}, etc. Gradient-free VI methods include using Gaussian process surrogates \cite{hernandez-lobato_probabilistic_2015}, evolutionary strategies for ELBO optimisation, conditional mixture networks \cite{heins_gradient-free_2025}, etc.}. Gradient-based, specifically, score-based \footnote{Score used in machine learning, in contrast to statistics, refers to $\nabla_{\mathbf{z}} \log p(\mathbf{z})$.} methods get around of the problem of deriving or estimating the normalising constant, as the constant term is eliminated taking the gradient $\nabla_{\mathbf{x}} \log p(\mathbf{x}) = \nabla_{\mathbf{x}} \log \hat{p}(\mathbf{x})$ where $\hat{p}(\mathbf{x})$ is the un-normalised density. We can then exploit the score, i.e. plugging it into the Hamiltonian or Langevin dynamics or other gradient flow methods (e.g. SVGD), to arrive at an approximate parametric (classic VI) or non-parametric (MCMC and ParVI \footnote{Particle-based VI (ParVI) methods such as SVGD \cite{liu_stein_2019}, SMC \cite{doucet_introduction_2001}, EVI \cite{wang_particle-based_2021}, EParVI \cite{huang_electrostatics-based_2024}, MPM-ParVI \cite{huang_variational_2024-1}, SPH-ParVI \cite{huang_variational_2024}, are gaining popularity.}) representation of the target density $p(\mathbf{x})$.

MCMC methods are equipped with theoretical guarantees (e.g. invariant distribution and ergodicity \cite{o_ruanaidh_numerical_1996}), given proper setup and sufficiency of simulation time. VI methods, either approximating the target empirically (ParVI) or parametrically, are generally fast. Both classes of methods can be used in low to high dimensions, however, they vary in their accuracy \footnote{The accuracy of approximating a distribution can be measured using distributional distances, depending on if samples are available, such as KL divergence \cite{kullback_information_1951}, Jensen-Shannon divergence \cite{lin1991divergence}, Wasserstein distances \cite{villani_optimal_2009}, kernelized stein discrepancy \cite{liu_kernelized_2016}, maximum mean discrepancy \cite{gretton2007kernel}, etc.} and efficiency. MCMC approaches generally require long-time runs \cite{geyer_practical_1992,dasgupta_correlation_2000,neal1993probabilistic,brooks2011handbook} and possibly some kind of tuning of hyper-parameters such as discretization step size \cite{homan_no-u-turn_nodate}, while VI methods can be less accurate \cite{turner2011two,blei_variational_2017,wainwright_graphical_2008}. 

\paragraph{Function approximation}
Function approximation is a fundamental technique in statistics and machine learning \footnote{The universal approximation theorem \cite{cybenko_approximation_1989,hornik_multilayer_1989,augustine_survey_2024}, for example, states that feedforward neural networks can approximate any continuous function on a compact domain, given sufficient width and appropriate activation functions. Deep neural networks can also be universally used to approximate probability distributions \cite{lu_universal_2020}.} \cite{cybenko_approximation_1989,hornik_multilayer_1989,hornik_new_1993}, enabling the representation of complex functions such as probability density functions (PDFs), through simpler basis functions. A general function $f(\mathbf{x})$ can be approximated as $f_N(\mathbf{x}) = \sum_{i=1}^N w_i \cdot \phi_i(\mathbf{x}; \theta_i)$, where $\phi_i(\mathbf{x}; \theta_i)$ are basis functions (e.g. sigmoids, B-splines, or Fourier exponentials), $w_i$ are weights, and $\theta_i$ are parameters specific to each basis function.
For example, sigmoidal basis function \footnote{A sigmoid function (s-shaped) is typically a bounded, measurable function satisfying $ \phi(z) \to 1 $ as $ z \to \infty $ and $ \phi(z) \to 0 $ as $ z \to -\infty $, e.g. the logistic sigmoid $ \sigma(x) = \frac{1}{1 + e^{-x}} $.}. $\phi_i(a_k \cdot x + b_k)$, parameterized by $a_k \in \mathbb{R}^d$ and $b_k, c_k \in \mathbb{R}$, are widely used in feedforward neural networks to approximate arbitrary continuous functions on bounded sets \cite{barron1993universal}. B-splines, which are piecewise polynomials with local support, offer smooth and numerically stable approximations by controlling the degree and knot placement \cite{Deboor1978splines,yang_tracking_2023}. Fourier-based methods represent functions via integrals or discrete sums of complex exponentials, capturing periodic or complex patterns effectively. 
Gaussian mixture models (GMMs), or in general mixture distributions, are a widely used statistical model for density approximation - they can approximate arbitrary probability density functions (PDFs) with sufficient number of components \cite{li_estimation_1999,norets_approximation_2010,morningstar_automatic_2020}. GMMs, tracing back to the early work of Pearson \cite{pearson1894contributions}, with the basis $\phi_i(\mathbf{x}; \theta_i)=\mathcal{N}(\mathbf{x};\boldsymbol{\mu}_i,\Sigma_)$ with normalised non-negative weights $w_i$, admit multi-modes and have been widely used to approximate complex densities \cite{bishop_pattern_2006,blei_variational_2017}. Research has been focused on developing efficient and accurate estimation techniques, e.g. maximum likelihood and the expectation-maximization (EM) algorithm \cite{dempster_maximum_1977}, and their convergence properties \cite{west1993approximating,li_mixture_1999,balakrishnan_statistical_2017,yan_convergence_2017,zhao_statistical_2020,dasgupta_learning_nodate,segol_improved_2021}. These approximation techniques are central to density estimation and sampling addressed in this paper.   

Machine learning-based black-box and non-parametric methods can also be used for function approximation (e.g. approximating a regressor \cite{huang_rl_2025} or classifier \cite{huang2025rlclassifier}); they can learn complex mappings from inputs to outputs without assuming a fixed, predefined functional form \cite{bishop_pattern_2006}. Non-parametric approaches such as kernel regression \cite{nadaraya_estimating_1964,watson_smooth_1964}, Gaussian processes \cite{rasmussen_gaussian_2004}, and splines \cite{Deboor1978splines,yang_tracking_2023} allow the model complexity to grow with the data, adapting flexibly to intricate (e.g. highly non-linear) patterns while trading interpretability for adaptability. Many machine learning models, e.g. neural networks \cite{hornik_multilayer_1989}, tree-based models \cite{breiman_classification_2017}, and support vector machines \cite{cortes_support-vector_1995}, operate as black-box function approximators \cite{lu_universal_2020}, capturing high-dimensional, nonlinear relationships directly from data but may offer limited transparency into their internal workings. For example, a regression function can be approximated by a reinforcement learning model which learns a policy to map input to output \cite{huang_rl_2025}.

\paragraph{GMM based sampling}
Bringing together the ideas of PDF approximation and sampling, one naturally comes up with the idea of fitting a complex, possibly unnormalised $p(\mathbf{z})$ using a GMM
\footnote{
To ensure $q_{\boldsymbol{\theta}}(\mathbf{z})$ a valid PDF, it should satisfy     
$q_{\boldsymbol{\theta}}(\mathbf{z}) \geq 0$ 
and $\int_{\mathbb{R}^d} q_{\boldsymbol{\theta}}(\mathbf{z}) d\mathbf{z} = \int_{\mathbb{R}^d} \sum_{i=1}^N w_i \mathcal{N}(\mathbf{z} \mid \boldsymbol{\mu}_i, \boldsymbol{\Sigma}_i) d\mathbf{z} = \sum_{i=1}^N w_k \int_{\mathbb{R}^d} \mathcal{N} (\mathbf{x} \mid \boldsymbol{\mu}_i, \boldsymbol{\Sigma}_i) d\mathbf{z} = \sum_{k=1}^K w_k = 1$.
} 
$q_{\boldsymbol{\theta}}(\mathbf{z}) = \sum_{i=1}^N w_i \mathcal{N}(\mathbf{z} | \boldsymbol{\mu}_i, \boldsymbol{\Sigma}_i)$, i.e. $p(\mathbf{z}) \approx q_{\boldsymbol{\theta}}(\mathbf{z})$, then finding a way to generate samples from $q_{\boldsymbol{\theta}}(\mathbf{z})$, hoping that sampling from the approximate density can either be easier, more accurate or advantageous. A standard routine for generating samples from $q_{\boldsymbol{\theta}}(\mathbf{z})$ is again using Monte Carlo methods \cite{newman_fast_2025}, which doesn't take advantage of the convenience of sampling from each Gaussian component. In fact, it adds an extra layer of complexity compared to direct sampling $p(\mathbf{z})$. The question them arises: how to efficiently sample the surrogate density $q_{\boldsymbol{\theta}}(\mathbf{z})$? We come up with the idea that, we first generate samples from each Gaussian component, then ensemble these samples with stratified sampling \footnote{An analysis of stratified sampling can be found in Appendix.\ref{app:simple_and_stratified_random_sampling}.}, i.e. each sample in the pool has a component label and its probability to be sampled is proportional to its (normalised) component weight (samples come from the same Gaussian component is uniformly sampled). It turns out that, these ensembled samples resemble the target shape, i.e. they are indeed samples from the target density. This two-stage sampling strategy, while being simple and straightforward, has not been discussed elsewhere to the best of the author's knowledge. The contribution of this Gaussian mixture approximation (GMA) based sampling method lies in its novelty: it's simple, flexible \footnote{Being simple and flexible sometimes contradict with each other. For example, having too many tunable hyper-parameters enables a method flexibly adapt to different scenarios, while making it more complicated and less usable. Simplicity of this GMA sampling method refers to its simple logic and ease of implementation, while flexibility points to the fact that the user has the space for moving the Gaussian components around (i.e. adjusting the means and variances to adapt to specific tasks) in the initialisation stage.}, efficient, and accurate \footnote{Claiming high efficiency and accuracy at the same time, again, can be dangerous, as there is no free lunch. However, as we observe from later experiments, GMA sampling does offer reasonably well (moderate) efficiency and accuracy.}. 

This work introduces a Gaussian mixture approximation (GMA) based method for sampling arbitrary density (e.g. unnormalised PDFs). Based on the universal density approximation property of GMMs \cite{li_mixture_1999}, we issue a GMM with finite, fixed components, sample each component, and fit the target density by only optimising the weights via minimizing the KL divergence (evaluated at fixed sample positions). Then we re-sample the samples as per the categorical distribution of the optimised weights using stratified sampling - these samples is guaranteed to be distributed as per the target distribution. The GMA sampling method features a set of fixed Gaussian components of user choice $\mathcal{N}(\mathbf{z} | \boldsymbol{\mu}_i, \boldsymbol{\Sigma}_i)$, as well as drawing finite, fixed samples from each Gaussian component for discrepancy evaluation. Re-sampling the sample pool is easy and straightforward: the optimised weights are used to weight each basket, and within a basket the samples are uniformly sampled. This GMA sampling method differs itself from previous work by its simplicity, efficacy and efficiency. Empirical and theoretical evidences confirm its validity.

\section{WGMA sampling: methodology} \label{sec:methodology}

Here we propose a simple weights-only GMA method, balancing efficiency and efficacy, for inferring the target. The idea is intuitive: we exploit the fact that, drawing samples from Gaussian (or more strictly, standard Gaussian) densities are fast and straightforward \footnote{Generating samples from Gaussian are both theoretically and practically mature for low to high dimensional Gaussians. Direct sampling methods such as Box-Muller transform (for univariate Gaussians) which transforms two uniform random variables into two standard normal variables:
$$
z_1 = \sqrt{-2\log U_1} \cos(2\pi U_2), \quad z_2 = \sqrt{-2\log U_1} \sin(2\pi U_2)
$$
where $U_1, U_2 \sim \mathcal{U}(0,1)$.  
Also, Cholesky decomposition: for $\boldsymbol{z} \sim \mathcal{N}(\boldsymbol{\mu}, \Sigma)$, sample $\boldsymbol{\epsilon} \sim \mathcal{N}(0, I)$, then:
$$
\boldsymbol{z} = \boldsymbol{\mu} + L\boldsymbol{\epsilon}, \quad \text{where } L \text{ is such that } LL^\top = \Sigma
$$
Others include eigen decomposition and reparameterization trick (as used in variational inference).
In practice, one can use \textit{np.random.randn()} in NumPy, or \textit{torch.randn()} in PyTorch, to sample from normals.}, and approximate the unnormalised target $\bar{p}(\mathbf{z})$ using a dynamically evolving Gaussian mixture model (GMM \cite{dempster_maximum_1977,dempster_maximum_1977,hand_mixture_1989}) with $N$ Gaussian components:

\begin{equation} \label{eq:GMM}
    q_{\mathbf{w}}^{(k)}(\mathbf{z}) = \sum_{i=1}^N w_{i}^{(k)} \cdot \mathcal{N}(\mathbf{z}; \boldsymbol{\mu}_i,\Sigma_i)
\end{equation}
in which $q_{\mathbf{w}}^{(k)}$ represents the configuration of the GMM at iteration $k$. $\mathbf{w} = [w_1^{(k)}, w_2^{(k)}, \ldots, w_N^{(k)}]^\top \in \mathbb{R}^N$ is the weight vector, with $0 \leq w_i^{(k)} \leq 1$ for all $ i = 1, 2, \ldots, N $, and the weights are subject to the normalization constraint $\sum_{i=1}^N w_i^{(k)} = 1$ to ensure $ q_{\mathbf{w}}^{(k)}(\mathbf{z}) $ is a valid probability density function.
$\mathcal{N}(\mathbf{z}; \boldsymbol{\mu}_i, \Sigma_i)$ is the probability density function of a multivariate normal distribution with mean $\boldsymbol{\mu}_i$ and covariance matrix $\Sigma_i$.

We first draw $M$ samples $\{\mathbf{s}_{i,j}\}, i=1,2,...,N, j=1,2,...,M$ from each Gaussian component $\mathcal{N}(\mathbf{z}; \boldsymbol{\mu}_i,\Sigma_i)$. Unlike particle-based VI methods which evolve a set of samples (particles) towards the target, we fix these sample positions, and dynamically optimise the GMM weights $\{w_{i}^{(k)}\}_{i=1}^N$. The initial weights $\{w_{i}^{(0)}\}_{i=1}^N$ can be randomly or uniformly initalised, and we aim to evolve $\mathbf{w}$ to minimize the \textit{exclusive} \footnote{Minimizing the reverse/exclusive $KL(q_{\mathbf{w}} \| p)$ w.r.t. $\mathbf{w}$ yields the \textit{zero-forcing/mode-seeking} behaviour, while minimizing the forward/inclusive $KL(p \| q_{\mathbf{w}})$ yields the \textit{mass-covering/mean-seeking} behaviour. See e.g. \cite{Le2017reversekl,vieira2014kldivergence} for an explanation.} KL divergence between the GMM and the target (see Appendix.\ref{app:gradient_of_exclusive_KL_divergence} for details):

\begin{equation} \tag{cc.Eq.\ref{eq:KL_divergence_objective4}} \label{eq:KL_divergence_objective4_cc}
    \mathbf{w}^* = \arg\min_{\mathbf{w}}   KL\left(q_{\mathbf{w}}(\mathbf{z}) \| p(\mathbf{z})\right)
    =\arg\min_{\mathbf{w}} \left[  \sum_{\mathbf{z}} q_{\mathbf{w}}(\mathbf{z}) \log q_{\mathbf{w}}(\mathbf{z}) - \sum_{\mathbf{z}} q_{\mathbf{w}}(\mathbf{z}) \log \bar{p}(\mathbf{z}) \right]
\end{equation}

Each gradient element $\nabla_{w_i} KL$, estimated using $M$ samples $\{\mathbf{s}_{i,j}\}$ drawn from $\mathcal{N}_i$, is:
\begin{equation} \tag{cc.Eq.\ref{eq:KL_divergence_objective4_gradient_GMM1}} \label{eq:KL_divergence_objective4_gradient_GMM1_cc}
    \nabla_{w_i} KL \approx 1 + \sum_{j=1}^M \mathcal{N}(\mathbf{s}_{i,j}; \boldsymbol{\mu}_i, \Sigma_i) [\log q_{\mathbf{w}}(\mathbf{s}_{i,j}) - \log \bar{p}(\mathbf{s}_{i,j})] 
\end{equation}

Recognising the second term in the above is a probability weighted discrepancy, it can be computed using a Monte Carlo estimator:
\begin{equation} \tag{cc.Eq.\ref{eq:KL_divergence_objective4_gradient_GMM1_MC}} \label{eq:KL_divergence_objective4_gradient_GMM1_MC_cc}
    \nabla_{w_i} KL \approx 1 + \frac{1}{M} \sum_{j=1}^M \left[ \log q_{\mathbf{w}}(\mathbf{s}_{i,j}) - \log \bar{p}(\mathbf{s}_{i,j}) \right]    
\end{equation}
with $\mathbf{s}_{i,j} \sim \mathcal{N} (\mathbf{z};\boldsymbol{\mu}_i, \Sigma_i)$. 
By this, we are essentially minimizing the KL divergence between these two distributions evaluated at the pre-sampled, fixed sample positions $\{\mathbf{s}_{i,j}\}_{i=1,2,...,N}^{j=1,2,...,M}$.
We observe that, using the exclusive KL divergence objective is convenient as we can plug in the un-normalised target $\bar{p}(\mathbf{z})$, and it is convex \footnote{The KL divergence objective is nonconvex in general; however, in the case of GMMs, it is convex. See Appendix.\ref{app:gradient_of_exclusive_KL_divergence} a statement of convexity.}. The disadvantage being, minimizing the exclusive KL divergence can miss some isolated modes (see \cite{Le2017reversekl} for a discussion).

Collecting all $w_i$ into $\mathbf{w}$, we can numerically find the (locally) optimal $\mathbf{w}^*$ using projected gradient descent
\footnote{For constrained optimisation problem, typically \textit{projected gradient descent} or\textit{ Lagrange multipliers} are used to enforce the $\sum w_i = 1$ constraint. If we use a standard gradient descent followed by heuristics, i.e. clipping the weights (zero truncation) to ensure $w_i \geq 0$, and re-normalising all weights in each GD iteration to ensure $\sum_{i=1}^N w_i=1$, it won't be the same as a true projection onto the probability simplex. Re-normalizing the weights after the update step alters the gradient direction and invalidates the formal convergence guarantees of standard gradient descent for constrained convex problems. A GD based, heuristic GMA sampling algorithm is implemented in Algo.\ref{algo:GMA-sampling-heuristic}.}
(pGD \footnote{pGD combines the standard gradient descent (GD) step with a projection onto a feasible set defined by the constraints. Essentially, it takes a step in the direction of the negative gradient (as in regular gradient descent) and then 'projects' the result back into the feasible region if it falls outside. This ensures that all intermediate and final solutions satisfy the given constraints.}), which consists of 2 steps:
a \textit{standard gradient descent step} is taken to find an intermediate vector $\mathbf{v}^{(k)}$:
\begin{equation}
    \mathbf{v}^{(k)} = \mathbf{w}^{(k-1)} - \eta_k \cdot \nabla_{\mathbf{w}} KL(q_{\mathbf{w}^{(k-1)}}(\mathbf{z}) \| p(\mathbf{z}))
\end{equation}
where $\eta_k$ is the learning rate in $k$-th iteration. To satisfy the \textit{Robbins-Monro conditions} \cite{robbins1951stochastic} (\(\sum \eta_k = \infty\), \(\sum \eta_k^2 < \infty\)), we use a diminishing \footnote{The use of a diminishing (decaying) step size scheme can be found in optimisation literature e.g. \cite{bottou_optimization_2018} and similar Gaussian mixture inference framework \cite{guo_boosting_2017}. Our experimental trials show that constant learning rate also works.} learning rate  $\eta_k = \frac{\eta_0}{k}$ to ensure the weights converge to a local minimum of the convex, exclusive $KL$ divergence objective.
$\mathbf{w}^{(k-1)}= [w_{1}^{(k-1)}, \dots, w_{N}^{(k-1)}]^\top$ is the weight vector from the previous iteration.
Then a \textit{projection step}, where the intermediate vector $\mathbf{v}^{(k)}$ is projected back onto the feasible set, which is the probability simplex $\Delta$:
\begin{equation}
    \mathbf{w}^{(k)} = \text{Proj}_{\Delta}(\mathbf{v}^{(k)})
\end{equation}
where the \textit{projection} operator $\text{Proj}_{\Delta}$ on $\Delta$ is defined as
\begin{equation}
    \text{Proj}_{\Delta}(x) = \arg\min_{x' \in \Delta} \| x - x' \|
\end{equation}
with $\| \cdot \|$ denotes the Euclidean norm, the set of constraints lie in the set $\Delta \in \mathbb{R}^d$.
In the GMM optimisation case, the projection operator $\text{Proj}_{\Delta}$ finds the closest point in the specific probability simplex
\footnote{The notation $\Delta^{N-1}$ denotes the $(N-1)$-dimensional probability simplex, i.e.\ the set of all 
$\mathbf w \in \mathbb R^N$ with $w_i \ge 0$ and $\sum_{i=1}^N w_i = 1$. 
Although $\mathbf w$ has $N$ components, the sum-to-one constraint removes one degree of freedom, leaving an $(N-1)$-dimensional space.}
$\Delta^{N-1} = \{ \mathbf{w} \in \mathbb{R}^N \mid w_i \ge 0 \text{ and } \sum_{i=1}^N w_i = 1 \}$ to the intermediate vector. This projection step ensures that the updated weight vector $\mathbf{w}^{(k)}$ strictly satisfies the necessary constraints at every iteration.

After the weight vector converges to $\mathbf{w}^*$, we ensemble these $N \times M$ samples and use \textit{stratified sampling} \cite{morningstar_automatic_2020} to decide which samples to be selected to approximate the samples drawn from the target distribution, i.e. we re-draw $N \times M$ samples from these $N$ bins with repetition, and each bin has probability $0 \leq w_i \leq 1$ to be selected. Within each bin, all $M$ samples have equal probability $1/M$ to be chosen. 

As a result, we have the two-stage GMA sampling method: we first sample $M$ samples from each fixed Gaussian basis (totalling $NM$ samples), and then re-sample them using the optimised weights. This method is summarised in the below pseudo-code, and Algo.\ref{algo:WGMA-sampling-pgd} which implements GMA sampling for inferring the target distribution $ p(\mathbf{z}) $ using a $N$-component GMM with dynamically evolving weights.
\\

\fbox{%
\begin{minipage}{0.95\linewidth}
\begin{center}
\textbf{WGMA sampling (vanilla, pseudo-code)}
\end{center}

\texttt{Step 1. Initialise GMM and sample bank.} \\[4pt]
Create $N$ Gaussian components 
$\{\mathcal N(\mu_i,\Sigma_i)\}_{i=1}^N$ (e.g. Uniform/Normal, or informed by a trained mode), 
draw $M$ fixed samples from each component. \\

\texttt{Step 2. Optimise weights.} \\[4pt]
Update $\mathbf w \in \Delta^{N-1}$ 
by gradient descent (e.g. pGD/MD) to minimise $KL(q_{\mathbf w} \| p)$, with 
$q_{\mathbf w}(z)=\sum_{i=1}^N w_i \mathcal N(z;\mu_i,\Sigma_i)$. \\

\texttt{Step 3. Re-sample (via stratified sampling).} \\[4pt]
Form an ensemble by stratified sampling: 
pick component $i \sim \mathrm{Cat}(\mathbf w)$ and a stored sample 
within its $M$ draws; repeat to obtain the final set.
\end{minipage}
}

\begin{algorithm}[H]
\footnotesize
\caption{WGMA-sampling: sampling via Gaussian mixture approximation (with pGD)}
\label{algo:WGMA-sampling-pgd}
\textbf{Input:} Number of Gaussian components $N$; number of samples per component $M$; number of iterations $K$; target unnormalised density \(\bar{p}(\mathbf{z})\); initial means \(\{\boldsymbol{\mu}_i\}_{i=1}^N\); initial covariance matrices \(\{\Sigma_i\}_{i=1}^N\); initial learning rate \(\eta_0\). \\
\textbf{Output:} Ensemble of selected samples \(\{\mathbf{s}_{\text{selected}}\}\) approximating samples from $p(\mathbf{z})$.

\vspace{1mm}\hrule\vspace{1mm}

\begin{algorithmic}[1]
\STATE Initialize an empty set \(\mathcal{S} = \{\}\) to store selected samples. \hfill\textit{\(\mathcal{O}(1)\)}
\STATE Initialize weight vector \(\mathbf{w}^{(0)}\) on the probability simplex, e.g. $w_i^{(0)} \sim \mathcal{U}(0, 1)$ and normalize. \hfill\textit{\(\mathcal{O}(N)\)}
\STATE Draw $M$ samples \(\{\mathbf{s}_{i,j}\}_{j=1}^M\) from each Gaussian \(\mathcal{N}(\boldsymbol{\mu}_i, \Sigma_i)\) for $i = 1, 2, \ldots, N$ using standard Gaussian sampling (e.g. \(\boldsymbol{\epsilon} \sim \mathcal{N}(0, I)\), \(\mathbf{s}_{i,j} = \boldsymbol{\mu}_i + L_i \boldsymbol{\epsilon}\), where $L_i L_i^\top = \Sigma_i$). \hfill\textit{\(\mathcal{O}(N M d^2)\), where $d$ is dimension}
\FOR{$k = 1$ to $K$}
    \STATE Compute gradient vector \(\mathbf{g} = [g_1, g_2, \ldots, g_N]^\top\):
    \FOR{$i = 1$ to $N$}
        \STATE Compute GMM density \(q_{\mathbf{w}}^{(k-1)}(\mathbf{s}_{i,j}) = \sum_{i=1}^N w_l^{(k-1)} \cdot \mathcal{N}(\mathbf{s}_{i,j}; \boldsymbol{\mu}_l, \Sigma_l)\) for all $j = 1, \ldots, M$. \hfill\textit{\(\mathcal{O}(N M d^2)\)}
        \STATE Compute gradient component for $w_i$ (\ref{eq:KL_divergence_objective4_gradient_GMM1_cc}): 
        \[
        g_i = 1 +  \sum_{j=1}^M \mathcal{N}(\mathbf{s}_{i,j}; \boldsymbol{\mu}_i, \Sigma_i) [\log q_{\mathbf{w}}^{(k-1)}(\mathbf{s}_{i,j}) - \log \bar{p}(\mathbf{s}_{i,j})]
        \]
        \hfill\textit{\(\mathcal{O}(M d^2)\)}
    \ENDFOR
    \STATE Take gradient descent: \(\mathbf{v}^{(k)} = \mathbf{w}^{(k-1)} - \frac{\eta_0}{k} \cdot \mathbf{g}\). \hfill\textit{\(\mathcal{O}(N)\)}
    \STATE Project onto simplex: \(\mathbf{w}^{(k)} = \text{Proj}_{\Delta}(\mathbf{v}^{(k)})\), where \(\text{Proj}_{\Delta}\) is the projection onto the set \(\{\mathbf{w} | w_i \ge 0, \sum w_i = 1\}\). \hfill\textit{\(\mathcal{O}(N \log N)\)}
\ENDFOR
\STATE Set final weights \(\mathbf{w}^* = \mathbf{w}^{(K)}\), and component selection probabilities \(\mathbf{p} = \mathbf{w}^*\). \hfill\textit{\(\mathcal{O}(1)\)}
\STATE Generate ensemble samples: for each $m = 1$ to $N \cdot M$, draw index $i_m \sim \text{Categorical}(\mathbf{p})$ and append $\mathbf{s}_{i_m, j_m}$ (where $j_m \sim \text{Uniform}(\{1, \ldots, M\})$) to \(\mathcal{S}\). \hfill\textit{\(\mathcal{O}(N M)\)}
\STATE Return the set \(\mathcal{S}\). \hfill\textit{\(\mathcal{O}(1)\)}
\end{algorithmic}
\end{algorithm}
{\footnotesize
\noindent\textbf{Note:} for the outer loop, instead of specifying a fixed number of $K$ iterations, we can use early stop with criteria such as $\|\mathbf{w}^{(k)}-\mathbf{w}^{(k-1)}\| < \epsilon$.
}

\paragraph{Complexity}
The overall computational complexity of the GMA-sampling algorithm is $\mathcal{O}(K N^2 M d^2)$, which is dominated by the nested loops within the iterative weight update (Step 4). This cost arises from two main factors. First, evaluating a single multivariate Gaussian PDF for a $d$-dimensional sample costs $\mathcal{O}(d^2)$ due to the matrix operations in the \textit{Mahalanobis distance}. Second, it requires a loop through $N$ components (the $i$ loop), and for each of the $M$ samples within, the mixture density calculation $q$ requires another sum over all $N$ components (the $l$ index). This nested $N \times N$ operation within the $k$-loop results in the $N^2$ term.

The initial sample generation (Step 3) contributes a one-time cost \footnote{Assuming $\Sigma\_i$ is provided, and its Cholesky factorization ($L\_i$) is precomputed. If computed within Step 3, the cost would include $\mathcal{O}(N d^3)$ for factorization, but this is a one-time cost.} of $\mathcal{O}(N M d^2)$, and the final ensemble sampling (Step 14) adds $\mathcal{O}(N M)$, since each re-sampled index can be obtained by a single categorical draw and a uniform sub-index selection. For $K>1$, both are subsumed by the primary loop's cost. The majority of the computational expense is incurred in the repeated density and gradient computations, which scale quadratically with both the number of components $N$ and the dimensionality $d$. The cost of the gradient projection step ($\mathcal{O}(N \log N)$) is subsumed by the much more expensive mixture density calculation ($\mathcal{O}(NMd^2)$) within each iteration.

Memory complexity is $\mathcal{O}(N M d + N d^2 + N^2M)$, primarily due to storing the $N \cdot M$ samples of dimension $d$, and the $N$ $d$-dimension covariance matrices. $N d^2$ can be large if e.g. we are inferring the $d$ parameters in a neural network. For Storage for the weight and gradient vectors is negligible at $\mathcal{O}(N)$.

\section{GMA variants} \label{sec:GMA_variants}

While the vanilla GMA sampling method in Algo.\ref{algo:WGMA-sampling-pgd} is effective, we further propose some improved variants which trade-off accuracy and efficiency. 

\subsection{Speed acceleration}

Three strategies are identified to accelerate the sampling speed: 

\paragraph{Single or batch sample gradient estimator}
When implementing the gradient of the KL divergence (\ref{eq:KL_divergence_objective4_gradient_GMM1_cc}), instead of summing over all samples within the same Gaussian component, we can accelerate the inference by estimating the gradient $g_i$ using either a stochastic or mean sample. 
For example, randomly picking one sample $s$ from the sample pool $\mathbf{s}_i=\{\mathbf{s}_{i,j}\}_{j=1}^M$ of the Gaussian cluster $i$:
\[
g_i = \left[ \nabla_{\mathbf{w}} KL \right]_i \approx 1 + \mathcal{N}(s; \boldsymbol{\mu}_i, \Sigma_i) [\log q_{\mathbf{w}}(s) - \log \bar{p}(s)]
\]
where $s$ is randomly (with probability $1/M$, i.e. uniformly) sampled from $\mathbf{s}_i=\{\mathbf{s}_{i,j}\}_{j=1}^M$. After this, one can update $v_i^{(k)} = w_i^{(k-1)} - \eta \cdot g_i$ following projected gradient descent on. This turns the full-batch gradient calculation into a stochastic one with a batch size of one. The cost of the gradient calculation loop would be reduced by a factor of $M$, which is a significant speed-up. However, it also results in high variance and bias, i.e. using a single sample introduces an enormous amount of noise into the gradient estimate. The optimisation process can become very unstable, struggling to converge and oscillating erratically around a minimum. The random single sample estimator is also biased.

Alternatively, we can just take a mean sample $s=\frac{1}{M} \sum_{j=1}^M \mathbf{s}_{i,j}$ within the same Gaussian cluster $i$, which reduces the cost to computing a single mean and then a single gradient evaluation per component. However, this also introduces severe bias, as we cannot interchange the expectation (approximated by the sum) with non-linear functions - both the logarithm ($\log q_{\mathbf{w}}(\mathbf{z})$) and the Gaussian PDF ($\mathcal{N}(\mathbf{z}; \dots)$) are non-linear. In general, for a non-linear function $f$: $f(\mathbb{E}[\mathbf{z}]) \neq \mathbb{E}[f(\mathbf{z})]$. By using the mean sample, we are calculating $f(\mathbb{E}[\mathbf{s}_{i}])$ instead of an approximation of $\mathbb{E}[f(\mathbf{s}_{i})]$. This introduces a large, uncontrolled bias that completely changes the optimisation landscape. It effectively collapses all information about the distribution of the $M$ samples into a single point, defeating the purpose of using them to approximate an integral.

A theoretically sound and common way to achieve a speed-up is to use mini-batching, as used in stochastic gradient descent (SGD). Instead of using just one sample, we randomly sample a small batch of $B$ samples (where $1 \ll B < M$) from the pool of $M$ samples within the same Gaussian cluster at each iteration, then the gradient estimator becomes:
$g_i \approx 1 + \frac{1}{B} \sum_{s \in \text{batch}} \left[ \log q_{\mathbf{w}}(s) - \log \bar{p}(s) \right]$, It provides an unbiased estimate of the true gradient while significantly reducing computational cost, and having much lower variance than a single-sample estimate.

\paragraph{Pre-computing the PDFs}.
Since the samples $\mathbf{s}_{i,j}$ are generated once and are fixed, the values of the Gaussian PDFs $\mathcal{N}(\mathbf{s}_{i,j};\boldsymbol{\mu}_l,\Sigma_l)$, as well as the target PDFs $\bar{p}(\mathbf{s}_{i,j})$, are constant throughout all optimisation iterations. By pre-computing these values outside the main iterative loop, we can avoid the most expensive calculations (evaluating the Gaussian PDFs costs $d^2$) being repeated at every iteration. This can dramatically improves computational performance. Based on this, we present an improved algorithm in Algo.\ref{algo:GMA-sampling-precomputing-pgd} in Appendix.\ref{app:precomputing_density_values}.

\paragraph{Monte Carlo gradient estimator}
As mentioned before, we can further save (although marginal) computational efforts by replacing the gradient estimator \ref{eq:KL_divergence_objective4_gradient_GMM1_cc} by the Monte Carlo estimator \ref{eq:KL_divergence_objective4_gradient_GMM1_MC_cc}, avoiding the multiplication of the normal PDFs $\mathcal{N}(\mathbf{s}_{i,j}; \boldsymbol{\mu}_i, \Sigma_i)$ in Step 8 (these PDFs are evaluated when computing $q_{\mathbf{w}} (\mathbf{s}_{i,j})$ anyway), which presents Algo.\ref{algo:GMA-sampling-MCEstimator} in Appendix.\ref{app:MC_gradient_estimator}.

\paragraph{Combining PDFs pre-computation and MC gradient estimation} Combining both strategies yields the optimal GMA sampling method which is theoretically sound and practically efficient, as presented in Algo.\ref{algo:GMA-sampling-optimal} in Appendix.\ref{app:combine_precomputing_and_MC_gradient_estimator}.

\subsection{Stabilizers: entropy regularization, tempering, convex mixing (momentum), Polyak averaging} \label{subsec:stabilizers}

Although mirror descent resolves the simplex constraint elegantly, the algorithm can still suffer from weight collapse, where nearly all mass concentrates onto a single component, leading to poor posterior approximation and unstable forecasts. To address this, several stabilisation techniques are introduced (see Appendix.\ref{app:mirror_descent}): 

(i) \textit{tempering}, where the target log-density is flattened during early iterations by introducing a temperature parameter $\beta \in (0,1)$:
\[
\log \bar{p}_\beta(\mathbf{z}) = \beta   \log \bar{p}(\mathbf{z})
\]
which reduces gradient magnitudes and allows weights to adapt more gradually. Annealing $\beta \to 1$ restores the true posterior.

(ii) \textit{entropy regularization}, which augments the optimisation objective with an entropy penalty $-\lambda H(\mathbf{w}) = \lambda \sum_i w_i \log w_i$, leading to an additional gradient term (derivations see Appendix.\ref{app:mirror_descent})
\[
g_i^{\text{entropy}} = \lambda (1 + \log w_i)
\]
that counteracts collapse by pushing weights toward a more uniform distribution.

(iii) \textit{convex mixing} (or momentum), where the updated weights $\mathbf{w}^{(k)}$ are smoothed with the previous iterate to damp oscillations:
\[
\mathbf{w}^{(k)}  \leftarrow  \alpha   \mathbf{w}^{(k)} + (1-\alpha)   \mathbf{w}^{(k-1)}, 
\quad \alpha \in (0,1)
\]
so that large jumps in weight allocations are suppressed and stability is improved.

(iv) \textit{Polyak (iterative, tail) averaging} \cite{granziol_iterative_2021}, where instead of taking the last iterate, the final weights are averaged across the last $L$ iterations:
\[
\bar{\mathbf{w}}^{(K)} = \frac{1}{L} \sum_{k=K-L+1}^{K} \mathbf{w}^{(k)}
\]
which reduces variance, smooths out noisy fluctuations, and prevents single-iteration instabilities from dominating the outcome.

Together these modifications yield an anti-collapse mirror descent GMA sampler that maintains diversity across mixture components, ensures robustness to noisy gradients, and delivers more reliable posterior samples for downstream forecasting. 

After applying these strategies, the effective optimisation problem at iteration $k$ becomes
\[
\min_{\mathbf{w}\in\Delta}   
\Bigg\{ 
    \underbrace{\Big\langle g^{(k)}, \mathbf{w} \Big\rangle}_{\text{gradient term}} 
    + \tfrac{1}{\eta_k} \underbrace{\mathrm{KL}  \big(\mathbf{w} \| \mathbf{w}^{(k)}\big)}_{\text{mirror descent prox}}
    + \underbrace{\lambda_k \sum_{i=1}^N w_i \log w_i}_{\text{entropy regularisation}}
\Bigg\}
\]
where 
\begin{itemize}
    \item $g^{(k)}$ is the stochastic gradient evaluated on tempered targets $\beta_k \log \bar{p}(\mathbf{z})$,
    \item the KL term enforces the mirror descent geometry and temperature scaling $\tau_k$,
    \item the entropy penalty $\lambda_k$ counteracts collapse, 
    \item convex mixing $\alpha_k$ smooths the raw update $\mathbf{w}^{\text{prop}}$, and
    \item Polyak averaging forms the final output $\bar{\mathbf{w}}^{(K)}$ across the tail of the trajectory.
\end{itemize}
This composite objective highlights how each stabilisation schedule enters explicitly into the optimisation: tempering rescales the target gradients, entropy regularisation shapes the objective, and convex mixing / Polyak averaging modify the update rule.

\subsection{Weights optimisation via mirror descent} 

An alternative to projected gradient descent is to employ a mirror descent (MD) update with KL geometry, also known as the multiplicative weights update (see Appendix.\ref{app:mirror_descent}). This replaces the Euclidean projection step by an entropic mirror map, ensuring the weights remain strictly positive and normalized on the simplex. The update takes the form $w_i^{(k)} \propto w_i^{(k-1)} \exp(-\eta_k g_i^{(k)})$, which can be interpreted as a natural-gradient step in the space of probability vectors. Besides being more principled, it also reduces the per-iteration complexity by avoiding the $\mathcal{O}(N \log N)$ projection step. In practice, the square-root decaying schedule $\eta_k = \eta_0 / \sqrt{k + k_0}$ is adopted to maintain stability in the presence of noisy Monte Carlo gradients, even though it no longer satisfies the Robbins-Monro conditions. This mirror descent variant (Algo.\ref{algo:GMA-sampling-mirror} in Appendix.\ref{app:mirror_descent_GMA}) inherits the same pre-computation and gradient estimation advantages as the pGD algorithm, while being slightly more efficient and more natural for probability distributions.

\paragraph{Mirror-descent update with $\tau_k,\ \lambda_k,\ \alpha_k$ and tempering $\beta_k$.}
Let $g^{(k)}$ denote the stochastic gradient computed on the tempered target
$\beta_k \log \bar{p}(\mathbf{z})$ (with the standardization used in our GMA estimator; see Appendix.\ref{app:mirror_descent}).
Augment it by the entropy gradient:
\[
\tilde{g}^{(k)}_i = g^{(k)}_i  +  \lambda_k\big(1 + \log w^{(k)}_i\big)
\]
A single KL-prox mirror step with temperature $\tau_k \ge 1$ yields the
\textit{exponentiated-gradient} proposal
\[
w^{\text{prop}}_i
=
\frac{\exp  \left(\frac{\log w^{(k)}_i - \eta_k \tilde{g}^{(k)}_i}{\tau_k}\right)}
     {\sum_{j=1}^N \exp  \left(\frac{\log w^{(k)}_j - \eta_k \tilde{g}^{(k)}_j}{\tau_k}\right)}
=
\frac{\big(w^{(k)}_i\big)^{1/\tau_k} \exp  \left(-\frac{\eta_k}{\tau_k} \tilde{g}^{(k)}_i\right)}
     {\sum_{j=1}^N \big(w^{(k)}_j\big)^{1/\tau_k} \exp  \left(-\frac{\eta_k}{\tau_k} \tilde{g}^{(k)}_j\right)}
\]
Finally, apply convex mixing (momentum) to stabilize the iterate:
\[
\mathbf{w}^{(k+1)} = (1-\alpha_k) \mathbf{w}^{(k)}  +  \alpha_k \mathbf{w}^{\text{prop}}
\]
Here $\eta_k$ is the step size, $\tau_k$ flattens the update early (annealed to $1$), $\lambda_k$ controls the entropy regularization
(annealed to $0$), $\alpha_k$ dampens rapid changes via convex mixing, and $\beta_k$ (used inside $g^{(k)}$) anneals the target from a tempered surrogate to the true posterior. The whole workflow of MD, with these stabilization strategies applied, is described in Fig.\ref{fig:mirror-descent-flow}.

\subsection{Multi-stage GMA: progressively refining GMA sampling} \label{sec:refined_GMA}
To further improve inference accuracy for complex posteriors (e.g. the posterior distributions of the parameters in a Lotka-Volterra system, see Section.\ref{sec:LV_system}), we introduce a multi-stage \textit{refining GMA method}. A two-stage GMA as an example, we first perform a coarse exploration of the parameter space with a moderately broad Gaussian mixture bank, and then refine the mixture centres and variances based on the most informative components discovered from the first stage. Concretely, Stage.1 proceeds as in the baseline GMA sampling: we initialise $N$ Gaussian components with anisotropic diagonal covariances tailored to each parameter dimension, generate $M$ fixed samples per component, and optimise the component weights $\mathbf{w}$ via projected gradient descent (pGD) under the exclusive KL divergence. In Stage.2, we select the top-$k$ components with the largest posterior weights, and re-centre a new Gaussian bank around them with shrunk covariance scales to better capture the high-probability region of the posterior. A small jitter is added to the re-centred means to maintain diversity. This yields a refined GMM which is re-optimised with the same pGD scheme, now operating on a more localised sample bank.

\subsection{Laplace mixture approximation (LMA)} \label{subsec:LMA}

\paragraph{Laplace’s method as local Gaussian approximation.}
Laplace’s method\footnote{Laplace here refers to \textit{Pierre-Simon Laplace} and his asymptotic method for integrals, not the Laplacian operator.} approximates a sharply peaked, smooth integrand by a second-order expansion of its log around a mode $\hat{\mathbf w}$. Specifically, it fits a Gaussian to the logarithm of the function at its maximum, using a second-order Taylor expansion ($\nabla \log f(\hat{\mathbf w})=0$ at a mode):
$$
\log f(\mathbf{w}) \approx \log f(\hat{\mathbf{w}}) - \frac{1}{2} (\mathbf{w} - \hat{\mathbf{w}})^\top H (\mathbf{w} - \hat{\mathbf{w}})
$$
where $H = -\nabla^2 \log f(\mathbf{w}) \big|_{\mathbf{w} = \hat{\mathbf{w}}}$ is the (negative) Hessian at the mode. Exponentiating both sides yields a local Gaussian approximation of $f(\mathbf{w})$:
$$
f(\mathbf{w}) \approx f(\hat{\mathbf{w}}) \cdot \exp\left( -\frac{1}{2} (\mathbf{w} - \hat{\mathbf{w}})^\top H (\mathbf{w} - \hat{\mathbf{w}}) \right)
$$
Integrating this Gaussian gives the \textit{classic} Laplace approximation (LA) \cite{Mackay1998Choice}:
$$
\int \mathrm{d}^k \mathbf{w}   f(\mathbf{w}) 
\approx f(\hat{\mathbf{w}}) (2\pi)^{k/2} |H|^{-1/2}
$$

LA is widely adopted in Bayesian inference, especially for approximating marginal likelihoods (i.e. model evidences, normalising constant), because it provides a tractable way to approximate integrals that would otherwise require costly sampling. The resulting estimate balances model fit $f(\hat{\mathbf{w}})$ with a complexity penalty via the determinant $|H|$, offering a natural trade-off useful for model comparison tasks \cite{Mackay1998Choice}.

\paragraph{LMA as a variant of GMA}

Inspired by the classic, unimodal Laplace approximation, we approximate a target density by a multi-modal \textit{Laplace mixture}, built from local quadratic expansions at posterior modes. 
Given modes $\{\hat\theta_j\}_{j=1}^J$ found via e.g. multi-start optimisation of the log-unnormalised target $\ell(\theta)=\log \bar p(\theta)$, set
\[
\mu_j=\hat\theta_j,\qquad 
H_j = -\nabla^2 \ell(\hat\theta_j),\qquad 
\Sigma_j = H_j^{-1}
\]
One can add flattening and stabilisers: $\Sigma_j \leftarrow \kappa^2 \Sigma_j + \lambda I$, with inflation $\kappa\ge 1$, floor $\lambda>0$.

Initialise weights by local Laplace evidence:
\[
\tilde w_j \propto \exp\{\ell(\hat\theta_j)\} (2\pi)^{d/2} |\Sigma_j|^{1/2},\qquad
w_j=\frac{\tilde w_j}{\sum_{r=1}^J \tilde w_r}
\]
This yields the fitted mixture (See Appendix.\ref{app:LMA} for derivations):
\[
q_0(\theta)=\sum_{j=1}^J w_j \mathcal N(\theta;\mu_j,\Sigma_j)
\]

We can then use this fitted Laplace mixture for:
\begin{enumerate}
\item \textbf{Direct sampling (stratified).} Either (a) sample i.i.d. draws by first picking a component $j\sim\mathrm{Cat}(w_{1:J})$ and then $\theta\sim\mathcal N(\mu_j,\Sigma_j)$; or (b) form a fixed bank with $M$ draws per component and \textit{stratified sampling}: choose a component $j\sim\mathrm{Cat}(w)$, then pick uniformly among its $M$ stored draws.

\item \textbf{Warm start for GMA.} Use $\{(\mu_j,\Sigma_j,w_j)\}_{j=1}^J$ to initialise the GMM used by GMA (LMA refinement with weights only optimisation): seed components at the Laplace centres (optionally with small jitter $\xi\sim\mathcal N(0,\tau^2\Sigma_j)$) and keep $\Sigma_j$ fixed; then run the reverse-KL weight optimisation (Section.\ref{sec:methodology}) on a bank of samples to adapt the mixture weights, or expand to $N>J$ components by assigning “parent modes” proportionally to $w$.
\end{enumerate}

The advantage of Laplacian mixture aprroximation being, it locates the position of the Gaussian means and estimates their spreads (variances) at the same time. LMA, as a special case of GMA, is employed for direct sampling in our later epidemic model inference (Section.\ref{subsec:SIR_model_inference}).

\subsection{Improving GMM approximation with EM} \label{subsec:em_GMA}

The KL objective in \ref{eq:KL_divergence_objective4_gradient_GMM1_cc} can be \textit{biased} when evaluated only at a fixed, pre-sampled bank $\{\mathbf{s}_{i,j}\}$: if component locations or scales are poorly chosen, the bank may under-cover high-density regions of $p(\mathbf z)$, not representing the optimal or steepest gradient direction for weights optimisation and yielding noisy and systematically misdirected weight gradients. A more principled, though more costly, alternative is to optimise \textit{all} GMM parameters $\boldsymbol{\theta}=\{w_i,\boldsymbol{\mu}_i,\Sigma_i\}_{i=1}^N$ simultaneously via (population) EM, which maximises $\mathbb{E}_{p}[\log q_{\boldsymbol{\theta}}(\mathbf{z]})]$ and equivalently, minimises the inclusive $\mathrm{KL}(p\|q_{\boldsymbol{\theta}})$ \footnote{Each EM sweep costs $\mathcal{O}(NMd^2)$; see Appendix.\ref{app:em_GMA}.}. 

We term this variant \textit{population EM} because its E/M updates are defined as expectations under the target (population) density \(p\), rather than empirical averages over a finite dataset \footnote{Note that, traditional EM is a data-driven, alternating optimisation approach, it requires observed data.}. For comparison, classical (data-based) EM for GMMs maximises the \textit{empirical log-likelihood}
\[
\widehat{\mathcal L}_N(\boldsymbol{\theta})
= \sum_{n=1}^N \log q_{\boldsymbol{\theta}}(\mathbf x_n)
= \sum_{n=1}^N \log \Bigg[\sum_{k=1}^K w_k \mathcal N(\mathbf x_n;\boldsymbol{\mu}_k,\Sigma_k)\Bigg]
\]
with responsibilities \(r_{nk}=r_k(\mathbf x_n;\boldsymbol{\theta}) \propto w_k \mathcal N(\mathbf x_n;\boldsymbol{\mu}_k,\Sigma_k)\), yielding the standard updates
\[
N_k=\sum_{n=1}^N r_{nk},\qquad
\boldsymbol{\mu}_k=\frac{1}{N_k}\sum_{n=1}^N r_{nk} \mathbf x_n,\qquad
\Sigma_k=\frac{1}{N_k}\sum_{n=1}^N r_{nk} (\mathbf x_n-\boldsymbol{\mu}_k)(\mathbf x_n-\boldsymbol{\mu}_k)^\top 
\quad
, w_k=\frac{N_k}{N}
\]

In contrast, \textit{population EM} replaces the dataset by the \textit{population} $p$ and maximises $\mathbb E_{p}[\log q_{\boldsymbol\theta}(\mathbf Z)]$ (equivalently, minimises $\mathrm{KL}(p\|q_{\boldsymbol\theta})$), which leads to the population analogues.

\paragraph{Population EM updates.}
Define mixture responsibilities
\[
r_k(\mathbf z;\boldsymbol\theta)
=\frac{w_k \mathcal N(\mathbf z;\boldsymbol\mu_k,\Sigma_k)}
      {\sum_{\ell=1}^K w_\ell \mathcal N(\mathbf z;\boldsymbol\mu_\ell,\Sigma_\ell)}
\]
The population E/M updates are the population analogues of the data-based formulas:
\[
\bar N_k=\mathbb E_{p} \big[r_k(\mathbf Z;\boldsymbol\theta)\big],\quad
\boldsymbol\mu_k'=\frac{\mathbb E_{p} \big[r_k(\mathbf Z;\boldsymbol\theta) \mathbf Z\big]}{\bar N_k},\quad
\Sigma_k'=\frac{\mathbb E_{p} \big[r_k(\mathbf Z;\boldsymbol\theta) (\mathbf Z-\boldsymbol\mu_k')
(\mathbf Z-\boldsymbol\mu_k')^\top\big]}{\bar N_k},\quad
w_k'=\frac{\bar N_k}{\sum_{\ell=1}^K \bar N_\ell}
\]
These are fixed points of $\max_{\boldsymbol\theta}\mathbb E_{p}[\log q_{\boldsymbol\theta}]$.

\paragraph{Self-normalised importance sampling (SNIS) implementation.}
Because exact expectations under \(p\) are intractable, we approximate them with self-normalised importance sampling (SNIS): we draw a bank $\{\mathbf z_m\}_{m=1}^M \sim r(\mathbf z)$ (typically $r=q_{\boldsymbol\theta^{(t)}}$), compute unnormalised weights $\tilde\omega_m=\bar p(\mathbf z_m)/r(\mathbf z_m)$, normalise $\omega_m=\tilde\omega_m/\sum_{s=1}^M \tilde\omega_s$, and plug the estimator $\mathbb E_{p}[f(\mathbf Z)]\approx \sum_{m=1}^M \omega_m f(\mathbf z_m)$ into the population updates:
\[
\bar N_k \approx \sum_{m=1}^M \omega_m  r_k(\mathbf z_m;\boldsymbol\theta),\quad
\boldsymbol\mu_k' \approx \frac{\sum_{m=1}^M \omega_m  r_k(\mathbf z_m;\boldsymbol\theta) \mathbf z_m}{\bar N_k}
\]
\[
\Sigma_k' \approx \frac{\sum_{m=1}^M \omega_m  r_k(\mathbf z_m;\boldsymbol\theta) (\mathbf z_m-\boldsymbol\mu_k')(\mathbf z_m-\boldsymbol\mu_k')^\top}{\bar N_k},\quad
w_k' \approx \frac{\bar N_k}{\sum_{\ell=1}^K \bar N_\ell}
\]
This yields a generalised EM: the E/M steps are exact for $\mathbb E_{p}$, and we approximate those expectations via SNIS using only its evaluations.

In practice, one can stabilise covariance updates by adding a small ridge, $\Sigma_k' \leftarrow \Sigma_k' + \lambda I$, and by capping the condition number of $\Sigma_k'$ to prevent degeneracy. To improve robustness early on, we can use an annealed E-step in which responsibilities are tempered and then renormalised, $r_k \leftarrow r_k^{1/T}$ with $T>1$ gradually reduced to $1$, and we discard or floor very small responsibilities to avoid noisy updates from vanishing components. A single EM sweep costs $\mathcal O(K M d^2)$ with full covariances (or $\mathcal O(K M d)$ if diagonals are used). Initialising EM-GMA from LMA components (means/covariances from local Laplace fits, with evidence-proportional weights) provides good posterior coverage from the outset; EM-GMA then reallocates mixture mass and reshapes covariances using the SNIS-weighted sufficient statistics, yielding a coherent, mass-covering refinement of $q_{\boldsymbol\theta}$ toward $p$.

The EM-fitted mixture can then be used in two ways: (i) \textit{direct sampling}, by drawing from each component and using stratified sampling with mixture weights; and (ii) as a \textit{warm start} for weights-only GMA (WGMA) refinement. This EM-GMA approach is different from traditional EM in that, standard EM fits a GMM to observed data by maximising the empirical log-likelihood (MLE), whereas EM-GMA replaces data with a proposal-drawn bank and self-normalised importance weights to approximate expectations under $p(\mathbf z)\propto \bar p(\mathbf z)$, requiring no prior dataset \footnote{EM-GMA never sees a dataset; it only queries $log \bar{p}(\mathbf z)$.} and yielding a mass-covering approximation. See Appendix.\ref{app:em_GMA} for more details about EM-GMA. 

\subsubsection*{\textit{A toy test}}

To test the EM-GMA sampling method, we construct a 2D ground-truth GMM with three components (weights $[0.45,0.25,0.30]$, means $(0,0)$, $(3,1.5)$, $(-2,3)$, non-diagonal covariances) and draw $N=5000$ samples for reference. We then fit two mixtures:
(i) \textit{traditional data-based EM} (maximum-likelihood EM on the observed samples); and
(ii) \textit{EM-GMA} (population EM) that minimises $\mathrm{KL}(p\|q_\theta)$ using self-normalised importance sampling (SNIS, see Appendix.\ref{app:em_GMA}) with a refreshed bank per sweep (bank size $M=4096$, proposal $r^{(t)}=q_{\theta^{(t)}}$). 
The GMM in EM-GMA is initialised randomly and does not use the dataset; it only has access to the unnormalised target $\bar p(\mathbf z)$.

\paragraph{Results.}
As observed in Fig.\ref{fig:em_gma_toy} and Table.\ref{tab:em_gma_toy}, both methods closely recover the ground truth: component means for EM-GMA are within $\leq 0.04$ in each coordinate of the true centres, and mixture weights differ by at most $\approx 0.02$; the learned $2\sigma$ covariance ellipses largely overlap the true ones. As expected, the data-based EM fit is slightly sharper in the densest regions, while EM-GM, which optimises the inclusive KL, exhibits a mildly more \textit{mass-covering} behaviour (some variances a bit larger, others comparable), despite starting from random parameters and never using the data. This supports EM-GMA as a strong, data-free surrogate $q_\theta$ for direct sampling or as a warm start for WGMA. More examples of applying EM-GMA can be found in Section.\ref{subsec:star_shaped_density}, and Section.\ref{subsec:sensor_network_localization}.

\begin{figure}[H]
    \centering
    \includegraphics[width=0.5\linewidth]{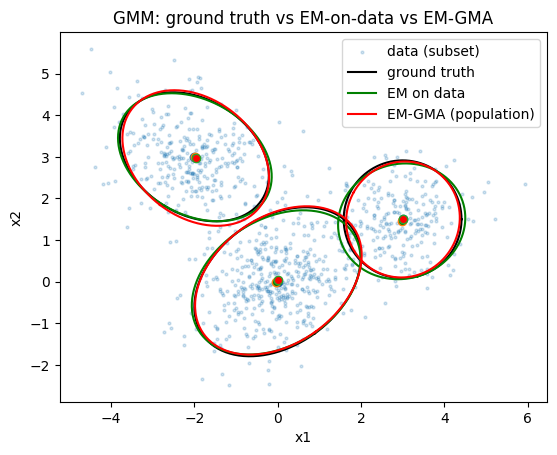}
    \vspace{-2mm}
    \caption{Ground truth (black) \textit{vs} traditional data-based EM (green) \textit{vs} EM-GMA (red). 
    Scatter shows a subset of the generated data (context only; \textit{not used} by EM-GMA). 
    Curves are $2\sigma$ covariance ellipses; dots mark component means.}
    \label{fig:em_gma_toy}
\end{figure}

\begin{table}[H]
\centering
\small
\setlength{\tabcolsep}{6pt}
\begin{tabular}{l l c c c c c}
\toprule
\textbf{Comp.\ (by $\mu_x$)} & \textbf{Method} & $\boldsymbol{w}$ & $\boldsymbol{\mu_x}$ & $\boldsymbol{\mu_y}$ & $\boldsymbol{\sigma_x^2}$ & $\boldsymbol{\sigma_y^2}$ \\
\midrule
\multirow{3}{*}{C1 ($\mu_x \approx -2$)} 
  & Ground truth        & 0.300 & -2.000 & 3.000  & 0.800 & 0.600 \\
  & Data-based EM       & 0.297 & -1.988 & 2.990  & 0.851 & 0.590 \\
  & EM-GMA (population) & 0.296 & -1.967 & 2.971  & 0.775 & 0.662 \\
\midrule
\multirow{3}{*}{C2 ($\mu_x \approx 0$)} 
  & Ground truth        & 0.450 &  0.000 & 0.000  & 1.000 & 0.800 \\
  & Data-based EM       & 0.448 & -0.035 & -0.016 & 1.026 & 0.750 \\
  & EM-GMA (population) & 0.462 &  0.017 & 0.031  & 1.000 & 0.793 \\
\midrule
\multirow{3}{*}{C3 ($\mu_x \approx 3$)} 
  & Ground truth        & 0.250 & 3.000 & 1.500  & 0.500 & 0.500 \\
  & Data-based EM       & 0.256 & 2.978 & 1.451  & 0.583 & 0.483 \\
  & EM-GMA (population) & 0.242 & 3.009 & 1.495  & 0.465 & 0.485 \\
\bottomrule
\end{tabular}
\caption{Quantitative comparison for the toy experiment. 
EM-GMA is fit without using the dataset; it only accesses the unnormalised target $\bar p(\mathbf z)$.}
\label{tab:em_gma_toy}
\end{table}

\section{Experiments}\label{sec:experiments}

We evaluate WGMA, LMA and EM-GMA against a broad set of established baselines. Our experimental plan has two parts. First, we probe \textbf{sampling on complex synthetic targets} in one and two dimensions to address mode coverage and geometry: 1D connected and isolated tri-modal bumps; 2D four-modal Gaussians; moon-, banana-, wave-, and star-shaped densities, and Neal’s funnel. Second, we test \textbf{real-world probabilistic models} spanning generalized linear models and time-series to dynamical systems and language modeling: Bayesian logistic regression and hierarchical BLR (Radon), hierarchical Bayesian symbolic regression for pendulum dynamics, Bayesian LSTM for mortality forecasting, a head-only Bayesianization of a language model, Lotka-Volterra and SIR inference, Bayesian optimal experimental design for logistic dose-response (via an LMA prior), and sensor-network localisation with LMA and EM-GMA posteriors.

Across all tasks we compare GMA to commonly used MCMC and VI methods, including Metropolis-Hastings (MH \cite{hastings_monte_1970}, Hamiltonian Monte Carlo (NUTS \cite{hoffman2014nuts}), and Langevin Monte Carlo (LMC \cite{roberts_exponential_1996}), SVGD \cite{liu_stein_2019}, MFVI-ADVI \cite{kucukelbir_automatic_2016}, GM-ADVI \cite{morningstar_automatic_2020}, S-ADVI \cite{shao_nonparametric_2024} and EVI \cite{Wang2021EVI}. 
All experiments were run in \textit{Google Colab}. In a CPU-only environment, they were run on a 64-bit Intel Xeon VM (4 physical cores, 8 threads) under KVM, with 55 MB shared L3 cache, 54.75 GB RAM, and 242.49 GB disk; in an additional A100 GPU environment, they were on a virtualized x86\_64 environment running on a single-socket Intel Xeon CPU @ 2.20 GHz with 6 physical cores and 12 logical threads. 
Implementation details, hyperparameters, and additional diagnostics are provided in the appendix.

\subsection{Sampling complex densities}

To empirically evaluate the performance of the proposed GMA sampling method, we conduct a series of experiments on seven challenging target densities. The chosen target distributions range from simple 1D multi-modal cases to complex 2D densities with non-Gaussian geometries, high curvature, and strong variable dependencies, such as the moon, double banana, wave, and Neal's funnel shapes. In each experiment, we assess both the quality of the generated samples and the computational efficiency of the 8 samplers.

\subsubsection{1D connected \& isolated tri-modal bumps}
We first tested the GMA sampling method (Algo. \ref{algo:WGMA-sampling-pgd}) on two unnormalised, 1D multi-modal densities: one with 3 connected modes, and the other with 3 isolated modes. Both tests share the same methodological settings. 

\paragraph{Samplers set-up}
For GMA, $N=10$ Gaussian components are initialized with means uniformly distributed across the range $[-6, 6]$ and variances linearly interpolated between $0.5^2$ and $0.7^2$. The optimization runs for $K=120$ iterations\footnote{$K=120$ was selected by trial and error, enabling stable convergence of the final weights.} with $M=200$ fixed samples per component, using a iteration-dependent, diminishing learning rate $\eta_0/k$ with an initial value of $\eta_0 = 0.5$ for connected modes and $\eta_0 = 0.01$ for isolated modes. Metropolis-Hastings (MH) employs $N \times M = 2000$ iterations, starting from an initial point of 0.0 with a proposal standard deviation of 1.0. Hamiltonian Monte Carlo (HMC) utilizes $2000$ samples after 1000 warm-up steps, with a step size of 0.01 and 20 integration steps. Langevin Monte Carlo (LMC) executes $2000$ steps with a learning rate of 0.01 and a noise scale of 0.02. Stein Variational Gradient Descent (SVGD) runs for 500 iterations with a step size of 0.01, using two initializations: particles drawn from the initial GMA samples (labeled \textit{SVGD (GMA init)}) or from a standard normal distribution (labeled \textit{SVGD (Std init)}). For Mean-Field VI (MFVI-ADVI \cite{kucukelbir_automatic_2016}), the optimization is performed for 120 steps. For Gaussian Mixture ADVI (GM-ADVI \cite{morningstar_automatic_2020}), we set the number of mixture components to $10$, the stratified samples per component to $200$, and the number of training steps to 120 with a learning rate of 0.001. The weight update for GMA is performed using projected gradient descent, where the weights are updated with a true Euclidean projection onto the probability simplex in each iteration.

\paragraph{Performance metrics}
For both tests, we collect the final samples and record the execution times. Employing the MH samples as a reference, we use five distributional distance and divergence metrics to quantify the quality of the approximations: 1D Wasserstein distance, Kolmogorov-Smirnov (KS) statistic, squared maximum mean discrepancy ($MMD^2$, calculated using an RBF kernel with unity length-scale), total variation (TV) distance, and Kullback-Leibler (KL) divergence. With the exception of KL divergence, all are true distance metrics. All metrics prefer small values. See Appendix.\ref{app:distributional_distance_metrics} for details of these metrics. In producing the histograms in our experiments, 60 bins are used. We also compare execution times (in seconds) as a measure of inference speed.

\subsubsection*{\textit{Connected tri-modal bumps}}
We use the following 1D, unnormalised tri-modal density as the target:
\begin{equation} \label{eq:connected_trimodal_bump}
    \bar{p}(z) = \exp\left(-\frac{(z^2 + 0.1 z^4)^2}{2 \cdot 1.0^2}\right) + 0.3 \cdot \mathcal{N}(z; 3, 0.5^2) + 0.2 \cdot \mathcal{N}(z; -3, 0.6^2)
\end{equation}
where the central term $\exp\left(-\frac{(z^2 + 0.1 z^4)^2}{2}\right)$ represents a banana-shaped distribution with a quartic exponent. The Gaussian terms are scaled by weights 0.3 and 0.2, centered at $z=\pm 3$ with variances 0.25 and 0.36, respectively.
To obtain the target density $p(z)=\bar{p}(z)/Z_p$, one can use e.g. numerical integration\footnote{In our test, the normalisation constant $Z_p$ is obtained using \textit{scipy.integrate.quad()} with 200 equal-distance points within [-6,6].} such as \textit{trapezoidal} or \textit{Simpson's rules}.

% \paragraph{Results}
The results, shown in Fig.\ref{fig:GMA_test1}, Fig.\ref{fig:densities_compare_test1}, Fig.\ref{fig:performance_compariosn_test1}, and Table.\ref{tab:metrics_and_times_test1}, demonstrate the effectiveness of the optimized GMA sampling method. GMA is by far the most computationally efficient method, achieving convergence in just 0.0278 seconds due to its pre-computation strategy. As seen in Fig. \ref{fig:GMA_test1}, the re-sampled GMA samples effectively capture all three connected modes of the target density. The weight evolution plot shows a rapid convergence where a few key Gaussian components are assigned significant weights, while the others are correctly pruned towards zero. This highlights the importance of initializing components across the full support of the density.

In comparison, the MCMC-based methods show varied performance. MH samples align well with the target and serve as our reference baseline. HMC, however, only captures the central mode. LMC performs strongly, capturing the central and left modes accurately, and achieves the best overall distributional distance scores among all benchmark methods. The performance of SVGD is highly dependent on initialization. When initialized with GMA samples that span the full support (SVGD (GMA init)), it captures all three modes. However, when initialized from a standard normal distribution (SVGD (Std init)), it fails to find the side modes. Both SVGD variants are by far the most computationally expensive. The variational methods, MFVI-ADVI and GM-ADVI, also show contrasting results. MFVI-ADVI, with its unimodal approximation, only captures the central mode, yielding poor distance metrics. GM-ADVI successfully identifies all three modes, demonstrating the advantage of its mixture-based approximation, though its accuracy is lower than GMA and LMC.

Overall, GMA sampling provides an excellent balance of speed and accuracy, outperforming all other methods in computational time while achieving sample quality comparable to the best-performing MCMC methods like LMC. This highlights its potential as a highly efficient and effective inference tool for complex, multi-modal distributions.

\begin{table}[H]
\centering
\scriptsize
\begin{threeparttable}
\caption{Quality metrics and execution times for the connected tri-modal bumps experiment.}
\label{tab:metrics_and_times_test1}
\begin{tabular}{p{1.75cm} p{1.3cm} p{1.2cm} p{1.2cm} p{1.2cm} p{1.2cm} p{1.3cm}}
\toprule
\textbf{Method} & Wasserstein distance $\downarrow$ & KS statistic $\downarrow$ & MMD\textsuperscript{2} $\downarrow$ & Total Variation $\downarrow$ & KL Divergence $\downarrow$ & Execution time (s) $\downarrow$ \\
\midrule
GMA & 0.2792 & 0.1230 & 0.0110 & 0.2087 & 0.5227 & \textbf{0.0278} \\
MH & 0.0000 & 0.0000 & 0.0000 & 0.0000 & 0.0000 & 1.2536 \\
HMC & 0.6050 & 0.1720 & 0.0572 & 0.2430 & 0.2675 & 1.5616 \\
LMC & \textbf{0.2223} & \textbf{0.0790} & \textbf{0.0109} & \textbf{0.1445} & \textbf{0.1572} & 1.7020 \\
SVGD (GMA) & 1.3676 & 0.2905 & 0.3068 & 0.5472 & 1.6117 & 838.8171 \\
SVGD (Std) & 0.5960 & 0.3260 & 0.0509 & 0.8040 & 1.8568 & 725.6705 \\
ADVI & 1.6172 & 0.3065 & 0.2142 & 0.4787 & 3.4929 & 1.6419 \\
GM-ADVI & 1.1819 & 0.2490 & 0.1262 & 0.3713 & 2.6153 & 1.7021 \\
\bottomrule
\end{tabular}
\begin{tablenotes}
\item[1] SVGD (GMA init): using GMA initial samples; SVGD (Std init): initial particles drawn from a standard normal distribution.
\end{tablenotes}
\end{threeparttable}
\end{table}

\begin{figure}[H]
    \centering
    \includegraphics[width=0.8\linewidth]{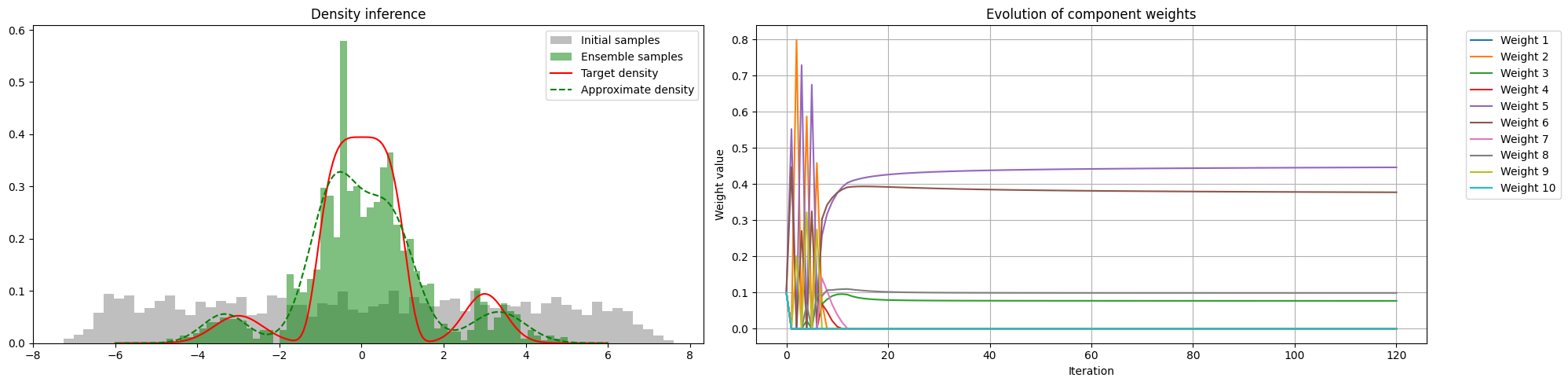}
    \caption{GMA samples and weight trajectories.}
    \label{fig:GMA_test1}
\end{figure}

\begin{figure}[H]
    \centering
    \includegraphics[width=1.0\linewidth]{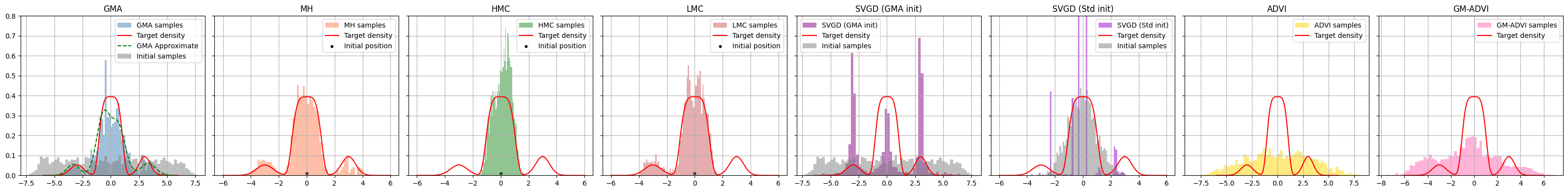}
    \caption{Density inference comparison for all methods.}
    \label{fig:densities_compare_test1}
\end{figure}

\begin{figure}[H]
    \centering
    \includegraphics[width=1.0\linewidth]{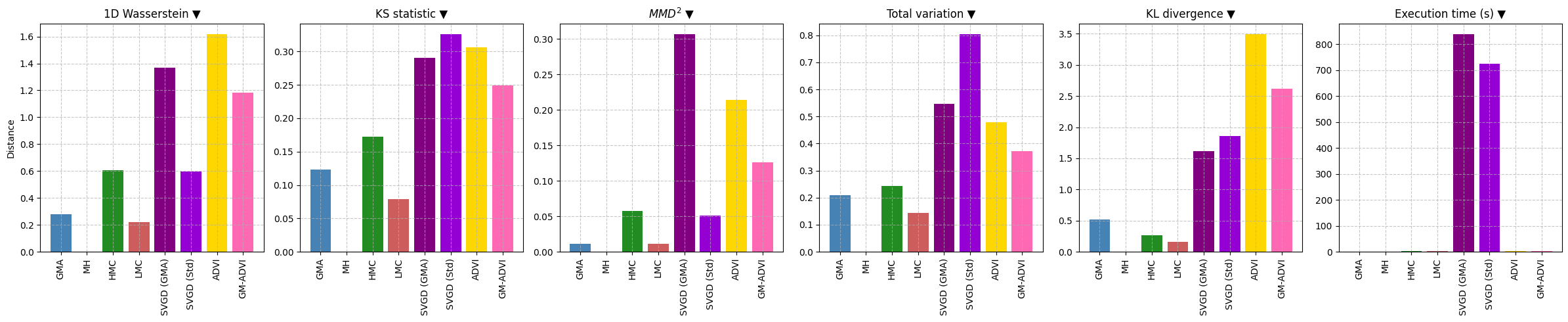}
    \caption{Performance comparison for all methods (MH samples as reference).}
    \label{fig:performance_compariosn_test1}
\end{figure}

\subsubsection*{\textit{Isolated tri-modal bumps}}

Testing the samplers on a density with isolated modes presents a more significant challenge \footnote{Score-based methods such as the classic SVGD are known to be unable to transport particles across isolated modes due to the ineffectiveness of the vanilla KSD objective, when particles are not properly initialised \cite{wenliang_blindness_2021}.}:
\begin{equation} \label{eq:isolated_trimodal_bump}
    \bar{p}(z) = \exp\left(-\frac{(z^2 + 0.1 z^4)^2}{2 \cdot 1.0^2}\right) + 0.3 \cdot \mathcal{N}(z; 5, 0.2^2) + 0.2 \cdot \mathcal{N}(z; -5, 0.2^2)
\end{equation}
This target is similar to the connected density in Eq.\ref{eq:connected_trimodal_bump}, but the side modes are now centered at $z=\pm 5$ with a much smaller variance of $0.04$, creating low-density regions that are difficult for many samplers to traverse.

\paragraph{Results}
The results for the isolated bumps, shown in Fig.\ref{fig:GMA_test2}, Fig.\ref{fig:densities_compare_test2}, Fig.\ref{fig:performance_compariosn_test2}, and Table.\ref{tab:metrics_and_times_test2}. GMA sampling is again highly effective, capturing all three isolated modes while remaining the fastest method by a large margin. The weight evolution plot in Fig.\ref{fig:GMA_test2} confirms this, showing rapid convergence to components located near the modes.

In stark contrast, all MCMC-based methods (MH, HMC, LMC) and the variational methods (MFVI-ADVI, GM-ADVI) fail to discover the side modes, as they all start their exploration from the center of the distribution and do not take large enough steps to cross the low-probability regions. As a result, their samples are concentrated only on the central mode.
SVGD's performance once again hinges on its initialization. When initialized with GMA samples spanning the full support (SVGD (GMA init)), it successfully captures all three modes. However, when initialized from a standard normal distribution centered at zero (SVGD (Std init)), it only captures the central mode. This confirms the known difficulty of SVGD in traversing low-density regions.

Because MH fails to explore the full distribution, using its samples as a reference for the distance metrics is misleading; the metrics in Table.\ref{tab:metrics_and_times_test2} only reflect how well other methods capture the central mode. Visually, however, it is clear from Fig.\ref{fig:densities_compare_test2} that GMA and SVGD (GMA init) are the only methods to successfully approximate the true multi-modal target. This demonstrates the critical advantage of GMA's global initialization strategy in scenarios with isolated modes.

\begin{table}[H]
\centering
\scriptsize
\begin{threeparttable}
\caption{Quality metrics and execution times.}
\label{tab:metrics_and_times_test2}
\begin{tabular}{p{1.5cm} p{1.3cm} p{1.2cm} p{1.2cm} p{1.2cm} p{1.2cm} p{1.3cm}}
\toprule
\textbf{Method} & Wasserstein distance $\downarrow$ & KS statistic $\downarrow$ & MMD\textsuperscript{2} $\downarrow$ & Total Variation $\downarrow$ & KL Divergence $\downarrow$ & Execution time (s) $\downarrow$ \\
\midrule
GMA & 1.2978 & 0.2080 & 0.1237 & 0.2125 & 6.0401 & \textbf{0.0296} \\
MH & 0.0000 & 0.0000 & 0.0000 & 0.0000 & 0.0000 & 0.8624 \\
HMC & 0.1596 & 0.1395 & 0.0134 & 0.1533 & 0.1607 & 0.8995 \\
LMC & 0.1062 & 0.0810 & 0.0049 & 0.1173 & 0.1157 & 1.5460 \\
SVGD (GMA) & 2.4462 & 0.2985 & 0.3646 & 0.6040 & 11.9322 & 853.2492 \\
SVGD (Std) & 0.3159 & 0.3305 & 0.0241 & 0.9195 & 2.2669 & 743.2780 \\
ADVI & 1.9885 & 0.3620 & 0.3835 & 0.2319 & 11.7313 & 60.7535 \\
GM-ADVI & 1.7153 & 0.3650 & 0.2999 & 0.1739 & 9.9653 & 3.0097 \\
\bottomrule
\end{tabular}
\begin{tablenotes}
\item[1] SVGD (GMA init): using GMA initial samples; SVGD (Std init): initial particles drawn from a standard normal distribution.
\end{tablenotes}
\end{threeparttable}
\end{table}

\begin{figure}[H]
    \centering
    \includegraphics[width=0.8\linewidth]{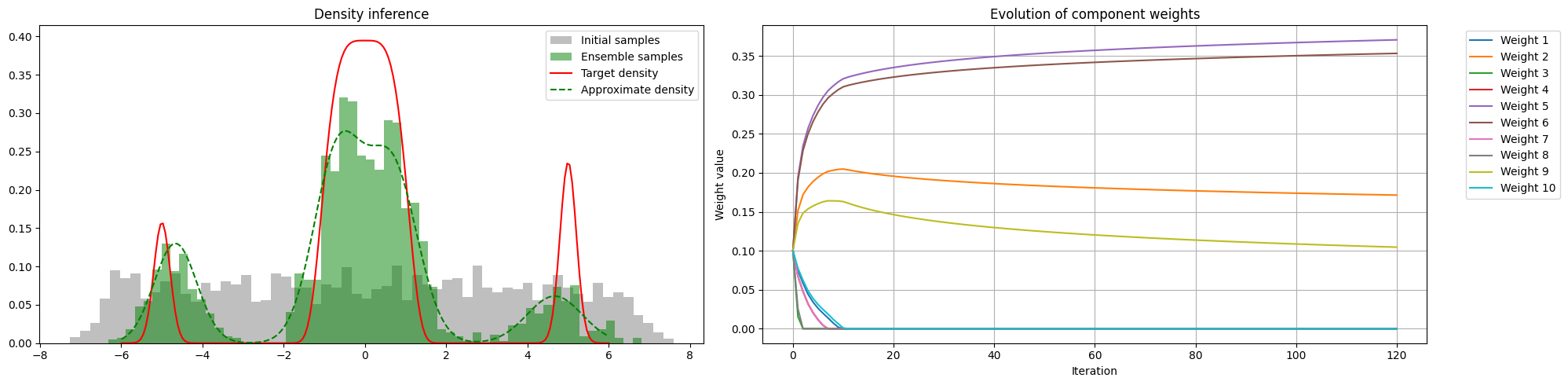}
    \caption{GMA samples and weight trajectories.}
    \label{fig:GMA_test2}
\end{figure}

\begin{figure}[H]
    \centering
    \includegraphics[width=1.0\linewidth]{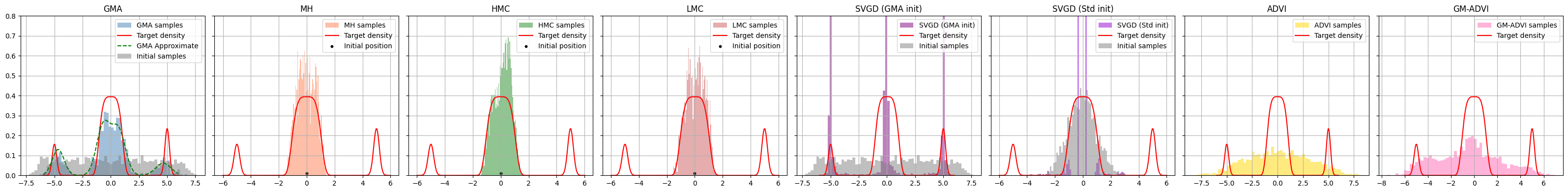}
    \caption{Density inference comparison for all methods.}
    \label{fig:densities_compare_test2}
\end{figure}

\begin{figure}[H]
    \centering
    \includegraphics[width=1.0\linewidth]{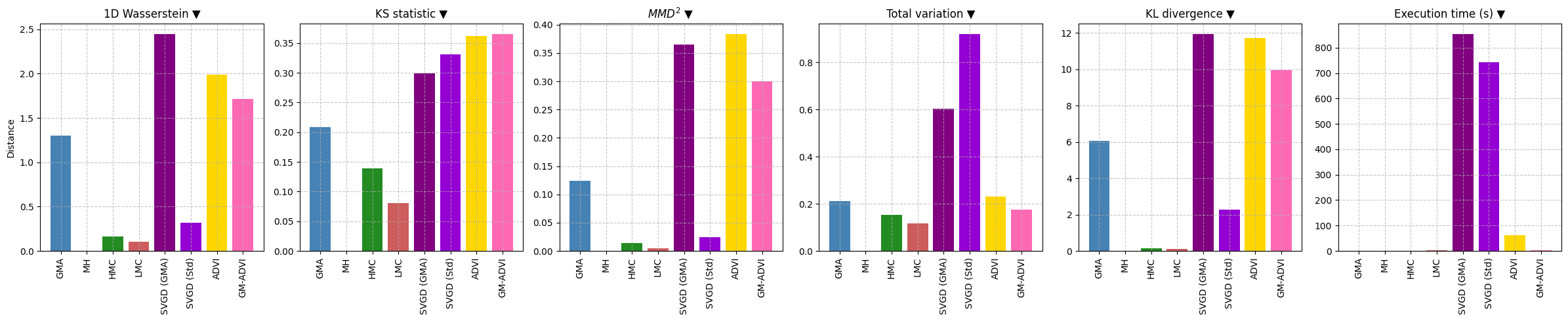}
    \caption{Performance comparison for all methods (MH samples as reference).}
    \label{fig:performance_compariosn_test2}
\end{figure}

\subsubsection{2D 4-modal Gaussians}

To test the samplers in a multi-dimensional scenario, we define a 2D target distribution composed of a mixture of 4 Gaussians. The modes are strategically placed in four quadrants, each with a distinct covariance structure to create a challenging landscape with varied shapes and orientations. The unnormalized target density $\bar{p}(\mathbf{z})$ for $\mathbf{z} = [z_1, z_2]^\top$ is:
\begin{equation} \label{eq:2d_4modal_gaussian}
\bar{p}(\mathbf{z}) = \sum_{i=1}^{4} c_i \cdot \mathcal{N}(\mathbf{z} | \boldsymbol{\mu}_i, \Sigma_i)
\end{equation}
with component weights $c_i$, means $\boldsymbol{\mu}_i$, and covariances $\Sigma_i$ defined as:
\begin{itemize}
    \item $c_1=0.3, \boldsymbol{\mu}_1 = [-3, 3]^\top, \Sigma_1 = \begin{pmatrix} 1.0 & 0.8 \\ 0.8 & 1.0 \end{pmatrix}$
    \item $c_2=0.3, \boldsymbol{\mu}_2 = [3, 3]^\top, \Sigma_2 = \begin{pmatrix} 1.0 & -0.8 \\ -0.8 & 1.0 \end{pmatrix}$
    \item $c_3=0.2, \boldsymbol{\mu}_3 = [-3, -3]^\top, \Sigma_3 = \begin{pmatrix} 1.0 & 0.0 \\ 0.0 & 0.2 \end{pmatrix}$
    \item $c_4=0.2, \boldsymbol{\mu}_4 = [3, -3]^\top, \Sigma_4 = \begin{pmatrix} 0.2 & 0.0 \\ 0.0 & 1.0 \end{pmatrix}$
\end{itemize}

\paragraph{Samplers set-up}
With minimal tuning, the final GMA sampler was configured with $N=225$ Gaussian components on a $15 \times 15$ grid, an initial covariance scale of $0.1$, and was run for $K=300$ iterations with $M=30$ samples per component and $\eta_0=0.1$. For the benchmarks, MH was run for 6750 iterations. HMC used 6750 samples after 1000 warm-up steps, with a step size of 0.1 and 10 integration steps. LMC was run for 6750 steps with a learning rate of 0.1 and a noise scale of 0.02. SVGD was run for 500 iterations with a step size of 0.01 and 800 particles to save computing time. For MFVI-ADVI, the optimization was performed for 5000 steps. For GM-ADVI, the parameters were matched to the GMA setup ($N=225, M=30, K=300$) with a learning rate of 0.01.

\paragraph{Results}
We first investigated the effect of the initial covariance scale for the GMA components. As shown in Fig.\ref{fig:GMA_test3_cov_tuning}, there is a clear trade-off: with same other settings (i.e. no. of Gaussian components $N=25$, , samples per component $M=100$, iterations $K=120$, initial learning rate $\eta=0.05$), a larger covariance (i.e. $cov=1.0$, left figure) ensures the components overlap and cover the target support, but results in a coarse approximation. Conversely, a smaller covariance (i.e. $cov=0.1$, right figure) provides better resolution for capturing the shape of each mode but requires a denser grid of components to ensure sufficient overlap. Even when the components are not perfectly placed, GMA correctly identifies the single nearest component to each mode and assigns it a high weight, demonstrating the robustness of the weight optimization in GMA.

\begin{figure}[H]
    \centering
    \includegraphics[width=0.49\linewidth]{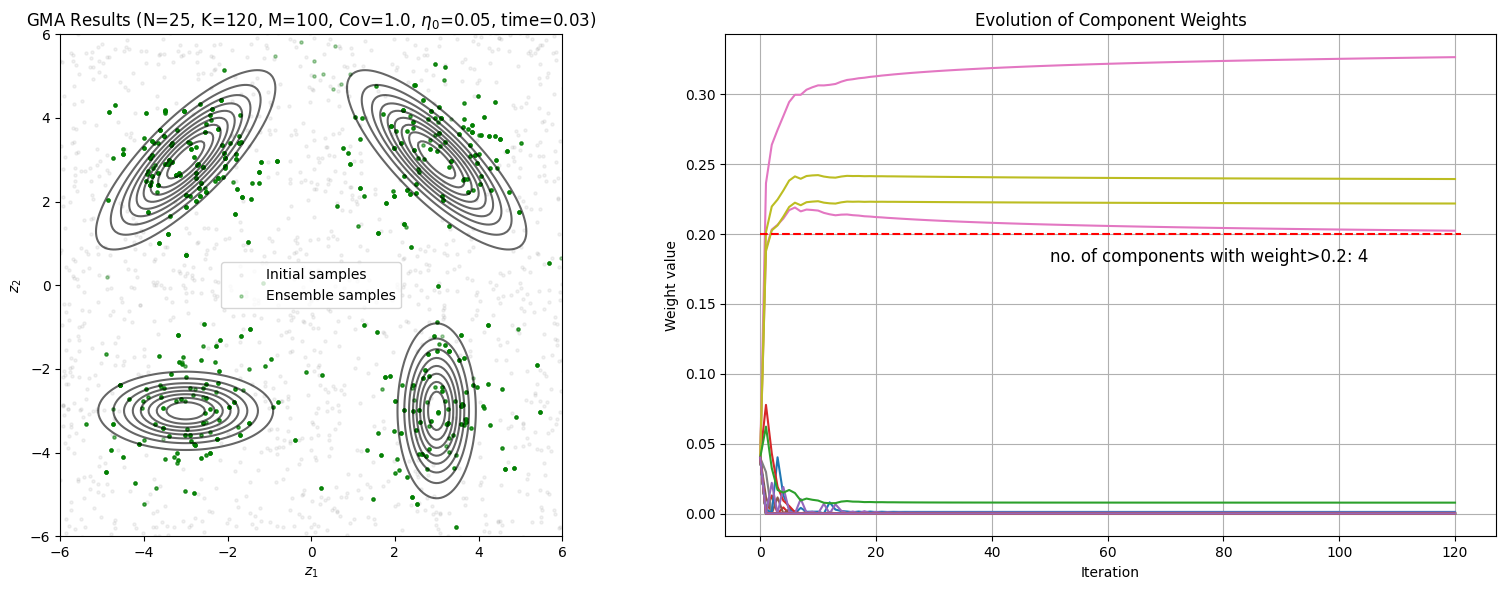}
    \includegraphics[width=0.49\linewidth]{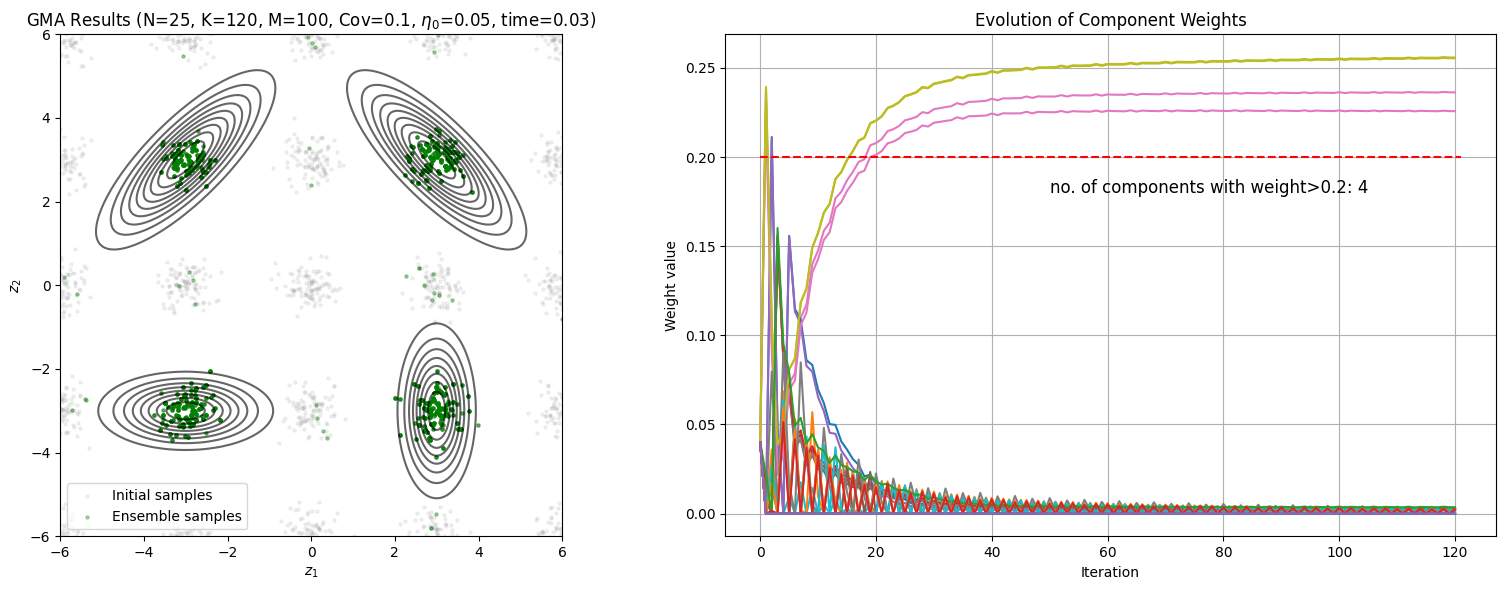}
    \caption{Tuning GMA components covariance. Left: A large covariance (1.0) covers the modes easily but gives a coarse approximation. Right: A small covariance (0.1) gives good resolution but requires more components to infer the full geometry.}
    \label{fig:GMA_test3_cov_tuning}
\end{figure}

Based on this, we selected a small covariance scale (0.1) and increased the number of components. The final results are presented in Fig.\ref{fig:GMA_test3} and Fig.\ref{fig:densities_compare_test3}. The tuned GMA sampler successfully captures all four modes with roughly even proportions across 4 modes. The weight evolution plot shows a more gradual convergence due to the larger number of components, but it ultimately identifies the correct regions of high density. It's also computationally efficient - it takes 3.14 seconds to compute, being the second fastest method.

The performance of the benchmark methods, as shown in Fig.\ref{fig:densities_compare_test3}, was varied. MH again provides a solid baseline, capturing all modes; however, the mode weights [0.3, 0.3, 0.2, 0.2] are not fully captured by the proportions of samples. HMC and LMC both collapse to a single mode. Both SVGD variants successfully find the modes but are computationally very expensive. MFVI-ADVI fails by averaging between the modes, while GM-ADVI captures the modes but with high variance and at a significant computational cost. In this comparison, GMA provides a high-quality approximation of the target density, outperforming most methods in sample quality while remaining computationally competitive.

\begin{table}[H]
\centering
\scriptsize
\caption{Execution times for the tuned 2D 4-modal Gaussians experiment.}
\label{tab:times_test3}
\begin{tabular}{lc}
\toprule
\textbf{Method} & Execution time (s) $\downarrow$ \\
\midrule
GMA & 3.14 \\
MH & 6.47 \\
HMC & \textbf{2.87} \\
LMC & 15.94 \\
SVGD (Std init) & 107.45 \\
SVGD (GMA init) & 107.46 \\
MFVI-ADVI & 50.56 \\
GM-ADVI & 13.91 \\
\bottomrule
\end{tabular}
\end{table}

\begin{figure}[H]
    \centering
    \includegraphics[width=0.8\linewidth]{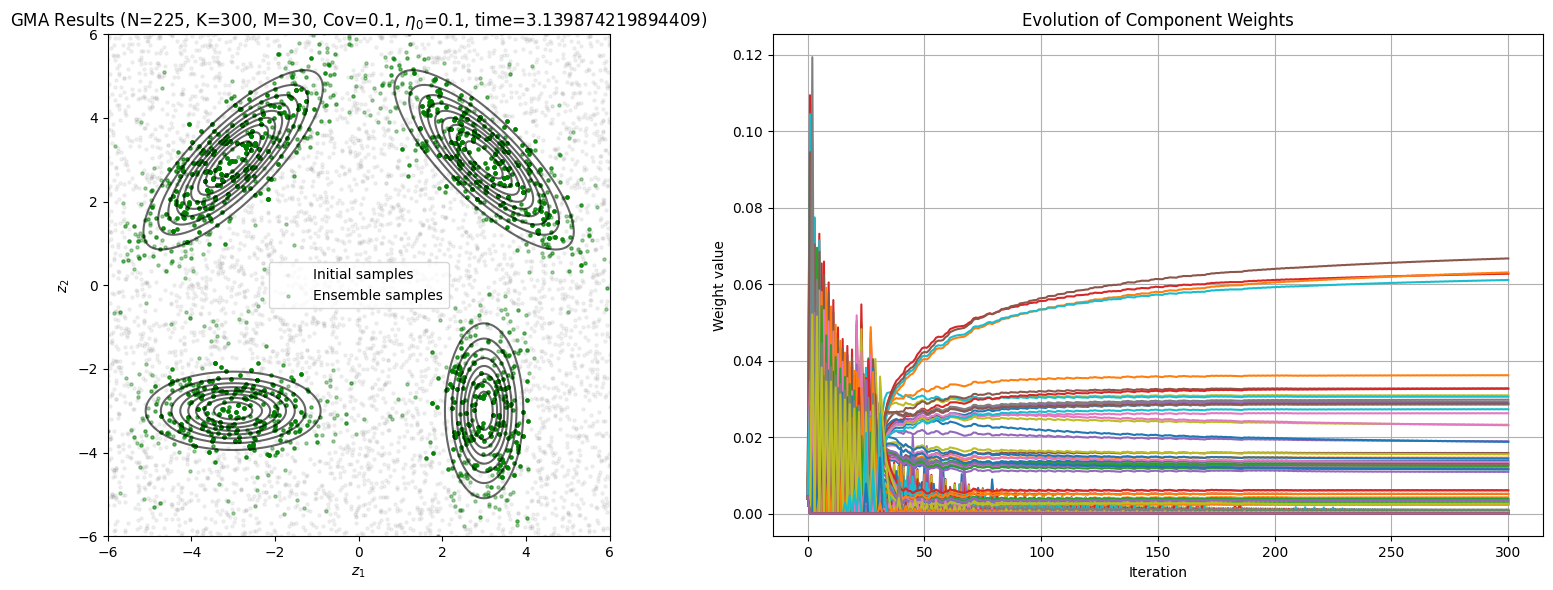}
    \caption{Tuned GMA samples and weight trajectories for the 2D 4-modal target.}
    \label{fig:GMA_test3}
\end{figure}

\begin{figure}[H]
    \centering
    \includegraphics[width=1.0\linewidth]{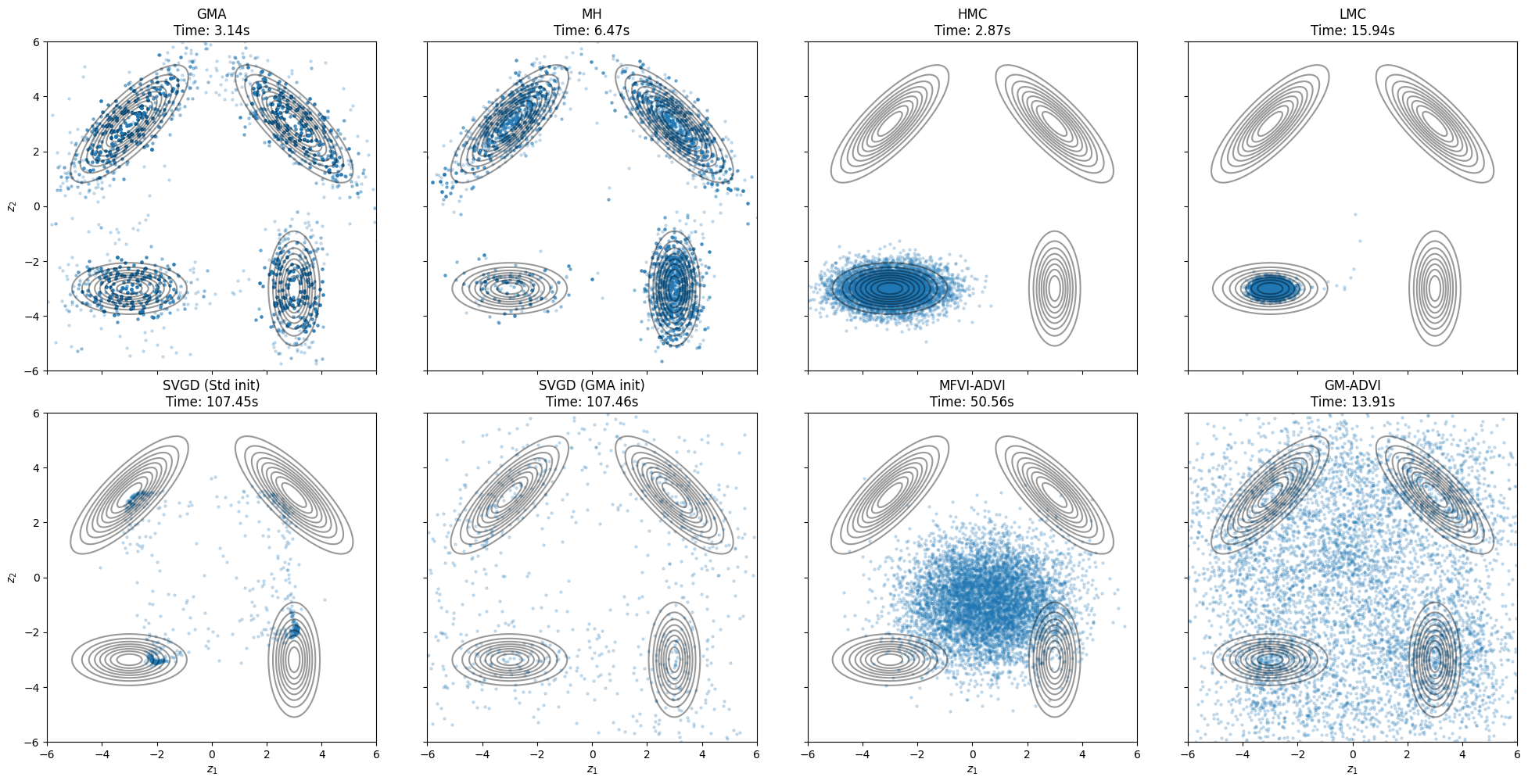}
    \caption{Density inference comparison for all methods on the 2D 4-modal target after tuning GMA.}
    \label{fig:densities_compare_test3}
\end{figure}

\subsubsection{A 2D moon-shaped density}

We further test these samplers on a 2D moon-shaped density, which is known to be difficult for methods that struggle with non-Gaussian geometries and high curvature. The unnormalized target density is defined as:
\begin{equation} \label{eq:2d_moon}
\bar{p}(\mathbf{z}) = \exp\left(-\frac{z_1^2}{2} - \frac{1}{2}(10z_2 + 3z_1^2 - 3)^2\right)
\end{equation}

\paragraph{Samplers set-up}
With minimal tuning, the final GMA sampler was configured with $N=400$ Gaussian components on a $20 \times 20$ grid, each with a small initial covariance scale of $0.04$. The optimization was run for $K=1500$ iterations with $M=20$ samples per component (totaling 8000 initial samples) and an initial learning rate of $\eta_0=0.1$. For the benchmarks, MH was run for 8000 iterations with a proposal covariance of $0.1\mathbf{I}$. HMC used 8000 samples after 1000 warm-up steps, a step size of 0.05, and 20 integration steps. LMC was run for 8000 steps with a learning rate of 0.01. SVGD was run for 500 iterations with a step size of 0.01 and 800 particles to save computing time. MFVI-ADVI was run for 20,000 steps. For GM-ADVI, the number of mixture components and samples per component were matched to the GMA setup ($N=400, M=20$), and it was trained for 1500 steps with a learning rate of 0.01.

\paragraph{Results}
The outcomes for the moon-shaped density test are shown in Fig.\ref{fig:GMA_test4} and Fig.\ref{fig:densities_compare_test4}, with execution times in Table.\ref{tab:times_test4}. The results highlight the flexibility of the GMA method. As seen in Fig.\ref{fig:GMA_test4}, by using a large number of components with small variances, GMA successfully approximates the complex, curved geometry of the target. The weight evolution plot shows that the algorithm correctly identifies and assigns high weights to the components lying along the crescent shape.

The performance of the other samplers was mixed. MH, HMC, and LMC all performed well, successfully capturing the shape of the distribution, with MH being the fastest of all methods at just over one second. The SVGD samplers also accurately captured the target shape but were the most computationally expensive, even with only one-tenth of the 8,000 samples compared to other samplers. The variational methods struggled with the non-Gaussian geometry. MFVI-ADVI produced a poor approximation, unable to represent the curvature. After significant tuning to match the GMA setup, GM-ADVI's performance still produced a noisy and highly dispersed set of samples, failing to conform to the target's shape and taking significantly longer to run. This experiment demonstrates that while classic MCMC methods can be very effective for such densities, GMA provides a strong alternative that can flexibly adapt to complex geometries given appropriate hyperparameter settings.

\begin{table}[H]
\centering
\scriptsize
\caption{Execution times for the 2D moon-shaped density experiment.}
\label{tab:times_test4}
\begin{tabular}{lc}
\toprule
\textbf{Method} & Execution time (s) $\downarrow$ \\
\midrule
GMA & 10.47 \\
MH & \textbf{1.09} \\
HMC & 2.69 \\
LMC & 4.03 \\
SVGD (Std init) & 101.89 \\
SVGD (GMA init) & 116.32 \\
MFVI-ADVI & 4.63 \\
GM-ADVI & 60.16 \\
\bottomrule
\end{tabular}
\end{table}

\begin{figure}[H]
    \centering
    \includegraphics[width=0.8\linewidth]{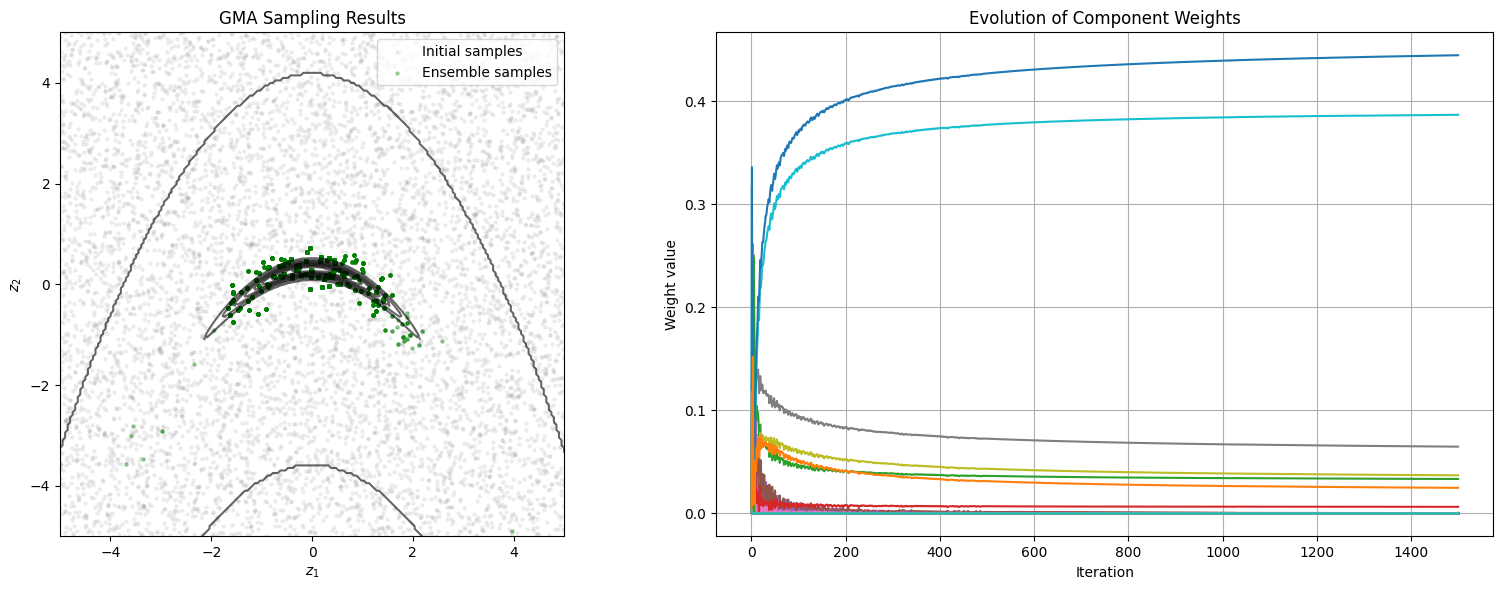}
    \caption{GMA samples and weight trajectories for the 2D moon-shaped target.}
    \label{fig:GMA_test4}
\end{figure}

\begin{figure}[H]
    \centering
    \includegraphics[width=1.0\linewidth]{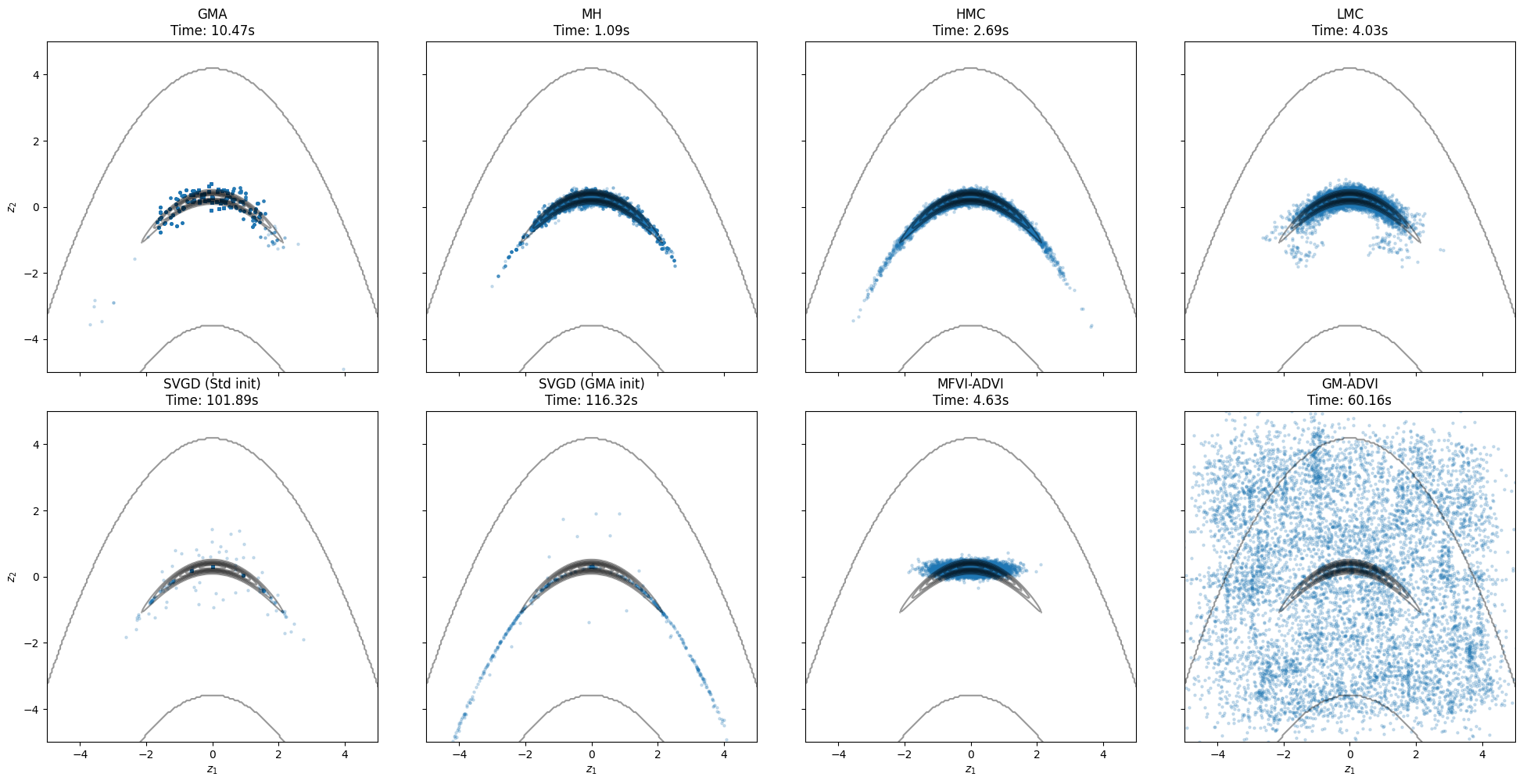}
    \caption{Density inference comparison for all methods on the 2D moon-shaped target.}
    \label{fig:densities_compare_test4}
\end{figure}

\subsubsection{A 2D double banana density}

Our 5-th test case is a 2D double banana-shaped density, a challenging distribution characterized by its two thin, curved, and isolated modes. The unnormalized target density is defined as:
\begin{equation} \label{eq:2d_double_banana}
\bar{p}(\mathbf{z}) = \exp\left(-2((z_1^2 + z_2^2) - 3)^2 + \log(\exp(-2(z_1-2)^2) + \exp(-2(z_1+2)^2))\right)
\end{equation}

\paragraph{Samplers set-up}
After some covariance tuning, the GMA sampler was configured with a dense grid of $N=625$ components ($25 \times 25$), a very small covariance scale of $0.03$, and run for $K=600$ iterations with $M=20$ samples per component and $\eta_0=0.1$. For the benchmarks, MH was run for 12,500 iterations with a proposal covariance of $0.05\mathbf{I}$. HMC used 12,500 samples after 1000 warm-up steps, with a step size of 0.02 and 20 integration steps. LMC was run for 12,500 steps with a learning rate of 0.005. SVGD was run for 500 iterations with a step size of 0.01 and 800 particles. MFVI-ADVI was run for 30,000 steps. For GM-ADVI, the parameters were matched to the GMA setup ($N=625, M=20, K=600$) with a learning rate of 0.01.

\paragraph{Results}
The results for the double banana density are shown in Fig.\ref{fig:GMA_test5} and Fig.\ref{fig:densities_compare_test5}, with execution times in Table.\ref{tab:times_test5}. This experiment highlights the trade-offs inherent in the GMA method when faced with highly complex geometries. To capture the two thin banana shapes, a very large number of components ($N=625$) with a small variance ($cov=0.1$) homogeneously for each Gaussian component was required. As shown in Fig.\ref{fig:GMA_test5}, this configuration allows GMA to successfully approximate the target, with the weight evolution plot confirming that components along both bananas are assigned significant weight.

However, this success comes at a computational cost. While GMA's time of 7.47 seconds is still much faster than the SVGD variants, it is slower than the well-tuned MCMC methods. MH and HMC perform exceptionally well, capturing the target shape accurately, with MH being the fastest overall at 1.62 seconds. LMC misses one mode. The SVGD samplers also produce high-quality samples but remain prohibitively slow. Variational methods in this case were largely unsuccessful: MFVI-ADVI collapsed to a single mode between the two bananas, and GM-ADVI produced a very noisy and dispersed approximation, taking over a minute to run. This experiment illustrates that while GMA is highly flexible, achieving good performance on very complex, non-Gaussian targets requires careful tuning.

\begin{table}[H]
\centering
\scriptsize
\caption{Execution times for the 2D double banana density experiment.}
\label{tab:times_test5}
\begin{tabular}{lc}
\toprule
\textbf{Method} & Execution time (s) $\downarrow$ \\
\midrule
GMA & 7.47 \\
MH & \textbf{1.62} \\
HMC & 3.93 \\
LMC & 9.29 \\
SVGD (Std init) & 123.44 \\
SVGD (GMA init) & 122.67 \\
MFVI-ADVI & 70.36 \\
GM-ADVI & 71.72 \\
\bottomrule
\end{tabular}
\end{table}

\begin{figure}[H]
    \centering
    \includegraphics[width=0.8\linewidth]{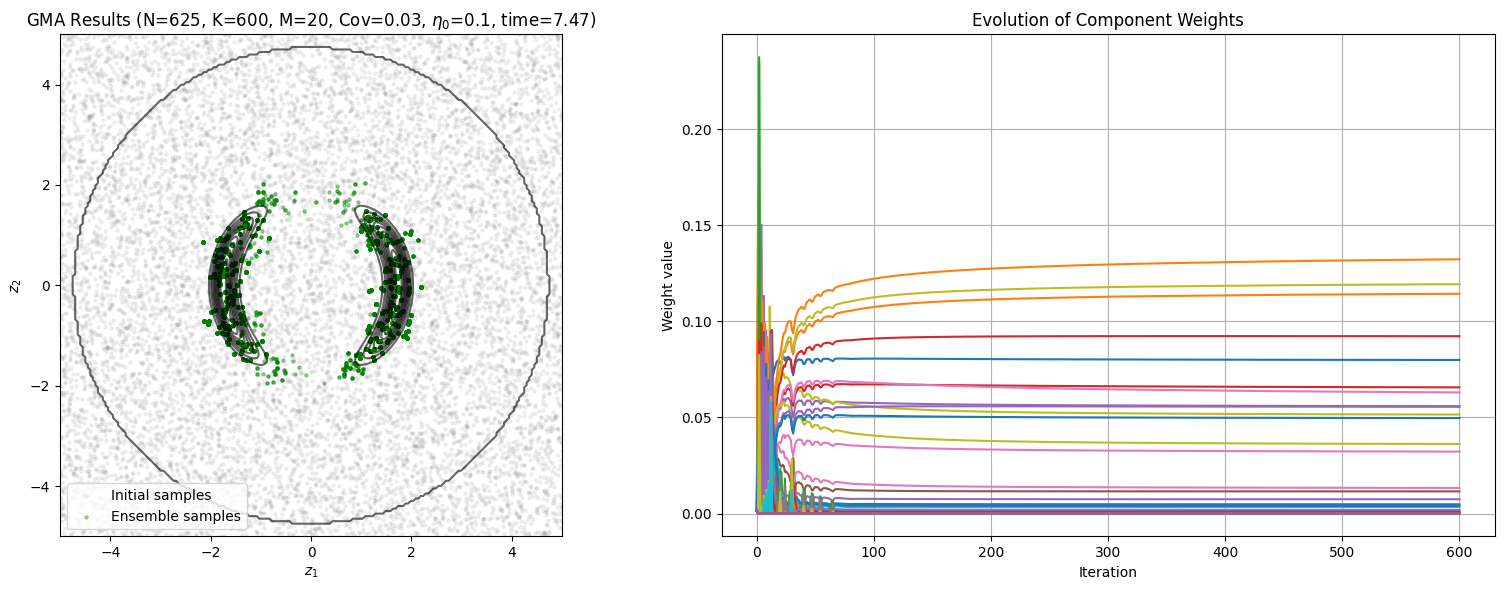}
    \caption{GMA samples and weight trajectories for the 2D double banana target.}
    \label{fig:GMA_test5}
\end{figure}

\begin{figure}[H]
    \centering
    \includegraphics[width=1.0\linewidth]{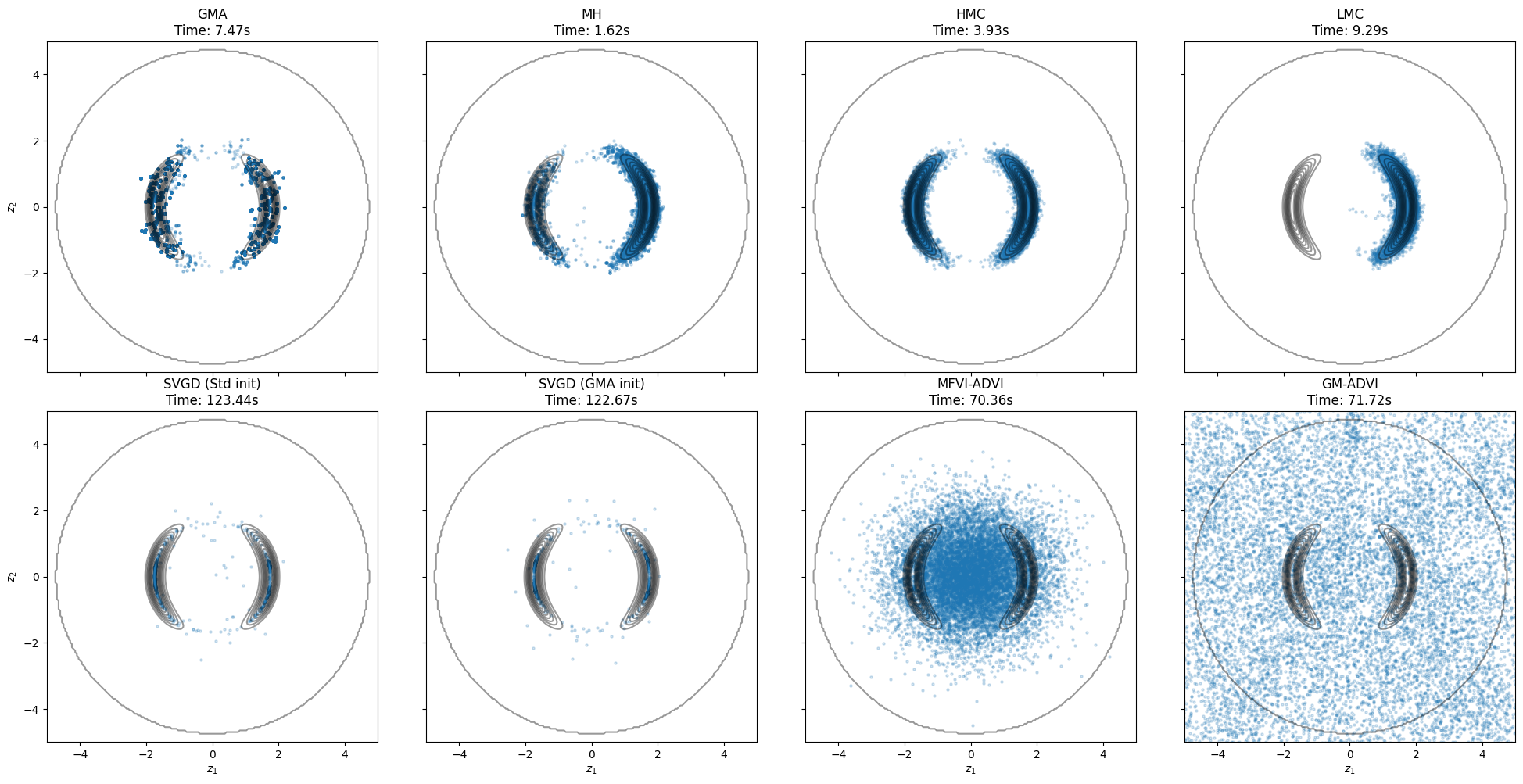}
    \caption{Density inference comparison for all methods on the 2D double banana target.}
    \label{fig:densities_compare_test5}
\end{figure}

\subsubsection{A 2D wave density}

Our 6-th experiment targets a 2D wave-shaped density, which presents a continuous, highly curved, and non-Gaussian manifold that is challenging for many samplers. The unnormalized target density is defined as:
\begin{equation} \label{eq:2d_wave}
\bar{p}(\mathbf{z}) = \exp\left(-\frac{1}{2 \cdot 0.16} (z_2 - \sin(\frac{\pi z_1}{2}))^2\right)
\end{equation}

\paragraph{Samplers set-up}
To approximate this complex shape, the GMA sampler was configured with a very dense grid of $N=900$ components ($30 \times 30$), a small covariance scale of $0.02$, and was run for $K=800$ iterations with $M=15$ samples per component and $\eta_0=0.1$. For the benchmarks, MH was run for 13,500 iterations with a proposal covariance of $0.05\mathbf{I}$. HMC used 13,500 samples after 1000 warm-up steps, with a step size of 0.05 and 20 integration steps. LMC was run for 13,500 steps with a learning rate of 0.01. SVGD was run for 500 iterations with a step size of 0.01 and 800 particles. MFVI-ADVI was run for 30,000 steps. For GM-ADVI, the parameters were matched to the GMA setup ($N=900, M=15, K=800$) with a learning rate of 0.01.

\paragraph{Results}
The results for the wave density are shown in Fig.\ref{fig:GMA_test6} and Fig.\ref{fig:densities_compare_test6}, with execution times in Table.\ref{tab:times_test6}. This test further demonstrates GMA's flexibility. By deploying a large number of fine-grained components, GMA is able to successfully trace the winding path of the wave, as shown in Fig.\ref{fig:GMA_test6}. The final samples, while slightly more dispersed than the best MCMC methods, accurately capture the overall geometry of the target.

The benchmark methods showed a clear division in performance. HMC and LMC performed very well, generating high-quality samples that closely followed the wave. Notably, MH was fast (1.26 seconds) but produces samples covering part of the observed tube. SVGD with full support initialization captures the shape of the tube but the dispersion is also large.
MFVI-ADVI averaged across the waves, producing a wide, uninformative distribution centered on the x-axis, whihle GM-ADVI, being the second slowest method, failed to converge and resulted in samples scattered randomly across the entire domain \footnote{The failure of GM-ADVI may be due to the limited number of optimisation iterations which matches that of GMA.}. This experiment shows that while certain MCMC methods (HMC and LMC) are highly effective for this type of problem, GMA can also succeed given fine tuned hyper-parameters \footnote{The most significant hyper-parameters of GMA are the co-dynamics of the homogeneous covariance $cov$ for each component and the number of components $N$. These two can be quickly tuned in a trial running step. Other parameters include the number of samples per component $M$, number of optimisation iterations $K$, initial learning rate $\eta_0$.}.

\begin{table}[H]
\centering
\scriptsize
\caption{Execution times for the 2D wave density experiment.}
\label{tab:times_test6}
\begin{tabular}{lc}
\toprule
\textbf{Method} & Execution time (s) $\downarrow$ \\
\midrule
GMA & 8.92 \\
MH & \textbf{1.26} \\
HMC & 3.18 \\
LMC & 4.70 \\
SVGD (Std init) & 83.39 \\
SVGD (GMA init) & 89.71 \\
MFVI-ADVI & 54.01 \\
GM-ADVI & 105.63 \\
\bottomrule
\end{tabular}
\end{table}

\begin{figure}[H]
    \centering
    \includegraphics[width=0.8\linewidth]{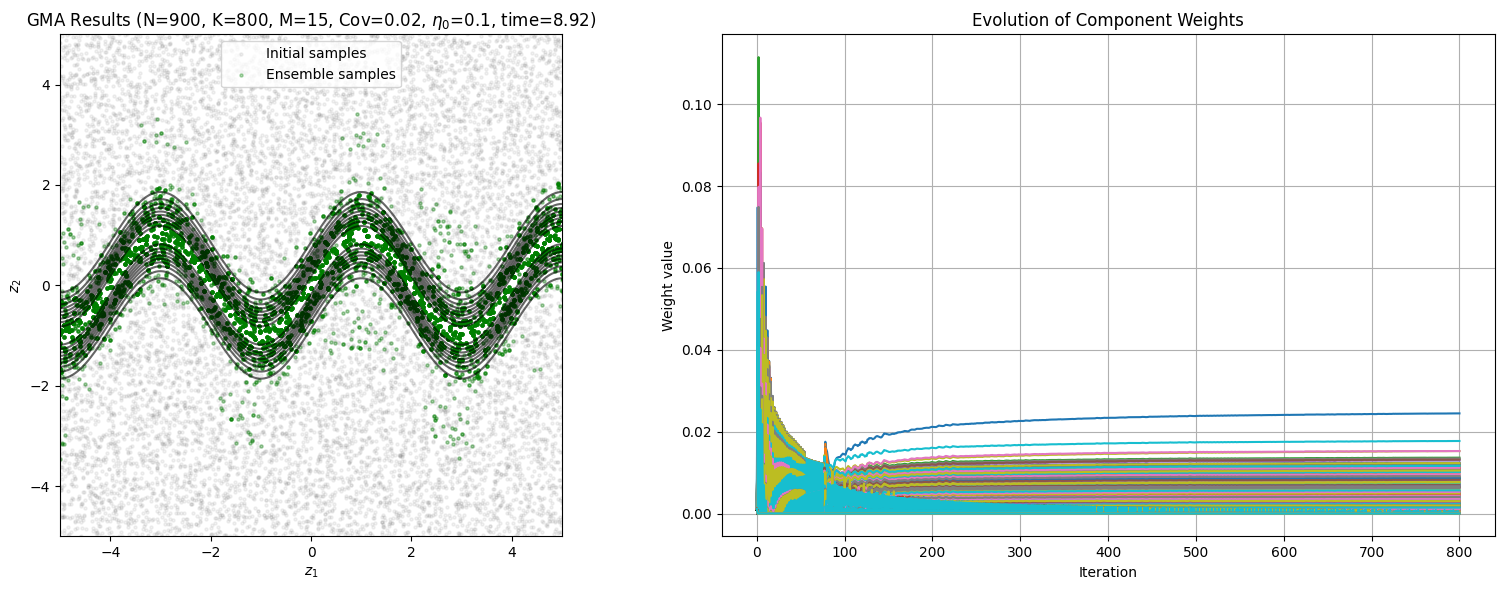}
    \caption{GMA samples and weight trajectories for the 2D wave density target.}
    \label{fig:GMA_test6}
\end{figure}

\begin{figure}[H]
    \centering
    \includegraphics[width=1.0\linewidth]{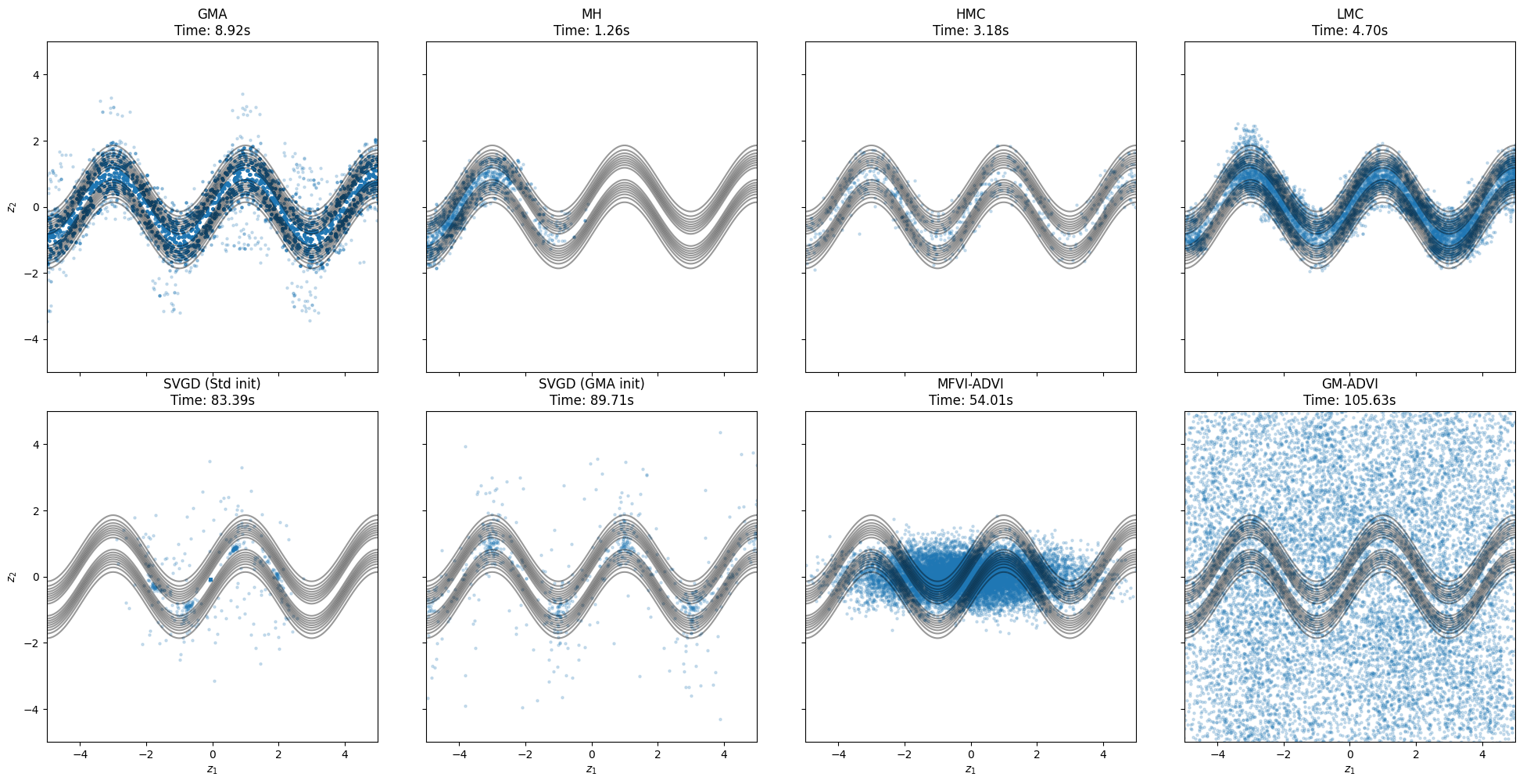}
    \caption{Density inference comparison for all methods on the 2D wave density target.}
    \label{fig:densities_compare_test6}
\end{figure}

\subsubsection{2D Neal’s funnel}

Our 7-th experiment addresses \textit{Neal's funnel}, a classic distribution commonly used to test the limitations of samplers. It is characterized by a strong dependency between its two dimensions, where the variance of one variable ($z_1$) is controlled by the value of the other ($z_2$). This creates a 'funnel' shape with a wide top and an extremely narrow, high-density neck, which is difficult for many sampling algorithms to explore efficiently. The unnormalized target density is defined as \cite{huang_electrostatics-based_2024}:
\begin{equation} \label{eq:2d_funnel}
\bar{p}(\mathbf{z}) = \mathcal{N}(z_2 | 0, 3^2) \cdot \mathcal{N}(z_1 | 0, \exp(z_2))
\end{equation}

\paragraph{Samplers set-up}
To handle the asymmetric geometry, the GMA sampler was configured with an asymmetric grid of $N=900$ components ($30 \times 30$) over the range $[-5, 5]$ for $z_1$ and $[-8, 6]$ for $z_2$. A small covariance scale of $0.03$ was used, and the optimization was run for $K=800$ iterations with $M=15$ samples per component and $\eta_0=0.1$. For the benchmarks, MH was run for 13,500 iterations with a proposal covariance of $0.1\mathbf{I}$. HMC used 13,500 samples after 1000 warm-up steps, with a step size of 0.05 and 20 integration steps. LMC was run for 13,500 steps with a learning rate of 0.01. SVGD was run for 500 iterations with a step size of 0.01 and 800 particles. MFVI-ADVI was run for 30,000 steps. For GM-ADVI, the parameters were matched to the GMA setup ($N=900, M=15, K=800$) with a learning rate of 0.01.

\paragraph{Results}
The results for the Neal's funnel experiment are shown in Fig.\ref{fig:GMA_test7} and Fig.\ref{fig:densities_compare_test7}, with execution times in Table.\ref{tab:times_test7}. With minimal tuning, the GMA method produced exceptionally high-quality samples, proving its capability on sampling this challenging distribution. As seen in Fig.\ref{fig:GMA_test7}, the final ensemble samples successfully capture the entire funnel geometry, from the wide, diffuse top to the narrow, high-density neck. The weight evolution plot shows that a large number of components retain non-trivial weights, which is necessary to collectively approximate the complex, conditional structure of the funnel.

In comparison, the benchmark methods showed varied performance. HMC, which is specifically designed for such geometries, performed best and serves as the gold standard. The quality of the GMA samples is highly competitive and arguably the second-best overall. MH produced reasonable samples but struggled to explore the very bottom of the funnel. LMC's performance was poor, with many samples diverging in the bottom. SVGD variants failed to capture the full geometry, with most particles concentrating in the wider part of the funnel. Variational methods were unsuccessful: MFVI-ADVI produced a simple Gaussian approximation that missed the target's structure, and GM-ADVI, despite its complexity and long runtime, resulted in a scattered, uninformative sample set, given the current parameters. This experiment verifies the impressive flexibility of the GMA sampler, demonstrating its competitive performance with specialized samplers such as HMC even on pathological distributions.

\begin{table}[H]
\centering
\scriptsize
\caption{Execution times for the 2D Neal's funnel experiment.}
\label{tab:times_test7}
\begin{tabular}{lc}
\toprule
\textbf{Method} & Execution time (s) $\downarrow$ \\
\midrule
GMA & 9.27 \\
MH & 4.44 \\
HMC & \textbf{3.03} \\
LMC & 8.70 \\
SVGD (Std init) & 82.97 \\
SVGD (GMA init) & 84.38 \\
MFVI-ADVI & 47.20 \\
GM-ADVI & 82.63 \\
\bottomrule
\end{tabular}
\end{table}

\begin{figure}[H]
    \centering
    \includegraphics[width=1.0\linewidth]{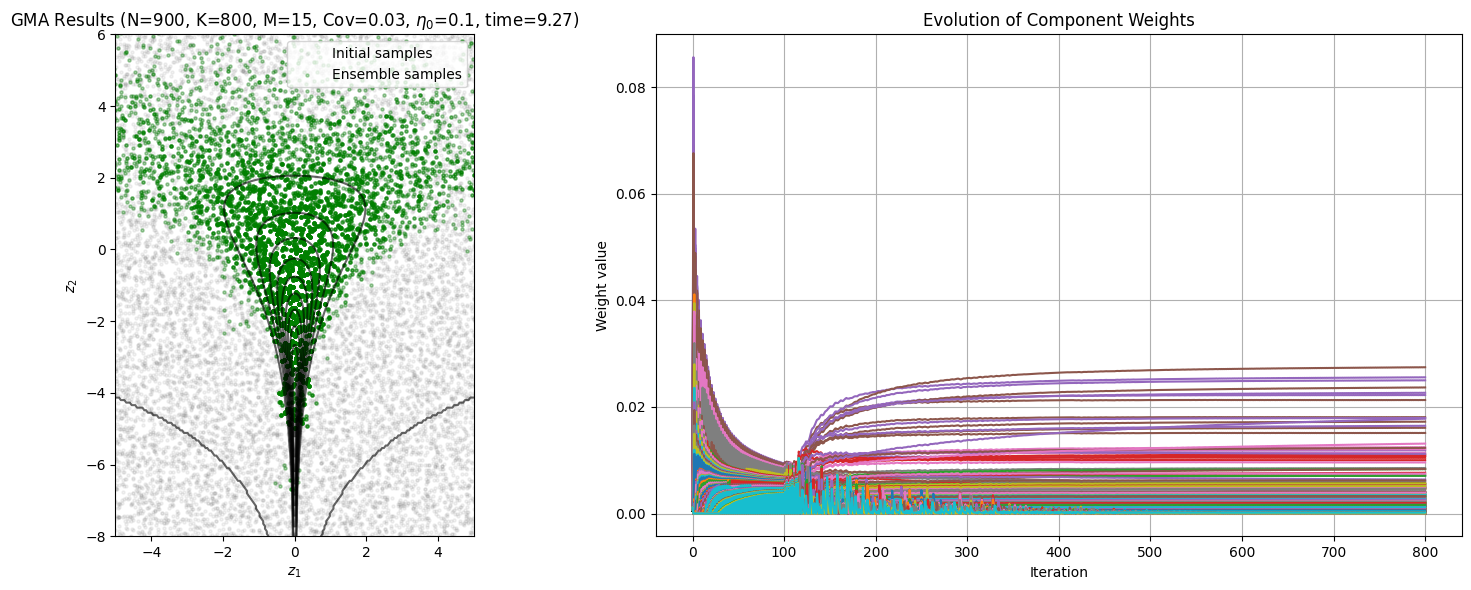}
    \caption{GMA samples and weight trajectories for the 2D Neal's funnel target.}
    \label{fig:GMA_test7}
\end{figure}

\begin{figure}[H]
    \centering
    \includegraphics[width=1.0\linewidth]{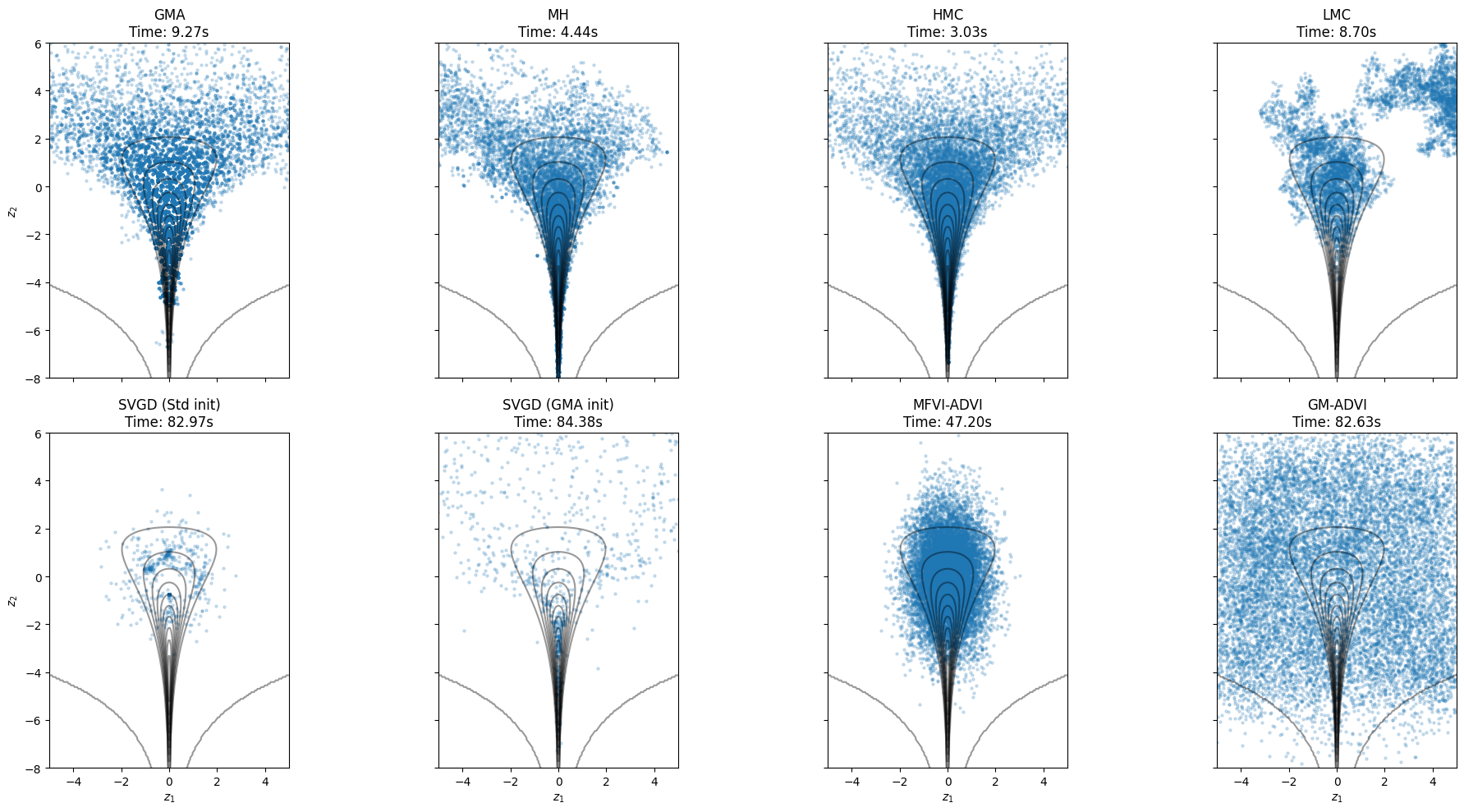}
    \caption{Density inference comparison for all methods on the 2D Neal's funnel target.}
    \label{fig:densities_compare_test7}
\end{figure}

\subsubsection{A 2D star-shaped density}
\label{subsec:star_shaped_density}

We look at another 2D, multi-modal target density \cite{Wang2021EVI} composed of $K=5$ rotated, anisotropic Gaussians arranged on a circle with strong per-arm skew. The (normalized) target is:
\[
p(\mathbf z)=\tfrac{1}{5}\sum_{k=1}^{5}\mathcal N(\mathbf z;\boldsymbol\mu_k,\Sigma_k)
\]
where $\{\boldsymbol\mu_k,\Sigma_k\}$ are generated by successive rotations of a base mean and covariance (details see Appendix.\ref{app:star_shape_implementations}). This geometry stresses both mode coverage and anisotropy.

\paragraph{Setup}
We benchmark eight samplers on inferring this complex density: EM-GMA (Section.\ref{subsec:em_GMA}), MH \cite{hastings_monte_1970}, HMC (NUTS \cite{hoffman2014nuts}), LMC \cite{roberts_exponential_1996}, SVGD \cite{liu_stein_2019}, MFVI-ADVI \cite{kucukelbir_automatic_2016}, GM-ADVI \cite{morningstar_automatic_2020}, and a particle-based energetic variational inference sampler (EVI \cite{Wang2021EVI}).
Each method produces exactly $N_{\text{draw}}=2000$ samples to enable a fair speed-accuracy comparison. Discrepancy is measured by the unbiased squared \textit{Maximum Mean Discrepancy} (MMD \cite{gretton2007kernel}, calculation see Appendix.\ref{app:distributional_distance_metrics}) $\widehat{\mathrm{MMD}}^2$ between a method’s samples and $N_{\text{ref}}=2000$ i.i.d. reference draws from the target $p$ (as the target is known and it's convenient to sample from), using a three-scale RBF kernel with bandwidths $\{0.5,1,2\}\times$ the median-heuristic (Appendix.\ref{app:star_shape_implementations}). 
We record the execution timings including warm-up/optimization (NUTS/ADVI/GM-ADVI/EVI) and the full iteration budget (SVGD/LMC/MH/EM-GMA). \textit{EM-GMA} fits a 5-component GMM to $p$ by maximizing $\mathbb{E}_{p}[\log q_\theta(\mathbf z)]$ with population EM using a self-normalized importance bank (no dataset and no target gradients used). Full hyperparameter settings are summarized in Appendix.\ref{app:star_shape_implementations}.

\paragraph{Results}
Table.\ref{tab:star_results} and Fig.\ref{fig:star_grid} show the time and sample-based $\widehat{\mathrm{MMD}}^2$. We observe that, EM-GMA achieves the best accuracy at the lowest cost (time $1.03$s, $\widehat{\mathrm{MMD}}^2=1.09{\times}10^{-5}$); it learns components that align with, and stretch along, the 5 anisotropic arms; this mass-covering behaviour resulted from EM's inclusive-KL objective. NUTS is next most accurate ($4.67{\times}10^{-3}$) but requires $\sim 18$s due to gradient-based simulation and warm-up. With some tunings for stabilization, GM-ADVI captures multi-arm structure ($1.45{\times}10^{-2}$ in $4.35$s), and particle-based EVI also recovers the star albeit at higher cost ($1.64{\times}10^{-2}$ in $33.85$s). In contrast, SVGD under-covers several arms ($8.14{\times}10^{-2}$), while LMC and MH exhibit substantial bias (\(1.55{\times}10^{-1}\) and \(1.80{\times}10^{-1}\), respectively). MFVI-ADVI collapses centrally (\(1.98{\times}10^{-1}\)), reflecting the known limitations of mean-field families on multi-modal targets.

\begin{table}[H]
\centering
\scriptsize
\caption{Star-shaped density: time and $\widehat{\mathrm{MMD}}^2$ ($N_{\text{draw}}=2000$ samples).}
\label{tab:star_results}
\begin{tabular}{lcc}
\toprule
\textbf{Method} & \textbf{Time (s) $\downarrow$} & $\boldsymbol{\widehat{\mathrm{MMD}}^2}$ \textbf{$\downarrow$} \\
\midrule
EM-GMA        &\underline{1.03}  &\underline{$1.09\times 10^{-5}$} \\
MH            & 1.06  & $1.80\times 10^{-1}$ \\
NUTS (HMC)    & 18.21 & $4.67\times 10^{-3}$ \\
LMC           & 7.26  & $1.55\times 10^{-1}$ \\
SVGD          & 1.17  & $8.14\times 10^{-2}$ \\
MFVI-ADVI     & 10.96 & $1.98\times 10^{-1}$ \\
GM-ADVI       & 4.35  & $1.45\times 10^{-2}$ \\
EVI           & 33.85 & $1.64\times 10^{-2}$ \\
\bottomrule
\end{tabular}
\end{table}

\begin{figure}[H]
    \centering
    \includegraphics[width=1.0\linewidth]{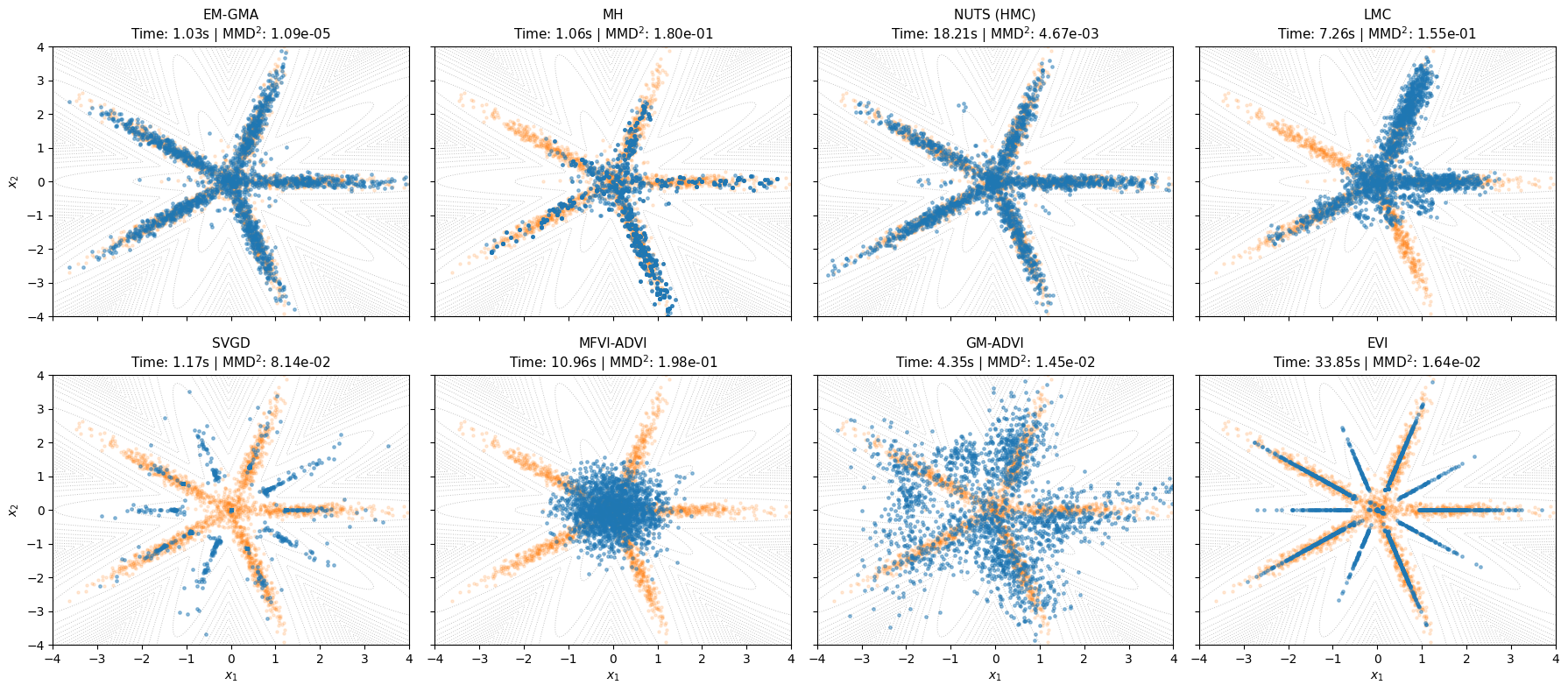}
    \caption{Star-shaped density: sample overlays for all 8 methods (blue) against $N_{\text{ref}}=2000$ (ground truth) reference samples (orange), with target contours (grey).}
    \label{fig:star_grid}
\end{figure}

\subsection{Real-world applications} 

\subsubsection{Bayesian logistic regression (BLR) with \textit{WGMA}}

We apply our method to a real-world problem: inferring the posterior distribution of the weights in a Bayesian logistic regression model. The model is trained on the Iris dataset, which we adapt for binary classification by using only two classes, \textit{Setosa} and \textit{Versicolor}, thereby removing the third class \footnote{Note, in \cite{huang2024EParVI} the author reserves all 150 instances in the original dataset and masked their labels as binary. Results here are thus not comparable to theirs, as the data are treated differently.}. This results in a subset containing 100 data points, which are split into a training set of 70 points and a test set of 30 points. The data has 4 features, making the posterior 4-dimensional (i.e. one coefficient for each feature). We place a standard normal prior $\mathcal{N}(0, \mathbf{I})$ on the regression weights $\boldsymbol{w}=[\hat{w}_1,\hat{w}_2,\hat{w}_3,\hat{w}_4]$. The posterior is then proportional to the product of the prior and the Bernoulli likelihood:
\begin{equation} \label{eq:blr_posterior}
p(\boldsymbol{w} | \mathbf{y}, \mathbf{X}) \propto p(\boldsymbol{w}) \prod_{i=1}^{N} p(y_i | \mathbf{x}_i, \boldsymbol{w}) = \exp\left(-\frac{1}{2}\boldsymbol{w}^T\boldsymbol{w}\right) \prod_{i=1}^{N} \sigma(\boldsymbol{w}^T\mathbf{x}_i)^{y_i} (1 - \sigma(\boldsymbol{w}^T\mathbf{x}_i))^{1-y_i}
\end{equation}
where $\sigma(\cdot)$ is the sigmoid function.

\paragraph{Samplers set-up}
After tuning, the GMA sampler was configured with $N=2000$ isotropic Gaussian components, each with the same small covariance scale of 0.03. The optimization was run for $K=1500$ iterations with $M=60$ samples per component and $\eta_0=0.1$. For the benchmarks, MH, HMC, and LMC were run for 120,000 total samples. SVGD was run for 500 iterations with 3200 particles. MFVI-ADVI was run for 30,000 steps. For GM-ADVI, we used $K_{mix}=100$ components, drawing $M=60$ samples from each component and optimized for 1500 steps. We compare the posterior samples to the Maximum Likelihood Estimate (MLE) as a reference point.

\paragraph{Results}
The results for the Bayesian logistic regression task are shown in Fig.\ref{fig:GMA_test8_gma}, Fig.\ref{fig:GMA_test8_weights}, Fig.\ref{fig:GMA_test8_compare}, and Table.\ref{tab:times_test8}. A quantitative summary of the posterior point estimates is provided in Table.\ref{tab:blr_stats}. In terms of predictive performance, all methods except for GM-ADVI achieved perfect (100\%) accuracy on the test set, indicating that they all successfully located the high-probability region of the posterior.

However, the quality of the posterior approximation varied significantly. As seen in Table.\ref{tab:blr_stats}, the posterior means from GMA, HMC, LMC, MH, SVGD (GMA init) and MFVI-ADVI are all close to the MLE reference, indicating accurate mode-finding. Visually, Fig.\ref{fig:GMA_test8_compare} shows these methods produced high-quality, tightly concentrated posterior distributions. The GMA samples were a bit dispersed, i.e. overestimated posterior uncertainty. The weight evolution plot (Fig.\ref{fig:GMA_test8_weights}) confirms this, showing that a large number of components retained non-trivial weights, leading to a wide final mixture. The SVGD variants were extremely slow; SVGD with GMA init also produced overly dispersed samples with means deviating from the MLE (this may be due to the non-convergence within the 500 iterations). GM-ADVI yields a poor accuracy of 13.33\% and a scattered, uninformative posterior. This task suggests that while GMA can effectively find the correct posterior for real-world problems, it also requires large number of components if the variation of each component is small in higher dimensions.

\begin{table}[H]
\centering
\scriptsize
\caption{Comparison of posterior statistics for the Bayesian logistic regression weights.}
\label{tab:blr_stats}
\begin{tabular}{l cccc cccc}
\toprule
\textbf{Method} & \multicolumn{4}{c}{\textbf{Posterior Mean}} & \multicolumn{4}{c}{\textbf{Posterior Mode}} \\
\cmidrule(lr){2-5} \cmidrule(lr){6-9}
& $\hat{w}_0$ & $\hat{w}_1$ & $\hat{w}_2$ & $\hat{w}_3$ & $\hat{w}_0$ & $\hat{w}_1$ & $\hat{w}_2$ & $\hat{w}_3$ \\
\midrule
MLE (Reference) & 0.803 & -1.000 & 1.435 & 1.442 & \multicolumn{4}{c}{---} \\
\midrule
GMA & 0.833 & -1.091 & 1.693 & 1.463 & 0.521 & -1.184 & 1.659 & 1.001 \\
MH & 0.859 & -1.075 & 1.592 & 1.569 & 0.774 & -1.040 & 1.654 & 1.444 \\
HMC & 0.871 & -1.086 & 1.577 & 1.567 & 0.859 & -1.040 & 1.682 & 1.562 \\
LMC & 0.897 & -1.106 & 1.533 & 1.581 & 0.871 & -1.044 & 1.592 & 1.590 \\
SVGD (Std init) & 0.717 & -0.843 & 1.178 & 1.166 & 0.769 & -1.026 & 1.389 & 1.479 \\
SVGD (GMA init) & 0.821 & -0.918 & 1.333 & 1.306 & 0.731 & -1.028 & 1.330 & 1.322 \\
MFVI-ADVI & 0.889 & -1.051 & 1.574 & 1.590 & 0.844 & -1.048 & 1.499 & 1.574 \\
GM-ADVI & 0.057 & -0.123 & 0.119 & -0.524 & 1.115 & -1.186 & 1.372 & 0.870 \\
\bottomrule
\end{tabular}
\end{table}

\begin{table}[H]
\centering
\scriptsize
\caption{Execution times and test accuracy for the Bayesian logistic regression experiment.}
\label{tab:times_test8}
\begin{tabular}{lcc}
\toprule
\textbf{Method} & Execution time (s) $\downarrow$ & Test Accuracy $\uparrow$ \\
\midrule
GMA & 93.26 & \textbf{1.0000} \\
MH & 11.79 & \textbf{1.0000} \\
HMC & 35.13 & \textbf{1.0000} \\
LMC & 43.10 & \textbf{1.0000} \\
SVGD (Std init) & 1348.02 & \textbf{1.0000} \\
SVGD (GMA init) & 1527.43 & \textbf{1.0000} \\
MFVI-ADVI & \textbf{5.50} & \textbf{1.0000} \\
GM-ADVI & 23.48 & 0.1333 \\
\bottomrule
\end{tabular}
\end{table}

\begin{figure}[H]
    \centering
    \includegraphics[width=0.8\linewidth]{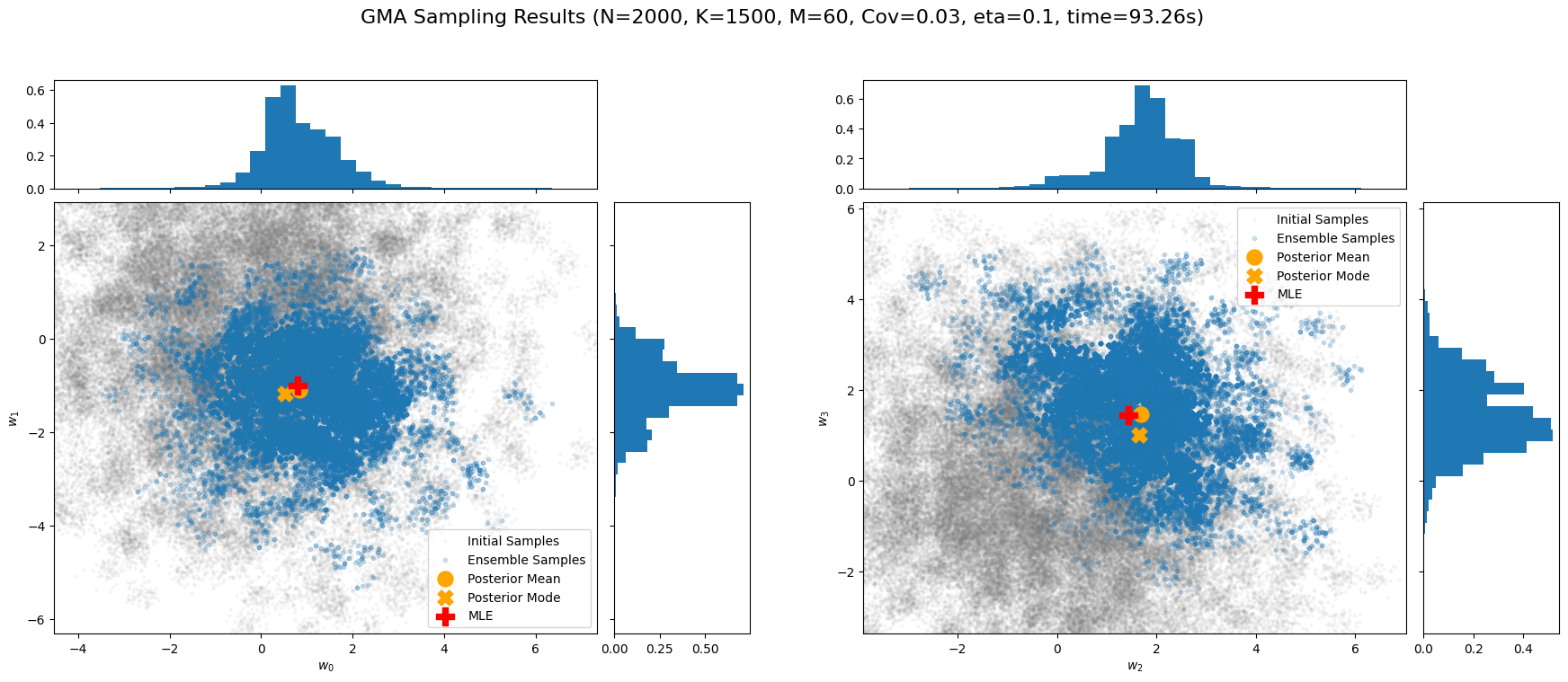}
    \caption{GMA posterior marginals for the Bayesian logistic regression weights.}
    \label{fig:GMA_test8_gma}
\end{figure}

\begin{figure}[H]
    \centering
    \includegraphics[width=0.40\linewidth]{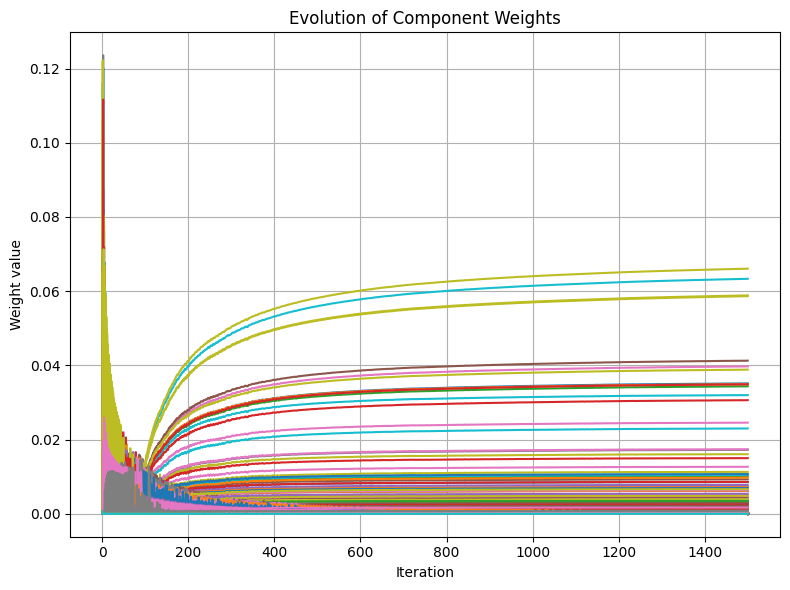}
    \caption{GMA weight evolution for the Bayesian logistic regression task.}
    \label{fig:GMA_test8_weights}
\end{figure}

\begin{figure}[H]
    \centering
    % --- Row 1 ---
    \begin{subfigure}{0.48\linewidth}
        \centering
        \includegraphics[width=\linewidth]{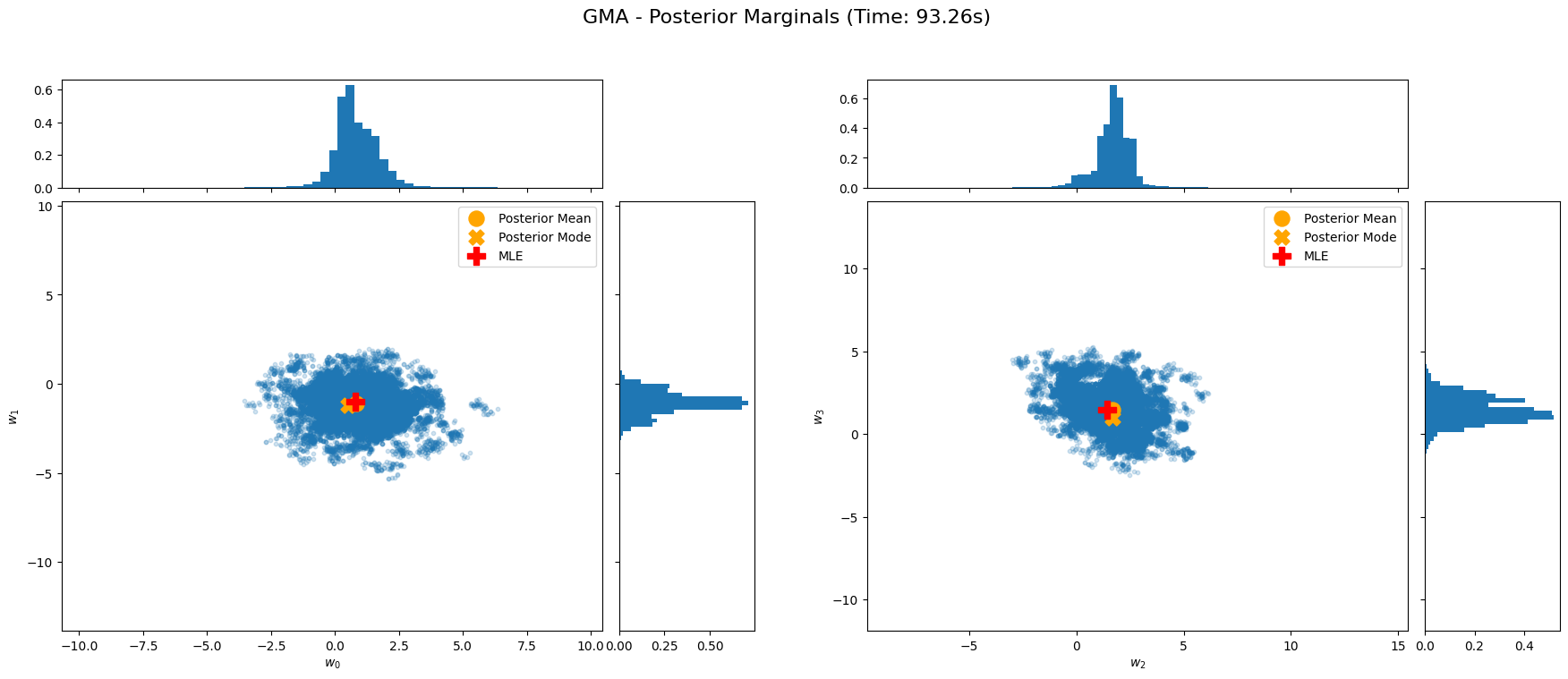}
        \caption{GMA.}
    \end{subfigure}
    \hfill
    \begin{subfigure}{0.48\linewidth}
        \centering
        \includegraphics[width=\linewidth]{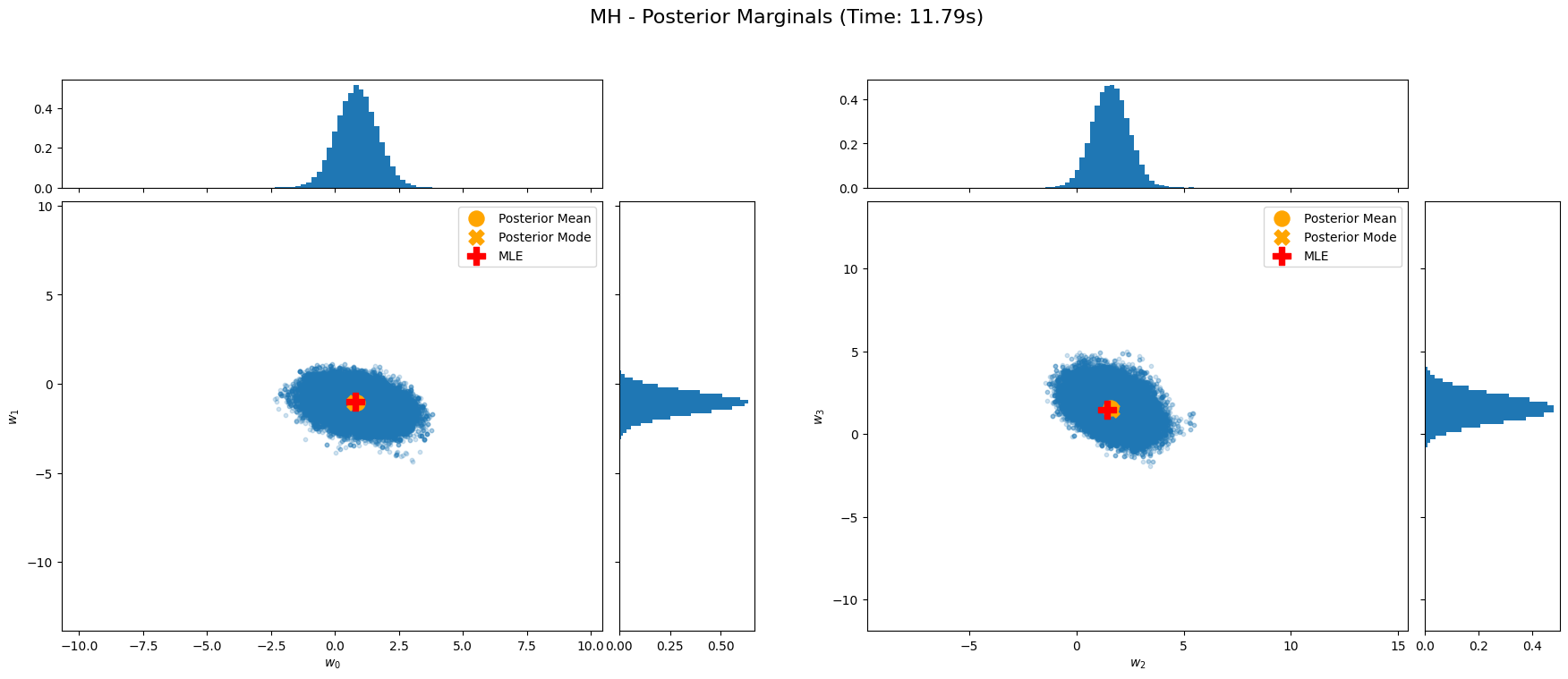}
        \caption{MH.}
    \end{subfigure}

    % --- Row 2 ---
    \begin{subfigure}{0.48\linewidth}
        \centering
        \includegraphics[width=\linewidth]{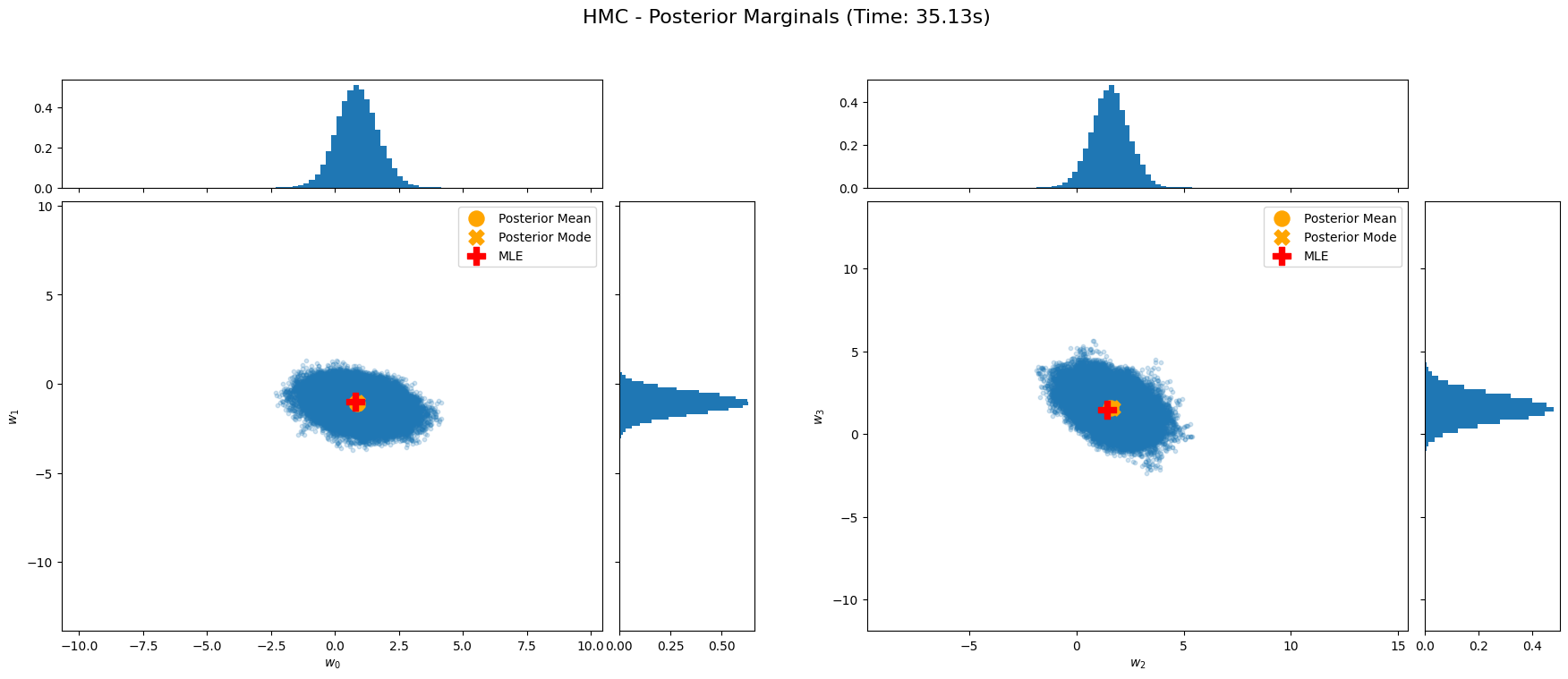}
        \caption{HMC.}
    \end{subfigure}
    \hfill
    \begin{subfigure}{0.48\linewidth}
        \centering
        \includegraphics[width=\linewidth]{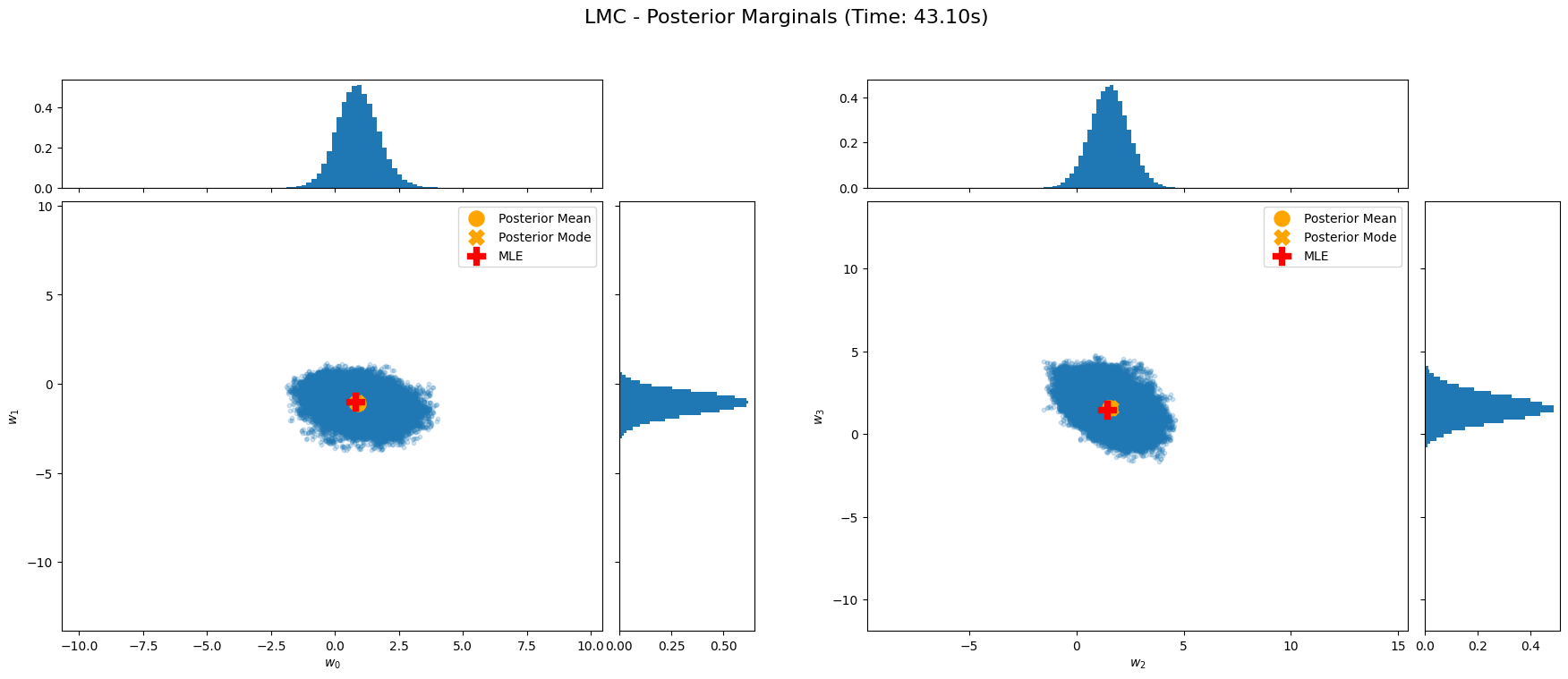}
        \caption{LMC.}
    \end{subfigure}

    % --- Row 3 ---
    \begin{subfigure}{0.48\linewidth}
        \centering
        \includegraphics[width=\linewidth]{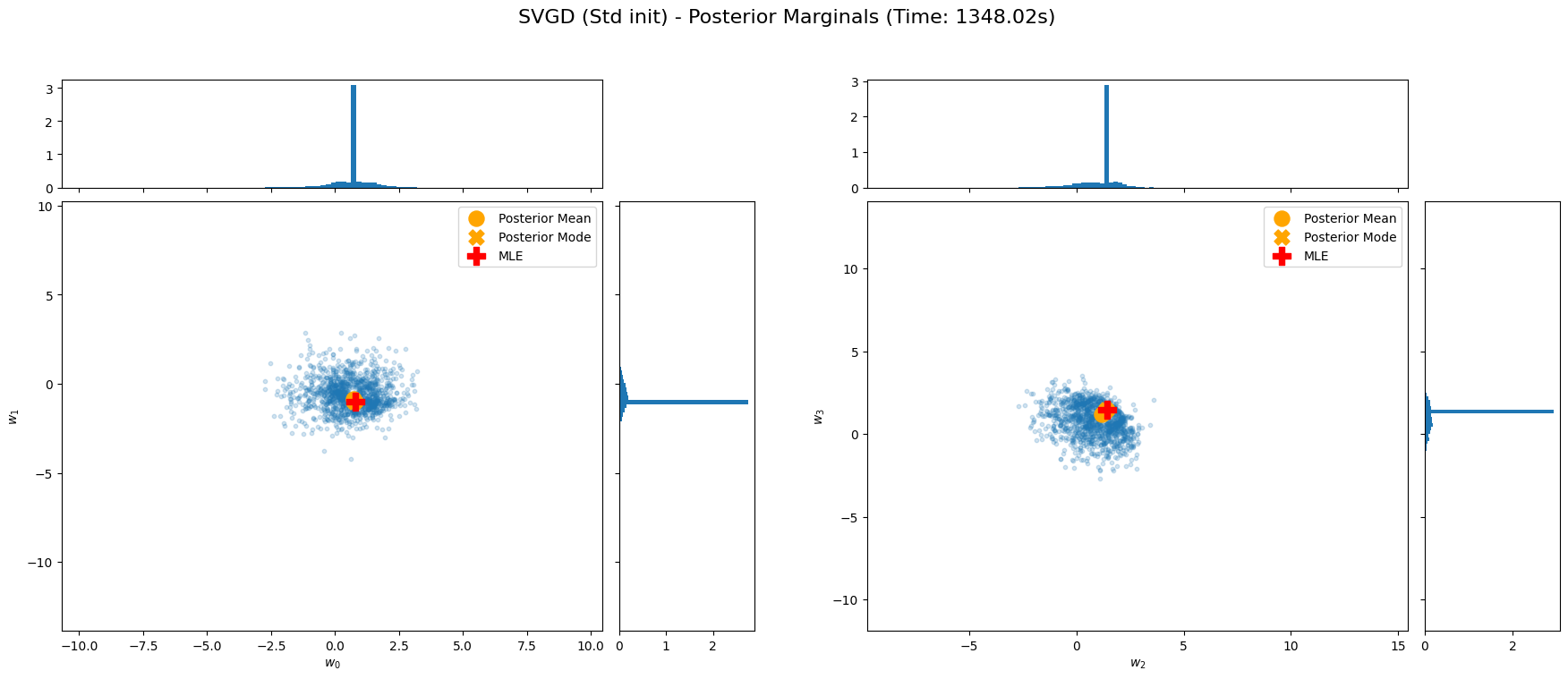}
        \caption{SVGD (Std init).}
    \end{subfigure}
    \hfill
    \begin{subfigure}{0.48\linewidth}
        \centering
        \includegraphics[width=\linewidth]{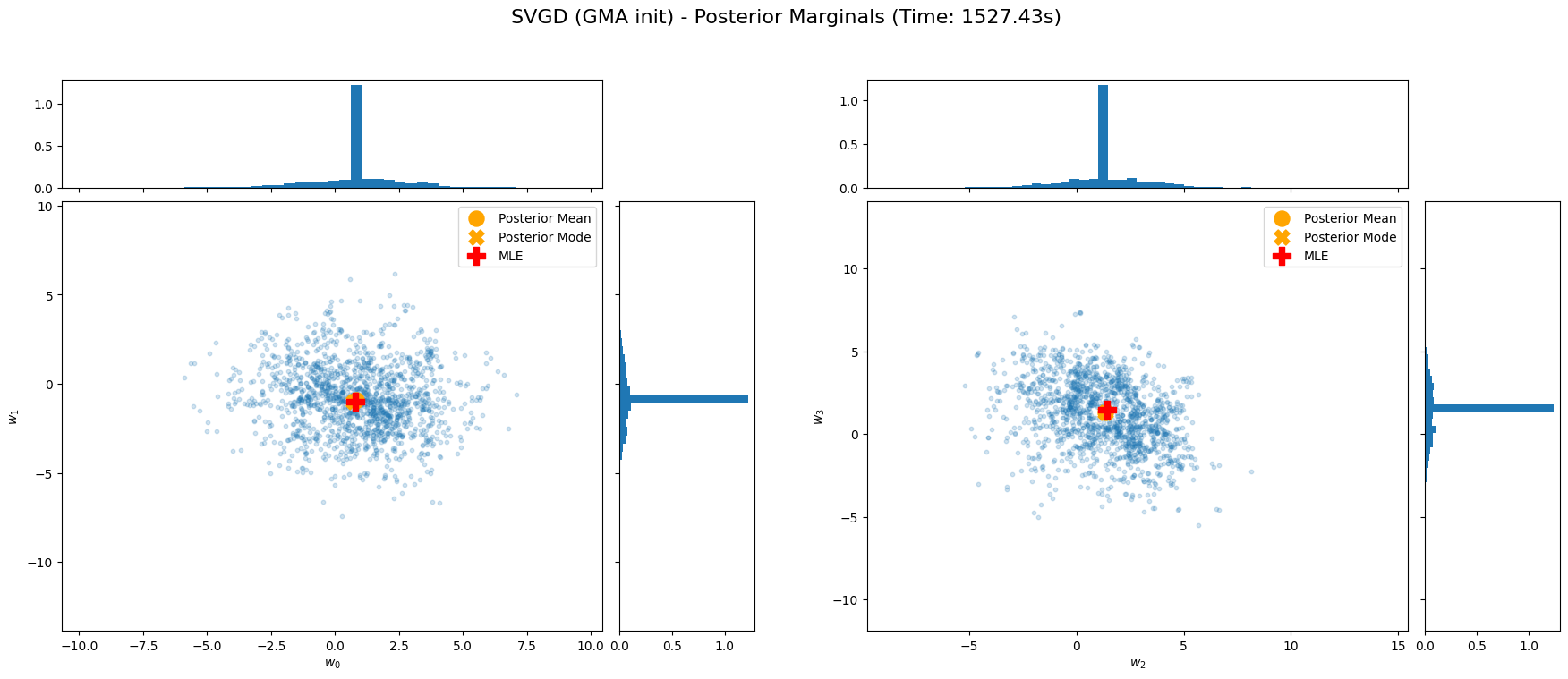}
        \caption{SVGD (GMA init).}
    \end{subfigure}

    % --- Row 4 ---
    \begin{subfigure}{0.48\linewidth}
        \centering
        \includegraphics[width=\linewidth]{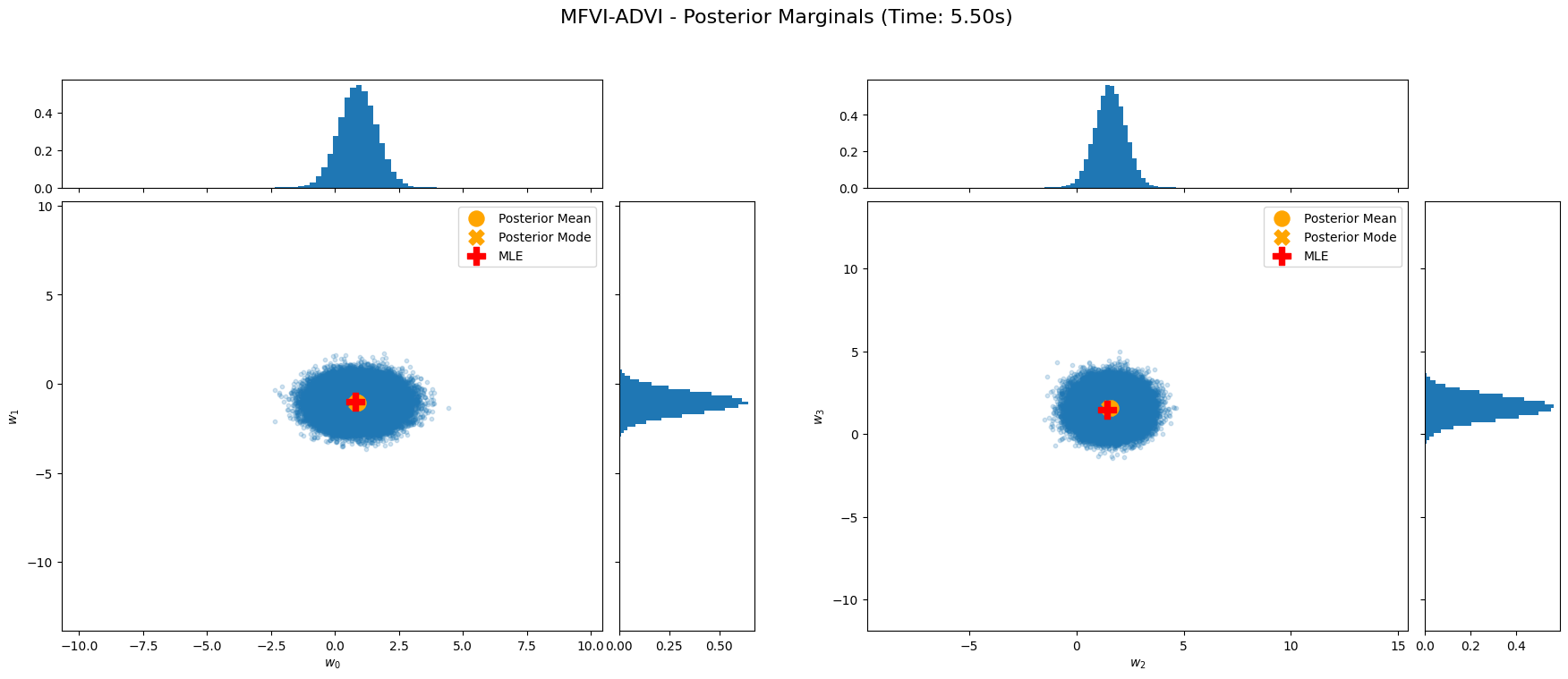}
        \caption{MFVI-ADVI.}
    \end{subfigure}
    \hfill
    \begin{subfigure}{0.48\linewidth}
        \centering
        \includegraphics[width=\linewidth]{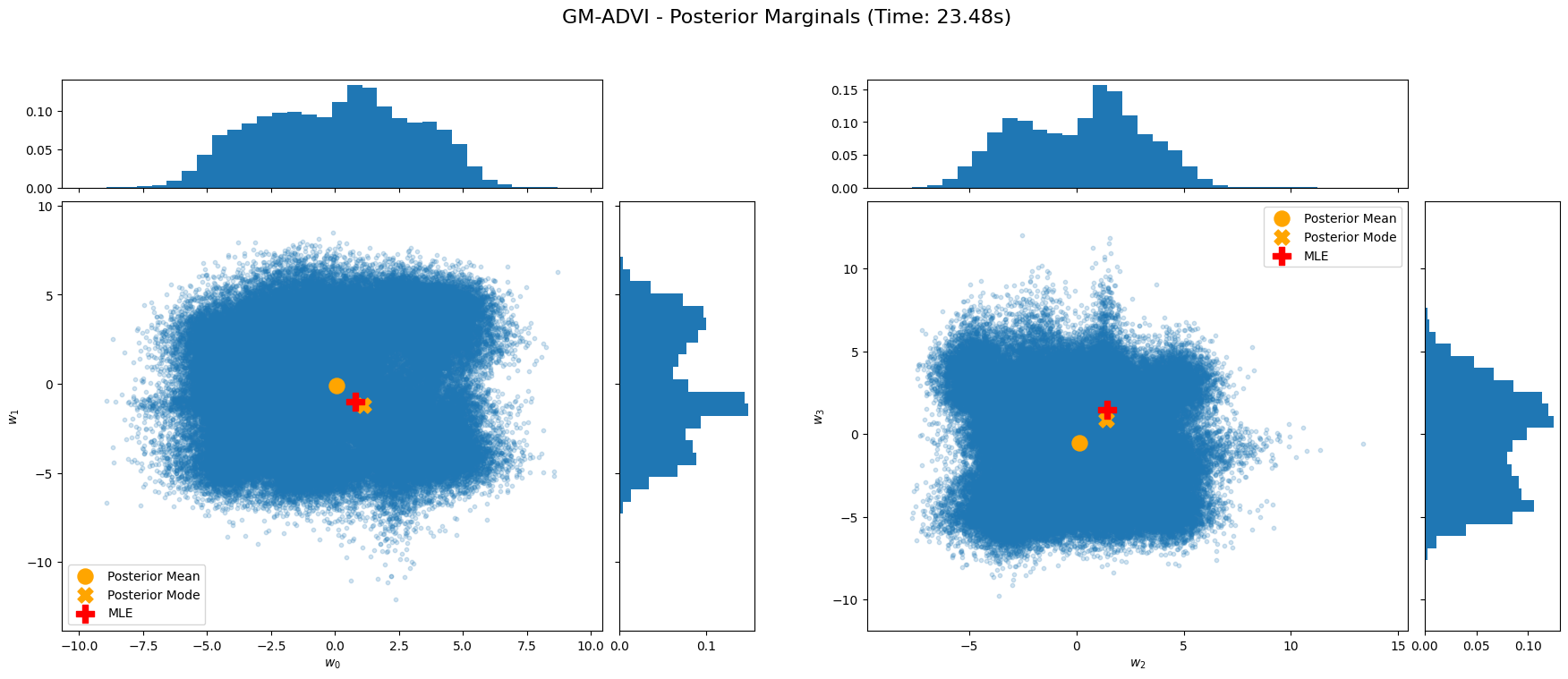}
        \caption{GM-ADVI.}
    \end{subfigure}

    \caption{Comparison of posterior marginals for all methods on the Bayesian logistic regression task. Orange points indicate the posterior mean and mode; the red cross indicates the MLE.}
    \label{fig:GMA_test8_compare}
\end{figure}

\subsubsection{Hierarchical Bayesian linear regression: Minnesota Radon prediction with \textit{WGMA}}

For a more complex, real-world application, we perform hierarchical Bayesian linear regression on the \textit{Radon contamination dataset} \cite{Radon_dataset,Fonnesbeck2021primer}, a well-known benchmark for multi-level modeling \footnote{Python implementations of different variants of the Radon hierarchical model can be found in packages e.g. \textit{PyMC} \cite{Fonnesbeck2021primer} and \textit{Bambi} \cite{Bambi_radon_example}.} \cite{Gelman_Hill_2006,Gelman01082006}. Radon is a radioactive gas and a known carcinogen; its concentration in homes is influenced by factors such as the floor of the house and the local geology.

\paragraph{Data}
The dataset contains radon measurements from 919 households across 85 counties in Minnesota \cite{Gelman01082006,edinburgh_bayesian_2023}. For validation, the data was split into a training set of 735 points (80\%) and a test set of 184 points (20\%). The primary goal is to predict the log-radon level in a house based on two predictors:
\begin{enumerate}
    \item \textit{Household-level predictor $x$}: The floor on which the measurement was taken ($x_i=0$ for basement, $x_i=1$ for first floor).
    \item \textit{County-level predictor $u$}: The log-uranium level of the soil in the county where the house is located ($u_j$).
\end{enumerate}
The hierarchical structure arises from households being nested within counties. This structure suggests that while radon levels might have a general relationship with the floor type, the baseline radon level can vary significantly from one county to another. 

An initial exploratory data analysis (Fig.\ref{fig:Bayesian_hierarchical_LR_EDA}) shows that the response variable, \textit{log-radon}, is approximately normally distributed across all households, with a mean of $1.26$ and a standard deviation of $0.82$. This standard deviation provides a key performance baseline; the predictions from any useful model must achieve a root-mean-square error (RMSE) lower than this value, as it represents the error from a naive model that only predicts the overall mean.

\begin{figure}[H]
    \centering
    \includegraphics[width=0.5\linewidth]{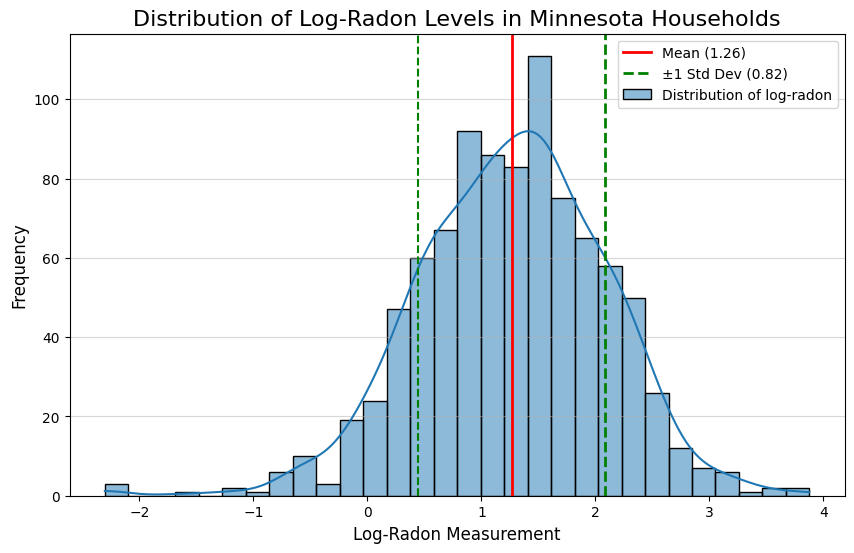}
    \caption{Explanatory data analysis of the log-radon measurements across all 919 households. The distribution is approximately normal, with a mean of $1.26$ and a standard deviation of $0.82$. This standard deviation serves as a baseline error for model evaluation.}
    \label{fig:Bayesian_hierarchical_LR_EDA}
\end{figure}

\paragraph{Hierarchical model with group-level predictors}
We implement one hierarchical model from \cite{Fonnesbeck2021primer}. The model allows the intercept to vary by county, and this variation is itself modeled by a linear regression on the county-level uranium measurements.

The likelihood for the log-radon level $y_i$ in house $i$ located in county $j$ is given by:
\begin{equation}
y_i \sim \mathcal{N}(\alpha_{j[i]} + \beta x_i, \sigma_y^2)
\end{equation}
where $\beta$ is the common slope for the floor effect, and $\alpha_{j[i]}$ is the intercept specific to county $j$.

The key to the hierarchical model is that the county-specific intercepts $\alpha_j$ are not independent but are drawn from a common distribution whose mean depends on the county's uranium level:
\begin{equation} \label{eq:hierarchicalBLR_intercept}
\alpha_j = \underbrace{\gamma_0 + \gamma_1 u_j}_{\text{Predicted Trend, } \mu_j} + \underbrace{\epsilon_j}_{\text{Random Offset}}
\end{equation}
where the random offset $\epsilon_j$ represents the residual variation for county $j$ after accounting for the effect of uranium. These residuals are modeled as:
\begin{equation}
\epsilon_j \sim \mathcal{N}(0, \sigma_\alpha^2)
\end{equation}

In Eq.\ref{eq:hierarchicalBLR_intercept}, the fixed trend $\mu_j = \gamma_0 + \gamma_1 u_j$ allows the intercept to vary between counties. It defines a straight line where the average intercept for a county is predicted by its uranium level ($u_j$). The random offset $\epsilon_j$ is the crucial hierarchical part. Its prior is $\epsilon_j \sim \mathcal{N}(0, \sigma_\alpha^2)$. This term models how much each county's true intercept deviates from the predicted trend line, capturing the variation that is not explained by uranium levels. 
The power of this hierarchical model comes from learning the magnitude of the random offsets by estimating $\sigma_\alpha$. A well-estimated, non-zero $\sigma_\alpha$ allows the model to perform \textit{hierarchical shrinkage} (partial pooling), where individual county estimates can deviate (if a random offset is learned non-zero) from the main trend but are "pulled" back towards it, borrowing statistical strength from the entire dataset. This makes the hierarchical model more flexible.

\paragraph{Priors}
We use weakly informative priors for the model's parameters and hyper-parameters (same as those used in \cite{Fonnesbeck2021primer}):
\begin{itemize}
    \item \textit{Fixed effects}: $\gamma_0 \sim \mathcal{N}(0, 10^2)$, $\gamma_1 \sim \mathcal{N}(0, 10^2)$, $\beta \sim \mathcal{N}(0, 10^2)$
    \item \textit{Variance components}: $\sigma_\alpha \sim \text{HalfCauchy}(5)$, $\sigma_y \sim \text{Uniform}(0, 100)$
\end{itemize}

\paragraph{Posterior}
Combining the likelihood and priors, the full unnormalized posterior distribution for the 90 parameters $\boldsymbol{\theta} = \{\gamma_0, \gamma_1, \beta, \sigma_\alpha, \sigma_y, \{\epsilon_j\}_{j=1}^{85}\}$ is given by:
\begin{equation} \label{eq:radon_posterior}
\begin{split}
p(\boldsymbol{\theta} | \mathbf{y}, \mathbf{X}, \mathbf{u}) \propto & \left[ \prod_{i=1}^{N} \mathcal{N}(y_i | \alpha_{j[i]} + \beta x_i, \sigma_y^2) \right] \cdot \left[ \prod_{j=1}^{J} \mathcal{N}(\epsilon_j | 0, \sigma_\alpha^2) \right] \\
& \cdot \mathcal{N}(\gamma_0|0,10^2) \cdot \mathcal{N}(\gamma_1|0,10^2) \cdot \mathcal{N}(\beta|0,10^2) \\
& \cdot \text{HalfCauchy}(\sigma_\alpha|5) \cdot \text{Uniform}(\sigma_y|0,100)
\end{split}
\end{equation}
The full set of parameters to be inferred from the posterior distribution is $\boldsymbol{\theta}$, which results in a high-dimensional posterior with 90 parameters in total.

\paragraph{Samplers set-up}
After some tuning, the GMA sampler was configured with $N=1500$ components, $M=100$ samples per component, and run for $K=1500$ iterations with $\eta_0=0.1$ and a very small covariance scale of $10^{-4}$. For the benchmarks, MH was run for 150,000 iterations with a proposal covariance of $0.01\mathbf{I}$. HMC \footnote{For previous experiments, we used the Python package \textit{Blackjax} \cite{cabezas2024blackjax} which had dependency on XLA via JAX \cite{jax2018github}; for this hierarchical Bayesian posterior, we used PyMC \cite{abrilpla2023pymc} for implementing HMC (NUTS).} (NUTS \cite{hoffman2014nuts}) was run for a single chain, generating 150,000 posterior samples after 2000 tuning steps. MFVI-ADVI was optimised for 50,000 steps and then used to generate 150,000 samples from the resulting approximation.
We did not employ LMC as we found in our trials that LMC samples in this case are highly sensitive to the small learning rate; also we discard SVGD and GM-ADVI as they are slow and sample qualities are not evidenced to be good. Also, a Gibbs sampler is not a feasible choice for this model as the chosen priors for the variance components (i.e. $\sigma_{\alpha} \sim HalfCauchy$ and $\sigma_y \sim Uniform$) are not conjugate with the Gaussian likelihood. Implementing the necessary Metropolis-within-Gibbs steps for these parameters would be complicated.

\paragraph{Results}
The results for this high-dimensional, real-world inference task are presented in Fig.\ref{fig:GMA_test9_posteriors}, Fig.\ref{fig:GMA_test9_county_effects}, Fig.\ref{fig:GMA_test9_predictive}, and Table.\ref{tab:times_test9}. The posterior predictive check on the held-out test data (Fig.\ref{fig:GMA_test9_predictive}) shows that HMC and MFVI-ADVI achieved the best performance with the lowest RMSE of 0.706. The MH sampler performed nearly well, with an RMSE of 0.708. The GMA sampler was slightly less accurate, with an RMSE of 0.780. All three methods substantially improve upon the baseline error of $0.82$.

An examination of the posterior distributions for the 5 global parameters and 1 county random offset (Fig.\ref{fig:GMA_test9_posteriors}) reveals the reason for this performance gap. The posteriors from HMC, MH, and MFVI-ADVI are all in strong agreement, identifying the same posterior distributions for the global parameters; the GMA posteriors deviate from them in some dimensions such as the slope $\beta$, the noise standard deviation $\sigma_y$ and the first county offset $\epsilon_1$. 
Interestingly, the marginal posteriors from GMA are noticeably sharper, which means it is possibly over-confident than those from the other methods. This posterior sharpness can be misleading, if the posteriors are centered on biased values.

This misalignment between GMA and other methods' estimations explains the behavior seen in the county intercepts plot (Fig.\ref{fig:GMA_test9_county_effects}),  where the GMA-estimated county intercepts are much more scattered around the mean trend compared to the tight clustering of the HMC, MH and MFVI-ADVI estimates. By inferring a different value for the group-level variance $\sigma_\alpha$, the model's predictive accuracy also degrades. In terms of efficiency (Table.\ref{tab:times_test9}), MFVI-ADVI was the fastest method, while other 3 methods (MH, HMC, GMA) show similar speed. The evolution of GMA's weights during optimization is shown in Fig.\ref{fig:GMA_test9_weights}.

This experiment highlights a potential pitfall of GMA sampling in complex, high-dimensional spaces. Due to its mode-seeking $KL(q||p)$ objective, the sampler can be compared to a mountain climber who finds a single, very sharp, and isolated peak. The climber reports back with extreme confidence about this peak's location, ignoring the wider, more probable valley system that other explorers (such as HMC) have found. In this case, GMA may have converged to a sharp, local minimum in the objective, resulting in a not true posterior and a brittle solution that does not generalize well. 

\begin{figure}[H]
    \centering
    \includegraphics[width=0.5\linewidth]{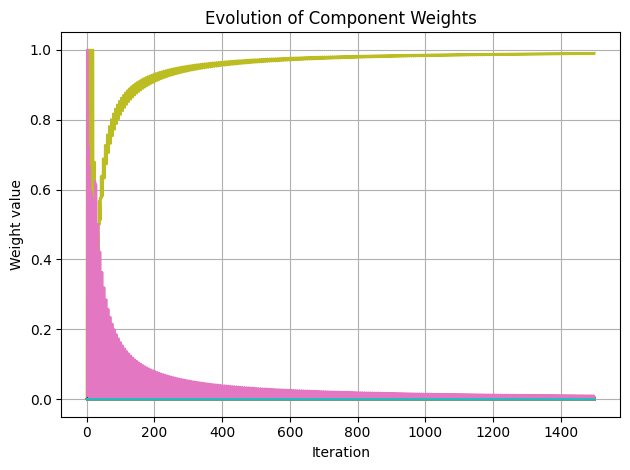}
    \caption{Evolution of component weights during GMA optimization. The weights converge, but the final mixture represents a sharp, local minimum rather than the true posterior.}
    \label{fig:GMA_test9_weights}
\end{figure}

\begin{table}[H]
\centering
\scriptsize
\caption{Execution times and predictive performance for the Radon hierarchical model.}
\label{tab:times_test9}
\begin{tabular}{lcc}
\toprule
\textbf{Method} & Execution time (s) $\downarrow$ & Test Set RMSE $\downarrow$ \\
\midrule
GMA & 443.33 & 0.780 \\
MH & 376.72 & 0.708 \\
HMC (NUTS) & 328.54 & \textbf{0.706} \\
MFVI-ADVI & \textbf{68.42} & \textbf{0.706} \\
\bottomrule
\end{tabular}
\end{table}

\begin{figure}[H]
    \centering
    \includegraphics[width=1.0\linewidth]{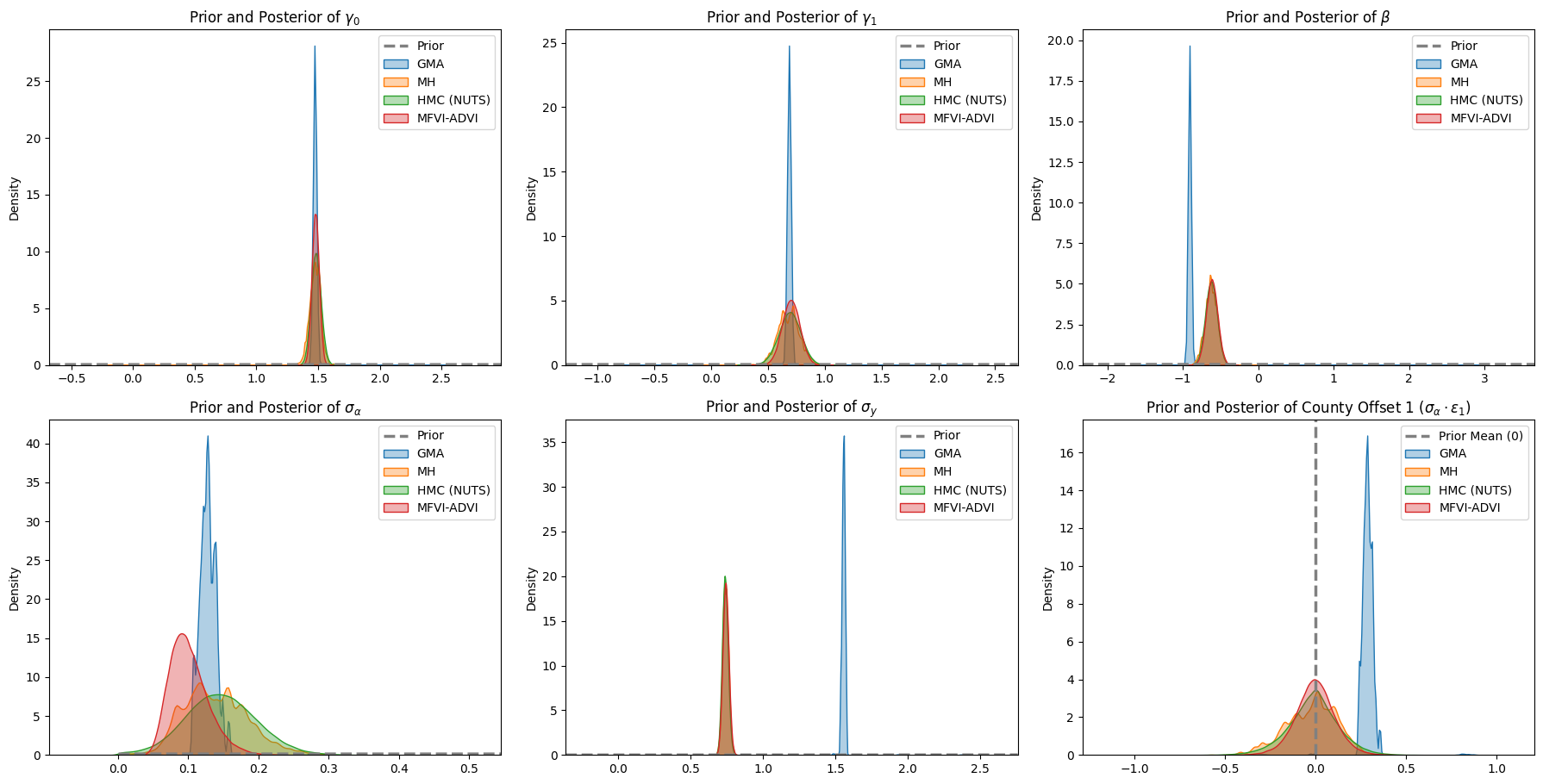}
    \caption{Marginal posterior distributions for the Radon hierarchical model. HMC, MH, and MFVI-ADVI are in strong agreement. GMA produces sharper but different locale posteriors in some dimensions. Note, the last plot is $\epsilon_1$, as we used the re-parameterization trick $\epsilon_1 \sim \sigma_{\alpha}^2 \cdot \mathcal{N}(0,1)$ in our implementation.}
    \label{fig:GMA_test9_posteriors}
\end{figure}

\begin{figure}[H]
    \centering
    \begin{minipage}{0.48\linewidth}
        \centering
        \includegraphics[width=\linewidth]{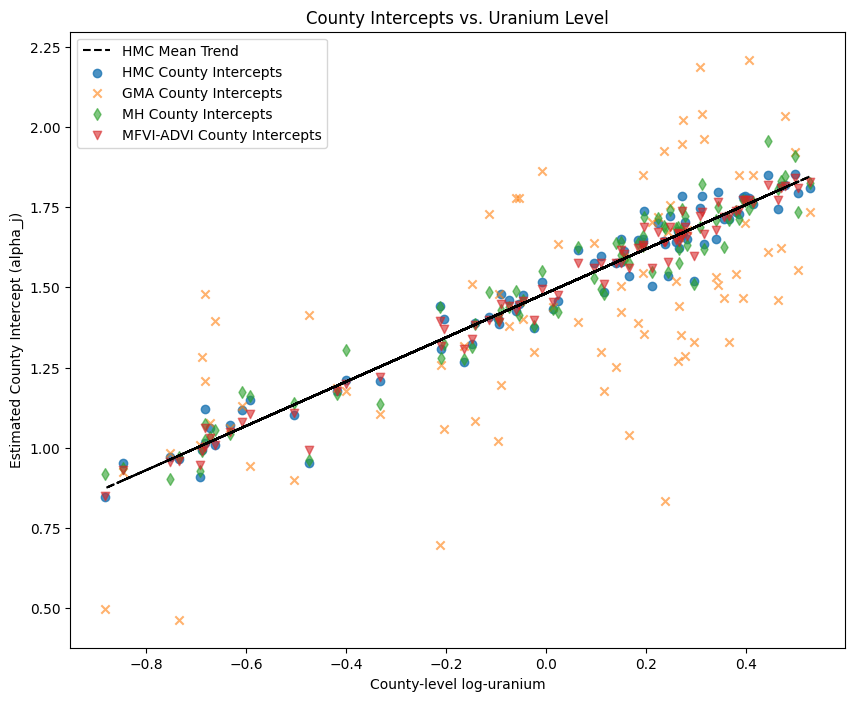}
        \caption{Mean estimated county-level intercepts ($\alpha_j$) versus the county-level log-uranium predictor. The estimates from HMC, MH and MFVI-ADVI cluster tightly around the mean trend, while the GMA estimates show much higher variance.}
        \label{fig:GMA_test9_county_effects}
    \end{minipage}
    \hfill
    \begin{minipage}{0.48\linewidth}
        \centering
        \includegraphics[width=0.8\linewidth]{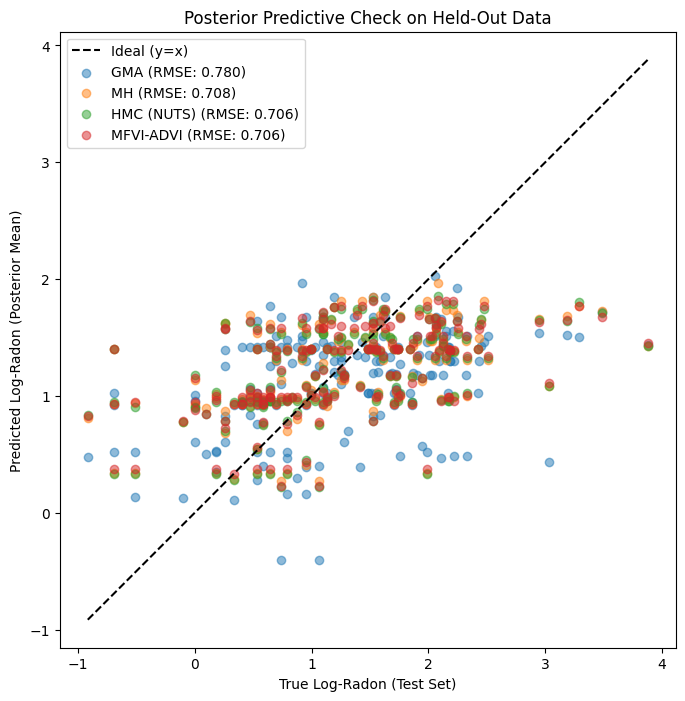}
        \caption{Posterior predictive check on the held-out test set. The predictions from HMC, MH, and MFVI-ADVI show the lowest error (tightly clustered around the ideal $y=x$ line), whereas GMA's predictions are more scattered.}
        \label{fig:GMA_test9_predictive}
    \end{minipage}
\end{figure}

\subsubsection{Hierarchical Bayesian symbolic regression (BSR) with \textit{WGMA} for pendulum physics discovery}
Symbolic regression (SR) is a form of regression analysis that explores possible mathematical expressions to identify a model which balances predictive accuracy with interpretive simplicity for the given dataset.
Here we address the problem of discovering physical laws from noisy observational data through hierarchical Bayesian symbolic regression
\footnote{Traditional symbolic regression does not assume a prior model structure, whereas h-BSR introduces explicit beliefs over both models and parameters. Unlike standard SR, which searches over operators, functions, constants, and variables within a predefined dictionary, our approach makes the candidate space explicit: we specify three potential models with associated parameters, and then update our beliefs using data. This formulation can still be thought of as a simplified symbolic search problem, but it's more naturally aligned with Bayesian regression and model selection.}
(h-BSR). Given a dataset $\mathcal{D} = \{(L_i, T_i)\}_{i=1}^n$, where $L_i \in \mathbb{R}^+$ is the pendulum length and $T_i \in \mathbb{R}^+$ is the observed period, the goal is to infer the most plausible functional form from a set of candidate models $\mathcal{M} = \{\mathcal{M}_1, \mathcal{M}_2, \mathcal{M}_3\}$ while simultaneously estimating the parameters for each model. This approach frames the task as a Bayesian model selection problem, allowing for principled comparison of competing physical hypotheses.

The true dynamics of a simple pendulum are governed by the equation \cite{halliday1997fundamentals,lima_accurate_2006}:
\begin{equation}
T = 2\pi\sqrt{\frac{L}{g}} \approx 2.0061 \cdot L^{0.5}
\end{equation}
where $g = 9.81 \text{m/s}^2$ is the gravitational acceleration. This \textit{length-period relation} is valid when the swing angle is small (e.g. less than 1 radian) \cite{halliday1997fundamentals}. We aim to recover this simple, known relationship from synthetic, noisy data.

\paragraph{Hierarchical Bayesian model.} We define three competing models, each representing a different hypothesis about the pendulum's governing dynamics:
\begin{align}
\mathcal{M}_1 \text{ (Linear):} \quad & T = a + bL \\
\mathcal{M}_2 \text{ (power law):} \quad & T = aL^b + c \\
\mathcal{M}_3 \text{ (Exponential):} \quad & T = a \cdot 10^{L/10} + \gamma
\end{align}
The power law model, $\mathcal{M}_2$, is of particular interest as it contains the true functional form as a special case where $a \approx 2.0061$, $b=0.5$, and $c=0$.

\subparagraph{Model priors.} We assign prior probabilities to each model based on physical intuition. The power law model is given the highest prior probability, while the exponential model is deemed highly unlikely. These discrete prior probabilities reflect our trust-ness in these 3 models before observing any data.
\begin{equation}
p(\mathcal{M}_k) = \begin{cases}
0.3 & \text{for } \mathcal{M}_1 \text{ (linear)} \\
0.6 & \text{for } \mathcal{M}_2 \text{ (power law)} \\
0.1 & \text{for } \mathcal{M}_3 \text{ (exponential)}
\end{cases}
\end{equation}

\subparagraph{Parameter priors.} For each model $\mathcal{M}_k$, we specify prior distributions over its parameter vector $\boldsymbol{\theta}_k$ and the observation noise $\sigma$. A shared noise parameter $\sigma$ is estimated within each model.

\begin{itemize}
    \item \textit{Linear model ($\mathcal{M}_1$):} weakly informative priors are used for the parameters $\boldsymbol{\theta}_1 = \{a, b, \sigma\}$.
    \[
        a \sim \mathcal{N}(0, 2), \quad b \sim \mathcal{N}(0, 2), \quad \sigma \sim \text{HalfNormal}(1)
    \]
    
    \item \textit{power law model ($\mathcal{M}_2$):} physics-informed priors are placed on the parameters $\boldsymbol{\theta}_2 = \{a, b, c, \sigma\}$ to guide the inference towards the theoretical relationship.
    \[
        a \sim \mathcal{N}(2.0, 0.5), \quad b \sim \mathcal{N}(0.5, 0.1), \quad c \sim \mathcal{N}(0.0, 0.2), \quad \sigma \sim \text{HalfNormal}(0.5)
    \]
    To ensure numerical stability during sampling, the parameters are bounded such that $a \in [0.1, 10.0]$ and $b \in [0.1, 1.0]$. These bounds enforce physically plausible constraints (e.g. positive correlation and non-extreme power dependence).

    \item \textit{Exponential model ($\mathcal{M}_3$):} wide, uninformative priors are used for the parameters $\boldsymbol{\theta}_3 = \{a, \gamma, \sigma\}$, as there is no physical basis for this functional form.
    \[
        a \sim \mathcal{N}(0, 5), \quad \gamma \sim \mathcal{N}(0, 5), \quad \sigma \sim \text{HalfNormal}(1)
    \]
    The model is formulated as $T = a \cdot 10^{L/10} + \gamma$, where the length $L$ is scaled to prevent numerical overflow.
\end{itemize}

\subparagraph{Likelihood and posterior inference.} Assuming independent and identically distributed Gaussian noise for the observations, the \textit{likelihood} for each model is given by:
\begin{equation}
p(\mathcal{D} | \boldsymbol{\theta}_k, \mathcal{M}_k) = \prod_{i=1}^n \mathcal{N}(T_i | f_k(L_i; \boldsymbol{\theta}_k), \sigma^2)
\end{equation}
where $f_k(L_i; \boldsymbol{\theta}_k)$ is the prediction from model $\mathcal{M}_k$. The \textit{joint posterior distribution} over the parameters and models is formulated using Bayes' theorem:
\begin{equation}
p(\mathcal{M}_k, \boldsymbol{\theta}_k | \mathcal{D}) \propto p(\mathcal{D} | \boldsymbol{\theta}_k, \mathcal{M}_k) p(\boldsymbol{\theta}_k | \mathcal{M}_k) p(\mathcal{M}_k)
\end{equation}

\subparagraph{Model comparison and selection.} We compare the candidate models based on their expected out-of-sample predictive performance using the Leave-One-Out Cross-Validation (LOO-CV) information criterion. The standard LOO-CV score is the expected log pointwise predictive density (ELPD \cite{spiegelhalter_bayesian_2002}):
\begin{equation}
\text{ELPD}_{\text{loo}} = \sum_{i=1}^n \log p(T_i | \mathcal{D}_{-i}, \mathcal{M}_k)
\end{equation}
where $\mathcal{D}_{-i}$ is the dataset with the $i$-th observation held out. This metric assesses how well a model, trained on $n-1$ data points, predicts the remaining point, averaged over all points. A higher ELPD score indicates better predictive accuracy.

From these scores, we compute model weights using a method known as stacking of predictive distributions \cite{yao_using_2018}. This approach provides a robust measure of each model's contribution to an optimal ensemble prediction. The idea is to find a set of weights $w = (w_1, \dots, w_K)$ that maximizes the predictive performance of a combined, or \textit{stacked}, model.

Specifically, the stacking method defines an ensemble predictive distribution as a weighted average of the individual models' LOO predictive distributions:
\begin{equation}
p_{\text{stack}}(T_i | \mathcal{D}_{-i}) = \sum_{k=1}^K w_k p(T_i | \mathcal{D}_{-i}, \mathcal{M}_k)
\end{equation}
The optimal weights $w_k$ are then found by solving the following constrained optimization problem (i.e. maximizing the log score of this ensemble):
\begin{equation}
\max_{w} \sum_{i=1}^n \log \left( \sum_{k=1}^K w_k p(T_i | \mathcal{D}_{-i}, \mathcal{M}_k) \right) \quad \text{subject to } w_k \ge 0 \text{ and } \sum_{k=1}^K w_k = 1
\end{equation}
The inputs for this optimization are the pointwise predictive densities, which are calculated during the LOO-CV process. The resulting weights $w_k$ represent the utility of each model in forming the best-performing predictive ensemble. These weights can be interpreted as a form of model probability, allowing us to perform both \textit{model selection} (by choosing the model with the highest weight) and \textit{Bayesian model averaging}:
\begin{equation}
p(T_{\text{new}} | L_{\text{new}}, \mathcal{D}) = \sum_{k=1}^K p(T_{\text{new}} | L_{\text{new}}, \mathcal{D}, \mathcal{M}_k) w_k
\end{equation}
where $w_k$ is the computed stacking weight for model $\mathcal{M}_k$. This method is generally more robust than simpler approaches, especially when some models are misspecified or have similar predictive capabilities.

We prefer this predictive-accuracy-based approach over methods relying on the marginal likelihood (model evidence), such as \textit{Bayes factors}, for two primary reasons. First, the marginal likelihood is notoriously difficult and computationally expensive to estimate accurately, often requiring specialized algorithms such as nested sampling that are beyond the scope of this comparative study. Second, Bayes factors can be highly sensitive to the choice of prior distributions, where diffuse (weakly informative) priors can disproportionately penalize a model. In contrast, LOO-CV offers a more robust and practical assessment of a model's ability to generalize to new, unseen data, which is a central goal in scientific modeling.

\paragraph{Experimental setup.}
We generated a high-quality synthetic dataset of 48 observations to validate the framework. The data spans 12 unique pendulum lengths from $0.1$ to $1.2$ meters, with 4 repeated measurements per length $L_i, i=1,2,...,12$. Realistic, length-dependent noise was added according to the model $\sigma(L) = 0.01 + 0.005\sqrt{L}$, resulting in a signal-to-noise ratio of approximately 146:1. This synthetic dataset ensures good coverage of the problem space while maintaining realistic noise characteristics.

Again, posterior inference was conducted using 4 distinct computational methods to compare their accuracy, speed, and overall performance:
\begin{enumerate}
    \item \textbf{HMC (NUTS):} implemented in \texttt{PyMC} \cite{salvatier_probabilistic_2015}, using the \textit{No-U-Turn} sampler \cite{hoffman2014nuts} with 2000 posterior draws per chain across 2 chains (4000 total samples), following 2000 warm-up iterations.
    \item \textbf{MH:} a custom implementation of the random-walk Metropolis-Hastings algorithm, generating 11,000 total iterations, which results in approximately 4125 posterior samples after a 25\% burn-in and thinning by a factor of 2.
    \item \textbf{Mean field ADVI:} implemented in \texttt{NumPyro} \cite{phan_composable_2019}, using Automatic Differentiation Variational Inference (ADVI \cite{kucukelbir_automatic_2016}) with a mean-field Gaussian guide, optimized for 10,000 iterations, from which 4000 samples were drawn.
    \item \textbf{GMA sampling:} our proposed method, using a Gaussian mixture approximation with $N=800$ components, $M=50$ samples per component, and $K=100$ weight-optimization iterations, from which 4000 samples were drawn.
\end{enumerate}
All 4 methods were applied to all three candidate physical models, using the identical priors and likelihoods specified above to ensure a fair and direct comparison of their performance.

\subsubsection*{\textit{Results and discussion}}

\paragraph{Model selection (HMC as an example).} 
Model comparison using Leave-One-Out Cross-Validation (LOO-CV \cite{vehtari_practical_2017}) by all 4 inference methods provide decisive evidence (we shall also see from later posterior predictive plots) in favor of the \textit{power law model} ($\mathcal{M}_2$). Here we use HMC as an example, as shown in Table.\ref{tab:model_comparison} and Fig.\ref{fig:HMC_model_selection}, the power law model achieved a LOO-ELPD score of $136.55$, vastly outperforming the Linear ($53.75$) and Exponential ($44.05$) models. The computed model weights, which represent each model's predictive utility, assigned a weight of $1.000$ to the power law model, effectively ruling out the other candidates. This result strongly confirms that the underlying physical relationship $T \propto \sqrt{L}$ is overwhelmingly supported by the data, reinforcing the physics-informed prior favoring this model.

\begin{table}[H]
\centering
\begin{threeparttable}
\caption{HMC: dynamics comparison using ELPD from LOO-CV}
\label{tab:model_comparison}
\begin{tabular}{lccc}
\hline
\textbf{Model} & \textbf{Rank} & \textbf{LOO-ELPD} $\uparrow$ & \textbf{Weight} $\uparrow$ \\
\hline
\textbf{Power law} & \textbf{1} & $\mathbf{136.55}$ (SE: 5.07) & $\mathbf{1.000}$ \\
Linear & 2 & $53.75$ (SE: 3.53) & $0.000$ \\
Exponential & 3 & $44.05$ (SE: 3.54) & $0.000$ \\
\hline
\end{tabular}
\begin{tablenotes}
\item [1] \footnotesize{These results were obtained using \texttt{arviz.compare} \cite{arviz_2019}, which compares models based on LOO-ELPD \cite{spiegelhalter_bayesian_2002}.} 
\item [2] \footnotesize{\texttt{SE} is the standard error of the ELPD estimate.} 
\item [3] \texttt{weight} is the relative weight for each model, which can be loosely interpreted as the probability of each model (among the 3 compared model) given the data.
\end{tablenotes}
\end{threeparttable}
\end{table}

\begin{figure}
    \centering
    \includegraphics[width=0.7\linewidth]{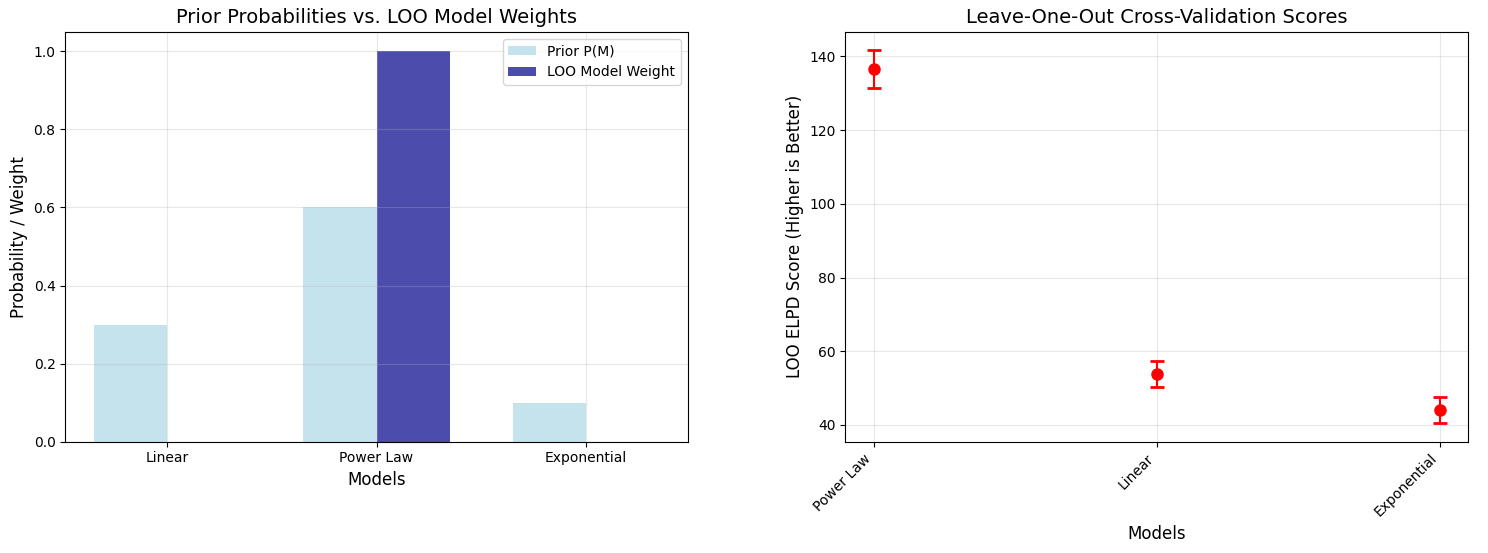}
    \caption{HMC-based physical model selection.}
    \label{fig:HMC_model_selection}
\end{figure}

\paragraph{Parameter recovery.}
Given the conclusive evidence for the \textit{power law model}, we analyze its parameter estimates to assess how well each sampling/inference method recovered the true physics. The ground truth parameters are $a=2.006$, $b=0.5$, and $c=0.0$. Table.\ref{tab:param_recovery} summarizes the posterior means and standard deviations for these parameters from each of the 4 inference methods, from which we observe that, all 4 methods successfully recovered the key physical parameters within a reasonable range. The exponent $b$, which governs the core relationship, was consistently estimated to be near the true value of $0.5$. The coefficient $a$ was also recovered accurately by most methods. The intercept term $c$ was correctly estimated to be near zero, consistent with the theoretical model.

\begin{table}[H]
\centering
\begin{threeparttable}
\caption{Parameter estimates (mean $\pm$ sd) for the power law model ($T = aL^b+c$) across the 4 inference methods. All methods successfully recover the true physical parameters.}
\label{tab:param_recovery}
\begin{tabular}{lccc}
\hline
\textbf{Method} & \textbf{$a$} & \textbf{$b$} & \textbf{$c$} \\
\hline
Ground truth & $2.006$ & $0.500$ & $0.000$ \\
\hline
HMC (NUTS) & $2.000 \pm 0.031$ & $0.506 \pm 0.013$ & $0.008 \pm 0.032$ \\
MH & $2.015 \pm 0.058$ & $0.500 \pm 0.022$ & $-0.008 \pm 0.060$ \\
ADVI & $1.994 \pm 0.003$ & $0.509 \pm 0.002$ & $0.013 \pm 0.002$ \\
GMA sampling & $1.940 \pm 0.010$ & $0.537 \pm 0.009$ & $0.072 \pm 0.009$ \\
\hline
\end{tabular}
\begin{tablenotes}
\item [1] \footnotesize{Estimates of noise level are not presented as not relevant.}
\end{tablenotes}
\end{threeparttable}
\end{table}

Fig.\ref{fig:posterior_comparison} provides a visual comparison of the full posterior distributions for these parameters. HMC, considered the gold standard, produces smooth, well-defined unimodal posteriors. The VI posteriors are similarly well-behaved but exhibit significantly smaller variance - a known characteristic of mean-field variational methods. The MH sampler, with its low acceptance rate ($1.0\%$), manages to explore the correct parameter space, though its posterior approximation is visibly rougher. GMA-sampling also captures the location of the posterior mode but produces a more structured, multi-modal approximation, reflecting its Gaussian mixture-based nature. 

\begin{figure}[H]
    \centering
    \includegraphics[width=\textwidth]{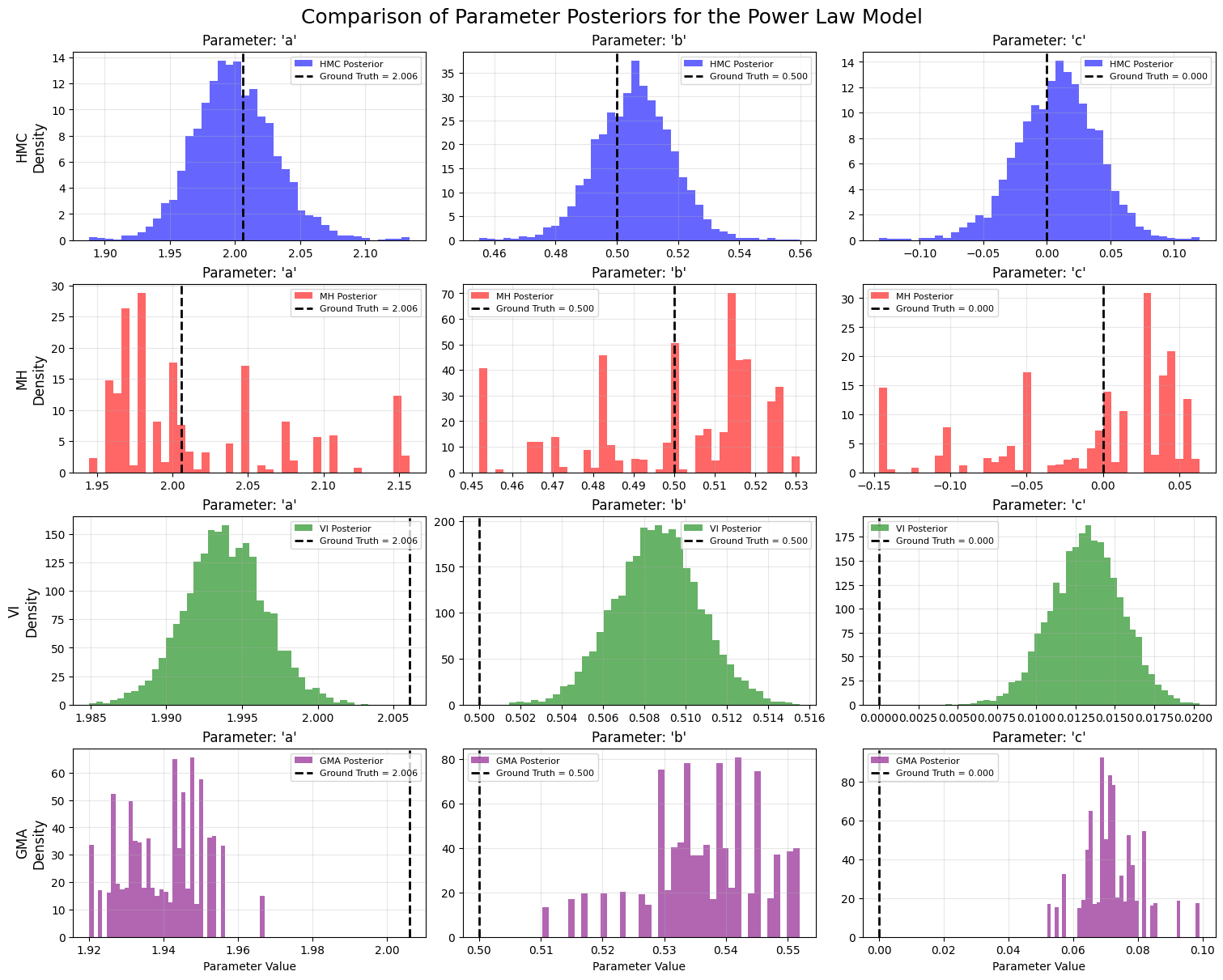} 
    \caption{Comparison of the marginal posterior distributions for the \textit{power law model} parameters from each inference method. The vertical dashed line indicates the ground truth value. All methods correctly put posterior probability mass around the ground truth values.}
    \label{fig:posterior_comparison}
\end{figure}

For GMA sampling, the evolution of the GMA component weights serves as a powerful diagnostic for model fit. As shown in Fig.\ref{fig:GMA_weights_evolution_BSR}, for the well-specified power law model, the weights converge almost immediately to a single dominant component, signaling a simple posterior that is easily captured. In contrast, the weights for the mis-specified linear and exponential models remain chaotically distributed across numerous components.

\begin{figure} [H]
    \centering
    \includegraphics[width=0.9\linewidth]{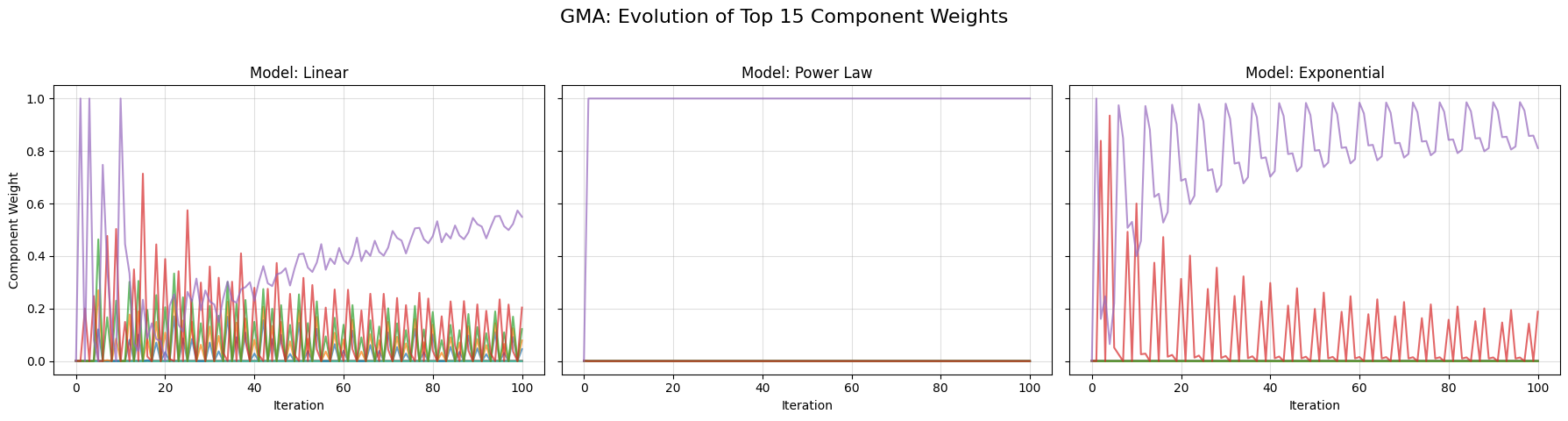}
    \caption{GMA weights evolution over optimisation iterations.}
    \label{fig:GMA_weights_evolution_BSR}
\end{figure}

\paragraph{Posterior predictive performance.}
Uncertainties in parameter estimates are propagated into posterior-based predictions. As shown in Fig.\ref{fig:posterior_predictives}, all 4 inference methods correctly show that the mis-specified linear and exponential models fail to fit the data or extrapolate reasonably; they all produce excellent predictions when using the correctly specified \textit{power law mode}l. The mean predictive curves are virtually indistinguishable from one another and align well with the ground truth, demonstrating strong predictive accuracy both within the range of the training data and during extrapolation. While all methods generate 95\% credible intervals that appropriately capture the observational uncertainty, there are subtle differences in its quantification: the intervals from VI are visibly tighter than those from the MCMC-based HMC and MH, as well as GMA methods, which is consistent with the known tendency of VI approximation methods to underestimate posterior variance.

\begin{figure}[H]
    \centering
    \includegraphics[width=0.85\linewidth]{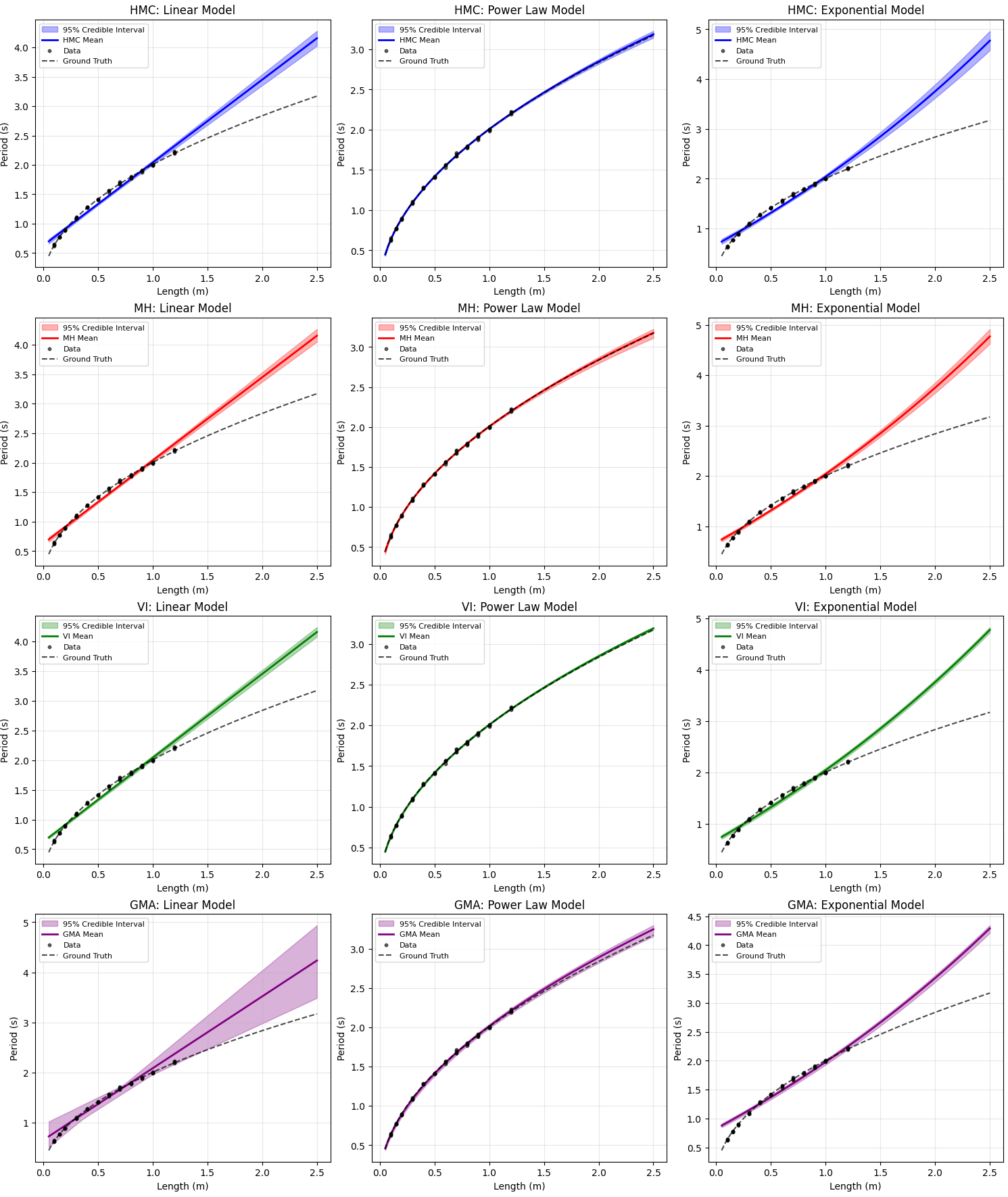}
    \caption{Predictions using posterior samples from the 4 inference methods.}
    \label{fig:posterior_predictives}
\end{figure}

\paragraph{Efficiency comparison.}
Beyond accuracy, the 4 methods exhibited stark differences in computational efficiency. As shown in Fig.\ref{fig:inference_times_BSR}, the MH sampler was the fastest method by a large margin, completing the power law model inference in just $1.57$ seconds, though this speed came at the cost of a very low acceptance rate ($1.0\%$), suggesting inefficient exploration. ADVI was also extremely fast ($8.45$s) and provided a strong balance of speed and accuracy. GMA-sampling demonstrated competitive performance, completing its deterministic optimization in $11.79$s. HMC was by far the slowest, requiring $88.09$s, but provided the most reliable and highest-quality posterior samples. 

\begin{figure}[H]
    \centering
    \includegraphics[width=0.35\linewidth]{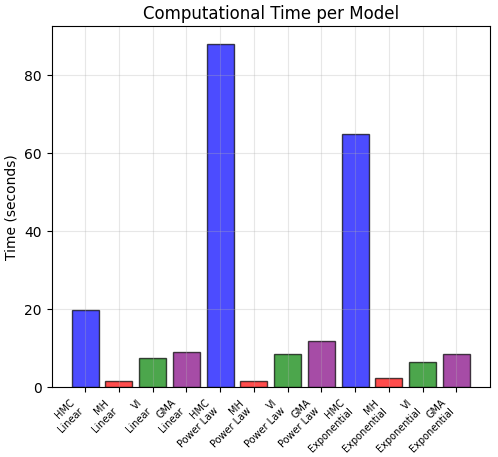}
    \caption{Computational times of the 4 inference methods.}
    \label{fig:inference_times_BSR}
\end{figure}

\paragraph{Discussion.}
These results demonstrate the power of hierarchical Bayesian modeling for automated scientific discovery, and the efficacy and efficiency of GMA sampling. The framework not only identified the correct physical law from a set of candidates but also accurately quantified the uncertainty in its parameters. The use of physics-informed priors for the power law model successfully guided the inference without preventing the data from speaking for itself.

The comparison of inference methods highlights the trade-off between speed, implementation complexity, and sample quality. While HMC provides the most reliable results, its computational cost can be prohibitive. 
MH is simple to implement and runs fast; however, it shows in this case inefficient sampling (high autocorrelation) and requires careful tuning.
VI offers an excellent, scalable alternative for rapid exploration (e.g. prototyping), balancing accuracy and efficiency; however, it can underestimate posterior variance and relies on distributional assumptions.
Our novel GMA-sampling method proves to be a competitive alternative to both MCMC and VI, successfully approximating the posterior in a time comparable to VI. However, GMA IS More complex to implemment with hyper-parameters (e.g. number of components and number of samples per component, as well as learning rate and component variances) that require tuning.
The choice of method should therefore be tailored to the specific goals of the analysis, whether it be rapid prototyping (VI, MH), robust final analysis (HMC), or exploration of novel inference frameworks (GMA).

\subsubsection{Bayesian LSTM with \textit{WGMA} for mortality time series forecasting}

For our third real-world experiment, we fit a Bayesian Long Short-Term Memory (LSTM) network to weekly all-cause mortality data from the World Mortality Database \cite{karlinsky_tracking_2021}. This dataset provides harmonized national-level mortality counts from official vital statistics systems. We focus on the \textit{United States}, where reporting is consistent and coverage spans from early 2015 through the end of 2024, yielding a total of 521 weekly observations.

To prepare the series for modeling, we first aggregated weekly death counts and interpolated minor gaps to obtain a continuous weekly timeline. We then constructed a normalized \textit{mortality index} by scaling each week’s deaths by the long-term mean, thereby capturing relative mortality fluctuations independent of population scale. Taking the logarithm of this index produced a stationary series $\log(\mathcal{M}_t)$, which serves as the input to the forecasting model.

The Bayesian LSTM is trained to use the previous 52 weeks (one year) as context in order to predict the subsequent 52 weeks. Unlike a classical LSTM, which produces a single point forecast, the Bayesian formulation yields a posterior predictive distribution. This allows us to quantify both \textit{aleatoric uncertainty}, arising from natural variability in mortality, and \textit{epistemic uncertainty}, which reflects the limited amount of training data (only 521 points) and model uncertainty \footnote{Bayesian models provide a principled mechanism for capturing these two complementary forms of uncertainty, which is especially valuable when working with short or noisy time series. Aleatoric uncertainty stems from inherent data noise (e.g. sensor errors), epistemic uncertainty reflects the model’s limited knowledge.}.

The final objective is to forecast future weekly log-mortality indices, $\log(\mathcal{M}_t)$, while simultaneously quantifying predictive uncertainty through posterior credible intervals. This probabilistic framework balances the expressive temporal dynamics of deep learning with the uncertainty-awareness of Bayesian inference, making it well-suited to mortality forecasting in small-data regimes.

\paragraph{Data and pre-processing}

The mortality dataset provides weekly all-cause death counts $D_t$ for the United States from 2015 to 2024, yielding a total of 521 consecutive weekly observations. To formulate a standard time-series forecasting problem suitable for neural network modeling, we first aggregate duplicate weeks, interpolate missing values, and ensure a complete weekly calendar. We then construct a normalized mortality index that captures relative fluctuations with respect to the long-term mean:
\begin{equation}
\mathcal{M}_t = \frac{D_t}{\bar{D}}
\end{equation}
where $\bar{D} = \tfrac{1}{T} \sum_{t=1}^{T=521} D_t$ is the average weekly number of deaths over the observed period. This normalization removes secular scale effects while preserving relative mortality variations of forecasting interest. 

Following standard practices in mortality modeling \cite{theactuary2024underinference}, we work with the natural logarithm of the mortality index to stabilize variance, ensure positivity of forecasts, and facilitate modeling of multiplicative effects:
\begin{equation}
y_t = \log(\mathcal{M}_t)
\end{equation}

In preparing the data for the LSTM, we adopt a \textit{sliding window approach}. For a chosen look-back window of length $L = 52$ weeks (i.e. one year), we construct sequential feature-target pairs in which each input sequence captures one year of recent mortality history and the target represents the subsequent week’s log-mortality:
\begin{itemize}
    \item Input sequence (features): $\mathbf{x}_t = (y_{t-L}, y_{t-L+1}, \ldots, y_{t-1})$
    \item Target value (label): $y_t$
\end{itemize}
This yields $T-L = 469$ overlapping sequences from the 521-week dataset, of which the final 52 sequences (corresponding to one forecast year) are held out for evaluation. Both input features and targets are standardized using z-score normalization to have zero mean and unit variance, which accelerates convergence and improves numerical stability during training.

\paragraph{Exploratory Data Analysis (EDA)}

Fig.\ref{fig:EDA_mortality} illustrates the key characteristics of the pre-processed weekly mortality series. The left panel shows the raw weekly death counts, with clear seasonal cycles and pronounced spikes during the COVID-19 pandemic. The middle panel displays the normalized mortality index, which oscillates around 1.0 (the long-term mean), highlighting deviations from expected baseline levels. Right panel shows the log-mortality index, which is the final input to the LSTM. The log transformation stabilizes variance and compresses extreme peaks, making the series more suitable for neural time-series forecasting.

\begin{figure}[h!]
    \centering
    \includegraphics[width=\linewidth]{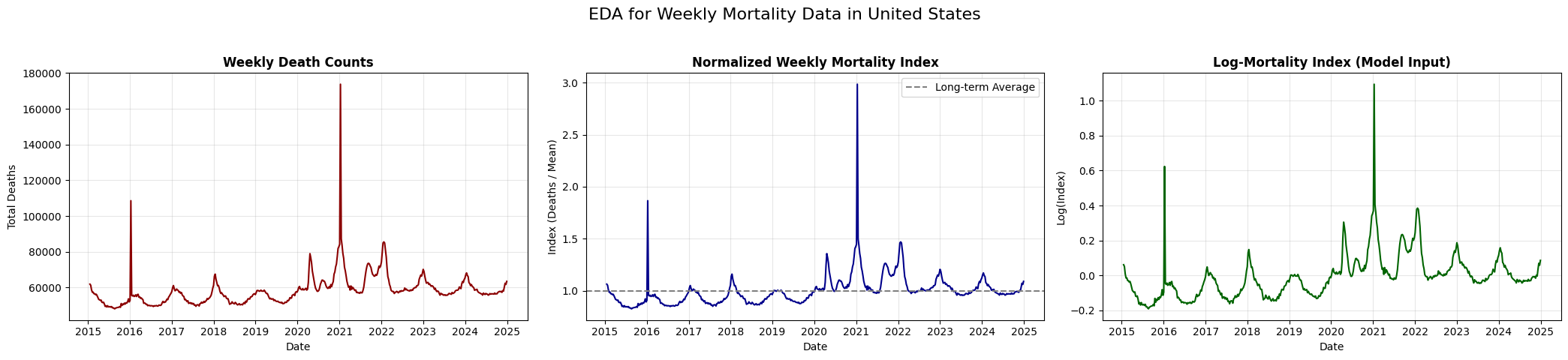}
    \caption{EDA for weekly mortality in the United States (2015-2024). (a) Weekly death counts. (b) Normalized mortality index relative to the long-term mean. (c) Log-mortality index used as model input.}
    \label{fig:EDA_mortality}
\end{figure}

To evaluate out-of-sample performance, we adopt a fixed-horizon train/test split \footnote{No shuffle or mess-up for time series splitting.}. From the $T-L=469$ sequences constructed, the final $N_{\text{forecast}} = 52$ sequences (corresponding to one year of data) are held out as a test set, while the preceding $416$ sequences are used for model training. This ensures that forecast evaluation mimics the realistic setting of predicting future mortality beyond the observed sample. Both training and testing sequences are standardized 
with \textit{z-score} normalization based on the training statistics to prevent information leakage.

\paragraph{Classic LSTM forecasts}

As a benchmark, we trained a standard Long Short-Term Memory (LSTM) network, a type of recurrent neural network (RNN) well-suited for capturing temporal dependencies in sequential data, using backpropagation under a \textit{one-step rolling forecasting} scheme\footnote{In the \textit{one-step} forecasting scheme, the model predicts only the next time point given the true historical inputs, with errors not propagated forward. In contrast, the \textit{multi-step rolling forecast scheme} recursively feeds previous forecasts back as inputs, enabling multi-step prediction but compounding forecast uncertainty. We present the results from \textit{multi-step rolling forecast scheme} in Appendix.\ref{app:Bayesian_LSTM_further_results}.} for both training and testing. 
The network consisted of 3 stacked LSTM layers, each with 16 hidden units, taking one-dimensional input sequences of length $L=52$ (corresponding to one year of past log-mortality values). The final hidden state was passed through a fully connected layer to produce a single-step prediction of the next log-mortality value. In total, the model contained 5,585 trainable parameters and was optimized over 1,000 epochs using the \textit{Adam} \cite{kingma_adam_2017} optimizer with learning rate $0.01$, achieving rapid convergence as shown in the training loss curve (Fig.\ref{fig:classic_lstm_loss}). The fitted model, with point weights estimate $\theta^\star$, provided an excellent in-sample fit with a training RMSE of $0.0072$, and delivered competitive out-of-sample forecasts with a test RMSE of $0.0276$ (Fig.\ref{fig:classic_LSTM_forecast_oneStep}). The one-step forecasts track the observed mortality index closely, although the model exhibits some difficulty in fully capturing the extreme mortality spikes associated with the COVID-19 pandemic. These results highlight the strengths of the classic LSTM in capturing seasonal mortality dynamics, while motivating the Bayesian extension to better account for uncertainty in small-sample forecasting.

\begin{figure}[h!]
    \centering
    \begin{minipage}{0.48\linewidth}
        \centering
        \includegraphics[width=\linewidth]{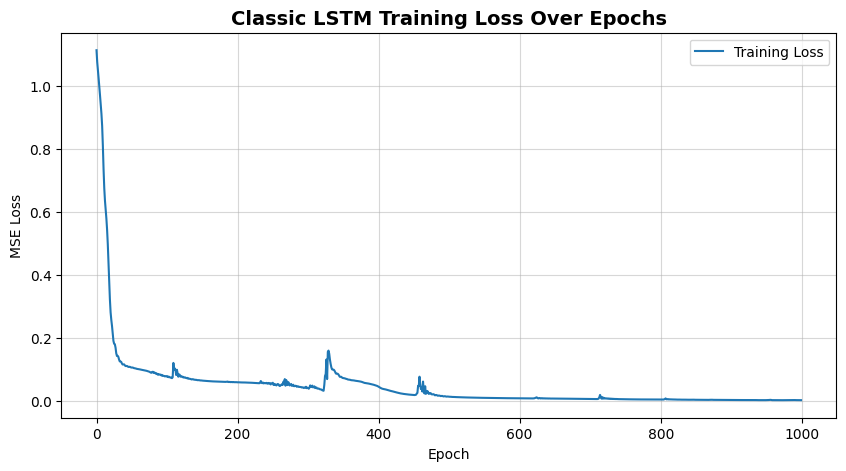}
        \subcaption{Training loss (MSE) over 1,000 epochs.}
        \label{fig:classic_lstm_loss}
    \end{minipage}
    \hfill
    \begin{minipage}{0.48\linewidth}
        \centering
        \includegraphics[width=\linewidth]{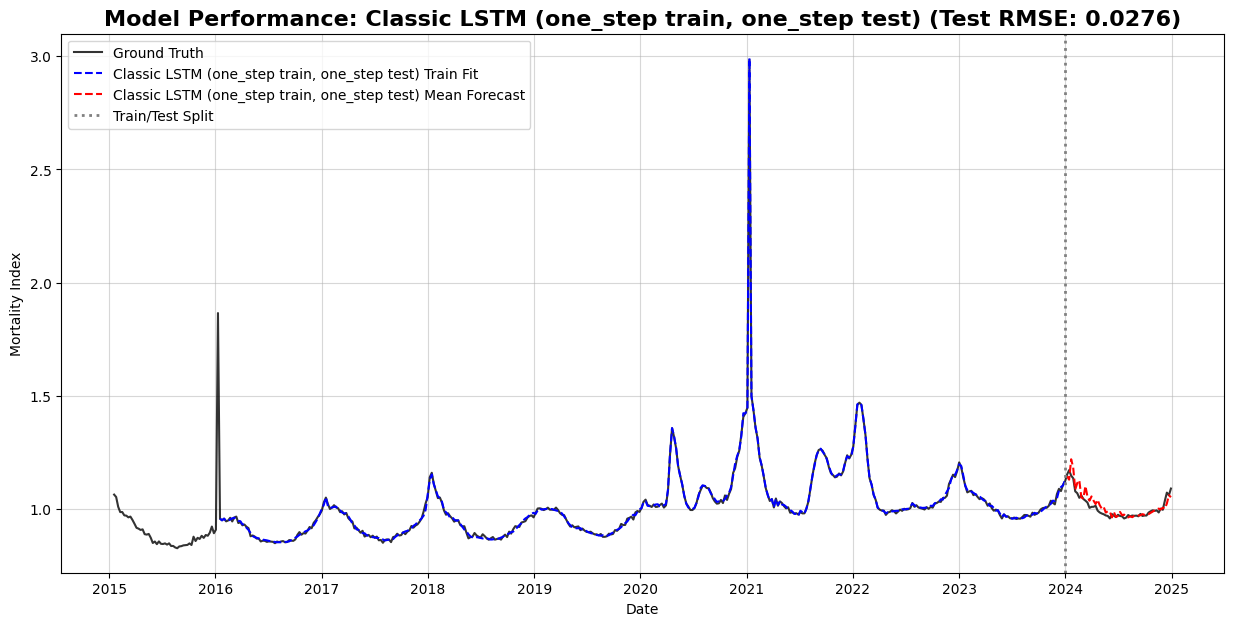}
        \subcaption{Forecast performance (2015-2024) under one-step ahead mode.}
        \label{fig:classic_LSTM_forecast_oneStep}
    \end{minipage}
    \caption{Classic LSTM benchmark results for U.S. weekly mortality. 
    (a) Training loss curve showing rapid convergence. 
    (b) In-sample fit (blue) and out-of-sample forecasts (red) compared against ground truth (black), with train RMSE = 0.0072 and test RMSE = 0.0276.}
    \label{fig:classic_lstm_results}
\end{figure}

\paragraph{Bayesian LSTM model}
We use the same LSTM network architecture as in the classic LSTM benchmark. The network, denoted by a function $f_\theta$ with parameter vector (weights and biases) $\theta$, maps an input sequence $\mathbf{x}_t$ to a prediction of the mean of the next value in the series, $\mu_t$:
\begin{equation}
\mu_t = f_\theta(\mathbf{x}_t)
\end{equation}
This structure allows the model to learn underlying temporal patterns from recent mortality history and exploit them for forecasting. 
Unlike training classic neural networks, where point estimates of parameters are obtained via gradient descent (backpropagation), training a Bayesian LSTM requires inferring a full posterior distribution over the weights $\theta$. This provides a principled way to quantify epistemic uncertainty in addition to aleatoric noise. In practice, exact posterior inference is intractable for high-dimensional recurrent networks, so we employ approximate sampling techniques (here our proposed GMA algorithm) to generate representative posterior samples of the 5585-dimensional $\theta$. These samples are then propagated through the network to produce predictive distributions rather than single deterministic forecasts.

\paragraph{Priors on network parameters}
In Bayesian neural nets (BNNs), we place prior distributions over the network parameters $\theta$. Rather than identifying a single optimal set of weights, we infer a full posterior distribution over them. To regularize the high-dimensional parameter space, we use weakly informative (i.e. \textit{vague}) Gaussian priors:
\[
    \theta \sim \mathcal{N}(0, 1.0)
\]
This choice penalizes extreme weight values deviating from zero, thereby helping to prevent overfitting, which is good when training neural networks with hundreds of parameters on relatively small mortality datasets.

\paragraph{Likelihood}
The LSTM outputs are linked to observed data using a Gaussian likelihood. We assume the observed log-mortality indices are normally distributed around the network predictions, with observation noise $\sigma$:
\begin{equation}
y_t \mid \theta, \sigma \sim \mathcal{N}(\mu_t, \sigma^2), 
\quad \text{where } \mu_t = f_\theta(\mathbf{x}_t)
\end{equation}
To constrain the noise parameter to be positive, we place a weakly informative half-normal prior on the noise level:
\[
    \sigma \sim \text{HalfNormal}(1.0)
\]

\paragraph{Posterior inference and forecasting}
Our objective is to infer the posterior distribution over all unknown parameters \footnote{Now we have 5,585 parameters from the LSTM model and 1 noise parameter, totaling 5586 dimensions to infer.} $\boldsymbol{\phi} = \{\theta, \sigma\}$ given the observed data $\mathbf{y}$. By Bayes’ rule,
\begin{equation}
p(\boldsymbol{\phi} \mid \mathbf{y})  \propto  p(\mathbf{y} \mid \boldsymbol{\phi}) \cdot p(\boldsymbol{\phi})
\end{equation}
Since exact posterior inference is intractable for BNNs, we employ our proposed Gaussian mixture approximation (GMA) sampler to generate samples form it \footnote{Other approximate inference methods such as MH, HMC, and ADVI baselines are not employed here, because during our trials we found that MH and HMC are extremely computational expensive for this task given the 5585 dimensions of $\theta$.}. 

For GMA sampling, we can use \textit{projected gradient descent} (pGD) or \textit{mirror descent} (MD) to optimize the mixture weights on the probability simplex. In pGD-GMA (see Algo.\ref{algo:WGMA-sampling-pgd} and an improved variant Algo.\ref{algo:GMA-sampling-optimal} which is used here), an additive Euclidean gradient step is taken followed by projection back onto the simplex, which guarantees feasibility but may create spiky updates and occasionally collapses weights onto a small subset of components. By contrast, MD-GMA (see Algo.\ref{algo:GMA-sampling-mirror}) uses the geometry induced by negative entropy and performs multiplicative weights updates of the form (Eq.\ref{eq:MD_exponential_update} in Appendix.\ref{app:mirror_descent}):
\[
w_i^{(k+1)} \propto w_i^{(k)} \exp  \big(-\eta_k g_i^{(k)}\big)
\]
followed by normalization. This update automatically preserves positivity and normalization without requiring explicit projection to the probability simplex. From an optimization perspective, MD can be viewed as solving a KL-proximal subproblem \cite{cmu15850notes2020} rather than a Euclidean one, making it more natural for distributions over probability vectors. It also tends to preserve entropy across components, reducing the risk of premature mode collapse (see Appendix.\ref{app:mirror_descent}). 

In terms of computational complexity, both pGD-GMA and MD-GMA share the same dominant pre-computation and gradient estimation costs, $\mathcal{O}(N^2 M d^2) + \mathcal{O}(K \cdot N^2 M)$, but MD avoids the $\mathcal{O}(N \log N)$ projection step of pGD and replaces it with an $\mathcal{O}(N)$ multiplicative update. This makes MD slightly more efficient per iteration. Conceptually, pGD uses Euclidean geometry while MD uses KL (entropy) geometry, which is often more appropriate for simplex-constrained problems. In the present experiments we focus on the pGD variant of GMA, leaving the mirror descent (MD) version for later discussion.

After obtaining samples for LSTM weights, we can propagate the uncertainties in weights to predictions. For forecasting, we again use the \textit{one-step}, autoregressive procedure. For each posterior sample $\boldsymbol{\phi}^{(s)}$, we iteratively generate future log-mortality values by conditioning on the last observed sequence and rolling forward:
\begin{enumerate}
    \item Use the last $L=52$ observed values ($L$ is the lookback period, i.e. the \textit{lag} in autoregressive models) to predict $\mu_{T+1}^{(s)}$.
    \item Draw a predictive sample $\hat{y}_{T+1}^{(s)} \sim \mathcal{N}(\mu_{T+1}^{(s)}, (\sigma^{(s)})^2)$.
    \item Append $\hat{y}_{T+1}^{(s)}$ to the sequence and drop the oldest element to form the input for predicting $\mu_{T+2}^{(s)}$, then repeat.
\end{enumerate}
This recursive sampling procedure produces thousands of possible mortality paths, from which we compute posterior predictive means and credible intervals. Predictions are transformed back to the mortality index scale via $\mathcal{M}_t = \exp(y_t)$.

An alternative forecasting method is the \textit{multi-step rolling forecast scheme} \cite{huang_sequential_2020}, where predictions rather than observed data points are recursively fed back into the model’s $L$-length input sequence. In this setting, each predicted value $\hat{y}_{t}^{(s)}$ is appended to the input window (and oldest value is dropped to maintain a length of $L$) and used to generate the next forecast, effectively propagating (and cumulating) both observation noise and parameter uncertainty forward in time \footnote{Forecasting errors are thus cumulated in this scheme, while in the one-step forecasting scheme, the forecast error is corrected by adding observed data. In the \textit{one-step} scheme, the model always conditions on the true past observations when predicting the next value, avoiding error propagation but limiting forecasts to single-step horizons. In contrast, the \textit{rolling} scheme recursively conditions on its own predictions, enabling multi-step trajectories while compounding forecast uncertainty. Results from \textit{multi-step rolling forecast scheme} are presented in Appendix.\ref{app:Bayesian_LSTM_further_results}.}. A comparison of the two rolling forecast methods is made in Fig.\ref{fig:two_rolling_forecast_schemes}. 

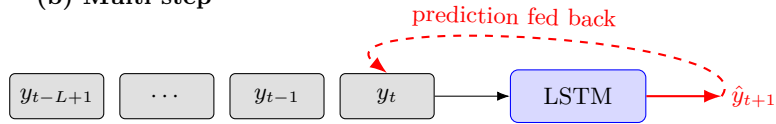
\begin{figure}[h!]
\centering
\begin{tikzpicture}[
    font=\small, >=Latex,
    seq/.style={draw,fill=black!12,rounded corners=2pt,minimum width=1.25cm,minimum height=6mm,inner sep=1pt},
    mdl/.style={draw=blue,fill=blue!15,rounded corners=3pt,minimum width=1.8cm,minimum height=7mm},
    lab/.style={align=center}
]

% ========================= (a) ONE-STEP =========================
\begin{scope}[shift={(0,0)}]
\node[lab,anchor=west] (titleA) at (-0.4,1.3) {\textbf{(a) One-step}};

% input window: y_{t-L}, ..., y_{t-1}, y_t
\node[seq] (a1) at (0,0) {$y_{t-L+1}$};
\node[seq,right=2mm of a1] (a2) {$\cdots$};
\node[seq,right=2mm of a2] (a3) {$y_{t-1}$};
\node[seq,right=2mm of a3] (a4) {$y_{t}$};

% model and output
\node[mdl,right=10mm of a4] (am) {LSTM};
\draw[->,black] (a4.east) -- (am.west);
\draw[->,red,thick] (am.east) -- ++(10mm,0) node[right,red] {$\hat{y}_{t+1}$};

% caption anchored below this row
\node[lab,below=7mm of $(a2)!0.5!(a3)$] (acap) {True past sequence always used};
\end{scope}

% ========================= (b) ROLLING ==========================
\begin{scope}[shift={(0,-2.8)}] % vertical spacing between panels
\node[lab,anchor=west] (titleB) at (-0.4,1.3) {\textbf{(b) Multi-step}};

% input window: same labels
\node[seq] (b1) at (0,0) {$y_{t-L+1}$};
\node[seq,right=2mm of b1] (b2) {$\cdots$};
\node[seq,right=2mm of b2] (b3) {$y_{t-1}$};
\node[seq,right=2mm of b3] (b4) {$y_{t}$};

\node[mdl,right=10mm of b4] (bm) {LSTM};
\draw[->,black] (b4.east) -- (bm.west);
\draw[->,red,thick] (bm.east) -- ++(10mm,0) node[right,red] {$\hat{y}_{t+1}$};

% feedback arrow
\draw[->,red,dashed,thick]
  ($(bm.east)+(10mm,0)$) .. controls +(6mm,8mm) and +(-10mm,8mm) .. ($(b4.north)$)
  node[pos=0.55,above,sloped,red] {prediction fed back};

\node[lab,below=7mm of $(b2)!0.5!(b3)$] {Predictions replace most recent observation};
\end{scope}

\end{tikzpicture}
\caption{Schematic comparison of two rolling forecasting schemes. 
(a) One-step: conditions on observed history (black) sequence $(y_{t-L+1},\ldots,y_{t-1},y_t)$. 
(b) Multi-step: add its own forecasts (red) to the input sequence.}
\label{fig:two_rolling_forecast_schemes}
\end{figure}

\paragraph{Implementation and results}
We evaluated the Bayesian LSTM with GMA sampling (the pGD variant\footnote{Results using the MD optimisation scheme are presented in Appendix.\ref{app:Bayesian_LSTM_further_results}.}) on the pre-processed US mortality index data between years 2015 and 2024. In implementing the pGD-GMA method, we used $N=200$ Gaussian components with $M=30$ local samples each, and optimized the mixture weights over $K=100$ iterations using a decaying learning rate $\eta_k=\eta_0/\sqrt{k+k_0}$ with $\eta_0=0.05$ and $k_0=800$. To initialise the Gaussian mixture, each Gaussian component mean was centred at the point estimate $\theta^\star$ of the trained classic LSTM, with small independent Gaussian jitters applied to each center, ensuring that GMM samples are drawn from a initial proposal cloud tightly concentrated around a well-trained mode.

We also applied stabilization tricks to prevent mode collapse (details in Appendix.\ref{app:combine_precomputing_and_MC_gradient_estimator}): a \textit{tempering parameter} $\beta_k$ that scales the log target density (Eq.\ref{eq:tempering_pGD_GMA} in Appendix.\ref{app:combine_precomputing_and_MC_gradient_estimator}), and a \textit{decaying entropy regularizer} $\lambda_k$ that discourages premature weight collapse (Eq.\ref{eq:entropy_regularisation_pGD_GMA} in Appendix.\ref{app:combine_precomputing_and_MC_gradient_estimator}). Together, these schedules stabilize the projected gradient descent dynamics, ensuring well-behaved entropy trajectories and preventing degeneracy, while maintaining computational efficiency comparable to vanilla pGD updates.

Applying these, the GMA algorithm converged rapidly: by iteration $k=100$ (computational time: 1s), the mixture entropy had stabilized at $H(w)=2.36$ with an effective 
\footnote{At each GMA iteration $k$, the mixture weight vector $w^{(k)}=(w^{(k)}_1,\dots,w^{(k)}_N)$ is used to compute two diagnostics: 
(i) the entropy (in \textit{nats}): $H^{(k)} = -\sum_{i=1}^{N} w^{(k)}_i \log (w^{(k)}_i)$, and (ii) the effective number of components (perplexity): $\text{eff}^{(k)} = \exp  \big(H^{(k)}\big)$. Here $H^{(k)} \in [0,\log N]$, where high entropy (near $\log N$) indicates weights spread across many components, and low entropy (near $0$) indicates collapse onto a few components. The effective number $\text{eff}^{(k)} \in [1,N]$ acts like the 'number of active components': $\text{eff}=N$ for uniform weights and $\text{eff}=1$ for a single dominating weight.
We can observe oscillatory behaviour in the weights evolution curve due to the inverse square root step size: $\eta_k=\eta_0 / \sqrt{k + k_0}$ ($k_0=800$) used in our implementations for both GMAs for this task. This design leads to $\sum_k \eta_k^2 = \infty$, which violates the \textit{Robbins-Monro conditions} for stochastic optimisation \cite{robbins_stochastic_1951}.
}
number of components $\approx 10.6$. The final mixture weights concentrated most strongly on a small subset of Gaussian components, with the leading component (index 127) carrying a normalized weight of $0.42$. This hints that GMA successfully identifies and emphasizes the most informative regions of the posterior while maintaining diversity across components.

\begin{figure}[ht!]
    \centering
    \begin{subfigure}[b]{0.32\textwidth}
        \centering
        \includegraphics[width=\linewidth]{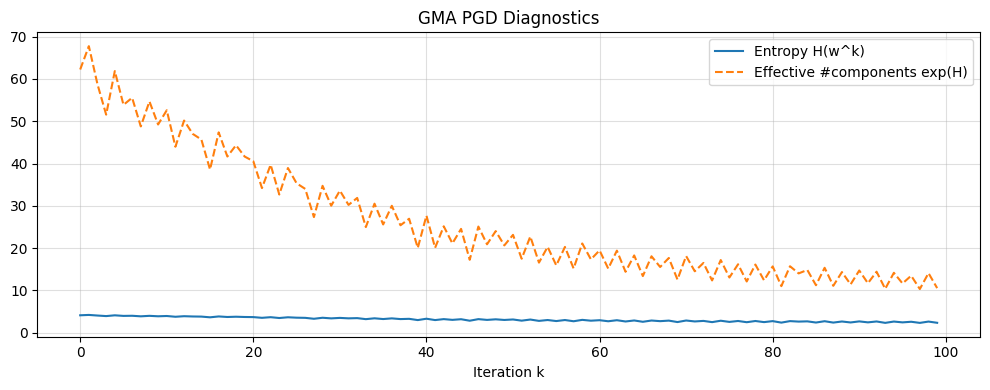}
    \end{subfigure}
    \hfill
    \begin{subfigure}[b]{0.32\textwidth}
        \centering
        \includegraphics[width=\linewidth]{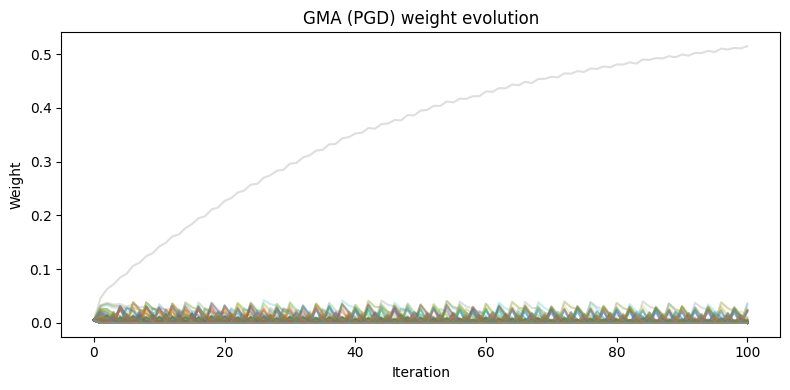}
    \end{subfigure}
    \hfill
    \begin{subfigure}[b]{0.32\textwidth}
        \centering
        \includegraphics[width=\linewidth]{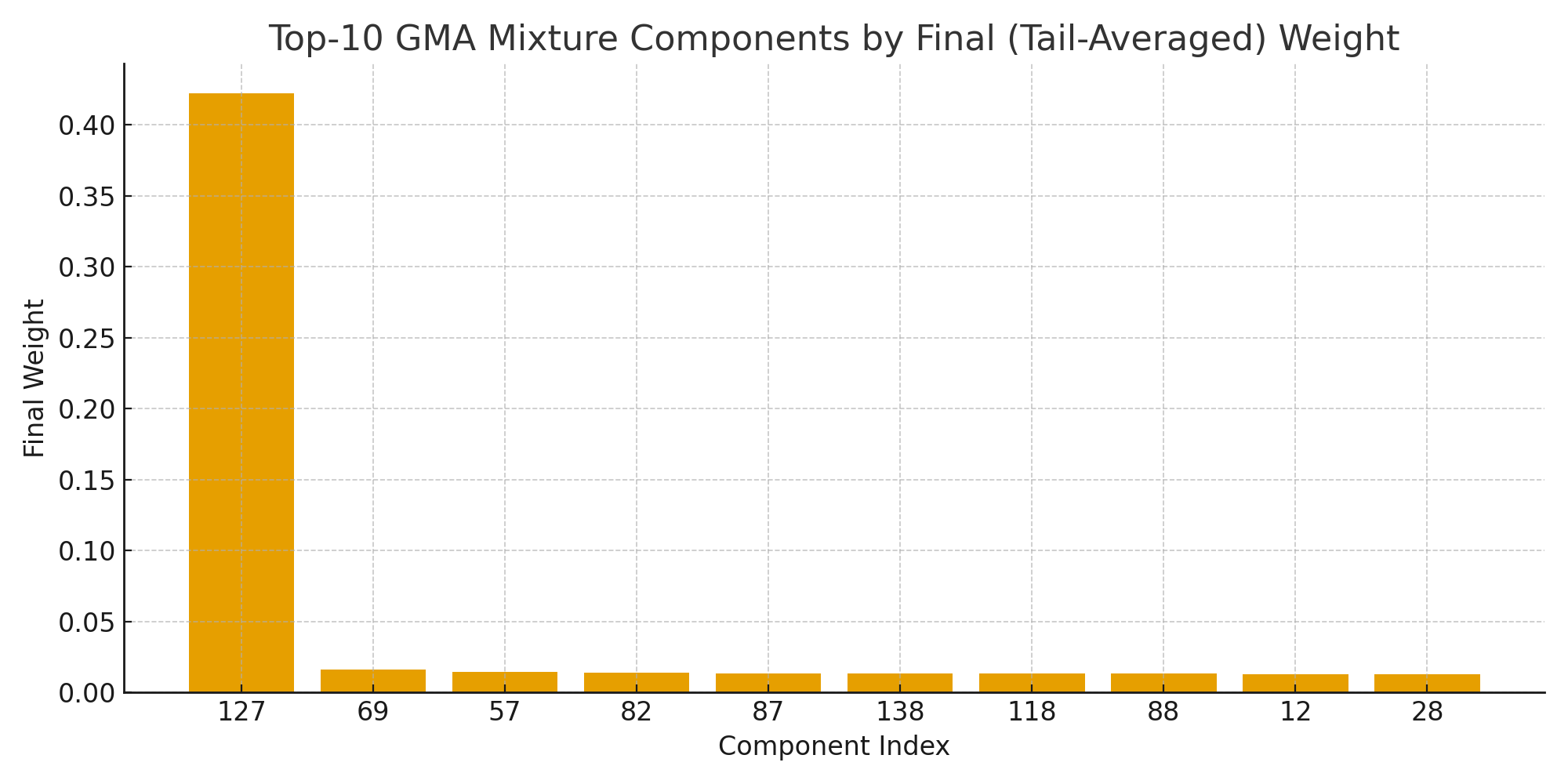}
    \end{subfigure}
    \caption{pGD-GMA sampling diagnosis. Left: entropy \& effective component count over iterations, middle: weights evolution, and right: final top-10 component weights.}
    \label{fig:gma_diagnostics_all}
\end{figure}

Forecasting performance was strong. The root mean squared errors (RMSE) of the posterior mean forecasts are $\text{train RMSE} = 0.0170$ and $\text{test RMSE} = 0.0255$, compared with the classic LSTM benchmark ($\text{train RMSE} = 0.0072$, $\text{test RMSE} = 0.0276$). These results indicate that the Bayesian LSTM achieves comparable predictive accuracy while providing principled uncertainty quantification. Visual inspection of predictive trajectories, as in Fig.\ref{fig:Bayesian_LSTM_forecast_oneStep}, confirms that the model successfully captures both long-term mortality trends and short-term fluctuations, with credible intervals that widen over the forecast horizon to reflect increasing uncertainty. The intervals account for both aleatoric observation noise and epistemic parameter uncertainty, with GMA sampling yielding sharper posterior concentration than standard MCMC or VI baselines. Overall, these results demonstrate the effectiveness of the proposed GMA approach in enabling scalable Bayesian inference for high-dimensional neural time-series models such as LSTMs.

\begin{figure}[H]
    \centering
    \includegraphics[width=0.65\linewidth]{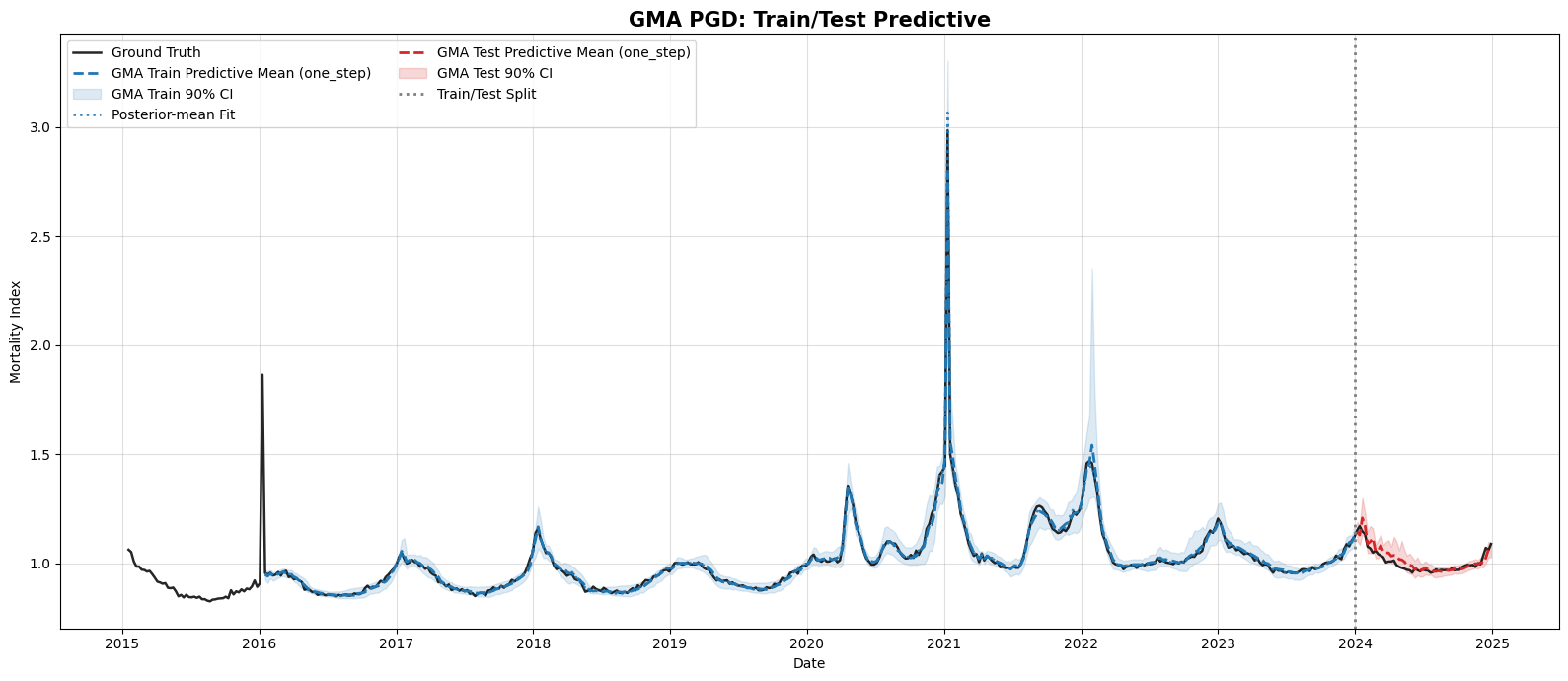}
    \caption{Bayesian LSTM (\textit{one-step rolling}) forecasts of weekly U.S.\ mortality index using pGD-GMA sampling. The posterior mean trajectory (red) closely tracks observed values (black), while the predictive intervals (shaded) capture both short-term fluctuations and long-term uncertainty.}
    \label{fig:Bayesian_LSTM_forecast_oneStep}
\end{figure}

In terms of computational efficiency, training the classic LSTM required approximately 27 seconds for 1,000 epochs of optimization using backpropagation. By contrast, the proposed GMA sampler with projected gradient descent (pGD) completed posterior weight optimization in roughly 1 second for 100 iterations, highlighting its efficiency in tackling the otherwise intractable high-dimensional posterior. This fast way of producing uncertainties in parameters is promising. However, the forecasting stage in the Bayesian setting is substantially more costly: generating predictive trajectories requires rolling out thousands of posterior weight samples through the recurrent network, which can take on the order of minutes to complete. This separation between fast inference of posterior weights and slower uncertainty propagation underscores a key trade-off in Bayesian neural forecasting: efficient sampling of parameters via GMA versus the heavier computational burden of simulating full predictive distributions.

\paragraph{MD-GMA results.}  
Instead of using pGD, which subtracts gradients in Euclidean space and projects back to the probability simplex, we also tried mirror descent (MD, see derivation details in Appendix.\ref{app:mirror_descent}) in GMA to optimise its GMM weights. MD operates in the dual space, applying updates in the log-domain and then exponentiating to yield multiplicative weight rescaling (see e.g. Eq.\ref{eq:MD_exponential_update} and Eq.\ref{eq:temperature_factor_MD_GMA} in Appendix.\ref{app:mirror_descent}). This makes MD particularly well-suited for GMA, since mixture weights must remain positive and sum to unity (i.e. stay in the probability simplex) at every iteration. Similar to pGD-GMA, to initialise the Gaussian mixture, each component mean was centred at the point estimate $\theta^\star$ of the trained classic LSTM and perturbed with small independent Gaussian jitters, so that subsequent GMM samples are drawn from an initial proposal cloud tightly concentrated around a well-trained mode. In addition to tempering ($\beta_k$) and entropy regularization ($\lambda_k$) as used in pGD-GMA, MD-GMA also employs a \textit{temperature schedule} ($\tau_k$) and a \textit{convex mixing coefficient} ($\alpha_k$) to further stabilize the multiplicative updates. Theoretical (see Appendix.\ref{app:mirror_descent_GMA}) and empirical (see Table.\ref{tab:LSTM_for_mortality_summary}) results show that MD-GMA maintains computational efficiency comparable to pGD-GMA, while providing smoother optimization dynamics. The MD-GMA results are presented in Appendix.\ref{app:Bayesian_LSTM_further_results}.

\paragraph{Comparison}
A summary of the results is provided in Table.\ref{tab:LSTM_for_mortality_summary}. 
The classic LSTM achieves very low training RMSE under the one-step foreca scheme, but at the expense of overfitting and poorer generalisation when forecasting on test data. 
In contrast, the Bayesian LSTMs trained with GMA sampling (both pGD and MD) yield slightly higher one-step training error but deliver more balanced performance on held-out test sets. 
The Bayesian models further provide full predictive uncertainty quantification, which the deterministic classic LSTM cannot capture. 
Notably, the computational cost profile is inverted: training the classic LSTM requires tens of seconds of backpropagation, whereas GMA completes posterior weight optimisation in about one second. 
However, Bayesian forecasting incurs a higher cost due to repeated posterior roll-outs, taking several minutes compared to near-instant prediction (forward-passes) in the classic LSTM. 
Overall, these comparisons highlight the trade-off between predictive sharpness and computational efficiency, with Bayesian LSTMs offering principled uncertainty at modest extra forecasting cost.

\begin{table}[H]
\centering
\caption{Forecasting performance (RMSE) and computational cost across methods. 
Training time refers to model fitting (classic LSTM training or GMA weight optimisation), 
while forecasting time measures prediction or posterior predictive roll-out.}
\label{tab:LSTM_for_mortality_summary}
\scriptsize
\setlength{\tabcolsep}{5pt}
\renewcommand{\arraystretch}{0.95}
\begin{tabular}{lcccc}
\toprule
\textbf{Method} & \textbf{Train RMSE} & \textbf{Test RMSE} & \textbf{Training Time} & \textbf{Forecasting Time} \\
\midrule
\multicolumn{5}{l}{\textit{Classic LSTM}} \\
\quad One-step     & $0.0072$ & $0.0276$ & $\sim 27$s (1000 epochs) & $\sim 1$s \\
\quad Multi-step   & $0.1828$ & $0.1030$ & $\sim 27$s (1000 epochs) & $\sim 1$s \\
\midrule
\multicolumn{5}{l}{\textit{Bayesian LSTM (pGD-GMA)}} \\
\quad One-step     & $0.0170$ & $0.0255$ & $\sim 1$s (100 iters) & $\sim 5$min \\
\quad Multi-step   & $0.1681$ & $0.1196$ & $\sim 1$s (100 iters) & $\sim 5$min \\
\midrule
\multicolumn{5}{l}{\textit{Bayesian LSTM (MD-GMA)}} \\
\quad One-step     & $0.0215$ & $0.0256$ & $\sim 1$s (100 iters) & $\sim 5$min \\
\quad Multi-step   & $0.1775$ & $0.1154$ & $\sim 1$s (100 iters) & $\sim 5$min \\
\bottomrule
\end{tabular}
\end{table}

\paragraph{A shortcut for representing uncertainties in large models.} 
Inspired by the warm start strategy used in initialising our GMM centers, we propose a practical approach for uncertainty representation in high-dimensional neural networks, that is, we first train a deterministic model using standard backpropagation, which efficiently provides a maximum a posteriori (MAP)-like, point estimate of the parameters (maybe locally optimal). These trained weights can then serve as a warm start for Bayesian posterior inference methods such as GMA, MH \footnote{MH for example, this substantially improves sampling efficiency, as the chains begin near a plausible mode of the posterior, thereby reducing burn-in and instability.}, or HMC, by perturbing parameters around their trained values. In contrast to classical Laplace approximations, which impose a unimodal Gaussian posterior, our GMM-based GMA sampling approach leverages a mixture of Gaussians to encourage multi-modal local exploration. Because the target and proposal densities are precomputed on a fixed sample cloud, the expensive high-dimensional inference problem is reduced to optimising mixture weights, turning posterior inference into a lightweight, finite-dimensional optimisation task. This yields more expressive posteriors and sharper uncertainty quantification while maintaining computational efficiency. Moreover, one can optionally freeze subsets of the parameters (e.g. lower-layer weights) and only Bayesianise task-specific components (e.g. final layer weights), further reducing cost. This warm-start Bayesianisation strategy scales naturally to very large models, such as transformer-based language models, where full Bayesian inference is infeasible: by combining a deterministic training phase with post-train GMA sampling, we can obtain meaningful uncertainty quantification with limited budget and tractable computation. We give an example of using this post-train sampling strategy to produce uncertainties in LLM in the following section.

\subsubsection{Bayesian language models (BLM): efficient inference and uncertainty representation with \textit{WGMA} in language modelling} \label{sec:BLM}

A quick way to turn a trained, point-estimated LLM into its Bayesian analogue is to \textit{freeze most parameters at their trained values} and \textit{sample only a tractable subset} around the optimized point, e.g. a maximum a posteriori (MAP) estimate. Let $\theta = (\theta_F,\theta_S)$ denote all model parameters split into a frozen block $\theta_F$ and a sampled block $\theta_S$ (e.g. the output head, layer norms, top-$L$ transformer blocks, or low-rank LoRA adapters \cite{seth2025robust}). Given a tokenized dataset $\mathcal{D}=\{(x_t,y_t)\}_{t=1}^T$ and a trained model $\theta^\star=(\theta_F^\star,\theta_S^\star)$, we freeze $\theta_F$ at the trained point $\theta_F^\star$, and target the tempered posterior
\begin{equation} \label{eq:LLM_posterior}
    p_\beta(\theta_S \mid \mathcal{D},\theta_F^\star) \propto 
    \Bigg[\prod_{t=1}^T \mathrm{Cat}  \left(y_t \middle| \mathrm{softmax}(f_{\theta_F^\star,\theta_S}(x_{\le t}))\right)\Bigg]^{\beta}
    \times p(\theta_S),
    \quad \beta\in(0,1]  
\end{equation}
with a Gaussian prior \textit{centered at the MAP estimate}:
\[
p(\theta_S) = \mathcal{N}  \big(\theta_S; \theta_S^\star, \Sigma_0\big), 
\qquad \Sigma_0 = \mathrm{diag}(\sigma_0^2)\ \text{or}\ \Sigma_0 = D^{-1}
\]
where $D$ can be a diagonal preconditioner from Adam's second moment or per-parameter weight-decay scales. Inference here therefore refers to infer the posterior Eq.\ref{eq:LLM_posterior}. In deep learning based language modelling, for example, we aim to infer the distributions of the weights and biases; as exact inference methods are not available due to e.g. intractable integral in high dimensions, approximate inference such as MCMC or VI methods can be employed \footnote{The author wants to distinguish the overloaded usages of 'inference' used in Bayesian inference and LLMs.}.

Conventionally, it is known that inference of a BNN is difficult mainly due to high-dimensional intractability; our objective is to replace the complex transformer posterior with a simple-to-sample GMM density. Specifically, GMA sampling turns a difficult inference problem into a simple GMM weights optimisation problem (similar to VI mechanism), which is practically feasible. GMA uses a $N$-component Gaussian-mixture model (GMM) $q_{\mathbf{w}}(\theta_S)=\sum_{i=1}^N w_i \mathcal{N}(\theta_S;\mu_i,\Sigma_i)$, with initial means $\mu_i=\theta_S^\star+\varepsilon_i$ (small jitter) and isotropic $\Sigma_i=\sigma^2 I$ or block-diagonal covariances (we will talk about key strategies for initialising these components later), to approximate the target $p_\beta$. Good approximation requires minimizing this objective (details see e.g. Eq.\ref{eq:KL_divergence_objective1} in Appendix.\ref{app:gradient_of_exclusive_KL_divergence})
\[
\min_{\mathbf{w}\in\Delta}  KL(q_{\mathbf{w}} \Vert p_\beta)
\mathbb{E}_{q_{\mathbf{w}}}  \big[\log q_{\mathbf{w}}(\mathbf{z})-\log \bar p_\beta(\mathbf{z})\big]
 + \text{const}
\]
which empirically gives (Eq.\ref{eq:KL_divergence_objective2} in Appendix.\ref{app:gradient_of_exclusive_KL_divergence})
\[
    \mathbf{w}^*
    = \arg\min_{\mathbf{w}} KL(q_{\mathbf{w}}(\mathbf{z}) \| p_\beta(\mathbf{z})) 
    = \arg\min_{\mathbf{w}} \left[ \sum_{\mathbf{z}} q_{\mathbf{w}}(\mathbf{z}) \log q_{\mathbf{w}}(\mathbf{z}) - \sum_{\mathbf{z}} q_{\mathbf{w}}(\mathbf{z}) \log \bar{p}_\beta(\mathbf{z}) \right]
\]
The distributional gap in the above can be estimated on a fixed sample \footnote{Note, each sample $s_{i,j}$ id of $d$-dimensional, where $d$ is the total number of weights and bias of the transformer network.} bank $\mathcal{S}=\{s_{i,j}\}_{i=1..N, j=1..M}$ with $s_{i,j}\sim \mathcal{N}(\mu_i,\Sigma_i)$. This sample bank is produced by drawing $M$ samples from each initialised Gaussian components; once drawn, their positions are fixed.

The mixture weights $\mathbf{w}\in\Delta^{N-1}$ are updated by \textit{projected gradient descent} (pGD-GMA). Grouping the pre-drawn samples by their generating component index yields the component-wise gradient for weight $w_i, i=1,2,...,N$ at iteration $k=1,2,...,K$:
\[
g_i^{(k)}
\approx
1 + \frac{1}{M}\sum_{j=1}^M  \Big[\log q_{\mathbf{w}^{(k-1)}}(s_{i,j}) - \log \bar{p}_\beta(s_{i,j})\Big],
\quad
q_{\mathbf{w}^{(k-1)}}(s_{i,j})=\sum_{\ell=1}^N w_\ell^{(k-1)} \mathcal{N}(s_{i,j};\mu_\ell,\Sigma_\ell)
\]
\[
\mathbf{v}^{(k)}=\mathbf{w}^{(k-1)}-\frac{\eta_0}{k} \mathbf{g}^{(k)},
\quad
\mathbf{w}^{(k)}=\Pi_{\Delta}  \big(\mathbf{v}^{(k)}\big)
\]
where a diminishing step size $\eta_k=\eta_0/k$ is used in the gradient update. $\big(\Pi_{\Delta}(\mathbf{v})\big)_i=\max\{v_i-\tau(\mathbf{v}),0\}$ with $\sum_i \max\{v_i-\tau(\mathbf{v}),0\}=1$ projects $\mathbf{v}$ onto the unit weights simplex
\footnote{
Here a standard gradient step $\mathbf{v}^{(k)}=\mathbf{w}^{(k-1)}-\eta_k \mathbf{g}$ is followed by the Euclidean projection onto the probability simplex $\Delta=\{\mathbf{w}\ge 0,\ \sum_i w_i=1\}$. The projection uses the well-known sorter algorithm: find the threshold $\tau$ such that $w_i=\max\{v_i-\tau,0\}$ and $\sum_i w_i=1$, which costs $\mathcal{O}(N\log N)$ per iteration.
}
. $\mathbf{v}$ is the intermediate weight vector after the gradient update step and before projection.
$\bar{p}_\beta=p_\beta \times \texttt{normalising constant}$ is the (tempered) unnormalised LLM posterior (i.e. the target density) in Eq.\ref{eq:LLM_posterior} and $q_{\mathbf{w}}(s_{i,j})=\sum_{\ell=1}^N w_\ell \mathcal{N}(s_{i,j};\mu_\ell,\Sigma_\ell)$ is our approximate GMM density values at the fixed sample positions.
The gradient $g_i$ is an unbiased Monte Carlo estimator with respect to the fixed sample bank.
For efficiency, as in Algo.\ref{algo:GMA-sampling-optimal}, we precompute the Gaussian PDF matrix $P\in\mathbb{R}^{(NM)\times N}$ with $P_{(i-1)M+j,\ell}=\mathcal{N}(s_{i,j};\mu_\ell,\Sigma_\ell)$ and the target log-densities $(p_{\text{tgt}})_{(i-1)M+j}=\log\bar p_\beta(s_{i,j})$, so that per-iteration mixture evaluations reduce to a single matrix-vector product $\mathbf{q}=P \mathbf{w}$, , followed by the component-wise gradient accumulation and a simplex projection.

As in Algo.\ref{algo:GMA-sampling-mirror}, we can also use a more robust method for this constrained, weight optimisation problem, i.e. \textit{mirror descent} (MD-GMA, multiplicative weights), to minimize $\mathrm{KL}(q_{\mathbf{w}}\Vert p_\beta)$ (we ignore the iteration index $k$):
\[
g_i
\approx
1 + \frac{1}{M}\sum_{j=1}^M  \left[\log q_{\mathbf{w}}(s_{i,j}) - \log \bar{p}_\beta(s_{i,j})\right],\ \ 
\tilde{w}_i \propto w_i\exp(-\eta_k g_i),\ \ 
w_i \leftarrow \frac{\tilde{w}_i}{\sum_\ell \tilde{w}_\ell}
\]
where $\eta_k$ is the learning rate in iteration $k$ (e.g. a diminishing step size $\eta_k=\eta_0/k$).

In the weights optimisation procedure, for both pGD-GMA and MD-GMA, we can apply some of the following stabilisation tricks to prevent weight collapse (some were used in our former Bayesian LSTM for mortality modelling task): (i) \textit{tempering} $\beta\uparrow 1$, (ii) \textit{entropy regularization} $\lambda(1+\log w_i)$, (iii) \textit{temperature/convex mixing} in the softmax update, and (iv) \textit{tail averaging} \cite{granziol_iterative_2021} of $\mathbf{w}$ over the last $L$ iterations \footnote{Note $L$ here is the horizon (i.e. length) of the sliding window used in iterative averaging, not the look-back (i.e. lag) length of the sliding window used in LSTM.}. 
The stabilisers (i) to (iii) helps prevent GMM mode collapse, while convex mixing and tail (Polyak) averaging smooth weights evolution trajectories \footnote{Weight trajectories can exhibit oscillating behaviour if the learning rate decaying scheme e.g. $\eta_k=\tfrac{\eta_0}{\sqrt{k}}$ is used as $\sum_k \eta_k^2 = \infty$ violates the Robbins-Monro conditions, as observed in Fig.\ref{fig:gma_diagnostics_all} (Bayesian LSTM inference using pGD-GMA).
When the objective is strongly convex (which is the case of our optimisation objective), using a diminishing step-size $\eta_k \propto 1/k$ can overcome the oscillatory behaviour of SGD \cite{bottou_optimization_2018}.
}.

Also, Gaussian densities are strictly positive, hence $q_{\mathbf{w}}(\mathbf{s}_{i,j})>0$ and $\log q_{\mathbf{w}}$ is well-defined; in implementation, one may clamp by a tiny floor (e.g. $10^{-300}$) for numerical safety. If $\log \bar p$ is evaluated with minibatches, batch sizes should remain consistent across iterations to mitigate estimator drift. Further, preconditioning \footnote{
In high-dimensions, covariance matrices $\Sigma_i$ associated with Gaussian components can become ill-conditioned, leading to unstable numerical behaviour and slow convergence. Preconditioning refers to a reparameterization of the parameter space that rescales or whitens directions of high variance, thereby improving the conditioning of the covariance structure. Specifically, if $D$ denotes a positive diagonal matrix (e.g. constructed from the second-moment statistics of Adam or from per-parameter weight-decay scales), then sampling with covariance $\Sigma_i = \sigma^2 D^{-1}$ ensures that directions of different curvature are appropriately normalized. This yields numerically stable Gaussian densities, prevents degeneracy of $\Sigma_i$ during optimisation, and accelerates convergence of mixture-weight updates. In essence, preconditioning aligns the geometry of the Gaussian proposal with the local curvature of the target posterior, mitigating the effects of anisotropy in high-dimensional parameter subsets.
} helps keep $\Sigma_i$ numerically stable and reduces ill-conditioning in high-dimensional subsets.

\paragraph{Stratified resampling for sample ensemble generation}  
Once the mixture weights $\mathbf{w}$ have been optimized, the final ensemble of approximate posterior samples is generated by re-sampling from the fixed Gaussian sample bank. The re-sampling step converts a weighted particle system into an equally weighted ensemble, which can then be used for posterior prediction without carrying explicit importance weights. Specifically, let $\mathcal{S}_0 = \{s_{i,j}\}_{i=1,\ldots,N; j=1,\ldots,M}$ denote the pre-sampled bank, and let $\mathbf{p} \in \Delta^{NM-1}$ denote the normalized selection probabilities induced by the optimized weights, where $p_{i,j} \propto w_i/M$.  
To generate an ensemble of size $NM$, we draw indices $(i_m,j_m)$ according to
\[
i_m \sim \text{Categorical}(\mathbf{w}), 
\qquad j_m \sim \text{Uniform}(\{1,\ldots,M\}),
\quad m=1,\ldots,NM
\]
and collect the selected particles $s_{i_m,j_m}$ into the final ensemble $\mathcal{S}$. In practice, this step can be efficiently implemented using stratified sampling \cite{neyman_two_1992}, which ensures low-variance estimates and avoid degeneracy due to repeated draws (see Appendix.\ref{app:simple_and_stratified_random_sampling}). The computational cost is linear in the sample bank size, i.e. $\mathcal{O}(NM)$, since each re-sampled index can be obtained by a single categorical draw and a uniform sub-index selection.  

This stratified resampling procedure ensures that the optimized mixture distribution $q_{\mathbf{w}}$ is faithfully represented by an unweighted ensemble of samples, which can then be used directly for posterior predictive Monte Carlo estimation (e.g. averaging logits in language models), providing credible intervals for posterior-based predictiions (i.e. uncertainty propagation). 

\paragraph{Mixture component initialisation and warm start} 
An important trick we propose is to use the point estimate MAP as a warm start, which we call it the 'tickle-around-the-MAP' Bayesianization strategy: we extract the point, (locally) optimal weights $\theta_S^\star$ and use it as a warm start for initialising our Gaussian centers (based which fixed samples are drawn). This makes our approximate density captures at least one mode of the complex posterior, and the following GMM optimisation procedure promotes multi-modal diversity. This 'tickle-around-the-MAP' warm start strategy provides a unified approach for 'fine turning' a deterministic, large model into a Bayesian one (i.e. Bayesianization), and it can be universally used for other chain sampling methods such as MH, LMC and HMC.

Practically, we initialize $N$ component means near $\theta_S^\star$ as $\mu_i=\theta_S^\star+\epsilon_i\odot d$ where $\epsilon_i\sim\mathcal{N}(0,\alpha^2 I)$ and $d$ is a scale vector (e.g. Adam RMS or weight-decay scales). We can employ $\Sigma_i=\sigma^2 I$ or block-diagonal matrices with a small variance $\sigma^2$ that remains inside the local basin but is large enough to expose local multi-modality. This warm start localizes inference and amortizes the cost of evaluating the target density over a fixed sample cloud.

\paragraph{Computational and memory complexity}
With $d=\dim(\theta_S)$, the one-time pre-computation of $P$ costs $\mathcal{O}(N^2 M d^2)$ due to Mahalanobis evaluations, while scoring the bank incurs $\mathcal{O}(L\cdot C_p)$ target evaluations (with $C_p$ the per-sample cost of a forward pass and prior term). Each iteration costs $\mathcal{O}(N^2 M)$ for the matrix-vector multiply, $\mathcal{O}(NM)$ for gradient accumulation, and $\mathcal{O}(N\log N)$ for the projection. The total runtime is therefore $\mathcal{O}  \big(N^2 M d^2\big) + \mathcal{O}  \big(K N^2 M\big)$,
and the memory scales as $\mathcal{O}(NM d + N d^2 + N^2 M)$ for storing the bank, covariances, and the PDF matrix $P$.

\paragraph{Warm start + GMA sampling: a 'shortcut' for uncertainty representation in large models.}
A practical recipe is: train deterministically to $\theta^\star$; select a tractable subset $\theta_S$; place a local Gaussian prior $p(\theta_S)=\mathcal{N}(\theta_S^\star,\Sigma_0)$; build an $N$-component mixture around $\theta_S^\star$; draw a fixed sample bank; and optimize only the mixture weights by pGD using the unbiased MC estimator on the bank. Relative to a Laplace approximation, the mixture captures local multi-modality at negligible marginal cost because (i) target evaluations are amortized (the warm start and local sampling make pGD practical, with the sample bank amortizing the dominant costs) and (ii) the iterative loop reduces to dense matrix-vector multiplies and a simplex projection. Subsetting $\theta_S$ further reduces compute while still enabling well-calibrated predictive intervals for LLMs when full MCMC/VI is infeasible.

\subsubsection*{\textit{Experiments: Bayesianization of a TinyGPT via GMA sampling}}

We study Bayesianized next-token prediction on a small, decoder-only transformer trained with \textit{maximum a posteriori} (MAP) estimation\footnote{
The classic TinyGPT was trained deterministically via backpropagation with maximum-likelihood (cross-entropy) loss and SGD/Adam. 
This is equivalent to a maximum a posteriori point estimate under an implicit flat prior.
} on a byte-level \textit{Byte Pair Encoding} (BPE) corpus of simple English sentences (toy lines). 
We freeze all parameters except the output head (weight tying with the input embedding) and infer a local Bayesian posterior over this \textit{head-only} subset via GMA. 
After optimizing mixture weights on a fixed sample bank, we draw posterior samples to compute token-level Monte Carlo predictions from which we read off probability shifts and entropy changes relative to the MAP model. 
Evaluation is \textit{stream-grounded}: targets are taken from the true encoded token stream to avoid artefacts from whitespace segmentation. 

We report \textit{token-level} \textit{negative log-likelihood} (NLL), \textit{perplexity}, \textit{accuracy}, and \textit{Brier score} (definitions see Appendix.\ref{app:LLM_metrics}), along with qualitative examples; additionally, we examine a 'word-like' subset
\footnote{By 'word-like subset' we mean the set of tokens that consist of contiguous alphabetic sequences (A-Z, a-z), selected using a regular-expression filter. 
This excludes whitespace, punctuation, digits, and byte-level artefacts, and thus approximates next-word prediction behaviour within a byte-level/subword model.}
of tokens (alphabetic sequences identified via a regular-expression filter) to approximate next-word behaviour in this character/subword setting (a byte-level encoding scheme that begins from raw byte values and iteratively merges the most frequent adjacent byte pairs into larger subword units, thereby producing a vocabulary\footnote{
We distinguish between \textit{byte-level} and \textit{token-level}. 
Byte-level refers to raw UTF-8 bytes (0-255), which form the base vocabulary and guarantee full coverage of any input string. 
Token-level refers to the actual units produced by the tokenizer after applying subword merges (e.g. single characters, subwords, or occasionally whole words). 
All reported metrics are computed at the token level, i.e. with respect to these final vocabulary units.
}, i.e. the set of distinct tokens the model can represent and predict).

\paragraph{Experimental designs} 
We design two experiments:  
\textit{(E1) next-character prediction (stream-grounded).} 
Using token pairs from a short toy corpus, we compare the performances of MAP \textit{vs} GMA (ensemble, mean-weight, single-sample) estimated tinyGPTs on token-level accuracy, NLL, and Brier score. 
We also report the same metrics on the 'word-like' subset to approximate next-word behaviour. 
For uncertainty analysis, we tabulate $p(\text{gold})$ and next-token entropy $H=-\sum_v p(v)\log p(v)$ (in nats) for selected contexts to visualize how GMA corrects over/under-confidence relative to MAP.  
\textit{(E2) Long-sentence/paragraph continuation.} 
For a fixed prompt, we decode (using the same temperature/top-$p$ settings) under MAP, GMA ensemble, GMA mean-weight, and several GMA single-sample draws. 
We qualitatively assess diversity and track token-level entropies along the generated tail to illustrate uncertainty propagation under the Bayesianized head.
Our goal is to show that, GMA improves probabilistic calibration and error detection beyond MAP in the setting of language modelling.

\paragraph{Common setup}
A small GPT-style decoder (ctx $=256$, $n_{\text{layer}}=4$, $n_{\text{head}}=8$, $d_{\text{model}}=256$) is trained on a byte-level BPE vocabulary (trained on the toy corpus only). The toy corpus consists of short English tongue twisters and simple sentences; we build train/val splits and evaluate directly from the encoded token stream.

\textit{Subset and prior.} We Bayesianize the \textit{head-only} subset $\theta_S=\{\texttt{lm\_head}\}$ (tied to the embedding). The prior is Gaussian centred at the MAP point, $p(\theta_S)=\mathcal{N}(\theta_S^\star,\Sigma_0)$ with diagonal scale $\Sigma_0=\mathrm{diag}(\sigma_0^2)$; in practice we use a constant base scale per parameter (matching the code).

\textit{GMA configuration.} We initialize $N$ mixture components near $\theta_S^\star$ with small jitter and draw $M$ samples per component to form a fixed bank. We pre-compute the Gaussian PDF matrix $P$ and target log-densities on a fixed minibatch and run projected gradient descent on the mixture weights for $K$ iterations. In default runs we use $(N,M,K,\sigma^2,\eta_0)=(200,8,200,10^{-3},0.2)$, identical to the script.

We compare four prediction modes, all implemented in the code:
\begin{enumerate}
  \item \textbf{MAP}: classic TinyGPT continuation from the point estimate.
  \item \textbf{GMA ensemble (Bayesian)}: probability-space averaging across several GMA samples at each decoding step.
  \item \textbf{GMA mean-weight}: average the sampled parameter vectors into a single head vector and decode once.
  \item \textbf{GMA single-sample}: decode using individual posterior samples (no averaging) to visualize posterior spread.
\end{enumerate}

\textit{Metrics.} On token pairs from the encoded stream we compute accuracy, NLL, and Brier score. We also report results on the word-like subset. For uncertainty we inspect: (i) $p(\text{gold})$ under MAP \textit{vs} GMA; (ii) entropy shifts of the next-token distribution; and (iii) qualitative continuations showing diversification under Bayesian decoding.

\textit{Practical simplifications.} We adopt a byte-level tokenizer (256 base bytes with learned merges) to avoid OOV and spacing artefacts. GMA operates on a \textit{fixed} minibatch when scoring the bank for stability and amortization. We begin with the head-only subset and isotropic covariance in the prior-scaled space; robustness tricks (tempering, entropy regularization, tail averaging) are disabled initially and added only if weight collapse is observed. Evaluation is fully stream-grounded to ensure targets match the model’s tokenization (mitigating ``\texttt{the}'' \textit{vs} `` \texttt{the} " mismatches).

\subsubsection*{\textit{E1: next-character prediction}}

For this experiment we use a small toy corpus of short English tongue twisters and simple declarative sentences, tokenized with the trained byte-level BPE vocabulary. 
The dataset is split into training and held-out evaluation streams, where evaluation is strictly stream-grounded so that targets correspond exactly to the encoded sequence. 
From the evaluation split we extract consecutive token pairs $(x_{\leq t}, y_t)$, which serve as context-target pairs for computing predictive accuracy and calibration metrics. 
In addition to the full evaluation stream, we also construct a 'word-like' subset by selecting only alphabetic tokens via a regular-expression filter, thereby approximating next-word prediction behaviour in this character/subword setting. 
The toy corpus consists of the following sentences:

\begin{quote}
\small
\begin{verbatim}
the quick brown fox jumps over the lazy dog .
a big red cat sits on a mat .
she sells sea shells by the sea shore .
how much wood would a woodchuck chuck .
peter piper picked a peck of pickled peppers .
a good cook could cook good food .
i saw susie sitting in a shoe shine shop .
\end{verbatim}
\end{quote}

In all experiments we compare MAP predictions with 3 posterior-predictive modes derived from the GMA samples:
\begin{enumerate}
    \item \textbf{GMA ensemble:} probability-space averaging across multiple sampled heads at each decoding step. In our experiment we average over $200$ randomly chosen posterior draws \footnote{In our trials, we found that, using more or less samples for averaging the probabilities doesn't matter much, as the posterior is sharply peaked and the posterior variability is small.} from the GMA bank consisting of $1600$ samples.
    \item \textbf{GMA mean-weight:} a single deterministic head formed by averaging all posterior samples (here $1600$ draws) into one mean parameter vector, then decoding once with this head.
    \item \textbf{GMA single-sample:} decoding with individual posterior samples (one of the $1,600$ head vectors) to visualize variability across posterior draws \footnote{Unfortunately, a single sample also yields similar continuations most of the time, unless we deliberately pick a sample from the tails of the posterior.}.
\end{enumerate}
As observed in our later long-form continuation case, since the posterior is sharply peaked, these 3 GMA-based posterior-predictive modes yield largely the same result. Unless otherwise specified, 'GMA' in tables and figures refers to the ensemble variant, which best approximates the Bayesian posterior predictive. 

\paragraph{Results} 
Using token pairs from the toy corpus, we compare the performances of MAP \textit{vs} GMA (ensemble, mean-weight, single-sample) on token-level accuracy, NLL, and Brier score. We also report the same metrics on the word-like subset. For uncertainty analysis, we tabulate $p(\text{gold})$ and next-token entropy $H=-\sum_v p(v)\log p(v)$ (in \textit{nats}) for selected contexts to visualize how GMA corrects over/under-confidence relative to MAP. 

As shown in Table.\ref{tab:e1_results}, on the full token set ($n=137$ stream token pairs), accuracy is identical for MAP and GMA ($0.745$), while GMA achieves a slightly lower NLL ($4.739$ \textit{\textit{vs}} $4.782$), lower perplexity ($114.27$ \textit{\textit{vs}} $119.35$), and lower Brier score ($0.481$ \textit{\textit{vs}} $0.489$), accompanied by a marginally higher entropy ($0.065$ \textit{\textit{vs}} $0.062$ nats). On the word-like subset ($n=96$), both methods again achieve the same accuracy ($0.760$), but GMA outperforms MAP on NLL ($4.347$ \textit{\textit{vs}} $4.388$), perplexity ($77.21$ \textit{\textit{vs}} $80.45$), and Brier score ($0.452$ \textit{\textit{vs}} $0.463$), with entropy increasing slightly ($0.082$ \textit{\textit{vs}} $0.079$ nats). These results indicate that GMA preserves predictive accuracy while offering modest improvements in fit and calibration, with small entropy increases that mitigate MAP’s overconfident spikes. 

Posterior-predictive visualizations, shown in Fig.\ref{fig:posterior_predictive_char}, confirm this pattern: for selected contexts\footnote{By  ``context \texttt{'i'}'' we mean that the model is conditioned on the prefix consisting of the single token \texttt{'i'}. The posterior predictive distribution is then the model’s belief over the next token given this prefix. MAP yields a sharp, single-mode prediction, whereas GMA produces a Bayesian ensemble predictive that redistributes probability mass away from the MAP mode toward plausible alternatives, with credible intervals reflecting posterior variability.} (e.g. \texttt{'i'}), MAP yields a sharp prediction, whereas GMA redistributes probability mass to multiple plausible continuations. Further, as seen in Fig.\ref{fig:entropy_char}, stream-wide entropy shifts $\Delta H$ exhibit mostly positive deviations (up to $\approx 0.262$ \textit{nats} at index 113), highlighting how Bayesianized decoding introduces calibrated uncertainty where the MAP model is overly certain.

\begin{table}[H]
\centering
\footnotesize
\caption{Comparison of MAP \textit{vs} GMA on the toy evaluation set. 
Results are reported for all tokens ($n=137$) and the word-like subset ($n=96$).}
\label{tab:e1_results}
\begin{threeparttable}
\begin{tabular}{lcccccc}
\toprule
\textbf{Subset} & \textbf{Method} & \textbf{Accuracy $\uparrow$} & \textbf{NLL $\downarrow$} & \textbf{PPL $\downarrow$} & \textbf{Brier $\downarrow$} & \textbf{$H$ $\uparrow$} \\
\midrule
\multirow{2}{*}{All tokens (137)} 
  & MAP & 0.745 & 4.782 & 119.35 & 0.489 & 0.062 \\
  & GMA & 0.745 & 4.739 & 114.27 & 0.481 & 0.065 \\
\midrule
\multirow{2}{*}{Word-like (96)} 
  & MAP & 0.760 & 4.388 & 80.45 & 0.463 & 0.079 \\
  & GMA & 0.760 & 4.347 & 77.21 & 0.452 & 0.082 \\
\bottomrule
\end{tabular}
\begin{tablenotes}
\item [1] \footnotesize{NLL = negative log-likelihood, PPL = perplexity, Brier = Brier score, $H$ = entropy (\textit{nats}).}
\end{tablenotes}
\end{threeparttable}
\end{table}

\begin{figure}[H]
    \centering
    % First plot: entropy across stream tokens
    \includegraphics[width=0.85\linewidth]{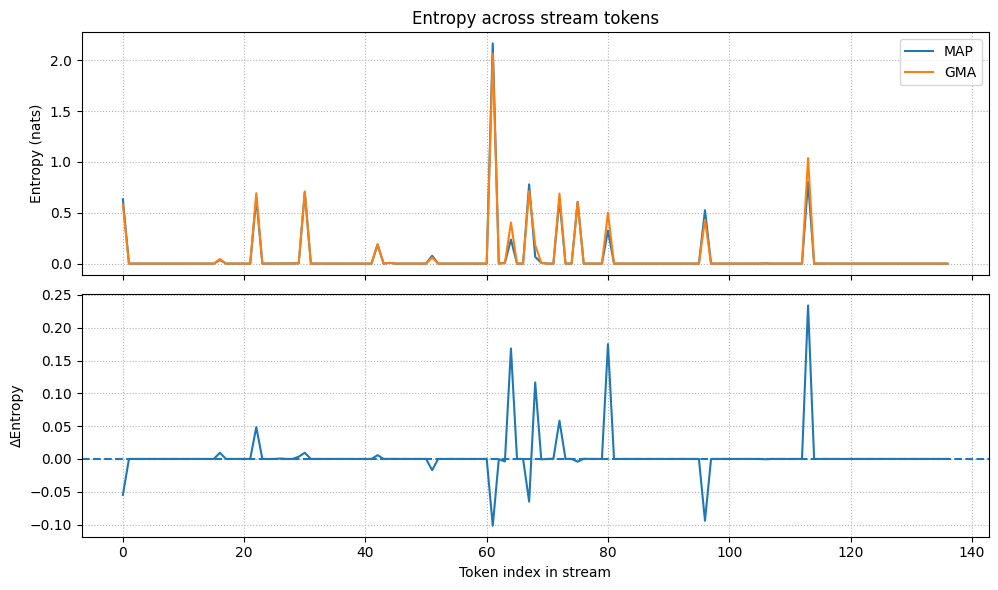}
    \caption{Entropy analysis across the evaluation stream. 
    (Top) Token-level entropy (in \textit{nats}) under MAP \textit{vs} GMA. 
    (Bottom) Entropy difference $\Delta H = H_{\text{GMA}} - H_{\text{MAP}}$, showing mostly positive spikes where GMA softens MAP’s overconfidence.}
    \label{fig:entropy_char}
\end{figure}

\begin{figure}[H]
    \centering
    % Second plot: posterior predictive distribution for one context
    \includegraphics[width=0.9\linewidth]{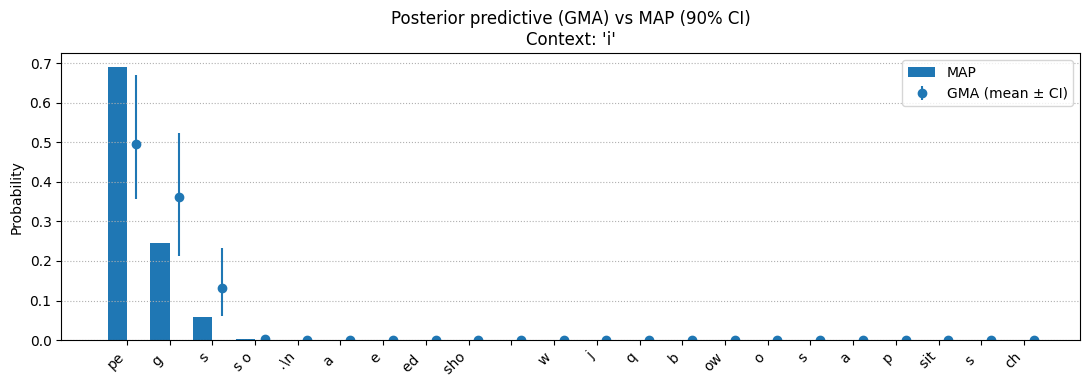}
    \caption{Posterior predictive distribution under MAP \textit{vs} GMA for the context \texttt{'i'}. 
    MAP shows a peaked prediction, whereas GMA redistributes probability mass to plausible alternatives with 90\% credible intervals, reflecting posterior uncertainty.}
    \label{fig:posterior_predictive_char}
\end{figure}

To illustrate this at the token level, Table.\ref{tab:e1_examples_token} shows predictions on the prefix \texttt{ ``the quick ...''}. Both MAP and GMA assign high confidence to the correct continuations, but GMA adjusts probability mass slightly, yielding marginally higher $p(\text{gold})$ values and entropy softening in ambiguous contexts.

\begin{table}[h!]
\centering
\footnotesize
\caption{Example token-level predictions under MAP \textit{vs} GMA on the prefix \texttt{``the quick ...''}. 
$p(\text{gold})$ denotes the probability assigned to the gold token; $H$ is entropy (in \textit{nats}).}
\label{tab:e1_examples_token}
\begin{tabular}{lccccc}
\toprule
\textbf{Context} & \textbf{Gold token} & \textbf{Method} & \textbf{Predicted token} & $p(\text{gold})$ & $H$ \\
\midrule
\multirow{2}{*}{\texttt{the }} & \multirow{2}{*}{q} & MAP & q   & 0.689 & 0.633 \\
 &  & GMA & q   & 0.725 & 0.601 \\
\multirow{2}{*}{\texttt{the q}} & \multirow{2}{*}{u} & MAP & u   & 1.000 & 0.000 \\
 &  & GMA & u   & 1.000 & 0.000 \\
\multirow{2}{*}{\texttt{the qu}} & \multirow{2}{*}{ick} & MAP & ick & 1.000 & 0.000 \\
 &  & GMA & ick & 1.000 & 0.000 \\
\multirow{2}{*}{\texttt{the quick}} & \multirow{2}{*}{\textvisiblespace} & MAP & \textvisiblespace & 1.000 & 0.000 \\
 &  & GMA & \textvisiblespace & 1.000 & 0.000 \\
\multirow{2}{*}{\texttt{the quick }} & \multirow{2}{*}{b} & MAP & b   & 1.000 & 0.000 \\
 &  & GMA & b   & 1.000 & 0.000 \\
\bottomrule
\end{tabular}
\end{table}

Finally, long-form continuation examples in Table.\ref{tab:e1_examples_continuation} demonstrate how MAP and GMA (ensemble/mean-weight.single sample) decode the same prompt. In this case, as the posterior is sharply peaked around the MAP head, different GMA modes (ensemble, mean-weight, or even single-sample draws) converge to nearly identical continuations; but GMA ensemble maintains calibrated probability distributions internally, leading to slightly softer predictions at the token level.

\begin{table}[H]
\centering
\footnotesize
\caption{Example long-form continuations given the prompt \texttt{``the quick brown ...''}. 
Whitespace is shown as `$\cdot$'.}
\label{tab:e1_examples_continuation}
\begin{threeparttable}
\begin{tabular}{lp{0.8\linewidth}}
\toprule
\textbf{Method} & \textbf{Continuation} \\
\midrule
MAP & fox$\cdot$jumps$\cdot$over$\cdot$the$\cdot$lazy$\cdot$dog$\cdot$.\ 
a$\cdot$big$\cdot$red$\cdot$cat$\cdot$sits$\cdot$on$\cdot$a$\cdot$mat$\cdot$.\ 
she$\cdot$sells$\cdot$sea$\cdot$shells$\cdot$by$\cdot$the$\cdot$sea$\cdot$shore$\cdot$. \\
\addlinespace
GMA ensemble & fox$\cdot$jumps$\cdot$over$\cdot$the$\cdot$lazy$\cdot$dog$\cdot$.\ 
a$\cdot$big$\cdot$red$\cdot$cat$\cdot$sits$\cdot$on$\cdot$a$\cdot$mat$\cdot$.\ 
she$\cdot$sells$\cdot$sea$\cdot$shells$\cdot$by$\cdot$the$\cdot$sea$\cdot$shore$\cdot$. \\
\addlinespace
GMA mean-weight & fox$\cdot$jumps$\cdot$over$\cdot$the$\cdot$lazy$\cdot$dog$\cdot$.\ 
a$\cdot$big$\cdot$red$\cdot$cat$\cdot$sits$\cdot$on$\cdot$a$\cdot$mat$\cdot$.\ 
she$\cdot$sells$\cdot$sea$\cdot$shells$\cdot$by$\cdot$the$\cdot$sea$\cdot$shore$\cdot$. \\
\addlinespace
GMA single-sample & fox$\cdot$jumps$\cdot$over$\cdot$the$\cdot$lazy$\cdot$dog$\cdot$.\ 
a$\cdot$big$\cdot$red$\cdot$cat$\cdot$sits$\cdot$on$\cdot$a$\cdot$mat$\cdot$.\ 
she$\cdot$sells$\cdot$sea$\cdot$shells$\cdot$by$\cdot$the$\cdot$sea$\cdot$shore$\cdot$. \\
\bottomrule
\end{tabular}
\begin{tablenotes}
\item[1] \footnotesize{On this toy continuation task, all four methods (MAP, GMA ensemble, GMA mean-weight, and a single GMA draw) produced the same deterministic sequence.}
\end{tablenotes}
\end{threeparttable}
\end{table}

\subsubsection*{\textit{E2: Long-form continuation and calibration on a larger corpus (WGMA)}}

\paragraph{Setup.}
We train a larger TinyGPT (context $=512$, $n_{\text{layer}}=6$, $n_{\text{head}}=8$, $d_{\text{model}}=384$) on a byte-level BPE vocabulary of size $4096$ learned over a Sherlock Holmes bundle from \textit{Project Gutenberg} (${\sim}1.88$M characters; boilerplate removed; whitespace lightly normalized) \cite{projectgutenberg}. 
After \texttt{30,000} optimizer steps, the best held-out validation loss is $\text{NLL}=0.0915$ ($\text{PPL}=1.10$).
We then Bayesianize the \textit{head-only} subset via \textit{WGMA} with $(N,M,K,\sigma^2,\eta_0)=(200,30,1000,10^{-3},0.2)$, producing $NM=6000$ posterior head samples. 
The learned mixture has weight entropy $H(\mathbf{w})=\mathbf{1.988}$ \textit{nats} (max \footnote{For mixture weights $\mathbf{w}=(w_1,\dots,w_N)$, the entropy in \textit{nats} is $H(\mathbf{w})=-\sum_i w_i\log w_i$, maximized by the uniform distribution $w_i=\frac{1}{N}$, giving $H_{\max}=\log N$. With $N=200$, $\log 200 \approx 5.30$ \textit{nats}.} $\log N = \log 200 \approx 5.30$), i.e.\ a moderately concentrated weight distribution. 
Pre-computation of target log-densities for the fixed sample bank took ${\sim}7$ minutes (6000 evaluations), while the $K=1000$ pGD steps completed in ${\sim}3$ seconds on GPU.

\textit{Prediction modes.}
“WGMA” refers to the \textit{ensemble} posterior predictive with $S=128$ samples per decoding step.
We also report \textit{WGMA mean-weight}, which averages all $6000$ heads into a single deterministic head, and \textit{WGMA single-sample}, which is a representative draw (we use sample \#123).
Further, we include an \textit{adaptive} mode that decodes with MAP by default and switches to the WGMA ensemble for $k=5$ steps whenever the MAP next-token entropy exceeds $H_{\text{thresh}}=0.6$.

\subsubsection*{\textit{Results}}

\textit{Mixture diagnostics.}
WGMA learns a \textit{moderately concentrated} mixture: the weight entropy is $H(\mathbf{w})=\mathbf{1.988}$ \textit{nats} (max $\log 200\approx5.30$), indicating that a small subset of components dominates the mass while a long tail remains. 
Empirically, the heaviest few components together account for a sizeable fraction of the probability (Fig.\ref{fig:e2_weights}), with the remainder spread thinly across many low-weight components. This structure is desirable for head-only Bayesianization: the leading components anchor a compact, high-posterior region near the MAP, whereas the tail supplies gentle diversity for posterior averaging without inducing collapse or excessive variance in the ensemble predictive.  

\begin{figure}[H]
    \centering
    \includegraphics[width=0.6\linewidth]{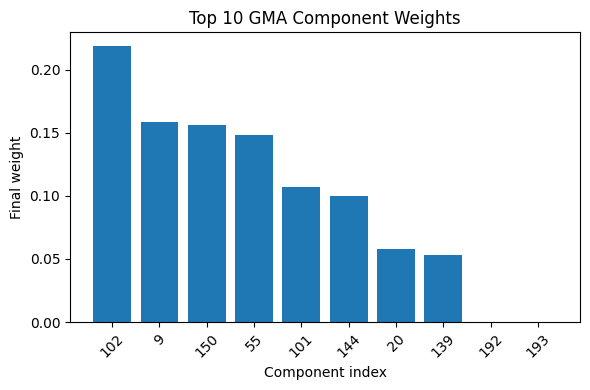}
    \caption{Top-10 WGMA component weights after optimization ($N=200$, $M=30$, $K=1000$).}
    \label{fig:e2_weights}
\end{figure}

\paragraph{Calibration.}
We evaluate calibration\footnote{Calibration means the predicted probabilities should match empirical frequencies: if the model’s top token has confidence $\hat p\approx0.70$ across many cases, it should be correct about $70\%$ of the time (perfect calibration: $\Pr(Y=\hat y \mid \hat p=p)=p$). 
\textit{ECE} buckets confidences and averages the absolute gap between binwise mean confidence and accuracy (lower better). 
\textit{Risk-Coverage/AURC}: sort by confidence; at coverage $c$ keep the top-$c$ fraction and measure the error rate among retained items; the \textit{area} under this curve summarizes selective risk (lower better).} on $5,000$ stream-grounded token pairs (same tokenizer as training). 
As shown in Fig.\ref{fig:e2_conf}-\ref{fig:e2_AURC} and Table.\ref{tab:e2_calibration}, WGMA improves calibration: ECE drops from $0.1101$ to $0.1013$ and AURC from $0.5795$ to $0.5743$.

\begin{figure}[H]
  \centering
  \begin{minipage}{0.48\linewidth}
    \centering
    \includegraphics[width=\linewidth]{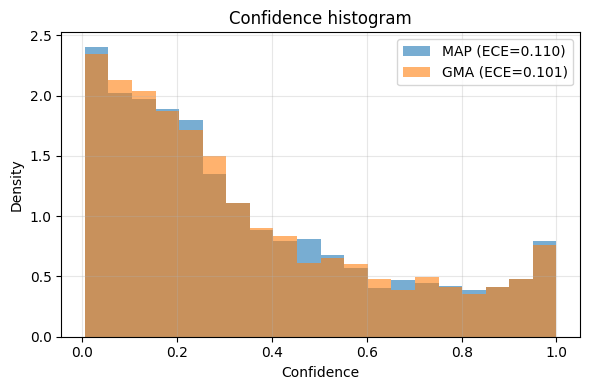}
    \caption{Confidence histograms (20 bins). WGMA shifts mass slightly leftwards and aligns confidence with accuracy more closely (ECE $0.1101 \to 0.1013$).}
    \label{fig:e2_conf}
  \end{minipage}\hfill
  \begin{minipage}{0.48\linewidth}
    \centering
    \includegraphics[width=\linewidth]{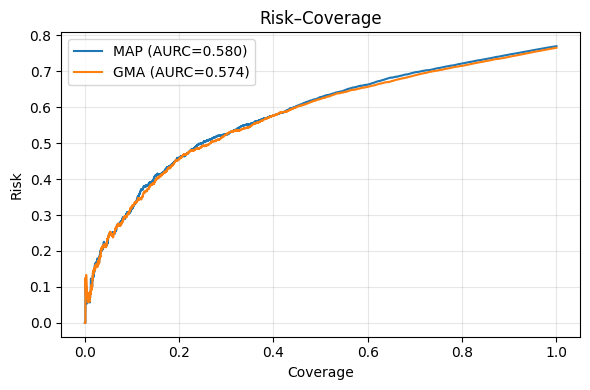}
    \caption{Risk-Coverage curves on held-out tokens (AURC: MAP $0.5795$ \textit{vs} WGMA $0.5743$). WGMA is slightly lower over most coverages.}
    \label{fig:e2_AURC}
  \end{minipage}
\end{figure}

\begin{table}[H]
\centering
\footnotesize
\caption{Calibration on 5,000 held-out token pairs. For calibration (ECE/AURC), we used 5,000 held-out token pairs from the corpus. The target token is the actual next token in the text, so calibration metrics use real ground truth.}
\label{tab:e2_calibration}
\begin{tabular}{lcc}
\toprule
\textbf{Method} & \textbf{ECE $\downarrow$} & \textbf{AURC $\downarrow$} \\
\midrule
MAP & 0.1101 & 0.5795 \\
WGMA ensemble ($S=128$) & \textbf{0.1013} & \textbf{0.5743} \\
\bottomrule
\end{tabular}
\end{table}

\paragraph{Posterior-mean reranking of MAP candidates.}
This is a \textit{single-step} next-token exercise: at a fixed context we take the MAP model’s top-$k$ candidates and \textit{re-score} that same set using WGMA’s posterior-mean predictive to obtain $p_{\text{WGMA}}$. 
The goal is to \textit{rerank}, i.e.\ compare $p_{\text{MAP}}$ \textit{vs} $p_{\text{WGMA}}$ on the same $k$ tokens, to reveal how Bayesian averaging \textit{shifts local preferences} without running a full decode. 
As observed in Table.\ref{tab:e2_rerank}, on \texttt{``We arrived at Baker Street where we found''}, WGMA lowers the top MAP probability while nudging several plausible alternatives upward, yielding a flatter next-token distribution.

\begin{table}[H]
\centering
\footnotesize
\caption{Reranking MAP top-$k$ candidates with WGMA posterior mean on context \texttt{``We arrived at Baker Street where we found''}. Arrows in $p_{\text{WGMA}}$ indicate change relative to $p_{\text{MAP}}$.}
\label{tab:e2_rerank}
\begin{tabular}{lp{0.33\linewidth}cccc}
\toprule
\# & \textbf{Candidate (token)} & $p_{\text{MAP}}$ & $p_{\text{WGMA}}$ & Rank$_{\text{MAP}}$ & Rank$_{\text{WGMA}}$ \\
\midrule
1  & \texttt{. At }                    & 0.6687 & 0.6238 \dec & 1  & 1 \\
2  & \texttt{,\textbackslash n}        & 0.0753 & 0.1082 \inc & 2  & 2 \\
3  & \texttt{."\textbackslash n\textbackslash n "} & 0.0598 & 0.0730 \inc & 3  & 3 \\
4  & \texttt{ed }                      & 0.0389 & 0.0290 \dec & 4  & 4 \\
5  & \texttt{\textbackslash n the }    & 0.0164 & 0.0206 \inc & 6  & 5 \\
6  & \texttt{, and that }              & 0.0164 & 0.0198 \inc & 5  & 6 \\
7  & \texttt{."\textbackslash n\textbackslash n "} & 0.0125 & 0.0158 \inc & 8  & 7 \\
8  & \texttt{. He }                    & 0.0130 & 0.0118 \dec & 7  & 8 \\
9  & \texttt{. This }                  & 0.0091 & 0.0108 \inc & 9  & 9 \\
10 & \texttt{, when}                   & 0.0075 & 0.0104 \inc & 10 & 10 \\
\bottomrule
\end{tabular}
\end{table}

\paragraph{Long-form continuations.}
Unlike the single-step reranking exercise, here we run \textit{multi-step} autoregressive decoding from a fixed prompt with identical hygiene across methods (temperature $0.7$, top-$k=50$, top-$p=0.90$, repetition-penalty $1.1$, no-repeat 3-grams). 
This probes how WGMA’s posterior averaging influences the evolving next-token distribution over many steps (uncertainty propagation) while preserving fluency. 
Examples are shown in Table.\ref{tab:e2_cont}. With strong MAP training (PPL $\approx 1.10$) and head-only Bayesianization, MAP and WGMA yield broadly similar byte-level continuations; WGMA nevertheless maintains better internal calibration (Table.\ref{tab:e2_calibration}) as it averages probabilities at each step \footnote{Ensemble uses $S=128$ samples per step; mean-weight averages all $6000$ heads once; the single-sample draw is \#123; the adaptive mode switches to WGMA for $k=5$ steps whenever MAP entropy exceeds $H_{\text{thresh}}=0.6$.}.
\textit{Qualitatively}, in this prompt, the WGMA mean-weight, single-sample, and adaptive variants exhibit more natural clause structure and fewer brittle artifacts than the MAP and ensemble traces (Table.\ref{tab:e2_cont}); however, this would require systematic human evaluation (or automatic repetition/distinct-$n$ metrics) to generalize beyond the shown example.

\begin{table}[H]
\centering
\footnotesize
\caption{Continuations from the prompt \texttt{``Holmes looked at me and smiled. ''} (first $\sim$180 characters, whitespace normalized; line breaks removed).}
\label{tab:e2_cont}
\begin{tabularx}{\linewidth}{l X}
\toprule
\textbf{Method} & \textbf{Continuation (trimmed)} \\
\midrule
MAP &
Under stretched out upon his finger\hbox{-}tiple the very ates in front of the coarse tobacco with Station now\hbox{-}class mantelpiece, like the box in his arms of a cab's all linguous\ldots \\
\addlinespace
WGMA ensemble ($S=128$) &
Not one upon the light increasoteal draught. And then, sudden legs on the top of the other at lunks by a man who chink and same feeling veloafer; but thern surprise were the other's energy\ldots \\
\addlinespace
WGMA mean\hbox{-}weight &
Not at all. “You've done me if you don't know now. I believe that I won't let yourself in a cabman steps before long as I took a London and gave him still and again and paced some time with here\ldots \\
\addlinespace
WGMA single\hbox{-}sample (\#123) &
Not at all. “Well, I was, as I am very pale and come for my painfully. “You have neither of you. I will leave your time, Watson,” said he, “but I must have a little problem to effect upon your own\ldots \\
\addlinespace
Adaptive ($H{>}0.6$, $k=5$) &
Not seven o'clock. As kept the tradmiserable I felt that my advantage may rest upon me. “You have important matter, Miss Stapleton\ldots,” said Dr.\ Mortimer, “for is the escaped from the supposition\ldots \\
\bottomrule
\end{tabularx}
\end{table}

\textit{Summary.} 
On the larger Holmes corpus, WGMA preserves MAP-level fluency and improves calibration (ECE $0.1101 \to 0.1013$, AURC $0.5795 \to 0.5743$) while reshaping top-$k$ token preferences without architectural changes or heavy compute. 
Gains come from averaging over plausible perturbations of the softmax head around the MAP, which dampens overconfident peaks and reallocates probability mass toward linguistically consistent alternatives. 
Because WGMA operates with a fixed sample bank, expensive target evaluations are amortized; each optimization step then reduces to dense matrix--vector multiplies plus a simplex projection, yielding a calibration lift at negligible marginal cost.

We expect the \textit{ensemble} posterior predictive to remain close to MAP on NLL/PPL while improving calibration (lower Brier/ECE and better selective risk); the \textit{mean-weight} head to interpolate between MAP and full ensemble; and \textit{single-sample} decoding to reflect local posterior variability. 
In E2, reranking illustrates how Bayesian averaging shifts local one-step preferences (Table.\ref{tab:e2_rerank}), and long-form decoding shows that WGMA maintains fluency while propagating better-calibrated uncertainty (Table.\ref{tab:e2_cont}, Table.\ref{tab:e2_calibration}). 
An \textit{adaptive} variant retains MAP’s stylistic continuity yet switches to the ensemble exactly when uncertainty spikes, improving selective prediction and reducing repetition. 
Intuitively, once the backbone is well learned, uncertainty concentrates in the softmax head; WGMA samples plausible head weights near the MAP and averages them, reducing variance and mitigating brittle peaks.

\subsubsection{\textit{WGMA} inference of the Lotka-Volterra (LV) dynamical system} \label{sec:LV_system}

\begin{figure}[ht]
    \centering
    \includegraphics[width=0.35\columnwidth]{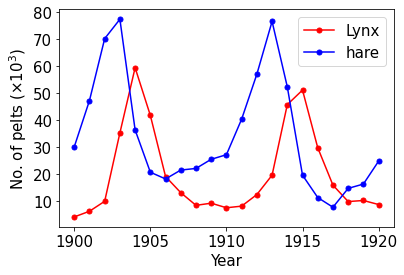}
    \caption{Hudson Bay lynx-hare dataset showing the temporal evolution of prey (hares) and predator (lynx) populations between 1900 and 1920.}
    \label{fig:LV_data}
\end{figure}

We consider the classical LV predator-prey system \cite{alfred_jlotka_elements_1925,Wildlife1924}, a nonlinear ecological model capturing the interactions between hare (the prey) and lynx (the predator) populations. Its dynamics are governed by the coupled ordinary differential equations \cite{blackmore_dynamical_2001,bullett_lotkavolterra_2017,lemos-silva_lotka-volterra_2023,huang_electrostatics-based_2024}:
\begin{equation} \label{Eq:LV_dynamics}
\begin{aligned}
    \frac{dX}{dt} &= aX - bXY \\
    \frac{dY}{dt} &= cXY - dY
\end{aligned}
\end{equation}
where $X(t)$ and $Y(t)$ denote the prey (hare) and predator (lynx) populations, and $\theta=(a,b,c,d)$ are the interaction parameters: prey growth rate $a$, predation rate $b$, predator reproduction rate $c$, and predator mortality rate $d$. The goal is to perform Bayesian inference of these parameters, together with the initial states $(X_0,Y_0)$ and observation noise levels, given 21 annual observations of hare and lynx pelts \footnote{The observed data can be accessed at  \cite{stan2025lotkavolterra}: \url{https://github.com/stan-dev/example-models/blob/master/knitr/lotka-volterra/hudson-bay-lynx-hare.csv}.} (Fig.\ref{fig:LV_data}).

Formally, the Bayesian posterior can be written as:
\begin{equation} \label{Eq:LV_posterior}
    p(\theta,X_0,Y_0,\sigma_1,\sigma_2 \mid X',Y')  \propto  
    p(X',Y'\mid \theta,X_0,Y_0,\sigma_1,\sigma_2) 
    p(\theta) p(X_0) p(Y_0) p(\sigma_1) p(\sigma_2)
\end{equation}
where $(X',Y')=\{(x'_i,y'_i)\}_{i=1}^{21}$ are the observed hare and lynx counts, and $\sigma_1,\sigma_2$ are separate observation noise scales for prey and predator respectively \footnote{One can also use a shared noise level $\sigma_1 = \sigma_2$, e.g. \cite{huang_electrostatics-based_2024}.}.

\subsubsection*{\textit{Experimental design}}

We follow the priors and likelihood of the \textit{Stan} Lotka-Volterra example \cite{LV_Carpenter} and later use its estimated values as reference values; this setup also allows us to compare the GMA-based posterior inference results directly with the reference values, under consistent priors and likelihoods. The 4 ODE parameters are given independent Gaussian priors on the \textit{natural} scale \footnote{In our implementation, we work on log scale and include the Jacobian.}:
\[
a,d \sim \mathcal{N}(1, 0.5^2), \qquad
b,c \sim \mathcal{N}(0.05, 0.05^2)
\]
Initial states use log-normal priors:
\[
\log X_0,\ \log Y_0 \sim \mathcal{N}(\log 10, 1)
\]
and the two observation-noise scales are also log-normal:
\[
\log \sigma_1,\ \log \sigma_2 \sim \mathcal{N}(-1, 1)
\]

The likelihood assumes \textit{independent} \footnote{Independence here refers to the observation errors across time and species; the latent dynamics are deterministic given parameters and initial conditions. This state independence assumption, while common in other literature \cite{carpenter_stan_2017, huang_electrostatics-based_2024} as well, is strong - it assumes the two trajectories as well as the sequential states are independent.} log-normal observation errors for hare and lynx counts:
\[
\log x'_i \sim \mathcal{N}(\log x_i,\ \sigma_1^2),\qquad
\log y'_i \sim \mathcal{N}(\log y_i,\ \sigma_2^2)
\]
where $(x_i,y_i)$ are the ODE solutions at observation times $t_i$, given the ODE parameters (a,b,c,d). 
To obtain the states $(x_i,y_i)$, we integrate the LV system with a fourth-order \textit{Runge-Kutta} (RK4) method using step size $\mathrm{d}t=0.005$ and enforce positivity by truncating states below $10^{-12}$.

For posterior inference we use a two-stage \textit{refining GMA} procedure, as described in Section.\ref{sec:refined_GMA}. Stage.1 performs a coarse exploration of the posterior landscape using a wide Gaussian mixture bank, while Stage.2 re-centres and shrinks the mixture around the most informative components (i.e. those with top-ranked weights), thereby refining the approximation. To account for the heterogeneous scales of the parameters, we initialise an anisotropic diagonal covariance on the log-parameters with standard deviations
\[
\bigl[
\overset{\log a}{0.28}, 
\overset{\log b}{0.22}, 
\overset{\log c}{0.24}, 
\overset{\log d}{0.28}, 
\overset{\log X_0}{0.55}, 
\overset{\log Y_0}{0.55}, 
\overset{\log \sigma_1}{0.22}, 
\overset{\log \sigma_2}{0.22}
\bigr]
\]
with tighter scales for interaction coefficients $(b,c)$, i.e. the interaction coefficients which are sensitive to state perturbations, and broader scales for initial states $(X_0,Y_0)$. 

\textbf{Stage.2 re-centering and shrinkage.} Let $w^{(1)}  \in  \Delta^{N-1}$ be the Stage.1 mixture weights and $\{\mu^{(1)}_\ell\}_{\ell=1}^N$ the Stage.1 means with corresponding diagonal std vector $\sigma^{(1)}$ given above. We select the top $20\%$ components by $w^{(1)}$ and resample Stage.2 centres from this subset with probabilities proportional to $w^{(1)}$, then add a small Gaussian \textit{jitter}:
\[
\mu^{(2)}_i  =  \mu^{(1)}_{J_i}  +  \varepsilon_i,\qquad 
J_i \sim \mathrm{Cat}  \left(\frac{w^{(1)}_{\text{top}}}{\sum w^{(1)}_{\text{top}}}\right),\quad
\varepsilon_i \sim \mathcal{N}  \Bigl(0,\ \mathrm{diag}\bigl((\rho \zeta \sigma^{(1)})^2\bigr)\Bigr)
\]
with shrink factor $\rho=0.35$ and jitter scale $\zeta=0.15$. The Stage.2 covariances are set deterministically to
\[
\Sigma^{(2)}  =  \mathrm{diag}  \bigl((\rho \sigma^{(1)})^2\bigr)
\]
i.e. each log-parameter’s standard deviation is reduced to $0.35$ times its Stage.1 value (variances scaled by $0.35^2$), preserving the same anisotropy while focussing the bank around the high-weight region.

To improve efficiency and stability we (i) evaluate mixture densities in log-space via a \textit{log-sum-exp} \footnote{Details about the  \textit{log-sum-exp} operation see Appendix.\ref{app:combine_precomputing_and_MC_gradient_estimator}.} operation which mitigates numerical underflow in high dimensions, (ii) use the small RK4 step above to reduce discretisation bias in the likelihood computation, and (iii) precompute the matrix of component log-densities for all banked samples. In each stage the unnormalised target (prior $+$ likelihood) is evaluated \textit{once per banked sample} ($NM$ ODE solves per stage; $2NM$ total). Subsequent weight updates reuse cached log-densities; per-iteration work is then dominated by the log-sum-exp over $N$ components for each of the $NM$ samples, yielding $\mathcal{O}(K N^2M d^2)$ arithmetic after a one-off precomputation cost of $\mathcal{O}(N^2 Md)$ per stage.

%==========================
% pGD-GMA \textit{vs} MD-GMA (LV)
%==========================
\subsubsection*{\textit{LV inference results: pGD-GMA vs MD-GMA}}

We ran the two-stage refined GMA pipeline with identical hyperparameters for both optimizers (\textit{pGD-GMA} and \textit{MD-GMA}): $N=200$ mixture components, $M=30$ bank samples per component, $K=500$ weight updates per stage, RK4 with $\mathrm{d}t=0.005$. Stage-1 used anisotropic log-scale standard deviations stated before; Stage-2 applied a refine factor $0.35$ with jitter $0.15$. Each stage evaluates the unnormalised target exactly once per banked sample (thus $NM=6000$ ODE solves per stage; $2NM=12,000$ total). All subsequent weight updates reuse cached log-densities via a numerically stable \textit{log-sum-exp} mixture \footnote{Details about the \textit{log-sum-exp} operation to improve numerical stability can be found in Appendix.\ref{app:combine_precomputing_and_MC_gradient_estimator}.}.

MD-GMA and pGD-GMA update achieve comparable accuracy and runtime, while MD-GMA exhibits markedly improved optimisation stability (less mode collapse).
Runtime was \textbf{886.23 s} for pGD-GMA and \textbf{873.52 s} for MD-GMA. The weight evolution diagnostics in Fig.\ref{fig:pGD_GMA_LV}a-b and Fig.\ref{fig:MD_GMA_LV}a-b reveal a clear qualitative gap: pGD-GMA concentrates nearly all mass onto a single component after a few iterations (most others fall below $10^{-15}$ on log-scale), whereas MD-GMA keeps a small cohort of non-negligible components throughout Stage.2 (several weights persist between $10^{-3}$ and $10^{-1}$ on log-scale). This mitigates premature mode collapse and preserves diversity for Stage.2 re-centering. 

Despite weight differences, the inference results from both optimizers, however, are similar. Approximate prior and posterior plots in Fig.\ref{fig:pGD_GMA_LV}c-d and Fig.\ref{fig:MD_GMA_LV}c-d show that Stage-2 re-centers tightly around the posterior bulk for both optimizers. The two-stage, progressive GMA method is able to narrow down the search region; the reference values, however, lie on the fringe of the posterior clouds. Quantitatively, the two optimizers (pGD and MD) deliver virtually identical posterior means (Table.\ref{tab:param-means-LV}), indicating that, given the same sample banks (prior samples), they converge to the same posterior mode. Nonetheless, the modes deviate from the reference values (vertical red lines in the histograms in Fig.\ref{fig:pGD_GMA_LV}b and Fig.\ref{fig:MD_GMA_LV}b).
Posterior predictive medians and $90\%$ credible predictive intervals in Fig.\ref{fig:pGD_GMA_LV}d and Fig.\ref{fig:MD_GMA_LV}d are visually similar across optimizers, and both reproduce the oscillatory dynamics. Phases are captured well while peak amplitudes are somewhat damped compared to observed data. 

\begin{table}[ht!]
\centering
\caption{Final posterior means (from Stage.2).}
\label{tab:param-means-LV}
\setlength{\tabcolsep}{6pt}
\begin{tabular}{lrrrr}
\toprule
Param & Reference \cite{LV_Carpenter} & pGD-GMA & MD-GMA & Rel.\ error (pGD / MD) \\
\midrule
$\alpha$ & 0.55   & 0.8098 & 0.8120 & +47.2\% / +47.6\% \\
$\beta$  & 0.028  & 0.04582 & 0.04585 & +63.6\% / +63.8\% \\
$\delta$ & 0.024  & 0.06495 & 0.06516 & +170.6\% / +171.5\% \\
$\gamma$ & 0.80   & 0.8957 & 0.8918 & +12.0\% / +11.5\% \\
$X_0$    & 33.956 & 5.61 & 5.67 & $-83.5\%$ / $-83.3\%$ \\
$Y_0$    & 5.933  & 10.34 & 10.34 & +74.3\% / +74.3\% \\
$\sigma_1$ & 0.25 & 0.549 & 0.546 & +119.6\% / +118.4\% \\
$\sigma_2$ & 0.25 & 0.489 & 0.489 & +95.6\% / +95.6\% \\
\bottomrule
\end{tabular}
\end{table}

\begin{table}[H]
\centering
\caption{Runtime and posterior predictive RMSEs (median trajectory \textit{vs} observations).} 
\label{tab:rmse-runtime}
\setlength{\tabcolsep}{10pt}
\begin{tabular}{lccc}
\toprule
Method & Runtime (s) & RMSE (Hare) & RMSE (Lynx) \\
\midrule
pGD-GMA & 886.23 & 28.26 & 15.37 \\
MD-GMA  & 873.52 & 28.40 & 15.21 \\
\bottomrule
\end{tabular}
\end{table}

%--------------------------
% Figures: pGD
%--------------------------
\begin{figure}[H]
\centering

\begin{subfigure}[b]{0.48\linewidth}
  \centering
  \includegraphics[width=\linewidth]{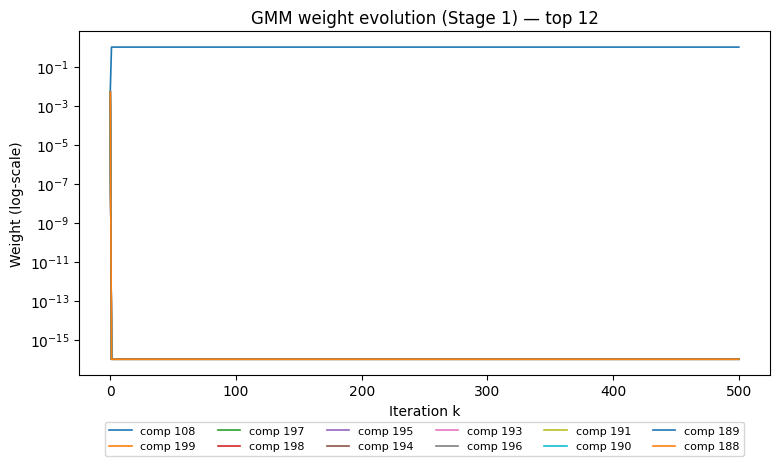}
  \caption{Weights evolution, Stage.1 (pGD).}
\end{subfigure}\hfill
\begin{subfigure}[b]{0.48\linewidth}
  \centering
  \includegraphics[width=\linewidth]{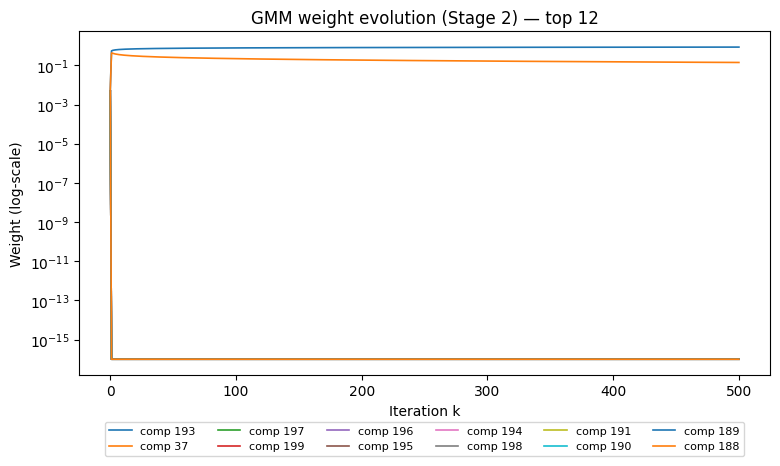}
  \caption{Weights evolution, Stage.2 (pGD).}
\end{subfigure}

\begin{subfigure}[b]{0.98\linewidth}
  \centering
  \includegraphics[width=\linewidth]{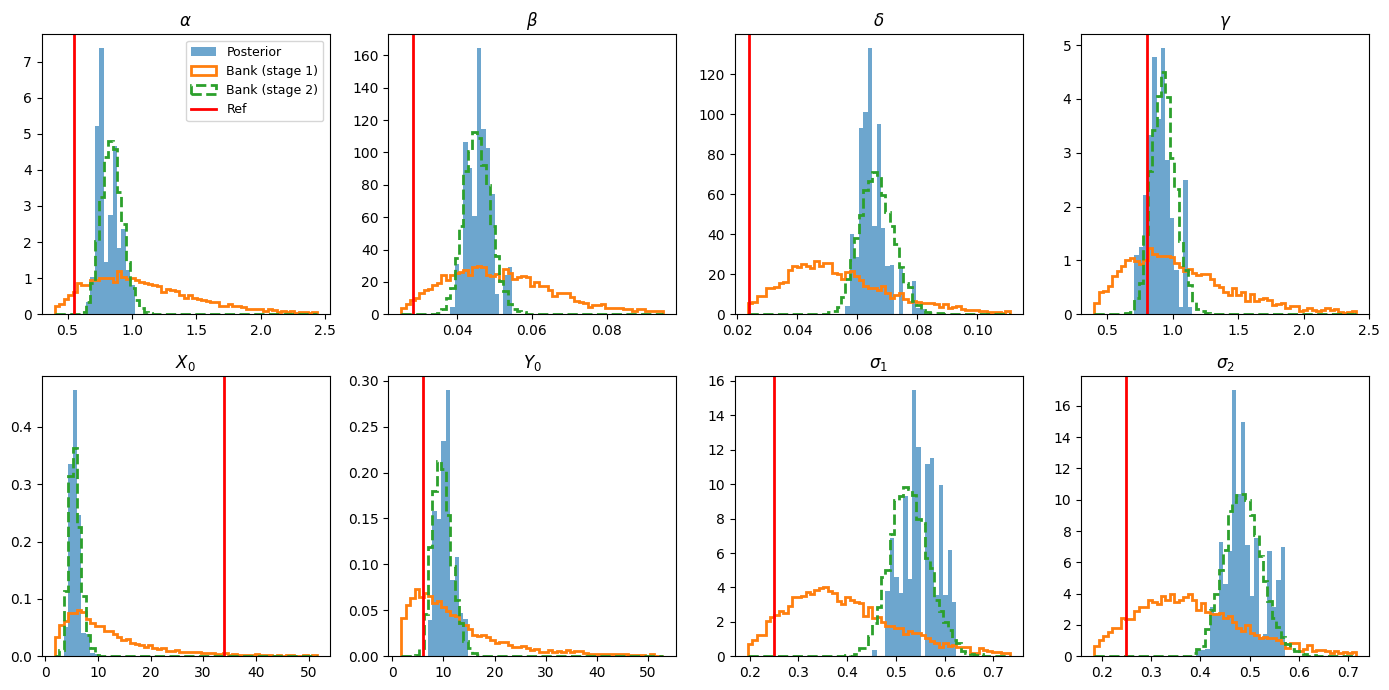}
  \caption{Approx. prior, posterior, and reference values (marginal, pGD).}
\end{subfigure}

\begin{subfigure}[b]{0.98\linewidth}
  \centering
  \includegraphics[width=\linewidth]{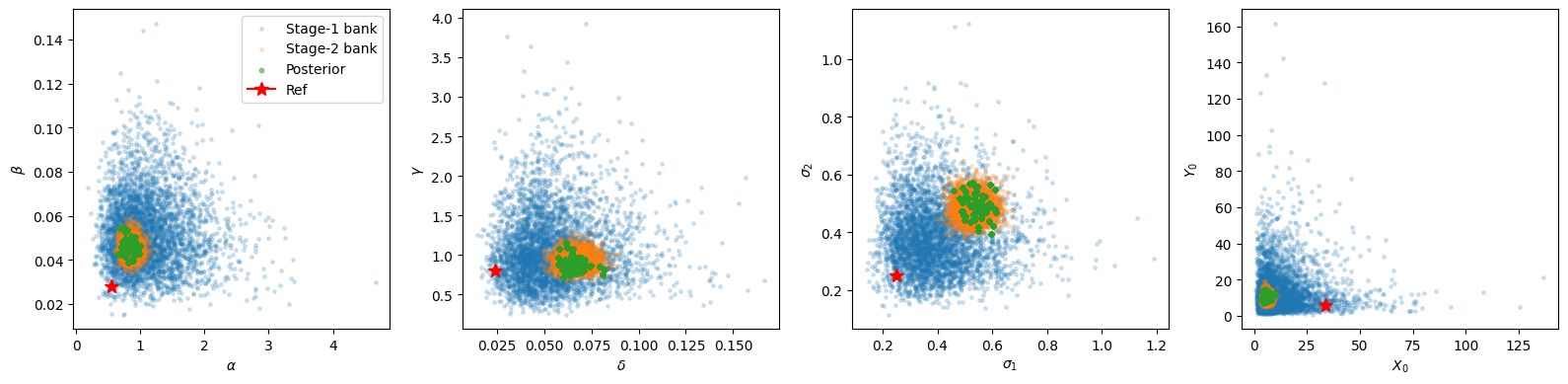}
  \caption{Approx. prior, posterior, and reference values (2D, pGD).}
\end{subfigure}

\begin{subfigure}[b]{0.95\linewidth}
  \centering
  \includegraphics[width=\linewidth]{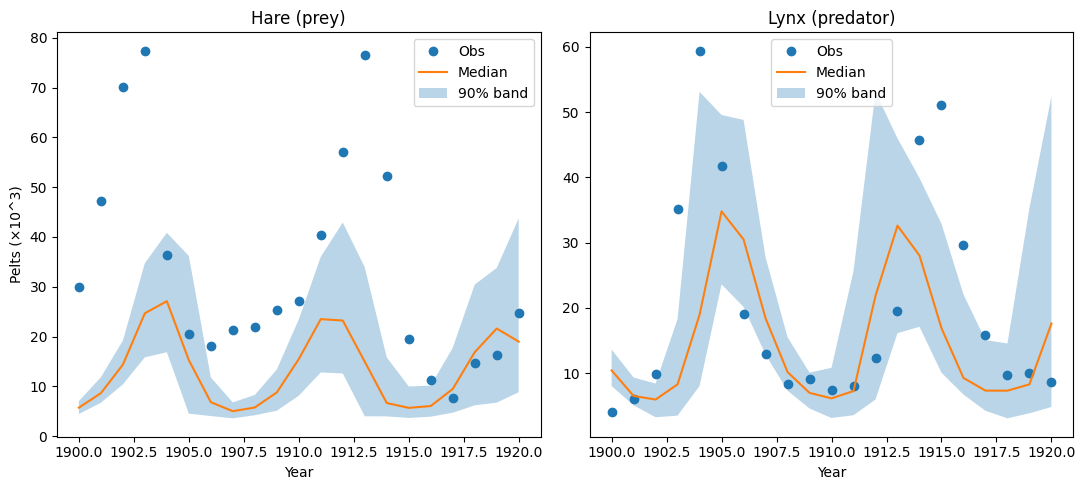}
  \caption{Posterior predictive (90\% credible interval, pGD).}
\end{subfigure}

\caption{pGD-GMA: weights evolution, posteriors and posterior predictives.}
\label{fig:pGD_GMA_LV}
\end{figure}

%--------------------------
% Figures: MD
%--------------------------
\begin{figure}[H]
\centering

\begin{subfigure}[b]{0.48\linewidth}
  \centering
  \includegraphics[width=\linewidth]{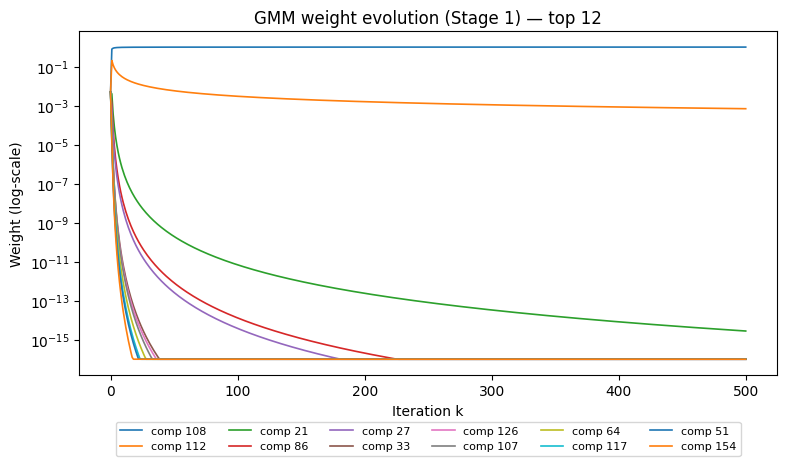}
  \caption{Weights evolution, Stage.1 (MD).}
\end{subfigure}\hfill
\begin{subfigure}[b]{0.48\linewidth}
  \centering
  \includegraphics[width=\linewidth]{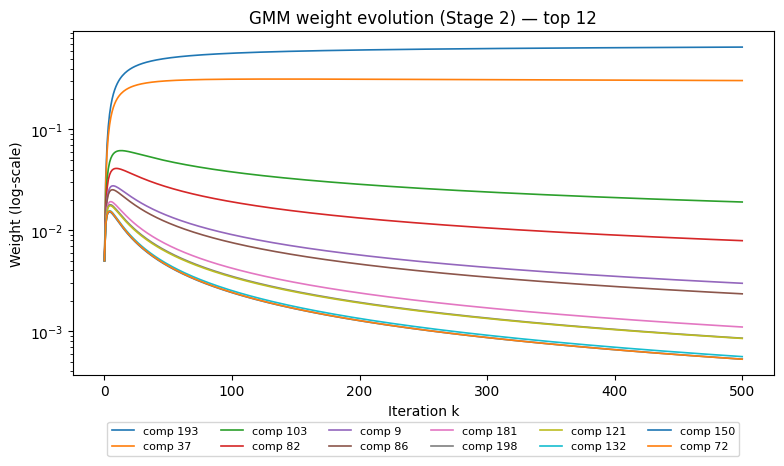}
  \caption{Weights evolution, Stage.2 (MD).}
\end{subfigure}

\begin{subfigure}[b]{0.98\linewidth}
  \centering
  \includegraphics[width=\linewidth]{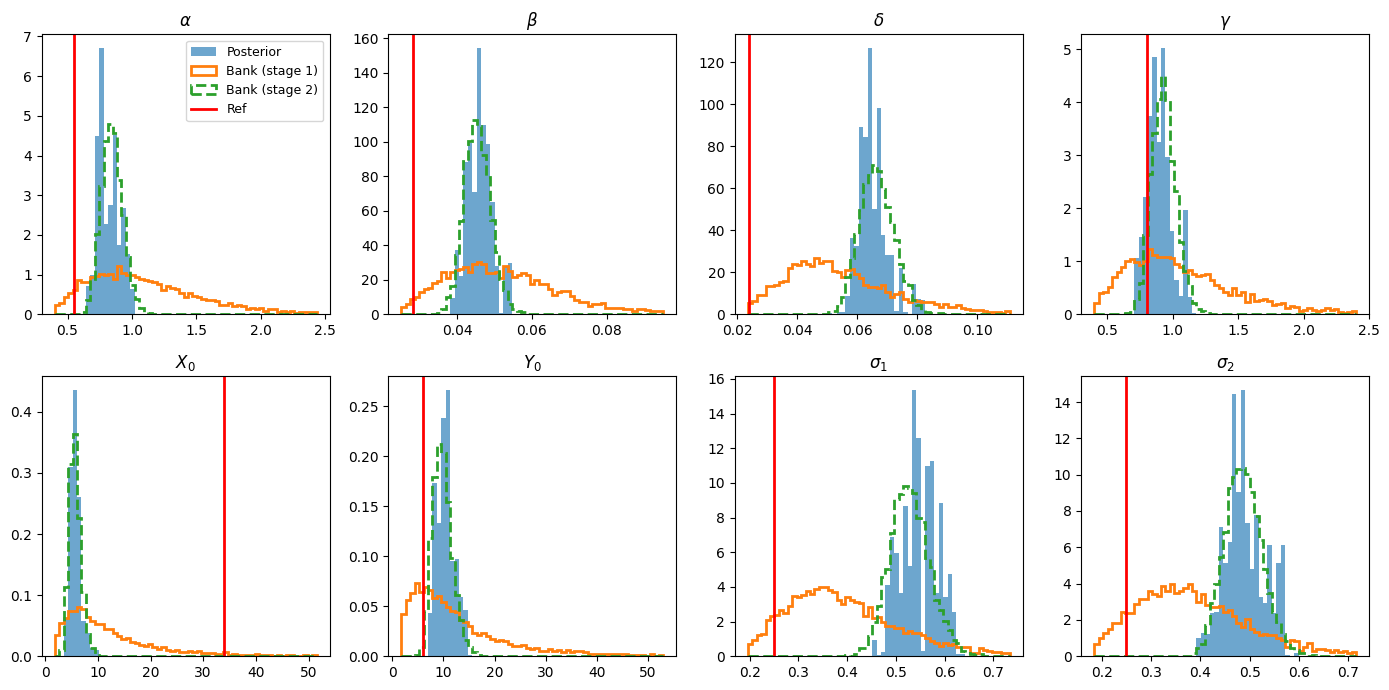}
  \caption{Approx. prior, posterior, and reference values (marginal, MD).}
\end{subfigure}

\begin{subfigure}[b]{0.98\linewidth}
  \centering
  \includegraphics[width=\linewidth]{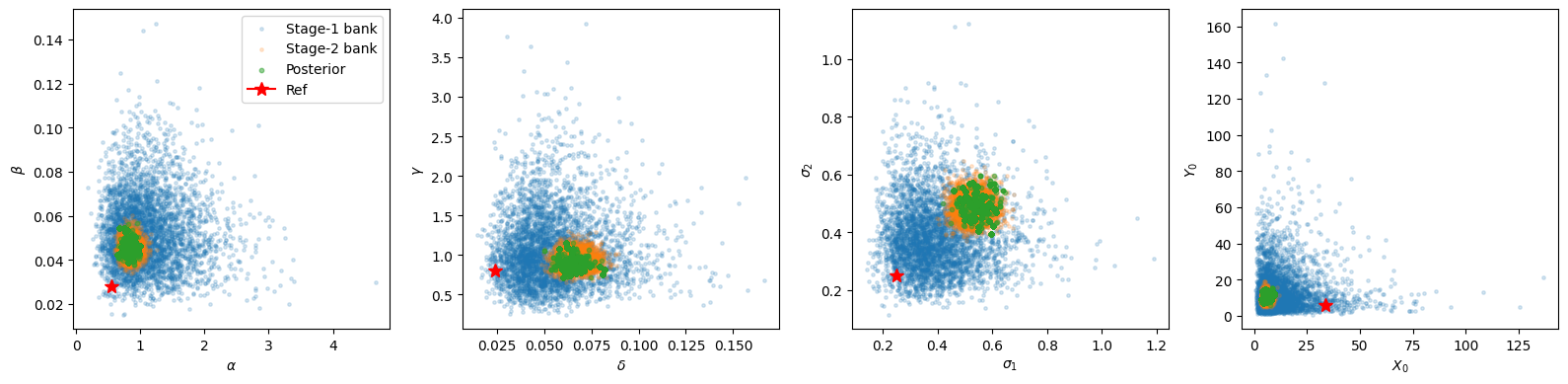}
  \caption{Approx. prior, posterior, and reference values (2D, MD).}
\end{subfigure}

\begin{subfigure}[b]{0.95\linewidth}
  \centering
  \includegraphics[width=\linewidth]{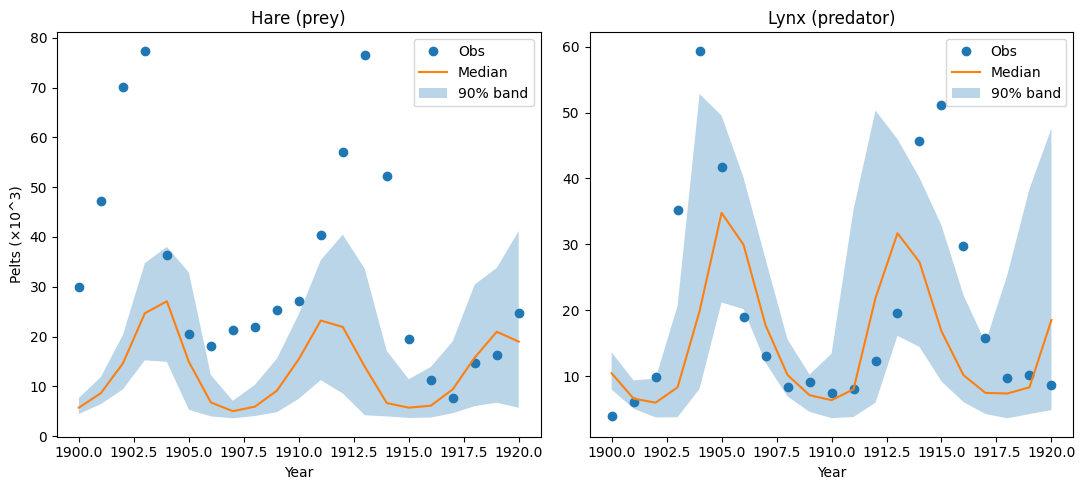}
  \caption{Posterior predictive (90\% credible interval, MD).}
\end{subfigure}

\caption{MD-GMA: weights evolution, posteriors and posterior predictives.}
\label{fig:MD_GMA_LV}
\end{figure}

\subsubsection{SIR model inference using \textit{LMA}} \label{subsec:SIR_model_inference}

\paragraph{Experimental setup.}
We fit a simple susceptible-infectious-removed (SIR) model to daily new COVID-19 cases in England, using data retrieved using the UKHSA dashboard API (specimen-date series \footnote{UKHSA COVID-19 dashboard: \url{https://coronavirus.data.gov.uk/}. JSON API for daily cases by specimen date (England): \url{https://api.ukhsa-dashboard.data.gov.uk/themes/infectious_disease/sub_themes/respiratory/topics/COVID-19/geography_types/Nation/geographies/England/metrics/COVID-19_cases_casesByDay}.
A \textit{specimen-date series} means the time series is indexed by the date the patient’s sample was taken (the specimen collection date), not the date the test was processed or reported.
})
To keep runtime short while illustrating the methodology, we use a tiny early epidemic window: from 2020-03-01 and keep the first 21 days, then thin by a stride of 2 (yielding 11 observations). We apply a centered 7-day moving average to reduce day-of-week effects. The population size is fixed to $N=56,550,138$. 
As described in Section.\ref{subsec:LMA}, we use the multi-start optimisation strategy to find local modes of the posterior; in our implementation, optimisation uses \textbf{6} random prior initialisations. Posterior approximation flattens local covariances by $\kappa=1.5$. We draw \textbf{1000} posterior samples from the fitted Laplace mixture for summaries and predictive curves.

\paragraph{The SIR model.}
The latent dynamics follow the continuous-time SIR ODEs \cite{Hethcote2000}:
\begin{equation} \label{eq:SIR_model}
    \dot S(t)=-\beta \frac{S(t)I(t)}{N},\qquad
    \dot I(t)=\beta \frac{S(t)I(t)}{N}-\gamma I(t),\qquad
    \dot R(t)=\gamma I(t)
\end{equation}
with initial condition $S_0=N-I_0$, $I_0>0$, $R_0(0)=0$.
Here $S(t),I(t),R(t)$ denote the numbers of susceptible, infectious, and removed (recovered/isolated) individuals; $N$ is the total population size (assumed constant). The parameters are:
$\beta>0$, the effective transmission rate (per day) that scales the mass-action term $\beta S(t)I(t)/N$; 
$\gamma>0$, the removal/recovery rate (per day), so the mean infectious period is $1/\gamma$; 
$I_0$ is the initial number of infectious individuals at $t=0$. 
A key derived quantity is the basic reproduction number $R_0=\beta/\gamma $ in a fully susceptible population.
Daily infection incidence on the observation grid is approximated by $-\Delta S_t$ between consecutive solution times.

\paragraph{Bayesian model for parameters.}
To derive the Bayesian posterior we combine the deterministic SIR dynamics with a Poisson observation model for daily cases.
Let $y_{1:T}$ denote observed new cases (specimen date) on a daily grid and let
\[
\lambda_t(\theta) = \rho \big(-\Delta S_t(\theta)\big),
\quad \text{ with } \theta=(\beta,\gamma, I_0,\rho),\ \rho\in(0,1], \quad t=1,\dots,T
\]
where $-\Delta S_t(\theta)$ is the model-implied daily incidence obtained by numerically solving the SIR ODEs in Eq.\ref{eq:SIR_model} under parameters $\theta$.
We assume conditional independence across days
\[
y_t \mid \theta \ \sim\ \mathrm{Poisson} \big(\lambda_t(\theta)\big),
\qquad t=1,\dots,T
\]
so the likelihood is $p(y_{1:T}\mid\theta)=\prod_{t=1}^T \mathrm{Poisson}\big(y_t;\lambda_t(\theta)\big)$.
With an independent prior $p(\theta)=p(\beta) p(\gamma) p(I_0) p(\rho)$, Bayes’ rule yields:
\[
p(\theta \mid y_{1:T}) \ \propto\ p(\theta) \prod_{t=1}^T \mathrm{Poisson}\big(y_t;\lambda_t(\theta)\big)
\]
or in log form \footnote{
The Poisson \textit{pmf} is $p(y_t\mid \theta)=\frac{e^{-\lambda_t(\theta)}\lambda_t(\theta)^{y_t}}{y_t!}$, taking log of the unnormalised posterior$p(\theta\mid y_{1:T})  \propto  p(\theta) \prod_{t=1}^T p(y_t\mid\theta)$ gives the log-likelihood:
$$
\log p(y_{1:T}\mid\theta)
=\sum_{t=1}^T \Big\{ y_t\log \lambda_t(\theta) - \lambda_t(\theta) - \log(y_t!)\Big\}
$$
As $y_t! = \Gamma(y_t+1)$, we can write $\log(y_t!)=\log\Gamma(y_t+1)$.
}:
\[
\log p(\theta \mid y_{1:T})   =  
\underbrace{\sum_{t=1}^T \big[ y_t \log \lambda_t(\theta) - \lambda_t(\theta) - \log\Gamma(y_t+1)\big]}_{\text{log-likelihood}}
 +  \underbrace{\log p(\theta)}_{\text{log-prior}}
 +  \text{const}
\]
maximizing which is justified by: 
the $y_t\log\lambda_t(\theta)$ term rewards parameter values that place mass near the observed counts;
the $-\lambda_t(\theta)$ term penalizes overly large means;
the $-\log\Gamma(y_t+1)$ term depends only on the data (not on $\theta$), it’s part of the likelihood but is a constant \textit{w.r.t.} $\theta$;
the $\log p(\theta)$ term incorporates prior information (regularization).
The normalising constant \textit{const} collects any terms independent of $\theta$ (e.g. normalizing constants); it can be dropped for optimization and Laplace mixture construction, unless an absolute marginal likelihood is needed.
Also note that, $\lambda_t(\theta)$ is not a free parameter: it is the output of the SIR ODE solution under $\theta=(\beta,\gamma,I_0,\rho)$, making the log-likelihood is nonlinear and skewed in $\theta$, i.e. a small change in $(\beta,\gamma)$ can alter the entire trajectory $-\Delta S_t(\theta)$ across all $t$.

We use weakly informative priors and respect support constraints: 
\[
\beta \sim \mathrm{LogNormal} \big(\log 0.35, 0.5^2\big),\quad
\gamma \sim \mathrm{LogNormal} \big(\log (1/7), 0.5^2\big)
\]
\[
I_0 \sim \mathrm{LogNormal} \big(\log 100, 1^2\big),\quad
\rho \sim \mathrm{Beta}(2,8)
\]
where the LogNormal distribution is parameterised by the \textit{log-mean} and \textit{log-standard deviation}, so the prior medians are $0.35$ day$^{-1}$, $1/7$ day$^{-1}$, and $100$ for $(\beta,\gamma,I_0)$, respectively.
The basic reproduction number is treated as a derived quantity,
\[
\mathcal R_0  =  \frac{\beta}{\gamma}
\]
with an induced (heavy-tailed) prior under the independent LogNormal marginals above.

\paragraph{Inference via Laplace-mixture approximation (LMA).}
We maximise the log posterior from \textit{six} random prior starts using L-BFGS-B in an unconstrained reparameterisation space (log for positive variables, logit for $\rho$). Let $\ell(\theta)=\log \bar p(\theta)$ be the log unnormalised posterior. For each mode mean $\hat\theta_j$ we compute a finite-difference Hessian $H_j=\nabla^2 \ell(\hat\theta_j)$ in $\theta$-space, set the local covariance (see Section.\ref{subsec:LMA})
\[
\Sigma_j=\big(-H_j\big)^{-1},\qquad \Sigma_j\leftarrow \kappa^2 \Sigma_j \ \ (\kappa=1.5)
\]
and initialise evidence-based weights
\[
\tilde w_j \propto \exp\{\ell(\hat\theta_j)\} (2\pi)^{d/2} |\Sigma_j|^{1/2},
\qquad w_j=\tilde w_j\big/\sum_r \tilde w_r
\]
yielding the Gaussian mixture
\(
q(\theta)=\sum_{j=1}^J w_j \mathcal N(\theta;\hat\theta_j,\Sigma_j)
\)
, where $J$ is the total number of modes discovered by the multi-start optimisation.
Posterior draws are obtained by direct (stratified) sampling from $q(\theta)$; each draw $\theta^{(k)}$ produces a \textit{model-mean trajectory} \footnote{These trajectories are the conditional means of the observation model; drawing full observation-level posterior predictives would additionally sample $Y_t^{(k)}\sim\mathrm{Poisson}(\lambda_t^{(k)})$.} $\lambda^{(k)}_t=\rho^{(k)}(-\Delta S^{(k)}_t)$ by solving the SIR ODE with parameters $\theta^{(k)}$.

\paragraph{Results.}
Multi-start optimisation (six initialisations) recovered \textit{four} unique posterior modes with near-identical log-evidence, yielding almost uniform mixture weights $\textbf{w}=[0.247, 0.247, 0.253, 0.252]$; the four modes identified are reported in Table.\ref{tab:SIR_posterior_modes} and all imply $R_0 \approx 4$. 
Fig.\ref{fig:SIR_posteriors} shows the marginal posteriors for $\beta$, $\gamma$, the derived $R_0=\beta/\gamma$, $\rho$, and $I_0$.
Summary statistics of these posterior distributions are presented in Table.\ref{tab:SIR_posterior_summary}.
Drawing $1000$ samples from the fitted Laplace mixture produced a heavy-tailed posterior for $R_0$ with median $\mathbf{3.95}$ and a $95\%$ credible interval of $[2.27, 32.12]$, where the long upper tail reflects weak identifiability of $\gamma$ in an early, near-exponential phase.
Fig.\ref{fig:SIR_predictives} shows the predictive mean trajectories $\lambda_t^{(k)}$, together with the $95\%$ credible interval, produced using the posterior mean, median, and the dominating-mode mean. The credible band is visibly asymmetric about the median, consistent with a positively skewed, non-negative predictive distribution. 
Computational time in Table.\ref{tab:SIR_timing} is dominated by optimisation and by forward simulations; mixture construction and sampling are negligible. Overall, LMA is fast and effective, particularly in the presence of small data.

\begin{table}[H]
\centering
\small
\begin{tabular}{rccccccc}
\toprule
Mode & $\beta$ (day$^{-1}$) & $\gamma$ (day$^{-1}$) & $I_0$ & $\rho$ & $R_0=\beta/\gamma$ & $\log p(\hat\theta)$ & Weight \\
\midrule
\#1 & 0.461 & 0.113 & 341.5 & 0.314 & 4.08 & $-1003.4$ & 0.247 \\
\#2 & 0.460 & 0.111 & 339.4 & 0.317 & 4.14 & $-1003.4$ & 0.247 \\
\#3 & 0.462 & 0.114 & 340.3 & 0.315 & 4.05 & $-1003.4$ & 0.253 \\
\#4 & 0.459 & 0.110 & 342.4 & 0.315 & 4.17 & $-1003.4$ & 0.252 \\
\bottomrule
\end{tabular}
\caption{Posterior modes and weights identified by multi-start optimisation.}
\label{tab:SIR_posterior_modes}
\end{table}

\begin{figure}[H]
  \centering
  \includegraphics[width=1.0\linewidth]{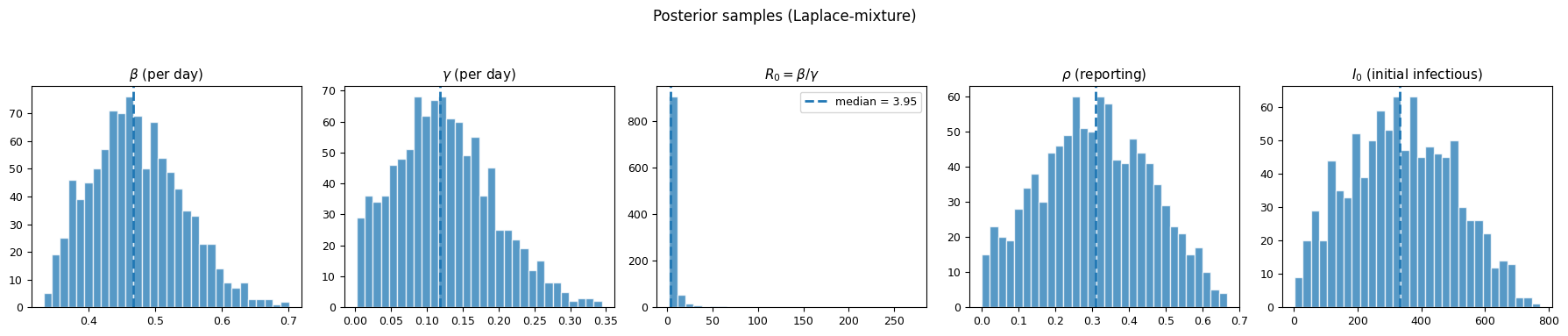}
  \caption{Posterior histograms for $\beta$, $\gamma$, $R_0=\beta/\gamma$, $\rho$, and $I_0$.}
  \label{fig:SIR_posteriors}
\end{figure}

\begin{table}[H]
\centering
\small
\begin{tabular}{lccccc}
\toprule
 & $\beta$ (day$^{-1}$) & $\gamma$ (day$^{-1}$) & $\mathcal R_0=\beta/\gamma$ & $\rho$ & $I_0$ \\
\midrule
Mean   & 0.473 & 0.125 & 7.121 & 0.308 & 338.758 \\
Median & 0.467 & 0.118 & 3.946 & 0.308 & 332.624 \\
\bottomrule
\end{tabular}
\caption{Posterior means and medians from LMA.}
\label{tab:SIR_posterior_summary}
\end{table}

\begin{figure}[H]
  \centering
  % Left: R0
  \begin{subfigure}{0.42\linewidth}
    \centering
    \includegraphics[width=\linewidth]{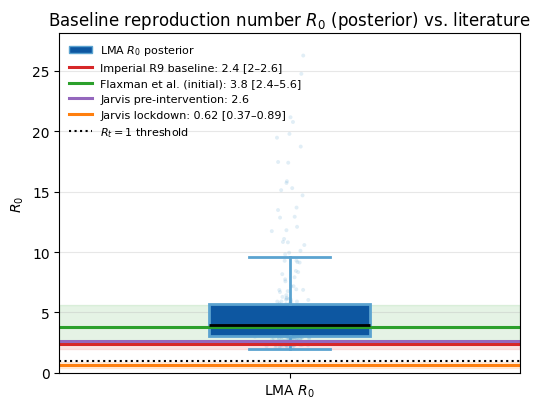}
    \caption{Baseline reproduction number $R_0$ posterior (ours, blue) with literature references (other colors).}
    \label{fig:SIR_r0}
  \end{subfigure}
  \hfill
  % Right: posterior predictives
  \begin{subfigure}{0.56\linewidth}
    \centering
    \includegraphics[width=\linewidth]{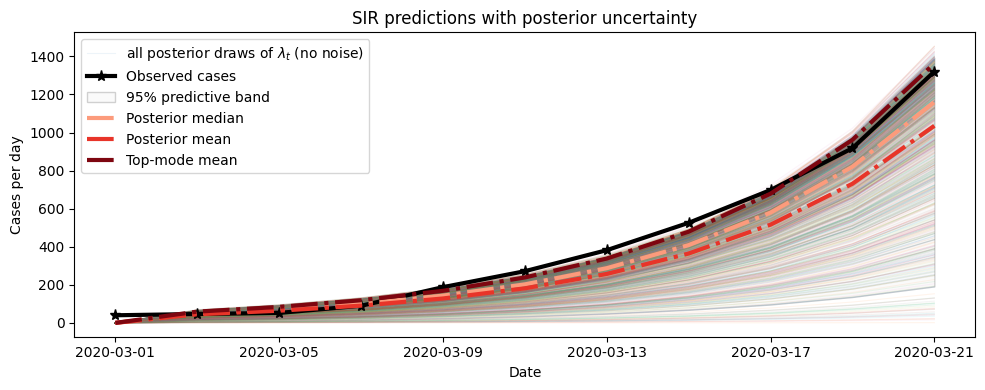}
    \caption{Predictive $\lambda_t$ trajectories with 95\% credible band (all posterior samples), plus posterior median/mean and top-mode mean.}
    \label{fig:SIR_predictives}
  \end{subfigure}
  \caption{Uncertainty summaries: (a) constant-$R_0$ posterior \textit{vs} published benchmarks; (b) posterior predictive mean trajectories and credible band.}
  \label{fig:SIR_r0_vs_predictive}
\end{figure}

\begin{table}[H]
\centering
\small
\begin{tabular}{lcccc}
\toprule
Optimisation & Laplace-mixture & Sampling & Predictive sims & Total \\
\midrule
2.737 & 0.133 & 0.048 & 1.393 & 4.311 \\
\bottomrule
\end{tabular}
\caption{Computational times (seconds) for LMA-SIR inference experiment.}
\label{tab:SIR_timing}
\end{table}

We compare with contemporaneous public estimates of the reproduction number $R_0$. \textit{Imperial College Report 9} (16 Mar 2020) \cite{Ferguson2020report9} adopted a baseline $R_0=2.4$ with sensitivity $2.0-2.6$, calibrated to the early exponential growth in Wuhan under an assumed mean generation time of $\sim 6.5$ days derived from incubation and infectiousness profiles (symptomatic infectious from 12h before symptom onset; asymptomatic from 4.6 days post-infection). \textit{Flaxman et al.} (Jun 2020) \cite{flaxman2020estimating} inferred time-varying transmission from mortality data across 11 European countries using a semi-mechanistic model; with their chosen generation-interval distribution and the observed initial growth of deaths, they reported an initial $R_t$ averaged across countries of $3.8\ (2.4 - 5.6)$. \textit{Jarvis et al.} (May 2020) \cite{jarvis2020quantifying} used UK contact-survey data to translate contact reductions into transmission: they placed a prior $R_0 \sim \mathcal{N}(2.6, 0.54^2)$ pre-intervention and, given a 74\% fall in mean daily contacts (10.8$\rightarrow$2.8), estimated a lockdown reproduction number of $0.62$ with 95\% confidence interval (CI) 0.37-0.89 for all contact types (and $0.37$ with CI 0.22-0.53 for physical contacts only). These estimates differ in data streams (growth in Wuhan cases, country-level deaths, UK contact patterns), generation-interval assumptions, and modelling frameworks; nevertheless, they provide a consistent early-pandemic scale ($R_0\approx 2.4-4$) against which our constant $R_0$ SIR/LMA results can be contextualised.

Notably, the highest-weight mode (the 'Top-mode mean' predictives in Fig.\ref{fig:SIR_r0_vs_predictive}) tracks the early growth of reported cases (the 'Observed cases') well, indicating that a constant-$\beta$ SIR with a simple \textit{Poisson} observation model can reproduce the short-horizon trend after smoothing. However, uncertainty grows rapidly forward in time: the mapping $\theta \mapsto \lambda_t(\theta)$ is highly non-linear, and with limited information in the early exponential regime, $\beta$ and $\gamma$ remain weakly identified; this manifests as a heavy upper tail for $R_0$ and wide predictive bands. The Laplace-mixture approximation is particularly appropriate here: the posterior exhibits a flat ridge with several nearby modes of comparable evidence (as observed from Table.\ref{tab:SIR_posterior_modes}); the mixture captures both local curvature around each mode and between-mode variability, which a single, unimodal Laplace approximation would miss.

As a toy example to demonstrate the use of LMA for real-world problems, there are limitations and refinement recipes for the data and model used, including:
(1) The constant-$\beta$ assumption ignores interventions around mid-March 2020; a piecewise-constant or spline-based $R_t$ would be more realistic. (2) Treating $\rho$ as constant conflates under-reporting with $I_0$; allowing time-varying $\rho_t$ or anchoring to serology would reduce confounding. (3) The SIR infectiousness profile is exponential; SEIR or renewal models better match COVID-19 serial intervals and can stabilise $\gamma$. (4) Placing a tighter prior on the infectious period $1/\gamma$ (e.g. 5-7 days) or directly on $R_0$ can curb implausible upper tails. (5) Using a longer window (with awareness of changing $\beta$ and $\rho$) or additional data types (e.g. hospitalisations) would improve identifiability.

\subsubsection{Bayesian optimal experimental design for logistic dose-response via an \textit{LMA} prior} \label{subsec:BOED}

In a dose-response experiment we test a single drug at a small set of concentrations (``doses'') and observe how a biological system reacts (e.g. whether cells are killed). Because running wells is costly, we face a budgeting question: \textit{at which doses should we spend our limited replicates so that the data most improve our understanding of the dose-response curve and quantities such as EC50 \footnote{The half-maximal effective concentration \textit{$EC_{50}$} is the dose $d_{50}$ at which the effect is $50\%$. In the logistic model $\pi(d;\theta)=\sigma \big(\alpha+\beta\log d\big)$ with $\sigma(t)=1/(1+e^{-t})$, $d_{50}$ solves $\pi(d_{50};\theta)=0.5$, hence $d_{50}=\exp(-\alpha/\beta)$. In the centered predictor used later, $\pi(d;\theta)=\sigma \big(\alpha+\beta(\log d-x_{\text{offset}})\big)$, giving $d_{50}=\exp \big(x_{\text{offset}}-\alpha/\beta\big)$. \textit{$EC_{50}$ vs $IC_{50}$ vs $ED_{50}$:} $EC_{50}$ is the concentration giving 50\% effect; $IC_{50}$ is the concentration giving 50\% inhibition (numerically identical to $EC_{50}$ if ``effect'' is inhibition); $ED_{50}$ is the dose giving a 50\% response rate in a population (often in vivo). \textit{Uncertainty:} with posterior $\theta=(\alpha,\beta)\sim \mathcal{N}(\hat\theta,\Sigma)$, the delta method for $g(\alpha,\beta)=\exp \big(x_{\text{offset}}-\alpha/\beta\big)$ uses $\nabla g=\big(-g/\beta, (\alpha/\beta^{2})g\big)$ and
$\operatorname{Var} \big[g(\theta)\big]\approx \nabla g^{\top}\Sigma \nabla g$; a 95\% CI is $g(\hat\theta)\pm 1.96 \sqrt{\operatorname{Var} \big[g(\theta)\big]}$.}}? 
Optimal experimental design (OED) \cite{Fedorov1972OED} and Bayesian optimal experimental design (BOED) \cite{Pukelsheim2006OED} decides where to spend the limited measurements so we can learn the most that matters. OED helps to decide which data to collect in the first place, which can then used to e.g. fitting a mechanistic model (e.g. a logistic \textit{dose-response model}) \footnote{OED/BOED makes data collection strategic, so that the model to be fitted afterwards is as informative and decision-ready as possible given the constraints.}. BOED \cite{Pukelsheim2006OED} answers the question by choosing an allocation across the candidate doses that maximizes the \textit{expected reduction in uncertainty} \textit{before} any new outcomes are seen. Intuitively, BOED places more replicates where an extra measurement is predicted to be most informative about the curve’s location and steepness, while still reserving a few measurements elsewhere to avoid blind spots. It uses historical information to focus new measurements where they will actually reduce uncertainty, rather than re-measuring what’s already known.

We model the probability of response with a logistic ($S$-shaped) curve as a function of log-dose. For such models, the BOED objective, i.e. mutual information or expected information gain (EIG), has a convenient form: it can be computed from \textit{prior} (or current posterior) averages of the Bernoulli response probabilities at each candidate dose, without simulating outcomes inside the objective. A practical challenge is that our prior uncertainty over the curve parameters, learned from historical cell lines, is often \textit{not} well captured by a single Gaussian (it may be skewed or multi-modal). Relying on a single-mode Laplace approximation can then distort those prior averages and bias the design.

To preserve realistic uncertainty while keeping computations light, we use a Laplace Mixture Approximation (LMA; Section.\ref{subsec:LMA}): instead of one local Gaussian around a mode, we build a small, weighted mixture of local Gaussians, each fitted by a Laplace (mode-curvature) approximation in a different region of the parameter space. Sampling from this LMA prior lets us evaluate EIG faithfully and rank doses by their \textit{per-replicate information gain}. The experiment then proceeds mechanically: fix a dose grid and a replicate budget, compute per-dose gains under the LMA prior, allocate replicates to maximize EIG (with a minimal spread to cover the grid), run the wells, and finally update the model with the collected data. In short, BOED uses prior knowledge to guide \textit{where} to measure, so the limited experiment yields the most learning about the \textit{dose-response relationship}.

\paragraph{BOED principles}
We consider a logistic dose-response setting. Let $d\in\mathcal{D}\subset\mathbb{R}_+$ denote dose and $y\in\{0,1\}$ a binary response. With parameters $\theta=(\alpha,\beta)^\top$, the observation model is
\[
y_i \mid d_i,\theta  \sim  \mathrm{Bernoulli} \left(\pi(d_i;\theta)\right), \qquad
\pi(d;\theta)  =  \sigma \big(\alpha + \beta \log d\big), \quad
\sigma(t)=\tfrac{1}{1+e^{-t}}
\]
To choose where to spend a fixed experimental budget, we place a candidate grid $\{d_k\}_{k=1}^K\subset\mathcal{D}$ and represent a design $\xi$ by allocation weights $\gamma=(\gamma_1,\dots,\gamma_K)\in\Delta^{K-1}$, so that $n_k = N \gamma_k$ replicates are assigned to $d_k$, with $\Delta^{K-1}=\{\gamma\in\mathbb{R}_+^K:\sum_{k=1}^K \gamma_k=1\}$. Uncertainty about $\theta$ is encoded by a prior density $p(\theta)$ on $\mathbb{R}^2$.

The design quality is measured by the expected information gain
\footnote{By definition of conditional mutual information,
\[
I(\theta;y\mid\xi)
=\iint p(\theta,y\mid\xi) \log\frac{p(\theta,y\mid\xi)}{p(\theta\mid\xi) p(y\mid\xi)}  \mathrm{d}\theta  \mathrm{d}y
\]
As the design $\xi$ is chosen \textit{before} observing $\theta$ and is independent of it, $p(\theta\mid\xi)=p(\theta)$ and
$p(\theta,y\mid\xi)=p(\theta) p(y\mid\theta,\xi)$. Therefore:
\[
I(\theta;y\mid\xi)=\iint p(\theta) p(y\mid\theta,\xi) \Big[\log p(y\mid\theta,\xi)-\log p(y\mid\xi)\Big] \mathrm{d}\theta \mathrm{d}y
\]
Applying Fubini/Tonelli to exchange integrals:
\[
I(\theta;y\mid\xi)
=\int p(\theta) \left[\int p(y\mid\theta,\xi)\log p(y\mid\theta,\xi)  \mathrm{d}y\right]\mathrm{d}\theta
-\int p(y\mid\xi)\log p(y\mid\xi)  \mathrm{d}y
\]
Recognizing (negative) entropies, we arrive at \cite{Lindley1956measure}:
\[
I(\theta;y\mid\xi)
= - \mathbb{E}_{\theta\sim p(\theta)} \big[H[y\mid\theta,\xi]\big] + H[y\mid\xi]
\]
which is the stated identity. (For discrete $y$, integrals are replaced by sums.)}
(mutual information, MI) between parameters and future outcomes under the design $\xi$:
\begin{equation} \label{eq:BOED_EIG}
    \mathrm{EIG}(\xi)  =  I(\theta; y \mid \xi)  =  H \big[y\mid \xi\big]  -  \mathbb{E}_{\theta\sim p(\theta)} \big[  H \big[y \mid \theta,\xi\big]  \big]
\end{equation}
The first term \(H \big[y\mid \xi\big]\) measures the uncertainty \textit{before} knowing \(\theta\); the second term \( \mathbb{E}_{\theta\sim p(\theta)} \big[  H \big[y \mid \theta,\xi\big]  \big]\) measures the expected uncertainty \textit{after} we learn \(\theta\) (so we use that model’s prediction).

The EIG objective thus rewards designs whose \textit{marginal} predictions are uncertain (first term large) while penalizing designs that would still be ambiguous even if $\theta$ were known (second term large). That is, EIG is high exactly where \textit{plausible parameter values disagree about what will happen}, so one more measurement is especially discriminative. For Bernoulli outcomes, this appears per dose as
\(
\Delta_k = h(\bar\pi_k) - \mathbb{E}[h(P_{s,k})]
\) (see later Eq.\ref{eq:BOED_EIG3}): it is \textit{large} when the prior-averaged response $\bar\pi_k$ is near $0.5$ \textit{and} the induced probabilities $P_{s,k}$ vary \footnote{
To see why the “variation across $\theta$” helps, note that for fixed $\bar\pi_k$ the Bernoulli entropy $h$ is strictly concave with $h''(p)=-1/[p(1-p)]<0$. A second-order expansion gives
\[
\mathbb{E}[h(P_{s,k})] \approx h(\bar\pi_k) + \tfrac12 h''(\bar\pi_k) \mathrm{Var}(P_{s,k})
\quad\Rightarrow\quad
\Delta_k \approx \frac{\mathrm{Var}(P_{s,k})}{2 \bar\pi_k(1-\bar\pi_k)}
\]
Thus, greater dispersion of $P_{s,k}$ (i.e. \textit{confident disagreement} across plausible $\theta$, pushing probabilities toward $0$ or $1$) reduces the conditional-entropy term and increases the information gain.
} across $\theta$, and it is \textit{small} when predictions saturate near $0$ or $1$ or when all plausible $\theta$ make similar predictions.
Maximizing EIG therefore steers budget toward doses that most reduce posterior uncertainty (e.g. in $EC_{50}$), rather than spreading effort over low-information regions.

As replicates are conditionally independent given $\theta$, the conditional entropy decomposes as
\begin{equation} \label{eq:BOED_EIG1}
    H \big[y \mid \theta,\xi\big]  =  \sum_{k=1}^K n_k  h \big(\pi(d_k;\theta)\big),
    \qquad
    h(p)  =  -p\log p - (1-p)\log(1-p)
\end{equation}
By contrast, the marginal entropy $H[y\mid\xi]$ does \textit{not} in general decompose additively after marginalising $\theta$ (replicates become dependent). A convenient \textit{additive surrogate} is obtained by replacing the joint entropy with the sum of marginal entropies (an upper bound):
\begin{equation} \label{eq:BOED_EIG2sur}
H_{\mathrm{sur}} \big[y\mid \xi\big]  =  \sum_{k=1}^K n_k  h \big(\bar{\pi}_k\big),
\qquad
\bar{\pi}_k  =  \mathbb{E}_{\theta\sim p(\theta)} \left[\pi(d_k;\theta)\right]
\end{equation}

Using Eq.\ref{eq:BOED_EIG}, Eq.\ref{eq:BOED_EIG1}, and the surrogate Eq.\ref{eq:BOED_EIG2sur}, we optimise the following Monte Carlo \textit{surrogate} objective under our LMA prior:
\begin{equation} \label{eq:BOED_EIG3}
    \widehat{\mathrm{EIG}}_{\mathrm{sur}}(\gamma)
     = 
    \sum_{k=1}^K n_k  h \big(\widehat{\bar{\pi}}_k\big)
     - 
    \sum_{k=1}^K n_k \left(\frac{1}{S}\sum_{s=1}^S h(P_{s,k})\right),
    \qquad n_k = N \gamma_k
\end{equation}
where $\theta^{(s)}\sim q_{\mathrm{LMA}}$, $P_{s,k}=\sigma(\alpha^{(s)}+\beta^{(s)}\log d_k)$, and $\widehat{\bar{\pi}}_k=\tfrac{1}{S}\sum_{s=1}^S P_{s,k}$. This estimator avoids inner sampling over outcomes and depends only on samples from the prior surrogate. Note that, later in our implementation, we used the centered predictor $P_{s,k}=\sigma \big(\alpha^{(s)}+\beta^{(s)} (\log d_k - x_{\text{offset}})\big)$, where $x_{\text{offset}}=\frac{1}{K}\sum_{k=1}^K \log(d_k+\varepsilon)$ is a pure reparameterization to improve conditioning and reduces posterior correlation between $(\alpha,\beta)$.

\paragraph{Laplace mixture approximation (LMA) of the prior}
As the MI objective involves \textit{prior averages of nonlinear functionals} of the parameters, i.e. $\theta\mapsto \pi(d_k;\theta)$ and $h \big(\pi(d_k;\theta)\big)$, a single-Gaussian (Laplace) surrogate can distort $\bar{\pi}_k$ and hence $\mathrm{EIG}(\xi)$ when $p(\theta)$ is multi-modal, skewed, or exhibits ridge-like dependence (e.g. weak identifiability between $\alpha$ and $\beta$). To avoid these biases, we replace the single-mode Laplace approximation with a \textit{mixture of local Laplace components} (Section~\ref{subsec:LMA}), which preserves multiple modes and local curvature while remaining cheap to sample.

We approximate the prior $p(\theta)$ by a $J$-component Gaussian mixture
\[
q_{\mathrm{LMA}}(\theta)
=
\sum_{j=1}^J  w_j  \mathcal{N} \big(\theta;\mu_j,\Sigma_j\big),
\qquad
\sum_{j=1}^J  w_j = 1,    w_j \ge 0
\]
with component means at local modes and covariances from local curvature:
\[
\mu_j \in \arg\max_{\theta} \log p(\theta),
\qquad
\Sigma_j  =  \Big[-\nabla^2_{\theta}\log p(\theta)\Big]^{-1}_{\theta=\mu_j}
\]
To calibrate (or refine) the mixture weights, we solve a \textit{weights-only} refitting problem - what we call in Section.\ref{sec:methodology} the \textbf{WGMA}: starting from fixed components $(\mu_j,\Sigma_j)$ (e.g. from random or LMA initialisation), optimise only the weights $\mathbf{w}$ on the simplex by minimising the exclusive-KL divergence from $p$ to $q_{\mathrm{LMA}}$ using the \textit{unnormalised} target $\bar{p}$ (e.g. Algo.\ref{algo:WGMA-sampling-pgd}):
\[
w^\star 
\in 
\arg\min_{w\in\Delta^{J-1}}
\mathrm{KL} \left(\bar{p} \Big\|  \sum_{j=1}^J w_j \mathcal{N}(\cdot;\mu_j,\Sigma_j)\right)
\equiv
\arg\min_{w\in\Delta^{J-1}}
\Big\{
- \int \bar{p}(\theta) \log\Big[\sum_{j=1}^J w_j \mathcal{N}(\theta;\mu_j,\Sigma_j)\Big]\mathrm{d}\theta
\Big\}
\]
This yields a light-weight, multi-modal surrogate $q_{\mathrm{LMA}}$ that preserves the dominant modes and local curvature of the prior, capturing multi-modality and skew in $p(\theta)$ and yielding more reliable prior expectations for the BOED objective.

In practice, the prior $p(\theta)$, if not human-specified, is not analytically available, therefore we are unable to use multi-start MAP to find the $J$ modes (as used in our SIR experiment in Section.\ref{subsec:SIR_model_inference}); instead, we have samples 
$\hat\theta_i=(\hat\alpha_i,\hat\beta_i)$ by fitting multiple logistic regressors to the data. We therefore introduce a clustering-based LMA fitting and GMA-refining procedure, as detailed in Appendix.\ref{sec:constructing_LMA_from_empirical_samples}, to find the Laplace mixture.

\paragraph{EIG estimation with LMA}
Given $q_{\mathrm{LMA}}$, we estimate the expected information gain by Monte Carlo without inner sampling over outcomes. Draw $\theta^{(s)}\sim q_{\mathrm{LMA}}$ for $s=1,\dots,S$ and, for each dose grid point $d_k$, compute
\[
P_{s,k} = \sigma \big(\alpha^{(s)} + \beta^{(s)} (\log d_k - x_{\text{offset}})\big) \text{ (centered predictor), }
\quad
\widehat{\bar{\pi}}_k  =  \frac{1}{S}\sum_{s=1}^S P_{s,k}
\]

With $h(p)=-p\log p-(1-p)\log(1-p)$, the resulting estimator is precisely Eq.\ref{eq:BOED_EIG3}.

\paragraph{Design optimization on a fixed grid}
We optimize allocations $\gamma\in\Delta^{K-1}$ (with $n_k=N\gamma_k$) to maximize $\widehat{\mathrm{EIG}}_{\mathrm{sur}}(\gamma)$. A simple and effective strategy is a greedy allocator that, at each step, adds one replicate to the dose with the largest per-replicate marginal gain:
\[
\Delta_k  =  h \big(\widehat{\bar{\pi}}_k\big)  -  \frac{1}{S}\sum_{s=1}^S h(P_{s,k}),
\qquad
k^\star \in \arg\max_k \Delta_k
\]
then updates $n_{k^\star} \leftarrow n_{k^\star}+1$ (equivalently $\gamma_{k^\star} \leftarrow \gamma_{k^\star}+\tfrac{1}{N}$) and repeats until $\sum_k n_k=N$. The complete LMA-assisted, EIG-oriented BOED procedure is summarized in Algo.\ref{algo:LMA-EIG}.

\begin{algorithm}[H]
\caption{LMA-EIG greedy BOED design on a fixed grid (one-shot)}
\label{algo:LMA-EIG}
\begin{algorithmic}[1]
\STATE \textbf{Input:} grid $\{d_k\}_{k=1}^K$, budget $N$, prior $p(\theta)$, components $J$, samples $S$.
\STATE \textbf{LMA build:} from historical estimates $\{\hat\theta_i\}$, run $k$-means clustering to get component means $\{\mu_j\}$, set local covariances $\{\Sigma_j\}$ with a small ridge, and refine the mixture weights $\mathbf{w}$ via the reverse-KL objective (WGMA).
\STATE \textbf{Sampling:} draw $\theta^{(s)}\sim q_{\mathrm{LMA}}$ for $s=1,\dots,S$ and compute $P_{s,k}=\sigma(\alpha^{(s)}+\beta^{(s)}\log d_k)$.
\STATE Initialize $n_k\leftarrow 0$ for all $k$.
\FOR{$t=1$ \TO $N$}
  \STATE For each $k$, compute $\Delta_k = h \big(\tfrac{1}{S}\sum_s P_{s,k}\big) - \tfrac{1}{S}\sum_s h(P_{s,k})$.
  \STATE Pick $k^\star \in \arg\max_k \Delta_k$ and set $n_{k^\star}\leftarrow n_{k^\star}+1$.
\ENDFOR
\STATE \textbf{Output:} design $\xi=\{(d_k,n_k)\}_{k=1}^K$ and $\gamma_k=n_k/N$.
\end{algorithmic}
\end{algorithm}

This pipeline builds a multi-modal prior surrogate via LMA, computes an inner-sampling-free surrogate MI estimator, and greedily allocates replicates, yielding data-efficient, information-rich dose schedules while faithfully propagating prior uncertainty beyond single-Gaussian approximations. Note that, the Bayesian prior is fixed through out this one-shot experiment; after the measurements are collected, the prior can be updated to a posterior for downstream inference or for a subsequent design round. Two practical options for updating the Bayesian prior are: (i) a fast single-Gaussian Laplace update that first moment-matches $q_{\mathrm{LMA}}$ to a Gaussian $\mathcal N(\mu_{\text{prior}},\Sigma_{\text{prior}})$ and then refines $(\mu,\Sigma)$ around the posterior mode via Newton/Hessian updates; and (ii) a mixture-preserving update that applies a Laplace step to each component $(\mu_j,\Sigma_j)$ and reweights the components by their (Laplace-approximated) marginal likelihood. If an adaptive design is desired, one can iterate this update after each observed replicate, recomputing EIG under the current posterior and allocating the next replicate accordingly.

\subsection*{\textit{Experiment}}

\paragraph{Setup and data.}
We simulate a GDSC‐style single-drug screen on a seven-point geometric dose grid
$\{10^{-3}, 3{\times}10^{-3}, 10^{-2}, 3{\times}10^{-2}, 10^{-1}, 3{\times}10^{-1}, 1\}$.
For each (cell line, dose) we generate a viability in $[0,1]$ and then binarize to a quantal response
($y=1$ if viability $\le 0.5$, else $y=0$; $y=1$ means the well “responded” at that dose, i.e. viability fell past the 50\% threshold).
The observation model is logistic in \textit{centered} log-dose, $\pi(d;\theta)=\sigma \big(\alpha+\beta(\log d - x_{\text{offset}})\big)$, where $x_{\text{offset}}$ is the drug’s mean log-dose, which stabilizes the $(\alpha,\beta)$ numerics.
An example of the synthetic data is shown in Table.\ref{tab:BOED}.
We split cell lines \footnote{In the dataset, a cell line corresponds to many rows; one row per dose.} into a historical panel $\mathcal{H}$ and a held-out target $c^\star$.
There are $60$ total cell lines; thus $|\mathcal{H}|=59$ historical lines and one held-out target ($c^\star=\texttt{CL\_0001}$).

\begin{table}[H]
  \centering
  \small
  \caption{Examples of the synthetic data (drug = \textit{DrugA}, target = \texttt{CL\_0001}). 
  \textit{Viability} is the normalized live-cell fraction in $[0,1]$; the binarized response is $y=1$ if viability $\le 0.5$ (i.e. $\ge 50\%$ inhibition), else $0$. 
  \textbf{Target = \texttt{CL\_0001}} denotes the held-out cell line $c^\star$ for which we design and evaluate; the remaining lines belong to the historical panel $\mathcal{H}$ used to build the empirical prior.}
  \vspace{0.25em}
  \begin{tabular}{lcccccc}
    \toprule
    \multirow{2}{*}{\textbf{dose}} & \multicolumn{2}{c}{\textbf{CL\_0001}} & \multicolumn{2}{c}{\textbf{CL\_0002}} & \multicolumn{2}{c}{\textbf{CL\_0003}} \\
    \cmidrule(lr){2-3}\cmidrule(lr){4-5}\cmidrule(lr){6-7}
    & \textbf{viability} & \textbf{$y$} & \textbf{viability} & \textbf{$y$} & \textbf{viability} & \textbf{$y$} \\
    \midrule
    0.001 & 1.000 & 0 & 1.000 & 0 & 1.000 & 0 \\
    0.003 & 1.000 & 0 & 1.000 & 0 & 1.000 & 0 \\
    0.010 & 1.000 & 0 & 1.000 & 0 & 1.000 & 0 \\
    0.030 & 1.000 & 0 & 1.000 & 0 & 1.000 & 0 \\
    0.100 & 1.000 & 0 & 1.000 & 0 & 1.000 & 0 \\
    0.300 & 1.000 & 0 & 1.000 & 0 & 1.000 & 0 \\
    1.000 & 0.667 & 0 & 0.333 & 1 & 0.000 & 1 \\
    \bottomrule
  \end{tabular}
  \label{tab:BOED}
\end{table}

The exploratory plots in Fig. \ref{fig:eda_BOED} show where signal lives on the (log-scaled) dose axis.
Representative dose-viability curves are nearly flat near $1.0$ from $10^{-3}$ through $3{\times}10^{-1}$, with most lines dropping only at the top dose ($d=1$).
The cross-line mean viability stays close to $1$ up to $3{\times}10^{-1}$ and then falls to $\approx 0.55$ at $d=1$, with small standard errors at low doses that widen slightly at the top dose.
After binarization, the empirical response rate $P(y=1)$ is essentially zero across lower doses, rises modestly by $d=0.3$, and reaches $\approx 0.38$ at $d=1$, confirming that most information concentrates at the highest doses.

\begin{figure}[H]
  \centering
  \begin{subfigure}[H]{0.32\linewidth}
    \centering
    \includegraphics[width=\linewidth]{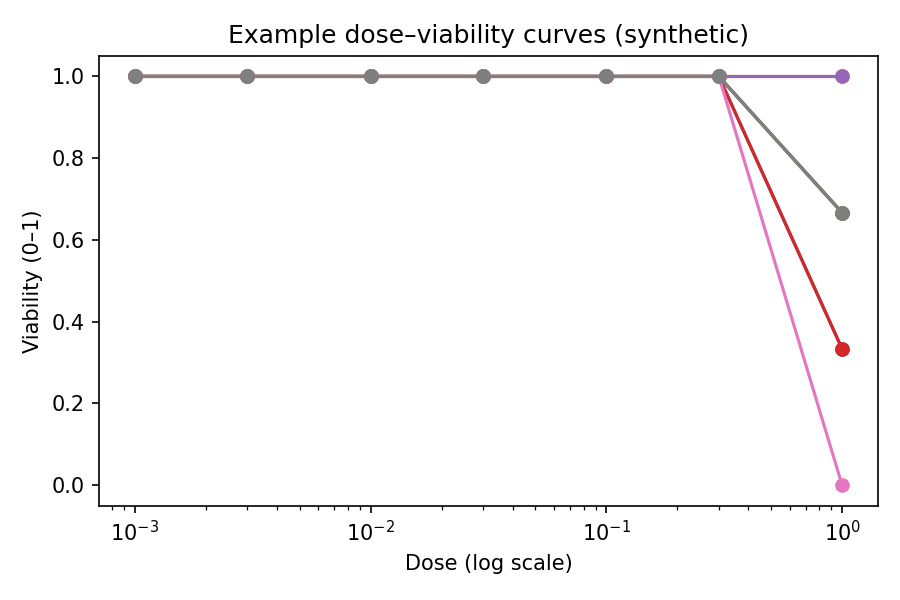}
    \caption{Example dose-viability curves (synthetic).}
    \label{fig:eda_curves}
  \end{subfigure}\hfill
  \begin{subfigure}[H]{0.32\linewidth}
    \centering
    \includegraphics[width=\linewidth]{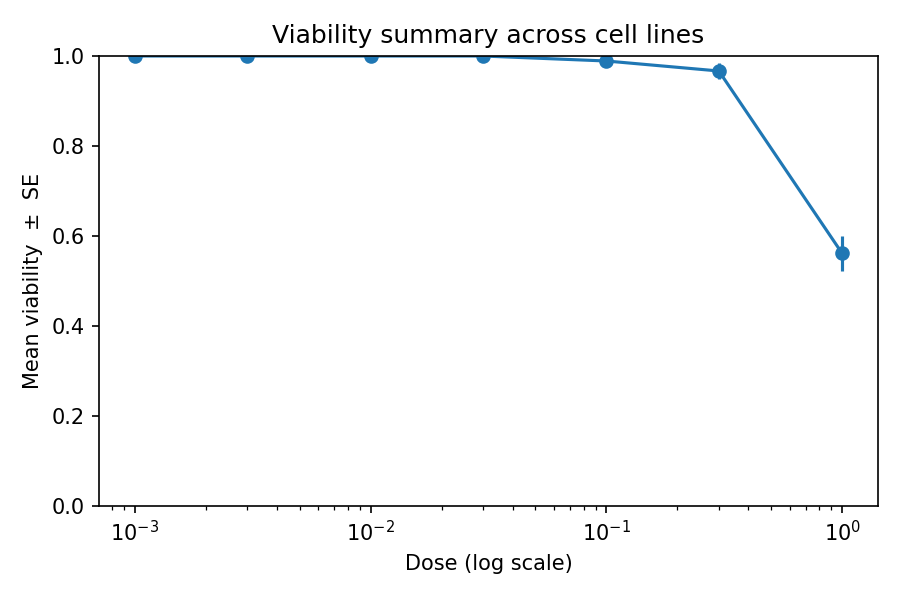}
    \caption{Mean viability $\pm$ SE across cell lines.}
    \label{fig:eda_mean}
  \end{subfigure}\hfill
  \begin{subfigure}[H]{0.32\linewidth}
    \centering
    \includegraphics[width=\linewidth]{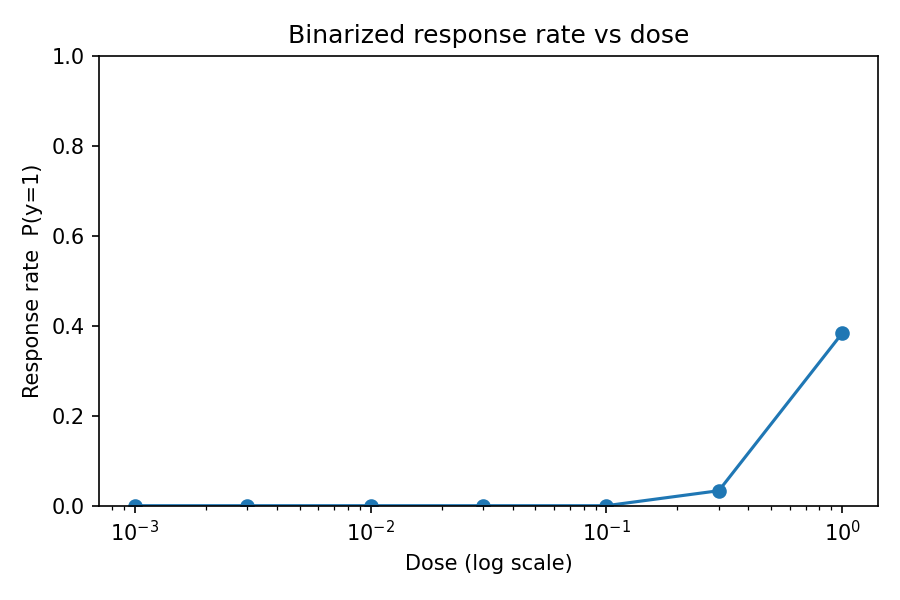}
    \caption{Binarized response rate $P(y=1)$ \textit{vs} dose.}
    \label{fig:eda_rate}
  \end{subfigure}
  \caption{EDA of the synthetic data.
  \textit{Viability} is the normalized live-cell fraction in a well ($1=$no inhibition, $0=$complete kill).
  We define a binary per-well response by thresholding viability: $y=1$ if viability $\le 0.5$ (i.e.\ $\ge 50\%$ inhibition), else $y=0$.
  The \textit{response rate} at dose $d$ is the empirical frequency
  $\widehat{P}(y=1\mid d)=\frac{\#\{\text{wells at dose } d \text{ with } y=1\}}{\#\{\text{wells at dose } d\}}$ shown in panel~\ref{fig:eda_rate}.
  In contrast, the \textit{model-based} response probability used in BOED averages the logistic probability over the LMA prior/posterior uncertainty,
  $\bar{\pi}(d)=\mathbb{E}_{\theta\sim q_{\mathrm{LMA}}} \big[\sigma \big(\alpha+\beta(\log d - x_{\text{offset}})\big)\big]$,
  so it can differ from the observed frequency due to pooling across cell lines and uncertainty in $\theta=(\alpha,\beta)^\top$.}
  \label{fig:eda_BOED}
\end{figure}

\paragraph{Empirical prior, LMA surrogate, and posterior update.}
Because the target density $p(\theta)$ is unavailable, we estimate it from empirical samples. Using only the historical panel $\mathcal{H}$, we fit a separate (penalized) logistic model for each cell line $i$ to obtain one estimate $\hat\theta_i=(\hat\alpha_i,\hat\beta_i)$, then the cloud $\{\hat\theta_i\}_{i\in\mathcal{H}}$ is treated as \textit{empirical samples} from the prior over $\theta$. From these points we build a Laplace-mixture surrogate $q_{\mathrm{LMA}}(\theta)=\sum_{j=1}^J w_j \mathcal{N} \big(\theta;\mu_j,\Sigma_j\big)$, where component means $\mu_j$ are obtained by $k$-means clustering on $\{\hat\theta_i\}$, covariances $\Sigma_j$ are local empirical covariances with a small ridge floor, and the weights $\boldsymbol{w}$ are refined by a \textit{weights-only} reverse-KL fit (WGMA) on the simplex. The detailed procedure of constructing and refining this LMA from empirical samples is presented in Appendix.\ref{sec:constructing_LMA_from_empirical_samples}.

\noindent\textit{Using the LMA in BOED.}
To evaluate the EIG objective on a fixed dose grid, we draw $\theta^{(s)}\sim q_{\mathrm{LMA}}$ and compute, for each grid point $d_k$ with centered predictor:
\[
P_{s,k}=\sigma \big(\alpha^{(s)}+\beta^{(s)} (\log d_k - x_{\text{offset}})\big),\qquad
\bar\pi_k=\frac{1}{S}\sum_{s=1}^S P_{s,k},\qquad
\Delta_k=h(\bar\pi_k)-\frac{1}{S}\sum_{s=1}^S h(P_{s,k})
\]
where $h$ is the Bernoulli entropy. These $\Delta_k$ terms drive the greedy allocation.

\noindent\textit{Moment matching for the posterior Laplace step.}
For the single target line $c^\star$, once outcomes are observed (or simulated) under a candidate design, we compress the mixture prior to a single Gaussian by moment matching:
\[
\mu_{\text{prior}}=\sum_{j=1}^J w_j \mu_j,\qquad
\Sigma_{\text{prior}}=\sum_{j=1}^J w_j \left(\Sigma_j+(\mu_j-\mu_{\text{prior}})(\mu_j-\mu_{\text{prior}})^\top\right)
\]
and perform one standard Laplace posterior update with this Gaussian prior. From that posterior we report summaries such as $\mathrm{EC}_{50}$ and its delta-method confidence interval.

\paragraph{Design objective and optimization.}
With total budget $N$ and fixed grid $\{d_k\}_{k=1}^K$, a design is
$\gamma\in\Delta^{K-1}$ with $n_k=N\gamma_k$ replicates at $d_k$.
We estimate per-dose mutual-information gains
$\Delta_k=h(\bar\pi_k)-\tfrac{1}{S}\sum_{s=1}^S h(P_{s,k})$ via LMA sampling, where
$\theta^{(s)} \sim q_{\mathrm{LMA}}$, $P_{s,k}=\sigma(\alpha^{(s)}+\beta^{(s)} (\log d_k - x_{\text{offset}}))$,
$\bar\pi_k=\tfrac{1}{S}\sum_s P_{s,k}$, and $h$ is Bernoulli entropy.
A greedy allocator adds one replicate at a time to $\arg\max_k$ of this (optionally
entropy-regularized) score; we enforce a one-per-dose floor and include a small entropy bonus
$\tau N H(\gamma)$ to softly spread mass.
Baselines are (i) \textit{Uniform} ($\gamma_k=1/K$) and
% (ii) \textit{local Fisher $D$-opt} \cite{Wynn1970} (greedy log-det on the grid evaluated at $\mu_{\text{prior}}$).
(ii) a \textit{local} Fisher $D$-optimal design \cite{Wynn1970,Fedorov1972OED,Pukelsheim2006OED}, i.e. maximize $\det M(\theta_0,\xi)$ for the Fisher information $M$ at a nominal value $\theta_0$ (we take $\theta_0=\mu_{\text{prior}}$), implemented via sequential augmentation in the spirit of \cite{Wynn1970} or a Fedorov-exchange step \cite{Fedorov1994}.
For evaluation on $c^\star$, we simulate prospective outcomes using its empirical fit $\hat\theta_{c^\star}$,
reusing common random numbers across designs, update to a Gaussian posterior via Laplace
(with the moment-matched prior), and report: estimated EIG, posterior entropy for $(\alpha,\beta)$, and
$\mathrm{EC}_{50}$ with a delta-method 95\% CI.

\paragraph{Settings.}
We use a seven-point dose grid (\(K=7\)) and a total budget of \(N=21\) wells; we estimate EIG with \(S=2000\) draws from the LMA prior and build that prior with \(J=5\) Gaussian components. Viability is binarized at 0.5 (i.e. \(\ge 50\%\) inhibition counts as a response), local mixture covariances include a small ridge of \(10^{-2}\) for stability, and random seeds are fixed for reproducibility.

\paragraph{Results.}

The empirical prior $p(\theta)$ for the logistic parameters $\theta=(\alpha,\beta)^\top$ was approximated with a $J=5$-component Laplace mixture built from 59 historical cell lines. Table.\ref{tab:BOED_lma_components} lists the component weights, means, and (diagonal) covariances. Two tight components near $(\alpha,\beta)\approx(-4.95, 0)$ together carry $\sim 56.7\%$ weight and represent near-flat dose-response in centered log-dose, while a third component at $(-7.18, 2.7)$ (39.0\%) captures steeper responders; the remaining components are small/negligible. This multi-modality supports our use of a mixture surrogate rather than a single Gaussian when estimating prior-averaged quantities in BOED.

\begin{table}[H]
  \centering
  \small
  \caption{Laplace Mixture Approximation (LMA) of the empirical prior for \textit{DrugA}, built from 59 historical cell lines ($J=5$ components). Each row shows the mixture weight $\omega_j$, component mean $\mu_j=(\mu_{j,\alpha},\mu_{j,\beta})$, and the diagonal of the local covariance $\mathrm{diag}(\Sigma_j)$. Duplicate means reflect $k$-means cluster centers that landed at the same location.}
  \vspace{0.25em}
  \begin{tabular}{r r r r r r}
    \toprule
    \textbf{Comp $j$} & $w_j$ & $\boldsymbol{\mu_{j,\alpha}}$ & $\boldsymbol{\mu_{j,\beta}}$ & $(\Sigma_j)_{11}$ & $(\Sigma_j)_{22}$ \\
    \midrule
    0 & 0.044 & -4.946 & 0.000 & 0.010 & 0.010 \\
    1 & 0.523 & -4.946 & 0.000 & 0.010 & 0.010 \\
    2 & 0.390 & -7.184 & 2.700 & 0.148 & 0.100 \\
    3 & 0.044 & -4.946 & 0.000 & 0.010 & 0.010 \\
    4 & 0.000 & -5.729 & 0.870 & 1.852 & 2.279 \\
    \bottomrule
  \end{tabular}
  \label{tab:BOED_lma_components}
  \vspace{0.25em}
\end{table}

As seen in Fig.\ref{fig:delta_per_dose} (and quantified in Table.\ref{tab:delta_per_dose}), the per-replicate mutual-information gains $\Delta_k$ are essentially negligible at the five lowest doses ($\le 10^{-1}$; $\Delta_k\approx 0.001$-$0.005$), jump sharply at $3{\times}10^{-1}$ ($\Delta_k\approx 0.137$), and peak at the top dose $1$ ($\Delta_k\approx 0.468$). This strongly \textit{left-skewed} profile shows that most information is concentrated at the highest doses, so information-seeking designs naturally allocate the bulk of replicates to the top of the grid while keeping a few exploratory measurements at lower doses. 
This decision is consistent with our prior knowledge from historical lines in Fig.\ref{fig:eda_BOED}: responses are almost surely flat at low doses and separate at the top. The per-dose EIG we computed is therefore tiny at low doses and peaks at the highest dose, so the BOED design allocates most replicates up high while keeping a few exploratory points elsewhere.

\begin{figure}[H]
  \centering
  \includegraphics[width=0.65\linewidth]{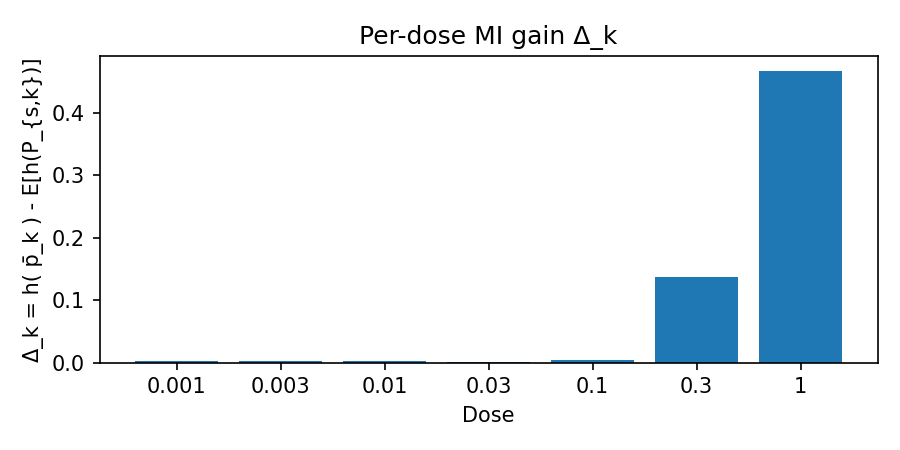}
  \caption{Per-dose mutual-information gain $\Delta_k$.}
  \label{fig:delta_per_dose}
\end{figure}

\begin{table}[H]
  \centering
  \caption{Per-dose MI gains $\Delta_k$.}
  \vspace{0.25em}
  \begin{tabular}{lccccccc}
    \toprule
    Dose & $10^{-3}$ & $3{\times}10^{-3}$ & $10^{-2}$ & $3{\times}10^{-2}$ & $10^{-1}$ & $3{\times}10^{-1}$ & $1$ \\
    \midrule
    $\Delta_k$ & 0.0025 & 0.0023 & 0.0021 & 0.0014 & 0.0045 & 0.1371 & 0.4675 \\
    \bottomrule
  \end{tabular}
\label{tab:delta_per_dose}
\end{table}

Given these gains, LMA-EIG spreads one replicate across all doses for exploration and assigns the remaining budget to the highest dose, yielding
$\{n_k\}=(1,1,1,1,1,1,15)$, i.e.\ $\gamma=(0.048,\ldots,0.048,0.714)$.
The Uniform baseline gives $(3,\ldots,3)$, while local $D$-opt concentrates on the two upper doses, $\{n_k\}=(0,0,0,0,10,0,11)$, as recorded in Table.\ref{tab:BOED_design_allocations}.

\begin{figure}[H]
  \centering
  \includegraphics[width=0.7\linewidth]{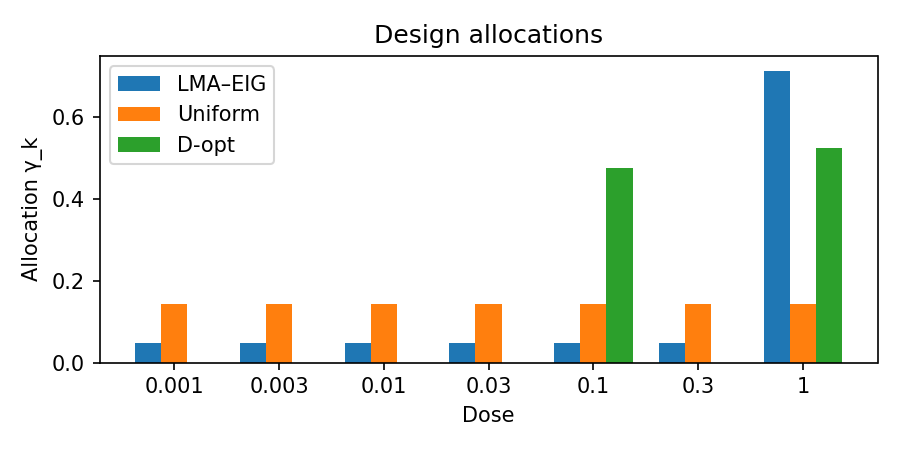}
  \caption{Design allocations $\gamma_k$ for LMA-EIG, Uniform, and local $D$-opt.}
\end{figure}

\begin{table}[H]
  \centering
  \caption{Allocations by design: counts $n_k$ and weights $\gamma_k$ (rounded).}
  \vspace{0.25em}
  \begin{tabular}{l l l}
    \toprule
    Design & $n_k$ & $\gamma_k$ \\
    \midrule
    LMA-EIG      & $[1,1,1,1,1,1,15]$ & $[0.048,0.048,0.048,0.048,0.048,0.048,0.714]$ \\
    Uniform      & $[3,3,3,3,3,3,3]$  & $[0.143,0.143,0.143,0.143,0.143,0.143,0.143]$ \\
    Local $D$-opt & $[0,0,0,0,10,0,11]$ & $[0.000,0.000,0.000,0.000,0.476,0.000,0.524]$ \\
    \bottomrule
  \end{tabular}
\label{tab:BOED_design_allocations}
\end{table}

This allocation strategy results in higher EIG and tighter, more accurate $EC_{50}$ than uniform sampling, using the same total budget. As in Table.\ref{tab:BOED_design_performance}, LMA-EIG attains the largest estimated EIG ($\widehat{\mathrm{EIG}}=7.16$), versus $5.19$ for local $D$-opt and $1.85$ for Uniform. 
Posteriors in $(\alpha,\beta)$ are similarly tight across designs, see the overlapping $1\sigma/2\sigma$ ellipses in Fig.\ref{fig:posterior_contours} and the similar Gaussian entropies $H\approx 2.22$ in Table.\ref{tab:BOED_design_performance}. But the induced uncertainty in $EC_{50}$ differs: LMA-EIG and local $D$-opt are accurate and precise near $0.0695$-$0.0697$, whereas Uniform is biased high and less precise (Table.\ref{tab:BOED_design_performance}).

\begin{table}[H]
  \centering
  \caption{Design performance: EIG, posterior entropy $H$, and $EC_{50}$ (95\% CI).}
  \label{tab:BOED_design_performance}
  \vspace{0.25em}
  \begin{tabular}{l c c c}
    \toprule
    Design & $\widehat{\mathrm{EIG}}$ & $H$ & $EC_{50}$ (95\% CI) \\
    \midrule
    LMA-EIG        & 7.16 & 2.2187 & $0.06972\ [0.06899, 0.07044]$ \\
    Local $D$-opt  & 5.19 & 2.2187 & $0.06949\ [0.06858, 0.07040]$ \\
    Uniform        & 1.85 & 2.2177 & $0.07778\ [0.07512, 0.08044]$ \\
    \bottomrule
  \end{tabular}
\end{table}

\begin{figure}[H]
  \centering
  \includegraphics[width=0.5\linewidth]{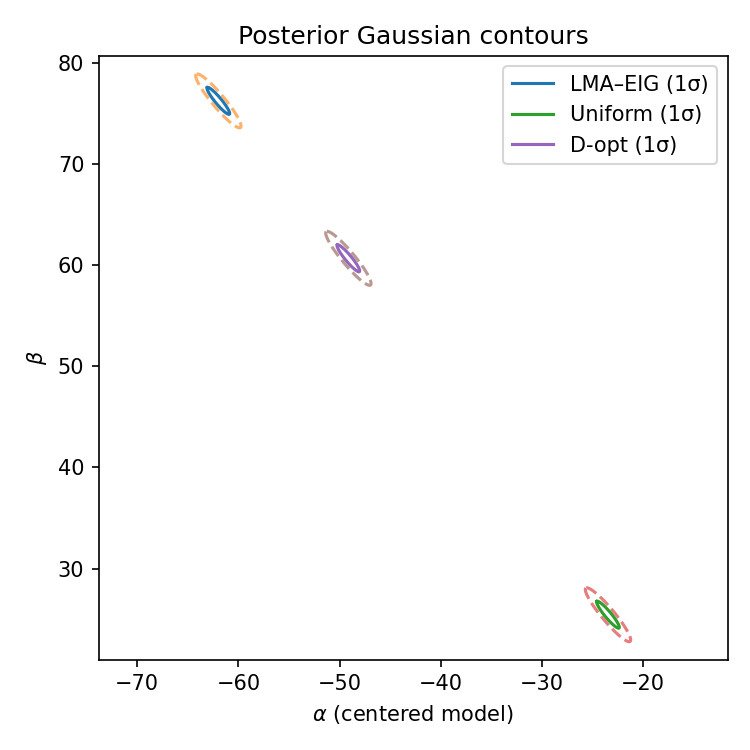}
  \caption{Posterior Gaussian $1\sigma$ (solid) and $2\sigma$ (dashed) contours in $(\alpha,\beta)$ for each design.}
  \label{fig:posterior_contours}
\end{figure}

Overall, the LMA-EIG rule leverages a multi-modal prior surrogate to place measurements where they most reduce predictive uncertainty, yielding the highest EIG and the most accurate $EC_{50}$ estimates; it clearly outperforms a Uniform allocation and matches (or slightly improves upon) local $D$-optimal performance. More broadly, BOED produces more robust design decisions than purely OED criteria by explicitly averaging over parameter uncertainty.

\subsubsection{Sensor network localization with \textit{LMA} and \textit{EM-GMA} posteriors} \label{subsec:sensor_network_localization}

A widely adopted benchmark for evaluating approximate Bayesian inference methods is the \textit{sensor network localization} (SNL) problem \cite{ihler2005nonparametric,ahn2013rdmc,lan2014wormhole,guo_boosting_2017}.
In this task, a collection of $N$ sensors is deployed in the unit square, with positions $\mathbf{s}_i \in \mathbb{R}^2$. To resolve global translation/rotation/reflection symmetries, we fix $N_{\text{ANCH}}$ sensors (\textit{anchors}) at predetermined coordinates. The remaining $N_{\text{UNK}} = N - N_{\text{ANCH}}$ sensors are treated as unknown random variables. Consequently, the posterior is defined over $2 N_{\text{UNK}}$ real-valued parameters (as each sensor coordinate is 2D). In the canonical literature benchmark with $N=11$ and $N_{\text{ANCH}}=3$, we have $N_{\text{UNK}}=8$ and thus a $2\times 8=16$-dimensional posterior distribution.

The observation model is defined in terms of noisy inter-sensor distances. For each unordered pair $(i,j)$, let $d_{ij} = \|\mathbf{s}_i - \mathbf{s}_j\|_2$ denote the Euclidean distance. A latent binary variable $Z_{ij}$ indicates whether a communication
link between sensors $i$ and $j$ is present, with probability
\[
  \Pr(Z_{ij}=1 \mid \mathbf{s}_i,\mathbf{s}_j)
  = \exp \left(-\tfrac{d_{ij}^2}{2R^2}\right)
\]
where $R>0$ controls the expected communication radius. Conditional on link presence ($Z_{ij}=1$), the observed measurement is a noisy range $Y_{ij} \sim \mathcal{N}(d_{ij},\sigma^2)$; if $Z_{ij}=0$, then $Y_{ij}=0$ deterministically.
The joint likelihood of all observations $(Y,Z)$ given sensor locations $\theta$ factorizes as:
\begin{align}
  p(Y,Z \mid \theta)
  = \prod_{i<j} \Big[
     p_{\text{link},ij}(\theta)  
       \mathcal{N}(Y_{ij}\mid d_{ij},\sigma^2)
  \Big]^{Z_{ij}}
  \Big[
     1-p_{\text{link},ij}(\theta)
  \Big]^{1-Z_{ij}}
\end{align}
with $p_{\text{link},ij}(\theta) = \exp(-d_{ij}^2 / (2R^2))$. A uniform prior over the
bounding box $[0,1]^2$ is typically assumed for each unknown sensor position.

As per Bayes' theorem, the posterior distribution is therefore:
\[
  p(\theta \mid Y,Z) \;\propto\; p(\theta) p(Y,Z \mid \theta)
\]
where $p(\theta)$ encodes the prior over positions. This posterior is highly non-Gaussian and often multi-modal due to ambiguous distance constraints and symmetry considerations. Marginals of individual sensor locations are typically crescent-shaped or multi-modal, making SNL a challenging stress test for approximate inference. In the literature, performance on this task is commonly reported in terms of the \textit{relative error of the posterior mean} (REM), computational time, alongside visual comparisons of the inferred marginal distributions.

Several representative inference approaches have been tested on this task. Early work by Ihler et al. \cite{ihler2005nonparametric} applied \textit{nonparametric belief propagation} (NBP) to capture multi-modal marginals in a distributed message-passing framework. Ahn et al. \cite{ahn2013rdmc} later introduced \textit{Regeneration Darting Monte Carlo} (RDMC), which combines local Hamiltonian dynamics with regeneration-based global jumps to improve mode exploration. Lan et al. \cite{lan2014wormhole} proposed \textit{Wormhole HMC} (WHMC), modifying the Riemannian metric to create ``wormholes'' between posterior modes, thereby reducing mixing times. More recently, Guo et al. \cite{guo_boosting_2017} demonstrated that single-Gaussian variational methods such as ADVI underfit the complex geometry of the posterior, and advocated \textit{boosting-based variational inference} (BVI) that constructs a Gaussian mixture sequentially to approximate the multi-modality. Together, these works have established SNL as a canonical and demanding testbed for Bayesian inference methods.

\subsubsection*{\textit{Experiment}}
\paragraph{Experimental setup and data}
We consider the planar SNL with $N=11$ sensors in a square region $[0,L]^2$ with $L=1.2$. Among them, $N_{\text{ANCH}}=5$ anchors have known coordinates (placed near the four corners and centre), and $N_{\text{UNK}}=N-N_{\text{ANCH}}=6$ sensors have unknown positions $\{\mathbf x_i\}_{i=1}^{N_{\text{UNK}}}\subset(0,L)^2$ to be inferred. Links are generated probabilistically as in prior work: for any pair $(i,j)$ with true distance $d_{ij}$, a binary edge $Z_{ij}\sim\mathrm{Bernoulli}(e^{-d_{ij}^2/(2R^2)})$ with $R=0.3$; for realised links ($Z_{ij}=1$), we observe a noisy range $Y_{ij}\sim\mathcal N(d_{ij},\sigma^2)$ with $\sigma=0.02$. As is common in the literature, we fix the number of observed links to $|{\cal E}|=14$ by selecting the $14$ pairs with the largest $e^{-d_{ij}^2/(2R^2)}$. The posterior is defined by these likelihoods together with a uniform prior over $(0,L)^2$ for each unknown coordinate. For computation we re-parameterise each unknown coordinate by $\mathbf x = L\cdot\sigma(\mathbf u)$ with $\sigma(\cdot)$ the elementwise sigmoid; the induced prior on $\mathbf u$ contributes the log-Jacobian $\sum_\ell[\log\sigma(u_\ell)+\log(1-\sigma(u_\ell))]$.
All methods use the same synthetic dataset, identical seeds, and independent initialisations. We report execution time and a \textit{relative error of the mean} (REM):
$$\mathrm{REM}=\|\bar{\mathbf x}-\mathbf x^\star\|_1/ \|\mathbf x^\star\|_1$$
where $\bar{\mathbf x}$ is the posterior mean in $x$-space and $\mathbf x^\star$ the ground truth for the $N_{\text{UNK}}$ sensors.

\paragraph{Approaches}
We compare 7 posterior approximation schemes in the unconstrained $u$-space: \textit{HMC/NUTS} (4,000 draws after 1,000 warmup) \cite{hoffman2014nuts}, \textit{MFVI-ADVI} (mean-field Gaussian) \cite{kucukelbir_automatic_2016}, \textit{S-ADVI} (per-dimension monotone warp of a Gaussian) \cite{shao_nonparametric_2024}, \textit{GM-ADVI} (mixture of $K = 24$ diagonal Gaussians trained with a stabilised SIWAE objective) \cite{morningstar_automatic_2020}, \textit{BVI} (boosting VI that iteratively adds Laplace components to the residual; $T=12$ rounds) \cite{guo_boosting_2017}; \textit{LMA} (Laplace mixture built from modes discovered by multi-start ascent with full-Hessian covariances); and a fast \textit{EM-GMA} (population EM for an inclusive-KL GMM, $K=24$, $M_{\text{bank}}=4096$, 40 iterations, full covariances).
All methods draw $4,000$ posterior samples for analysis.

\begin{table}[H]
\centering
\small
\begin{tabular}{lcc}
\toprule
Method & Time (s) $\downarrow$ & REM $\downarrow$ \\
\midrule
HMC/NUTS      & 18.6 & 0.181 \\
MFVI-ADVI     &  \underline{\textbf{7.1}} & 0.183 \\
S-ADVI        &  7.8 & 0.178 \\
GM-ADVI       & 14.3 & 0.510 \\
BVI           & 75.0 & 0.458 \\
LMA           & 16.9 & \underline{\textbf{0.018}} \\
EM-GMA (fast) & 58.1 & 0.235 \\
\bottomrule
\end{tabular}
\caption{Runtime and accuracy on the SNL task
($N = 11$, $N_{\text{ANCH}} = 5$, $|{\cal E}| = 14$). REM is the relative $\ell_1$ error of the posterior mean in $x$-space. }
\label{tab:snl-results}
\end{table}

\begin{figure}[H]
\centering
\includegraphics[width=0.9\textwidth]{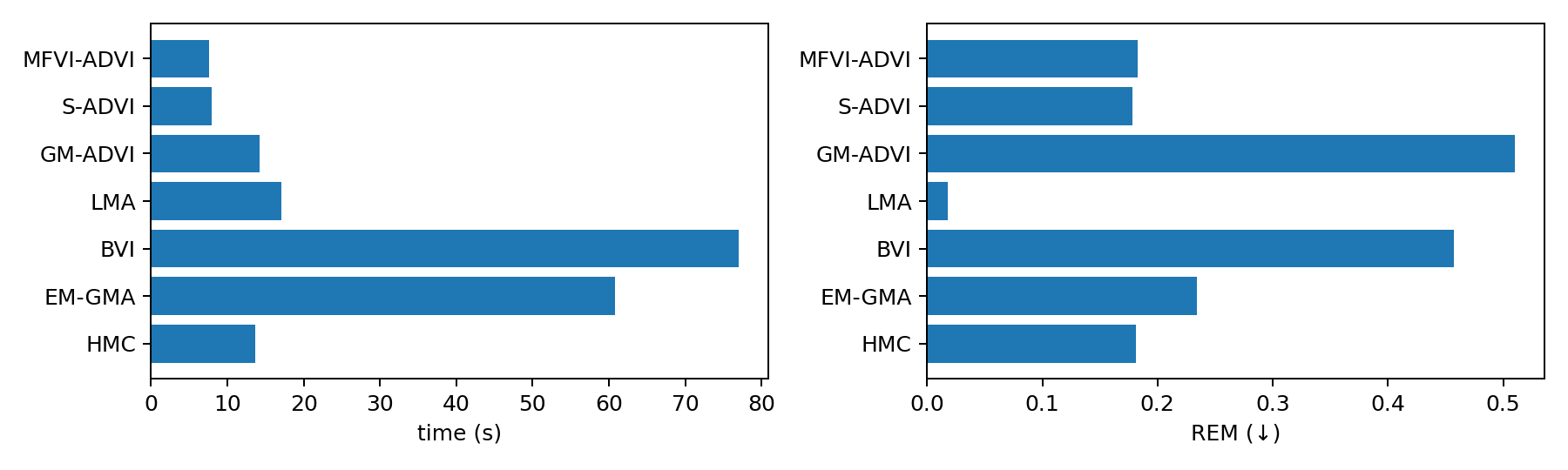}
\caption{Execution time (left) and REM (right). LMA is the most accurate; S/MFVI-ADVI are competitive with HMC on this instance; GM-ADVI and BVI underperform.}
\label{fig:snl_performances}
\end{figure}

\begin{figure}[H]
\centering
\includegraphics[width=\textwidth]{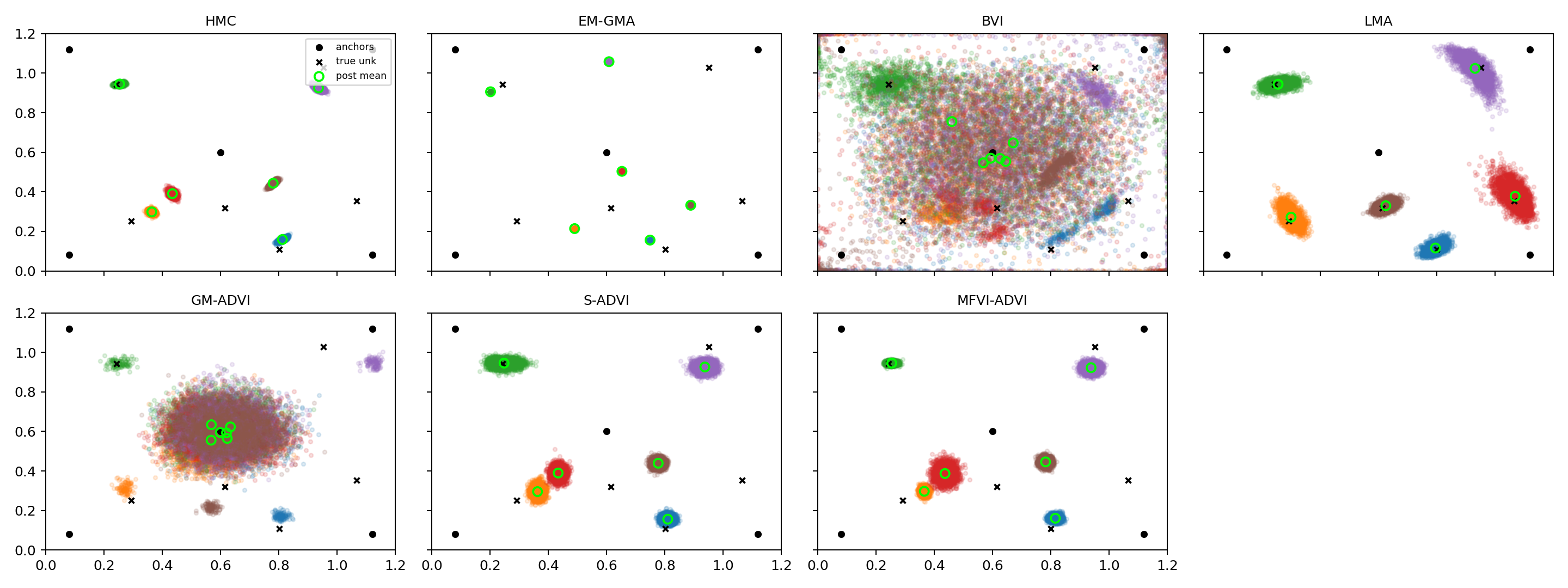}
\caption{Posterior point clouds in the coordinate plane for each method ($4 000$ samples each). Black dots: anchors. Black crosses: true unknown locations. Green circles: posterior means.}
\label{fig:snl_posterior}
\end{figure}

\paragraph{Results}
Results are presented in Table.\ref{tab:snl-results}, Fig.\ref{fig:snl_performances} and Fig.\ref{fig:snl_posterior}.
We observe that, LMA attains the best accuracy by a large margin, with REM $0.018$ and tight, well-oriented ellipses around the ground truth in Fig.\ref{fig:snl_posterior}.
In this case, the posterior is sufficiently concentrated around a few modes that second-order local Gaussianisation (using exact Hessians) captures both location and anisotropic uncertainty very well. The multi-start search finds distinct
modes (when they exist), and evidence-weighted mixing places mass appropriately.

The two mean-field baselines, i.e. S-ADVI and MFVI-ADVI, yield similar REM ($\approx 0.18$), on par with HMC. S-ADVI’s monotone per-coordinate warp slightly alleviates skewness, making it the strongest of the two. Their posterior sample plots show nearly spherical clusters, implying under-dispersion along elongated directions (which is a standard MFVI bias).
HMC (NUTS) serves as a reference but not a gold standard here: with a single chain and $\sim 4$k post-warmup draws, the means are close to S/MFVI-ADVI. With longer runs or more chains, HMC can possibly nudge the mean further towards LMA’s estimate, but at a higher computational cost.
The GM-ADVI mixture (with 24 diagonal components) underperforms (REM $0.51$) and produces a large central blob (Fig.\ref{fig:snl_posterior}). In this geometry, diagonal components struggle to align with narrow, rotated covariance structure; the mixture tends to average modes and inflate variance rather than resolve anisotropy.
BVI is both slow and inaccurate here (75s; REM $0.46$). Boosting the residual with a small number of components can be brittle in higher dimension: the residual objective may focus on dispersed “missing mass”, yielding overly diffuse components and a mean pulled away from the truth (as the very broad point clouds indicate in Fig.\ref{fig:snl_posterior}). More rounds or stronger curvature control could help, but would further increase runtime.
EM-GMA (population EM with SNIS and a fixed bank) lands in the middle of the pack (REM $0.235$). SNIS can be somewhat degenerate early on, and full-covariance GMM updates then chase the current bank’s coverage, leading to low effective sample sizes, noisy responsibilities, and over-tight covariances around accidental samples, so the mixture “locks onto” a biased subset of the posterior and under-covers the true modes.

Overall, in this synthetic SNL case with low range noise and well-spread anchors, a \textit{curvature-aware multi-modal Gaussianisation} (LMA) offers the best accuracy-cost trade-off: it discovers posterior modes and captures local anisotropy via full-Hessian covariances, so each component places its mean and spread correctly. Lightweight mean-field VI is competitive with short HMC runs, whereas mixture-based approximations require either full covariances, careful responsibility/weight stabilisation, or more components to be reliable. For EM-GMA specifically, when accuracy matters, annealing the responsibilities and enforcing ESS checks (and/or enlarging the proposal bank) can close part of the gap to LMA. More broadly, LMA’s ability to locate Gaussian means (modes) and spreads (local curvature) simultaneously makes it a cheap, mode-aware approximation usable for direct sampling and as a robust initialisation for subsequent GMA refinements.

\section{Towards theoretical guarantees} \label{sec:theory}

Here we present a primer on theoretical justifications for the GMA-sampling method. In particular, we provide informal convergence analysis, addressing 3 aspects: (1) an arbitrary distribution can be approximated by a GMM with a finite number of components within certain error bounds, and (2) the two-stage sampling method (GMA) is consistent with the approximated GMM, and (3) providing a rough error bound for the two-stage sampling method.

\subsection{Approximation of arbitrary distributions by a GMM with finite components}

\begin{theorem}[Universal approximation property of GMMs] \label{thm:univ_approx_GMMs}
Any sufficiently smooth probability density function $p(\mathbf{z})$ on \(\mathbb{R}^d\) can be approximated arbitrarily closely in $L^1$ distance by a Gaussian Mixture Model (GMM) with a finite, sufficiently large number of components covering whole support.
\end{theorem}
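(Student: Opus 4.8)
The plan is to build the approximation in two stages via a mollification-then-discretization argument. First I would convolve the target with an isotropic Gaussian kernel of small bandwidth $\sigma$, setting $p_\sigma = p * \mathcal{N}(0,\sigma^2 I)$. Since the family $\{\mathcal{N}(0,\sigma^2 I)\}_{\sigma>0}$ is an approximate identity, standard mollifier theory gives $\|p_\sigma - p\|_1 \to 0$ as $\sigma \downarrow 0$; hence for any prescribed $\epsilon>0$ I can fix $\sigma$ so that $\|p_\sigma - p\|_1 < \epsilon/2$. The crucial observation is that $p_\sigma$ is itself a \emph{continuous} Gaussian mixture, $p_\sigma(\mathbf{z}) = \int_{\mathbb{R}^d} p(\mathbf{y})\,\mathcal{N}(\mathbf{z};\mathbf{y},\sigma^2 I)\,d\mathbf{y}$, i.e.\ an uncountable superposition of Gaussians with identical covariance $\sigma^2 I$, centred at every $\mathbf{y}$ and weighted by $p(\mathbf{y})$.

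Second, I would discretize this integral into a finite mixture. Using tightness of the density ($p\in L^1$), choose a ball (or cube) $K$ large enough that $\int_{K^c} p < \delta$, then partition $K$ into $N$ small cells $\{C_i\}$ with centres $\mathbf{y}_i$ and assign weights $\tilde w_i = \int_{C_i} p(\mathbf{y})\,d\mathbf{y}$. This yields the finite candidate $q_N(\mathbf{z}) = \sum_{i=1}^N \tilde w_i\,\mathcal{N}(\mathbf{z};\mathbf{y}_i,\sigma^2 I)$. To bound $\|q_N - p_\sigma\|_1$ I would exploit the $L^1$-continuity of translation: writing the difference through Fubini, the error splits into a tail contribution (controlled by $\delta$) plus a sum of terms of the form $\int_{C_i}\big(\int |\mathcal{N}(\mathbf{z};\mathbf{y}_i,\sigma^2 I) - \mathcal{N}(\mathbf{z};\mathbf{y},\sigma^2 I)|\,d\mathbf{z}\big)\,p(\mathbf{y})\,d\mathbf{y}$, each inner integral being the $L^1$ distance between two translated copies of the \emph{fixed} kernel $\mathcal{N}(\cdot;0,\sigma^2 I)$, which tends to zero as the mesh diameter shrinks. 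Refining the partition thus makes $\|q_N - p_\sigma\|_1 < \epsilon/2$ with $N$ finite.

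Finally I would reconcile normalization and conclude by the triangle inequality. The weights satisfy $\sum_i \tilde w_i = \int_K p \in (1-\delta, 1]$, so renormalizing $w_i = \tilde w_i / \sum_j \tilde w_j$ produces a bona-fide mixture density and perturbs $q_N$ in $L^1$ by at most $O(\delta)$; absorbing this into the choice of $\delta$, the bound $\|q_N - p\|_1 \le \|q_N - p_\sigma\|_1 + \|p_\sigma - p\|_1 < \epsilon$ follows. The components share the common covariance $\sigma^2 I$ and have centres $\mathbf{y}_i$ spread across $K$, so the resulting GMM covers the whole effective support, as claimed.

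The main obstacle is the \emph{order of quantifiers}: the three error sources --- mollification ($\sigma$), tail truncation ($\delta$, hence $K$), and quadrature (mesh, hence $N$) --- are coupled and must be committed in the right sequence. One must fix $\sigma$ first so that the kernel $\mathcal{N}(\cdot;0,\sigma^2 I)$ is frozen; only then do the translation-continuity and Riemann-sum estimates become uniform and yield a genuinely \emph{finite} $N$. The smoothness hypothesis on $p$ enters here to guarantee the quadrature converges at a controlled rate (and, if a quantitative bound on $N$ is wanted, to furnish the requisite modulus-of-continuity estimates); without it one still obtains convergence but forfeits explicit rates.
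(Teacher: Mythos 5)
Your proof is correct, and it takes a genuinely different route from the paper's. The paper's proof of Theorem~\ref{thm:univ_approx_GMMs} is a citation-style sketch: it appeals to denseness results from the mixture-density-estimation literature (Li \& Barron, Norets) asserting that Gaussian mixtures are dense in $L^1$ among continuous densities with compact support or sufficient tail decay, and then simply invokes that fact. You instead give a self-contained constructive argument: mollify $p$ by a Gaussian of bandwidth $\sigma$ (approximate identity gives $\|p_\sigma - p\|_1 < \epsilon/2$), observe that $p_\sigma$ is a continuous Gaussian location mixture with common covariance $\sigma^2 I$, then discretize that mixing integral by truncating to a compact $K$, partitioning into cells, and bounding the quadrature error via $L^1$-continuity of translation of the \emph{fixed} kernel --- with a final renormalization of the weights. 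Your ordering of the quantifiers ($\sigma$ first, then $K$, then the mesh) is exactly what makes the finite-$N$ conclusion legitimate, and your closing remark is apt: the argument needs only $p \in L^1$, so smoothness is not required for qualitative $L^1$ convergence, only for rates. What your approach buys is an explicit construction (equal-covariance components on a grid with weights $\int_{C_i} p$, which incidentally mirrors how the paper's WGMA sampler actually places components) and slightly greater generality; what the paper's approach buys is brevity and a direct bridge to the quantitative convergence rates (e.g.\ the $\mathcal{O}(1/N)$ and $\mathcal{O}(N^{-c})$ bounds) that it reuses later in the error decomposition of Section~\ref{sec:theory}.
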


\begin{proof} (sketch)
    A GMM with $N$ components is defined as \cite{reynolds_gaussian_2009}:
    \[
    q_{\boldsymbol{\theta}}(\mathbf{z}) = \sum_{i=1}^N w_i \mathcal{N}(\mathbf{z} | \boldsymbol{\mu}_i, \boldsymbol{\Sigma}_i)
    \]
    where the parameters \(\boldsymbol{\theta} = \{w_i, \boldsymbol{\mu}_i, \boldsymbol{\Sigma}_i\}_{i=1}^N\) consist of weights ($w_i \geq 0$, \(\sum_{i=1}^N w_i = 1\)), means (\(\boldsymbol{\mu}_i \in \mathbb{R}^d\)), and positive definite covariance matrices (\(\boldsymbol{\Sigma}_i \in \mathbb{R}^{d \times d}\)).

    According to results in mixture density estimation theory (e.g. Li \& Barron \cite{li_mixture_1999}, Norets \cite{norets_approximation_2010}), a GMM can approximate any continuous density $p(\mathbf{z})$ on a compact support $\mathcal{Z} \subset \mathbb{R}^d$ with arbitrary precision, provided $N$ is sufficiently large. This is based on the fact that the set of mixtures of Gaussian densities is dense in the space of continuous densities with respect to the $L^1$ norm, under the condition that $p(\mathbf{z})$ has bounded support or decays sufficiently fast at infinity.
    
    Let $\epsilon > 0$ be the desired approximation error. The $L^1$ distance 
    \footnote{
    In density estimation, the KL divergence between two densities, however, is more natural to use than the $L^2$ distance often seen in function fitting literature \cite{li_mixture_1999} as it has intrinsic connection with maximum likelihood which is one of most useful method in mixture density estimation.
    The $L^1$ norm was used in the analysis for two primary reasons: its practical probabilistic meaning and its convenient mathematical properties for error decomposition. 
    (1) Probabilistic interpretation. The $L^1$ norm has a direct and intuitive connection to how distinguishable two probability distributions are - it is equal to twice the \textit{total variation} (TV) distance (see Appendix.\ref{app:distributional_distance_metrics} for the definition of TV): $\|p - q\|_{L^1} = 2 \cdot D_{TV}(p, q)$. The TV distance represents the largest possible difference in probability that the two distributions can assign to any single event. For example, if $\|p - q\|_{L^1} = 0.1$, then the TV distance is $0.05$. This means for any possible event (e.g. a sample falling into a certain range), the probability calculated by $p$ and the probability calculated by $q$ will differ by at most 5\%. This provides a worst-case guarantee on how similar the sampling outcomes from the two distributions will be, making it a very practical measure of error.
    (2) Mathematical properties for analysis.
    From a mathematical standpoint, the $L^1$ norm is a true \textit{metric}, which makes it ideal for the kind of error analysis performed. First, it satisfies the triangle inequality. Using a true distance metric is important as, we shall see later, it allows us to decompose the total error into manageable parts using the triangle inequality of norms:
        $$\|p - p_{\text{samples}}\|_{L^1} \le \|p - q_{w_{opt}}\|_{L^1} + \|q_{w_{opt}} - q_{\mathbf{w}_K}\|_{L^1} + \|q_{\mathbf{w}_K} - p_{\text{samples}}\|_{L^1}$$
    other measures, e.g. KL divergence, are not metrics and do not satisfy symmetry and the triangle inequality, making such a direct decomposition impossible.
    Further, key theoretical results used in the derivation are often stated in terms of the $L^1$ norm. For example, the universal approximation property of GMMs is proven in terms of $L^1$ convergence, and standard bounds for Monte Carlo sampling error are also readily available for the $L^1$ distance \cite{taylor_nonparametric_1985,nussbaum_devroye_1988}. Using the $L^1$ norm allows us to plug these established results directly into our analysis.
} 
between $p(\mathbf{z})$ and $q_{\boldsymbol{\theta}}(\mathbf{z})$ is defined as 
\footnote{
The $L^1$ norm of a function $f(\mathbf{z})$ is defined as $\|\mathbf{f}\|_{L^1} = \int |f(\mathbf{z})| d\mathbf{z}$; for any vector $\mathbf{v}=[v_1, v_2, \ldots, v_N]$, the $L^1$ norm is the sum of the absolute values of its components: $\|\mathbf{v}\|_{L^1} = \sum_{i=1}^N |v_i|$.
}:
      \[
      \| p - q \|_{L^1} = \int_{\mathcal{Z}} |p(\mathbf{z}) - q_{\boldsymbol{\theta}}(\mathbf{z})| d\mathbf{z}
      \]
      By the universal approximation theorem, there exists a finite $N$ and parameters \(\boldsymbol{\theta}\) such that $\| p - q_{\boldsymbol{\theta}} \|_{L^1} < \epsilon$, provided $p(\mathbf{z})$ is continuous.
      For densities on unbounded domains, the approximation still holds if $p(\mathbf{z})$ has a sufficient rate of decay (e.g. $p(\mathbf{z})$ decays faster than any Gaussian tail), a condition met by most distributions of practical interest. The number of components $N$ required depends on the complexity of $p(\mathbf{z})$, e.g. its number of modes and curvature. For a density with $k$ modes and bounded support, $N \geq k$ components can often suffice with appropriate placement of \(\boldsymbol{\mu}_i\), and the error decreases as $N$ increases due to the flexibility of Gaussian mixtures. Therefore, the GMM with finite $N$ can approximate any continuous target density within an $\epsilon$-error bound in $L^1$ norm, justifying the use of GMA as a flexible, universal representation for the target distribution.
\end{proof}

\subsection{Consistency and convergence of the two-stage sampling method}

\begin{theorem}[Consistency of GMA two-stage sampling] \label{thm:consistency_GMA}
    The two-stage sampling method in GMA, which involves sampling each Gaussian component, optimizing component weights via gradient descent to minimize the KL divergence, and then performing stratified sampling proportional to the optimised weights $\mathbf{w}^*$, produces an empirical sample distribution that converges to the approximate GMM $q_{\mathbf{w}^*}(\mathbf{z})$ as the number of samples $M$ per component goes to infinity.
\end{theorem}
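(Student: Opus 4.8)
The plan is to compute explicitly the law that the resampling stage targets and then drive two independent sources of sampling error to zero. Since the ensemble empirical distribution is atomic, the only sensible mode of convergence is \emph{weak} convergence (convergence of $\int f\,\mathrm d\hat q$ for every bounded continuous $f$), not $L^1$ or total variation to a density. Conditional on the fixed bank $\{\mathbf s_{i,j}\}$ and the weights $\mathbf w^*$, each stratified draw picks component $i$ with probability $w_i^*$ and then a uniform index $j\in\{1,\dots,M\}$, so every resampled point is distributed as the \emph{mixture of empirical measures}
\[
\hat q_M(\mathbf z)\;=\;\sum_{i=1}^N w_i^{*}\,\hat{\mathcal N}_i^{(M)}(\mathbf z),
\qquad
\hat{\mathcal N}_i^{(M)}\;=\;\frac1M\sum_{j=1}^M \delta_{\mathbf s_{i,j}} .
\]
First I would observe that the empirical distribution of the $NM$ resampled points converges, conditionally on the bank, to $\hat q_M$ by an elementary law of large numbers for the resampled draws (the prescribed categorical-plus-uniform scheme makes them i.i.d. from $\hat q_M$; if a genuinely stratified resampler is used instead, stratification only lowers variance and consistency is preserved). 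It therefore suffices to prove $\hat q_M \Rightarrow q_{\mathbf w^*}$ as $M\to\infty$.

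Second, I would establish the per-component limit $\hat{\mathcal N}_i^{(M)}\Rightarrow\mathcal N(\cdot;\boldsymbol\mu_i,\Sigma_i)$ almost surely. For any bounded continuous $f$, the strong law gives $\tfrac1M\sum_j f(\mathbf s_{i,j})\to\mathbb E_{\mathcal N_i}[f]$ a.s.; applying this simultaneously along a countable convergence-determining class of test functions (Varadarajan's theorem) upgrades the pointwise statement to almost-sure weak convergence of the whole empirical measure. Because the mixture $\hat q_M$ is linear in its components, once the weights are controlled this immediately yields $\hat q_M \Rightarrow \sum_i w_i^*\,\mathcal N_i = q_{\mathbf w^*}$.

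The remaining and hardest step concerns the weights, which are not fixed but are themselves functions of the same bank, so strictly the minimizer $\mathbf w^*_M$ of the sample KL objective depends on $M$. I would show $\mathbf w^*_M\to\mathbf w^*_\infty$, the minimizer of the population objective, by an argmax-consistency argument with two ingredients: (i) a \emph{uniform} law of large numbers for the empirical objective over the compact simplex $\Delta^{N-1}$, and (ii) stability of the argmin under uniform convergence, which is guaranteed by the convexity of the exclusive-KL objective in $\mathbf w$ established earlier in the paper (it pins down a well-separated minimizer, so epi-convergence of the objectives forces convergence of minimizers). Combining the three steps through a Slutsky-type argument — converging weights multiplied by weakly converging component measures — delivers the claimed weak convergence of the ensemble empirical distribution to $q_{\mathbf w^*}$. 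I expect ingredient (i) to be the main technical obstacle: the integrands $\log q_{\mathbf w}(\mathbf z)$ and $\log\bar p(\mathbf z)$ can diverge where $q_{\mathbf w}$ or $\bar p$ approach zero, so a dominating integrable envelope over $\Delta^{N-1}$ exists only under a mild tail/lower-bound condition on $\bar p$ (e.g. sub-Gaussian decay with a positive floor on the effective support), and pinning down the cleanest such condition is where the real work lies.
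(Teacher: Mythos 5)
Your proposal follows the same two-stage decomposition as the paper's own (sketch) proof: identify the resampled law as the mixture of per-component empirical measures weighted by $\mathbf w^*$, send each empirical measure to its Gaussian by the law of large numbers, and conclude by linearity of the mixture. Where you genuinely go beyond the paper is in two places. First, you make the mode of convergence precise: the paper speaks loosely of the sample distribution (a sum of Dirac masses) "converging in distribution" to $q_{\mathbf w^*}$, whereas you correctly note that weak convergence is the only sensible notion here and invoke Varadarajan's theorem to upgrade the per-test-function strong law to almost-sure weak convergence of the whole empirical measure. Second, and more substantively, you address a gap the paper's argument silently steps over: the optimized weights $\mathbf w^*_M$ are themselves functions of the same bank of size $M$, so one cannot treat them as a fixed limit obtained independently of the sampling stage. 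Your argmax-consistency argument --- uniform law of large numbers over the compact simplex plus stability of the minimizer under the convexity of the exclusive-KL objective --- is exactly the right tool, and your observation that the uniform LLN requires an integrable envelope for $\log q_{\mathbf w}$ and $\log\bar p$ (hence a tail or lower-bound condition on $\bar p$) identifies the genuine technical content that the paper's sketch defers to "mild conditions." The paper's proof, by contrast, handles the weight convergence by appeal to generic stochastic-approximation and projected-gradient-descent references and does not couple it to the $M$-dependence of the bank. In short: same skeleton, but your version closes two real gaps in the published sketch; the price is the extra regularity hypothesis on $\bar p$ needed for the dominating envelope, which the paper avoids stating only by not proving this step.
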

\begin{proof} (sketch) The proof involves analyzing the two stages of the algorithm. \\
    Stage 1: \textit{weight optimisation}.
    The algorithm aims to find weights $\mathbf{w}$ that make the GMM $q_{\mathbf{w}}(\mathbf{z}) = \sum_{i=1}^N w_i \mathcal{N}(\mathbf{z} | \boldsymbol{\mu}_i, \boldsymbol{\Sigma}_i)$ a good approximation of the target density $p(\mathbf{z})$.  
    To achieve this, it performs gradient descent on the weights $\mathbf{w}^{(k)}$ to minimize the KL divergence objective $KL(q_{\mathbf{w}}(\mathbf{z}) \| p(\mathbf{z})) = \int q_{\mathbf{w}}(\mathbf{z}) \log \frac{q_{\mathbf{w}}(\mathbf{z})}{p(\mathbf{z})} d\mathbf{z}$.
    The gradient update \(\mathbf{w}^{(k)} = \mathbf{w}^{(k-1)} - \eta \cdot \nabla_{\mathbf{w}} KL\) uses (Eq.\ref{eq:KL_divergence_objective4_gradient_GMM0b}):
    \[
    \nabla_{w_i} KL = 1 + \mathbb{E}_{z \sim q_{\mathbf{w}}} \left[ \log q_{\mathbf{w}}(z) - \log \bar{p}(z) \right]
    \]
    which is estimated using $M$ samples per component (Eq.\ref{eq:KL_divergence_objective4_gradient_GMM1}):
    \[
    g_i = 1 +  \sum_{j=1}^M \mathcal{N}(\mathbf{s}_{i,j} | \boldsymbol{\mu}_i, \boldsymbol{\Sigma}_i) \left[ \log q_{\mathbf{w}}(\mathbf{s}_{i,j}) - \log \bar{p}(\mathbf{s}_{i,j}) \right]
    \]
    As $M \to \infty$, the sample average converges to the expectation $1 + \mathbb{E}_{\mathbf{s} \sim \mathcal{N}_{\boldsymbol{\mu}_i,\Sigma_i}} [ \log \frac{q_{\mathbf{w}}(\mathbf{s})}{\bar{p}(\mathbf{s})} ]$ (Eq.\ref{eq:KL_divergence_objective4_gradient_GMM1}) by the \textit{law of large numbers}, and with many $K$ iterations, \(\mathbf{w}^{(K)} \to \mathbf{w}^*\) that minimizes $KL(q_{\mathbf{w}} \| p)$, assuming \(\eta\) is sufficiently small. If the optimisation landscape is convex, it has a unique global minimum; otherwise a local minimizer is sought. Evidence of guarantee can be found in e.g. foundational theory for stochastic optimisation \cite{robbins_stochastic_1951}, constrained optimisation (e.g. projected gradient descent) \cite{goldstein_convex_1964,levitin_constrained_1966,bertsekas1999nonlinear,nocedal2006numerical}, convex optimisation \cite{goldstein_convex_1965,boyd2004convex,bubeck_convex_2015,nesterov_lectures_2018}, optimisation for machine learning \cite{bottou_optimization_2018}, etc.

    Stage 2: \textit{stratified sampling}.
    The ensemble samples are drawn by selecting component $i_m \sim \text{Categorical}(\mathbf{p} = \mathbf{w}^*)$ where $\mathbf{w}^*$ is the normalised optimiser, and sample $j_m \sim \text{Uniform}(\{1, \ldots, M\})$ within each Gaussian component, producing $\mathbf{s}_{i_m, j_m}$.
    The resulting \textit{sample} distribution of the finally selected samples is:
    \[
    p_{\text{samples}}(\mathbf{z}) = \sum_{i=1}^N w_i^* \cdot \frac{1}{M} \sum_{j=1}^M \delta(\mathbf{z} - \mathbf{s}_{i,j})
    \]
    where $\delta$ is the \textit{Dirac delta function}. As the number of samples per component $M \to \infty$, the empirical measure of each component, $\frac{1}{M} \sum_{j=1}^M \delta(\mathbf{z} - \mathbf{s}_{i,j})$, converges in distribution to the true component density $\mathcal{N}(\mathbf{z} | \boldsymbol{\mu}_i, \boldsymbol{\Sigma}_i)$.
    Therefore, the overall sample distribution $p_{\text{samples}}(\mathbf{z})$ converges \footnote{In general, stratified sampling produces samples consistent with the population sample space or the target, underlying distribution, but with faster convergence rate than MC methods and with smaller variance, see Appendix.\ref{app:simple_and_stratified_random_sampling} an analysis.} in distribution to $q_{\mathbf{w}^*}(\mathbf{z}) = \sum_{i=1}^N w_i^* \mathcal{N}(\mathbf{z} | \boldsymbol{\mu}_i, \boldsymbol{\Sigma}_i)$. This proves that the sampling procedure is consistent with the GMM defined by the optimised weights. The rational of drawing from each mixture component and taking weighted average is equivalent to sampling from the mixture density is justified in e.g. \cite{morningstar_automatic_2020}.

    The final convergence of the entire process to the true target $p(\mathbf{z})$ relies on the combination of these two stages. The universal approximation theorem ensures that a good GMM approximation $q_{\mathbf{w}^*}$ exists for a sufficiently large $N$ (and well placed components), while the sampling stage ensures that the samples produced are faithful to and consistent with that GMM. Since $q_{\mathbf{w}^*}$ approximates $p(\mathbf{z})$ within $\epsilon$ (from the universal approximation property), and the sampling is consistent with $q_{\mathbf{w}^*}$, $p_{\text{samples}}$ converges to $p(\mathbf{z})$ in distribution as $N, M \to \infty$. The quality of the approximation of the final selected samples, however, depends on a good approximation $q_{\mathbf{w}^*} (\mathbf{z}) \approx p (\mathbf{z})$ with large $N$ and good placement \footnote{Our experiments find that, sampling and inference results are less sensitive to placement; in general uniform (linear interpolation) placement of Gaussian centers in the support space would generate reasonably well results.} satisfying e.g. separation conditions in \cite{dasgupta_learning_nodate}, the optimisation finding a good set of weights $\mathbf{w}^*$ and $M$ being large enough to make the sampling error negligible.
     
\end{proof}

\subsection{Error bounds with finite component GMM approximation}

The two-stage GMA sampling method, whose use of fixed Gaussian components and optimised weights aligns with aforementioned theoretical framework, therefore produces samples consistent with $p(\mathbf{z})$ as $N, M \to \infty$, with convergence guaranteed \footnote{The proofs assume continuous densities and sufficient $N$; for discrete or highly irregular densities, additional conditions may apply.} by the gradient descent optimisation \cite{bottou_optimization_2018} and the stratified sampling process \cite{morningstar_automatic_2020}, provided the initial means and variances are well-chosen over the support (i.e. reasonable placement). Convergence of the weight optimisation process depends on the gradient descent’s step size $\eta$ and the number of iterations $K$. Under the \textit{Robbins-Monro conditions} \cite{robbins1951stochastic} (\(\sum \eta_k = \infty\), \(\sum \eta_k^2 < \infty\)), we used a diminishing learning rate $\frac{\eta_0}{k}$, which guarantees the weights converge to a local minimum of the $KL$ divergence.
The stratified sampling error decreases as \cite{cochran_sampling_1977} $1/\sqrt{M}$ (same rate as Monte Carlo methods \cite{kennedy2016montecarlo} but with smaller constant and reduced variance, see Appendix.\ref{app:simple_and_stratified_random_sampling}), ensuring consistency with $q_{\mathbf{w}^*}$.
Overall convergence to $p(\mathbf{z})$ requires $N$ to be sufficiently large to capture the target density’s complexity, with $M$ and $K$ ensuring sampling and optimisation accuracy.

We can formally bound the total error between the true target density $p(\mathbf{z})$ and the empirical distribution of the final samples $p_{\text{samples}}(\mathbf{z})$ generated by the two-stage GMA sampling procedure. The total error, measured in $L^1$ norm, can be decomposed into three distinct sources: approximation error $\mathcal{E}_{\text{approx}}$, optimisation error $\mathcal{E}_{\text{opt}}$, and sampling error $\mathcal{E}_{\text{sample}}$. Let $q_{w_{opt}}(\mathbf{z})$ be the best possible GMM approximation to $p(\mathbf{z})$ using the fixed set of $N$ Gaussian components, and let $q_{\mathbf{w}_K}(\mathbf{z})$ be the GMM produced by the pGD algorithm after $K$ iterations. Using the triangle inequality, we can decompose the total error as (enabled by the triangle inequality of the $L^1$ norm):

$$\|p - p_{\text{samples}}\|_{L^1} \le \underbrace{\|p - q_{w_{opt}}\|_{L^1}}_{\text{Approximation error $\mathcal{E}_{\text{approx}}$}} + \underbrace{\|q_{w_{opt}} - q_{\mathbf{w}_K}\|_{L^1}}_{\text{optimisation error $\mathcal{E}_{\text{opt}}$}} + \underbrace{\|q_{\mathbf{w}_K} - p_{\text{samples}}\|_{L^1}}_{\text{Sampling error $\mathcal{E}_{\text{sample}}$}}$$

The total error is therefore bounded by the sum of the three errors, and each error is manageable. In the following, we analyse each term from the above RHS. 

\paragraph{Approximation error $\mathcal{E}_{\text{approx}}$} This is the inherent error from using a (optimal) GMM with $N$ fixed Gaussian components to represent $p(\mathbf{z})$. The universal approximation property (\textit{Theorem}.\ref{thm:univ_approx_GMMs}) guarantees this error can be made arbitrarily small by increasing $N$ and choosing the component means and covariances appropriately to cover the support of $p$. For a target density $p$ with certain smoothness properties, theoretical results \cite{li_estimation_1999,li_mixture_1999,genovese_rates_2000,Devore2010constructive,kruijer_adaptive_2010,norets_approximation_2010,cathy_adaptive_2011,nguyen_convergence_2013} show this error decreases as $N$ grows, often polynomially, such that $\mathcal{E}_{\text{approx}} = \|p - q_{w_{opt}}\|_{L^1} \le \mathcal{O}(N^{-c})$ for some constant $c > 0$ that depends on the smoothness of $p$ and the data dimension $d$. With certain conditions, e.g. the target density is within the convex hull of a Gaussian family, the approximation error measured by KL divergence can decrease at the rate $\mathcal{O}(1/N)$ \cite{li_mixture_1999}.
See Section.\ref{subsec:GMM_fitting} some related work on this polynomial convergence rate.

\paragraph{Optimisation error $\mathcal{E}_{\text{opt}}$} This error arises because the gradient descent algorithm runs for a finite number of iterations $K$, and may not find the optimal weights \footnote{Further, the objective function, $KL(q_{\mathbf{w}} \| p)$, is generally non-convex with respect to the mixture weights $\mathbf{w}$ (in our setting, it is convex though, see Appendix.\ref{app:gradient_of_exclusive_KL_divergence}), meaning the algorithm is only guaranteed to find a local minimum, not necessarily the global optimum.} $\mathbf{w}_{opt}$. For stochastic gradient descent on a general convex objective (as in our case), standard convergence theory states that the error in the objective function decreases as \footnote{While a convergence rate of $\mathcal{O}(K^{-1/2})$ is standard for stochastic gradient descent on convex problems (which is the case in our setting), the analysis for non-convex objectives is more complex. However, under certain smoothness conditions, SGD-based methods can find a stationary point (where the gradient is close to zero) at a similar rate.} $\mathcal{O}(1/\sqrt{K})$ \cite{nemirovski_robust_2009} or $\mathcal{O}(1/K)$ \cite{bottou_optimization_2018} (strongly convex objective, using a diminishing step-size).
We can relate the $L^1$ distance between the GMMs to the distance between their weight vectors:
    $$
    \|q_{\mathbf{w}_{opt}} - q_{\mathbf{w}_K}\|_{L^1} = \left\| \sum_{i=1}^N (w_{opt,i} - w_{K,i}) \mathcal{N}_i \right\|_{L^1} \le \sum_{i=1}^N |w_{opt,i} - w_{K,i}| \|\mathcal{N}_i\|_{L^1} = \|\mathbf{w}_{opt} - \mathbf{w}_K\|_{L^1}
    $$
The middle step arises from the triangle inequality: for any sum of functions $f_1, f_2, \ldots, f_N$, the norm of the sum is less than or equal to the sum of the norms, i.e. $\|\sum_{i=1}^N f_i \| \le \sum_{i=1}^N \| f_i \|$. And the property of norm with scalar multiplication: $\|c \cdot f\| = |c| \cdot \|f\|$.
The last step arises from the unit integral of PDFs: $\|\mathcal{N}_i\|_{L^1} = \int |\mathcal{N}_i(\mathbf{z})| d\mathbf{z} = \int \mathcal{N}_i(\mathbf{z}) d\mathbf{z} = 1$.
Thus, the optimisation error is bounded by the error in the weight vector, which diminishes as the number of iterations $K$ increases: $\mathcal{E}_{\text{opt}} \le \mathcal{O}(K^{-1/2})$, as in standard SGD.

These rates also apply to projected gradient descent (pGD) \cite{bottou_optimization_2018} and mirror descent (MD) \cite{blair_problem_1985,nemirovski_robust_2009}.
The above optimisation-error analysis extends to constrained updates on the probability simplex $\Delta^{N-1} = \Big\{ \mathbf w \in \mathbb R^N : w_i \ge 0,  \sum_{i=1}^N w_i = 1 \Big\}$. For a convex, Lipschitz-smooth objective $F(\mathbf w)=\mathrm{KL} \left(q_{\mathbf w} \| p\right)$ over $\Delta^{N-1}$, pGD with a diminishing stepsize achieves the same rates as unconstrained GD: with full (deterministic) gradients one has $f(\mathbf w_K)-f(\mathbf{w}_{opt})=\mathcal{O}(1/K)$, while with unbiased stochastic gradients and Robbins-Monro stepsizes the expected suboptimality satisfies $\mathbb{E}  \left[f(\bar{\mathbf w}_K)-f(\mathbf{w}_{opt})\right]=\mathcal{O}(K^{-1/2})$ \cite{bottou_optimization_2018,nemirovski_robust_2009}. The same orders hold for MD with multiplicative-weights update on $\Delta^{N-1}$ (see Appendix.\ref{app:mirror_descent_GMA}), where constants depend on the Bregman diameter (scaling like $\sqrt{\log N}$ for the entropy mirror map) but the $K$-rates remain $\mathcal{O}(1/K)$ (deterministic) and $\mathcal{O}(K^{-1/2})$ (stochastic) \cite{nemirovski_robust_2009}. Combining these optimisation guarantees with the $L^1$-stability of mixtures,
\[
\|q_{\mathbf{w}_{opt}}-q_{\mathbf w_K}\|_{L^1}
\le \|\mathbf{w}_{opt}-\mathbf w_K\|_{L^1}
\]
implies that the optimisation component of the density approximation error decays at the same order: $\mathcal{E}_{\mathrm{opt}}=\mathcal{O}(K^{-1/2})$ under stochastic updates, and $\mathcal{O}(K^{-1})$ in the deterministic case.

\paragraph{Sampling error $\mathcal{E}_{\text{sample}}$} This is the Monte Carlo error from using a finite set of $S = N \times M$ total samples to represent the continuous, fitted GMM density $q_{\mathbf{w}_K}(\mathbf{z})$. Classical Monte Carlo sampling, as used in our first stage Gaussian sampling, holds an convergence rate $\mathcal{O}(S^{-1/2})$ \cite{mcbook,Veach1997robust}. Stratified random sampling, as used in our second sampling stage, guarantees that the re-sampled samples are consistent \footnote{See e.g. Eq.\ref{eq:stratified_sampling_morningstar} from \cite{morningstar_automatic_2020} or a consistency analysis of stratified sampling in Appendix.\ref{app:simple_and_stratified_random_sampling}.} with $q_{\mathbf{w}_K}(\mathbf{z})$, and has a convergence rate $\mathcal{O}(S^{-1/2})$ with constant improvement (see Appendix.\ref{app:simple_and_stratified_random_sampling}), and the sampling error is bounded as $\mathcal{E}_{\text{sample}} = \mathbb{E}\left[\|q_{\mathbf{w}_K} - p_{\text{samples}}\|_{L^1}\right] \le \mathcal{O}((NM)^{-1/2})$.

\paragraph{Overall error bound}
Combining these 3 sources of error, we arrive at an overall bound for the expected error of the GMA-sampling algorithm:

\begin{equation}
    \mathbb{E}\left[\|p - p_{\text{samples}}\|_{L^1}\right] \le C_{\text{approx}}N^{-c} + \frac{C_{\text{opt}}}{\sqrt{K}} + \frac{C_{\text{sample}}}{\sqrt{NM}}
\end{equation}
where $c$, $C_{\text{approx}}$, $C_{\text{opt}}$, and $C_{\text{sample}}$ are constants that depend on the properties of the target density $p$ (e.g. smoothness and dimension) and the specific GMM components (finer analysis is demanded though). This inequality shows the trade-offs involved in the algorithm: increasing $N$ reduces approximation and sampling errors, while increasing $K$ and $M$ reduces optimisation and sampling errors, respectively, all at the cost of increased computation.

\section{Related work} \label{sec:related_work}

We review methods relevant to the tasks of density approximation and sampling addressed in this paper. The following discussion is non-exhaustive but briefly lists key approaches and results.

\subsection{Mixture density estimation and approximation} \label{subsec:GMM_fitting}

Density estimation has been long researched. It refers to the process of constructing an estimate of the PDF of a random variable based on observed data \cite{fix_discriminatory_1989,silverman1986density,silverman_e_1989}. Methods for density estimation can be parametric or non-parametric \cite{silverman1986density}. Parametric density estimation assumes a specific functional form for the density (e.g. normal, exponential, etc) with a finite number of parameters to be estimated. For example, estimating the parameters $\boldsymbol{\theta} = \{w_i, \boldsymbol{\mu}_i, \boldsymbol{\Sigma}_i\}_{i=1}^N$ in a GMM $q_{\boldsymbol{\theta}}(\mathbf{z}) = \sum_{i=1}^N w_i \cdot \mathcal{N}(\mathbf{z} | \boldsymbol{\mu}_i, \boldsymbol{\Sigma}_i)$, using maximum likelihood estimation (MLE). This compact, parametric representation is efficient if the assumed PDF model is correct. Non-parametric density estimation does not assume any specific form of the density function
\footnote{Non-parametric methods make less rigid assumptions about the distribution of the observed data. Although it may assume an underlying density, the data will be allowed to speak for themselves in determining the estimate of the PDF more
than would be the case if the density were constrained to fall in a given parametric family \cite{silverman1986density}.}
; it is thus flexible and adaptive to complex distributions such as multi-modal and irregular ones. An intuitive and straightforward non-parametric density estimation approach is to construct a normalized sample \textit{histogram}, where data are divided into bins, frequencies are tallied, and heights are scaled to sum to unity, yielding a piecewise constant approximation of the underlying distribution. However, histograms are sensitive to bin choice and placement, which can lead to misleading interpretations. A more principled non-parametric method is \textit{kernel density estimation} (KDE), which smooths data using kernel functions to mitigate these issues. KDE approximates PDFs directly from samples without assuming a parametric form, using the estimator $\hat{p}_h(x) = \frac{1}{n h^d} \sum_{i=1}^n K\left( \frac{x - x_i}{h} \right)$, where $K$ is a kernel function (e.g. RBF), $h$ is the bandwidth controlling smoothness, and $d$ is the dimensionality \cite{silverman_density_2018}. KDE is simple to implement, asymptotically consistent, and effective in low dimensions but suffers from the curse of dimensionality in higher dimensions. Nearest neighbor density estimation \cite{silverman1986density} estimates the density value at point based on the distance between the point and its nearest neighbors. These non-parametric methods typically require more data to perform well and more computationally intensive. 

Approximating PDFs using weighted mixtures of simpler basis functions is a powerful approach in statistics and machine learning. These approximations typically take the form $p(\mathbf{z}) \approx q_{\boldsymbol{\theta}}(\mathbf{z}) = \sum_{i=1}^N w_i \cdot \phi_i(\mathbf{z}; \theta_i)$, where $\phi_i(\mathbf{z}; \theta_i)$ are basis functions (e.g. Gaussians, sigmoids, splines, trigonometric functions, etc), $\theta_i$ are parameters for basis $\phi_i$, and $w_i$ are weights satisfying non-negativity and partition of unity ($w_i \geq 0$, $\sum_{i=1}^N w_i = 1$) to ensure a valid PDF \cite{verbeek_mixture_2004}. Sigmoidal basis functions, commonly used in neural networks, enable universal approximation of arbitrary functions \cite{barron1993universal}. B-splines, piecewise polynomials with local support, are ideal for smooth, numerically stable function approximation and are also used in non-parametric regression and density estimation \cite{Deboor1978splines}. They offer flexibility by controlling smoothness through spline order and knot placement. Mixture distributions have been used in Bayesian modelling to represent prior or to approximate posterior distributions, due to their expressiveness and flexibility in representing complex probabilistic structures \cite{bishop1998approximating,west1993approximating}, although posterior collapse can happen as well \cite{gershman_nonparametric_2012}. For example, learnable Gaussian mixtures are used in variational auto-encoders to capture multi-modal priors \cite{dilokthanakul2016deep,johnson2016composing,jiang2017variational}. Flexibility and performance of VAEs can be enhanced with a new type of prior called 'variational mixture of posteriors' prior (\textit{VampPrior}) which uses a mixture distribution (e.g. Gaussians) with components given by variational posteriors conditioned on learnable pseudo-inputs \cite{tomczak_vae_2018}. Mixture distributions have also been used as variational posteriors, including fixed-weight mixtures \cite{oh2019modeling}, continuous relaxations ( e.g. concrete relaxation of the categorical distribution \cite{poduval2020mixture}), and advanced sampling schemes that improve approximation coverage \cite{domke2019divide}.
Graves\cite{graves_stochastic_2016} introduced a method that allows gradient backpropagation through mixtures with diagonal covariance structure using recursive sampling. Morningstar et al. \cite{morningstar_automatic_2020} explored using mixtures, whose component distributions are reparameterizable, for approximating the variational posterior in ADVI. They used stratified sampling for sampling each component, and derived the stratified ELBO (SELBO) specialised for mixtures. Roeder et al. \cite{roeder2017sticking} further proposed a pathwise gradient extension to the SELBO objective that reduces gradient variance while preserving mode-seeking behaviour.
Overall, although mixture distributions can enhance expressiveness and modeling flexibility in variational inference, they often require careful handling of reparameterization, component weighting, and sampling schemes to ensure efficient and stable optimisation. Although with a mixture posterior approximator, traditional ELBO objective can still fail to capture the multi-modality.

Gaussian mixture models (GMMs), which represent a target density as a weighted sum of Gaussian components $q_{\boldsymbol{\theta}}(\mathbf{z}) = \sum_{i=1}^N w_i \cdot \mathcal{N}(\mathbf{z} | \boldsymbol{\mu}_i, \boldsymbol{\Sigma}_i)$, are among the most widely used mixture distributions due to its theoretical expressiveness and practical flexibility \cite{bishop_pattern_2006}. GMMs, also known as radial basis function (RBF) networks in approximation literature \cite{zeevi_density_1997}, are flexible and under mild regularity conditions, they can approximate any continuous density arbitrarily well in the $L_1$ or $L_2$ sense \cite{silverman1986density,li_mixture_1999,norets_approximation_2010}, given sufficient components. As a semi-parametric method, GMMs combine the structure of parametric models with the flexibility of non-parametric approaches \cite{genovese_rates_2000}. Finite mixtures of (multivariate) Gaussian distributions have broad utility in e.g. model-based clustering \cite{verbeek_mixture_2004,vempala_spectral_2004,maugis_non_2011,cathy_adaptive_2011,young_finite_2019} and image segmentation \cite{alfo_finite_2008,nguyen_extension_2010}. 

GMMs have been used as a flexible candidate family in VI. As VI turns an inference problem into optimisation \footnote{One therefore notes the interlinks among inference, optimisation and sampling.}, VI with GMMs essentially translates into estimating the GMM parameters $\boldsymbol{\theta} = \{w_i, \boldsymbol{\mu}_i, \boldsymbol{\Sigma}_i\}_{i=1}^N$. Methods for GMM parameter estimation include maximum likelihood (e.g. the EM algorithm \cite{dempster_maximum_1977, reynolds_gaussian_2009, segol_improved_2021}), maximizing a posteriori (MAP) \cite{reynolds_gaussian_2009}, classic Bayesian methods such as the Gibbs sampling \cite{rasmussen_infinite_1999,blei2015bayesian,tosh_lower_bounds,bouchard-cote_particle_nodate,murphy_machine_2012,kamper2013gibbs} which quickly focuses on one of the modes \cite{blei_variational_2017}, MCMC \cite{neal1993probabilistic}, and modern Bayesian methods such as mean-field variational inference (MFVI \cite{Jordan1999introduction}), automatic differentiation variation inference (ADVI \cite{kucukelbir_automatic_2015,kucukelbir_automatic_2016}) method, as well as diffusion-based sampling methods \cite{song_generative_2019,chen_sampling_2023,gatmiry_learning_2025}. MLE and MAP give point estimates while Bayesian methods yield distributions for parameters. MLE, for example, finds the model parameters which maximize the likelihood of the GMM given the training data; however, the likelihood function is a nonlinear function of the parameters $\boldsymbol{\theta}$, which disables direct maximization of this objective \cite{reynolds_gaussian_2009}. MFVI and ADVI, on the other hand, minimize the discrepancy (e.g. KL divergence or ELBO) between the target $p(\mathbf{z})$ and the candidate $q_{\boldsymbol{\theta}}(\mathbf{z})$. Although with different objectives, both methods, in general, employ first-order, gradient-based optimisation strategies to search for the optimal parameters. Sampling routines, e.g. MH \cite{metropolis_equation_1953}, HMC \cite{duane_hybrid_1987,homan_no-u-turn_nodate}, LMC \cite{Roberts1996}, or ParVI \cite{Liu2016SVGD}, generate samples which resemble the target shape (i.e. their stationary distribution converges to the target distribution), and the variation of samples gives uncertainty quantification.

Fitting a semi-parametric GMM to data was computationally challenging. Local search heuristics, for example, have weak performance guarantees \cite{dasgupta_learning_nodate}. MLE, one of the most useful methods for mixture density estimation \cite{li_mixture_1999}, was shown to be superior in terms of computational complexity and sampling properties \cite{day_estimating_1969}. Early work by Day \cite{day_estimating_1969} addressed estimating two-component GMMs with common covariance matrices using MLE, moment estimators, minimum $\chi^2$, and Bayes estimators. This was extended to multiple components and heterogeneous covariances for clustering and density estimation. MLE for GMMs is efficiently computed using the expectation-maximization (EM) algorithm \cite{dempster_maximum_1977}, which iteratively optimises \footnote{It does this by repeating two steps, i.e. the expectation (E) step and the maximization (M) step, until converge. Providing a GMM that fits the underlying structure of the data, this iterative process is guaranteed to find a local maximum of the log-likelihood function.} means, covariances, and mixing weights, converging to a local maximum of the log-likelihood \cite{jordan_hierarchical_1993, jordan_convergence_1995,balakrishnan_statistical_2017}. Efficiently estimating GMMs unlocks the potential of GMMs for a vast range of applications, notably in density estimation and clustering \cite{cathy_adaptive_2011}. 
In high-dimensional settings, GMM approximation remains theoretically valid, but practical challenges such as overfitting, computational cost, and the curse of dimensionality become more pronounced \cite{bishop_pattern_2006, scott1992multivariate}. Techniques such as regularization (penalisation) \cite{maugis_non_2011,cathy_adaptive_2011}, dimensionality reduction \cite{verbeek_mixture_2004}, and sparsity priors \cite{zhang_information-theoretic_2006,alfo_finite_2008,kruijer_adaptive_2010,tosh_lower_bounds,newman_fast_2025} have been proposed to mitigate these issues and improve density estimation performance in complex, high-dimensional spaces.

Barron and Cover \cite{barron_minimum_1991} proposed a minimum complexity density estimator based on minimum description length criterion developed by Kolmogorov \cite{kolmogorov_three_1968}, achieving near-optimal minimax rates (e.g. $\mathcal{O}\left( \log n / n \right)^{\frac{2r}{2r+1}}$ where $n$ is the number of data and $r$ is the degree of smoothness) for parametric and non-parametric models. Zeevi and Meir \cite{zeevi_density_1997} developed a convex combination of basis densities with MLE, providing error bounds decomposed into approximation and estimation errors. Dasgupta \cite{dasgupta_learning_nodate} introduced a provably correct algorithm for learning GMMs under separation conditions, projecting data to a low-dimensional subspace for clustering. Li and Barron \cite{li_mixture_1999} provided theoretical guarantees for GMM approximation and convergence, achieving log-likelihood bounds of $\mathcal{O}(1/N)$ where $N$ is the number of mixture components\footnote{In many literature, e.g., \cite{li_mixture_1999} and \cite{lindberg_estimating_2024}, the number of mixture components is denoted by $k$ or $K$; in this work, we used $K$ to denote the total number of gradient descent iterations, and $k$ denoting each iteration number. To avoid overloading of symbols, we use $N$ to denote the number of Gaussian components. In some literature, e.g. \cite{bubeck_convex_2015}, the number of iteration is denoted by $t$.}. Genovese and Wasserman \cite{genovese_rates_2000} analyzed GMM sieve estimators for a true density which is itself a GMM without finite support, deriving minimax rates in Hellinger distance for Gaussian mixture sieve estimators. Lindberg et al.\ \cite{lindberg_estimating_2024} develop a method for estimating parameters of GMMs using sparse polynomial moment systems, proving that a generic $N$-component GMM in $\mathbb{R}^n$ is identifiable from its first $3N+2$ moments and introducing a homotopy continuation algorithm that outperforms the EM algorithm in high-dimensional settings.

Finite mixtures of asymmetric distributions such as gamma distributions are flexible alternatives to classic Gaussian mixtures. Young et al. \cite{young_finite_2019} developed an expectation-conditional-maximization (ECM) algorithm for estimating mixtures-of-gamma distributions.

\paragraph{Convergence rates of GMM approximation and estimation} 
Many literature have been focused on the convergence analysis of GMM approximation. 
Norets \cite{norets_approximation_2010} established that finite mixtures of Gaussian regressions can approximate any conditional density with bounded support under weak regularity conditions, achieving convergence in KL distance as the number of mixture components increases, with bounds dependent on the partition fineness and standard deviation decay. An approximation error bound for a mixture model $\mathcal{M}_0$ with $N$ components approximating $F$ is derived \footnote{This formula can be found in \cite{norets_approximation_2010} Page 9, Corollary 2.1, Part (iii).} \cite{norets_approximation_2010}: $KL(F,\mathcal{M}_{0})\le c\cdot(\frac{1}{N})^{1/(d\cdot[2+1/(q-2)+\varepsilon])}$, where $d$ is the dimension, $\varepsilon > 0$ can be arbitrarily close to zero, constant $c$ doesn't rely on $N$, $q>2$ is related to the moment conditions of the target density (reflecting its smoothness and tail behaviour). This error bound is polynomial in the number of component $N$, i.e. the error bound decreases as a power of $N$.

Li and Barron analyzed the convergence rates of GMMs by framing the problem in terms of approximating a target density with a convex combination of components \cite{li_mixture_1999,li_estimation_1999}. They demonstrated that for a target density within the convex hull of the Gaussian family, the approximation error, measured by KL divergence, decreases at a rate of $\mathcal{O}(1/N)$, with $N$ being the number of components. For targets outside this class, the error is the best achievable approximation error plus a term of the same $1/N$ order. They also proposed a greedy estimation algorithm that iteratively adds one component at a time, showing its log-likelihood also converges to the optimal achievable log-likelihood with a difference of order $\mathcal{O}(1/N)$ \cite{li_mixture_1999}. Their risk analysis formalizes the trade-off, where the number of components $N$ balances the approximation error ($\sim 1/N$) and the estimation error ($\sim N \log(M)/M$, where $M$ is the total number of samples).

Genovese and Wasserman \cite{genovese_rates_2000} investigated the convergence rate of density estimation using a Gaussian mixture sieve, where the number of components $N$ increases with the sample size $M$. Assuming the true density is itself a Gaussian mixture without a compactly supported mixing measure, they established a convergence rate in Hellinger distance of order $(\log M)^{(1+\eta)/6}/M^{1/6}$ (with $\eta > 0$), when using $N \sim M^{2/3}/(\log M)^{1/3}$ components (note the rate relies on the number of samples $M$ while the the number of components $N$ is linked to $M$). They further showed that \cite{genovese_rates_2000} by using a robust sieve which includes a long-tailed component in the mixture, the approximation error can be improved to $\mathcal{O}(\log N / N)$, leading to an overall convergence rate of $(\log M / M)^{1/4}$ using $N \sim \sqrt{M/\log M}$. For non-compactly supported measures, the rates depend heavily on the tail behaviour of the true density. These results complement the independent work of Li and Barron \cite{li_mixture_1999} which also achieves similar rate in Kullback-Leibler distance (corresponding to the $(\log M / M)^{1/4}$ rate in Hellinger).

Classical approximation theory \cite{Devore2010constructive} shows that a compactly supported $\beta$-Hölder function can be approximated at rate $N^{-\beta}$ by a suitable linear combination of shifted kernels, provided $\sigma \sim 1/N$ and the weights are carefully chosen \cite{kruijer_adaptive_2010}.
Kruijer et al. \cite{kruijer_adaptive_2010} investigated approximating a probability density of any Hölder-smoothness by location-scale mixtures, and showed that, under local Hölder-smoothness and mild tail conditions, there exists a finite mixture with non-negative weights that approximates the target density in KL distance and $L^2$ distance at order\footnote{This formula can be found in \cite{kruijer_adaptive_2010} Page 5, Theorem 1.} $\mathcal{O}(\sigma^{2\beta})$, and hence, choosing $\sigma \sim N^{-1}$, an approximation error of order $N^{-2\beta}$. This result covers a broad class of kernels and applies to densities with exponential or polynomial tails.

Maugis and Michel \cite{cathy_adaptive_2011} proposed a penalized maximum likelihood estimator for GMMs which adapts to the smoothness of the target density, achieving optimal convergence rates, dependent on the smoothness parameter $\beta$, for $\beta$-Hölder smooth densities. It establishes a polynomial approximation rate \footnote{This rate can be found in \cite{cathy_adaptive_2011} Page 6, Theorem 2.} in terms of the component variance $\sigma$ rather than the number of components $N$: $KL(f,\rho_{\sigma})\le c_{\beta}\sigma^{2\beta}$, with the number of components $N< G_{\beta} \sigma^{-1}|ln~\sigma|^{\frac{3}{2}}$ required to achieve this, where $G_{\beta}$ is a positive constant depending on the smoothness $\beta$ parameter of $f$, $\sigma$ is the variance of each Gaussian component. We can infer a polynomial convergence rate with respect to $N$, as the number of components $N$ is roughly proportional to $\sigma^{-1}$ \cite{kruijer_adaptive_2010}, we can say $\sigma \approx N^{-1}$, which hints the polynomial rate $\mathcal{O}((N^{-1})^{2\beta}) = \mathcal{O}(N^{-2\beta})$.

Using Wasserstein distances, Nguyen \cite{nguyen_convergence_2013} studied convergence behaviour of latent mixing measures that arise in finite and infinite mixture models, and established the convergence rates of posterior distributions for latent mixing measures for both finite mixtures of multivariate distributions and infinite mixtures based on the Dirichlet process.
The posterior contraction rate\footnote{This rate can be found in \cite{nguyen_convergence_2013} Page 14, Theorem 5.} of the mixing measure under the $L_2$ Wasserstein metric behaves as $(\log n)^{1/2}n^{-1/4}$ for finite mixtures, where $n$ denotes the number of observed data points. For Dirichlet process mixtures, the rate depends on both the dimension $d$ of the parameter space and the regularity of the likelihood: for ordinary smooth likelihood functions with $\beta$ smoothness (e.g. for Laplace kernels), the convergence rate\footnote{This rate can be found in \cite{nguyen_convergence_2013} Page 17, Theorem 6.} is $\left( \log n/n \right)^{\frac{2}{((d+2)(4 + (2\beta + 1)d'))}}$ with exponent dependent on the smoothness parameter $\beta$, the dimension $d$ and a constant $d' > d$; for super-smooth likelihood functions (e.g. Gaussian densities), the rate becomes $(\log n)^{-1/\beta}$. These results show that the speed at which the posterior over the mixing measure concentrates around the true mixing measure is governed by the sample size $n$, the dimension $d$, and the smoothness $\beta$ of the likelihood, but is not directly related to the number of mixture components.

For GMMs estimation, the EM algorithm is efficient and widely used \cite{balakrishnan_statistical_2017}. 
Balakrishnan et al. \cite{balakrishnan_statistical_2017} analysed both the standard and first-order EM algorithms at both the population and finite sample levels, and developed a theoretical framework for quantifying when and how quickly EM-type iterates converge to a small neighborhood of a given global optimum of the population likelihood. Their analysis guarantees good behaviour of the EM and gradient EM algorithms when suitable initialisation is given \footnote{For example, one may assume the EM algorithm is initialized in a neighborhood of the true center \cite{zhao_statistical_2020}.}.
The local convergence rate of using EM and its variant (e.g. gradient EM) generally depends on the initialisation configuration, mixing coefficients, minimum $R_{\min}$ and maximum $R_{\max}$ pairwise distances between the true centers, dimensionality $d$ and number of components $N$ \cite{yan_convergence_2017,zhao_statistical_2020}.
Yan et al. \cite{yan_convergence_2017} established linear convergence of the gradient EM algorithm given certain minimum sample size.
Zhao et al. \cite{zhao_statistical_2020} studied the convergence behaviour of using EM algorithm to estimate GMMs with an arbitrary number of mixture components and mixing weights, and showed that, as long as the means of the components are separated by at least \footnote{$\Omega(\cdot)$ denotes asymptotic lower bound up to constant factors, i.e. at least on the order of ($\cdot$).} $\Omega(\sqrt{\min\{N, d\}})$, the EM algorithm converges locally to the global optimum of the log-likelihood, and the convergence rate is linear.
To effectively estimate GMM parameters, minimum separation between the Gaussian components and minimum sample size are required depending on the assumptions, method and applications. The minimum distance between the $N$ Gaussian centers is required to be e.g. $\Omega(\sqrt{(N \log d})^{1/4})$ in \cite{vempala_spectral_2004}, $\Omega(\sqrt{\min\{N, d\}})$ in \cite{zhao_statistical_2020}, and $\Omega(\sqrt{\log N})$ in \cite{segol_improved_2021}. The minimum sample size $S$ required typically relies on $N$, $d$, $R_{\min}$ and $R_{\max}$, e.g. $\frac{S}{\log S} \geq C \frac{N d R_{\max}^2}{\kappa^2 R_{\min}^2}$ from \cite{zhao_statistical_2020} using isotropic Gaussians (where $\kappa$ is the smallest mixing weight), $S \geq C \frac{N^6 R_{\max}^6 d}{R_{\min}^2}$ from \cite{yan_convergence_2017},  $\frac{S}{\log S} \propto \mathcal{O}(Nd R_{\max}^2/R_{\min}^2)$ from \cite{segol_improved_2021}, etc. The variations in the minimum sample size are due to differences in the concentration results that appear in their proofs \cite{zhao_statistical_2020}. A good summary of the separation distance and sample size requirements can be found in e.g. \cite{segol_improved_2021}.

\paragraph{Convergence rate of gradient-based optimisation} 
We consider the setting where a function $f(w)$ is optimised over $k=1,2,...,K$ iterations using first-order (gradient) information, aiming to minimize the absolute or expected gap $\epsilon$ between the current and minimum function values. For a convex objective function $f$, the convergence of stochastic gradient descent (SGD) depends on the step-size sequence $\{\eta_k\}$ where $\eta_k$ is the step size (learning rate) at iteration $k$. Robbins and Monro’s stochastic approximation theory \footnote{ Robbins and Monro provided the mathematical proof that an iterative process using noisy estimates of a gradient (e.g. sample average) can converge to a true minimizer under certain conditions on the step size (learning rate) $\eta$.} \cite{robbins_stochastic_1951} requires that the learning rates satisfy $\sum_{k=1}^\infty \eta_k = \infty$ but $\sum_{k=1}^\infty \eta_k^2<\infty$. When the objective is strongly convex, Bottou et al. \cite{bottou_optimization_2018} show that choosing a diminishing step-size $\eta_k \propto 1/k$, to overcome any oscillatory behaviour of SGD, yields an error decay rate \footnote{This bound relies on the variance of the stochastic gradients, the Lipschitz constant, the strong convexity parameter, and initialisation, etc, see Theorem 4.7, Page 28 in \cite{bottou_optimization_2018}.} $\mathcal{O}(1/k)$. When strong convexity is absent, the classical stochastic approximation can still attain an $\mathcal{O}(K^{-1/2})$ rate \cite{nemirovski_robust_2009} for general convex objectives, albeit worse than the rate $\mathcal{O}(1/k)$ as in the strongly convex case. Further, the rate $\mathcal{O}(K^{-1/2})$ is guaranteed for constant step size of the form $\eta=c/\sqrt{K}$ where $c>0$ is a constant.

In deterministic optimisation, the error is often measured by $\epsilon_k = f(w_k)-f(w^*)$ where $w^*$ is the minimiser. For smooth convex functions with Lipschitz-continuous gradients, the steepest-descent (standard gradient descent) method exhibits a sublinear
\footnote{Linear convergence: $f(x_k) - f^* \leq C \rho^k$, with $0 < \rho < 1$, which implies the error shrinks by a constant proportion at each step. 
Sublinear convergence: $f(x_k) - f^* \leq \frac{C}{k^p}$, with $p > 0$, which means error decays polynomially rather than exponentially.
In general, for vanilla gradient descent with step size $\eta$ applied to a convex, $\beta$-smooth function: 
if the function is strongly convex, the convergence is linear \cite{bottou_optimization_2018}, i.e. $\mathbb{E}[f(x_k) - f^*] = \mathcal{O}(\rho^k)$ for some $\rho < 1$;
if the function is convex but not strongly convex, the convergence is sublinear, i.e. $\mathbb{E}[f(x_k) - f^*] = \mathcal{O}(1/k)$ or $\mathcal{O}(1/\sqrt{k})$. 
Sublinear convergence implies early iterations make relatively large improvements, and progress slows as it gets closer to the optimum. More iterations are needed to achieve high-accuracy solutions compared to linear or superlinear methods.}
$\mathcal{O}(1/k)$ rate \cite{bottou_optimization_2018}, i.e. the distance to the optimum under gradient descent decays as $1/k$. This rate is provably non-optimal, no first-order method using only gradient information can converge faster than $\mathcal{O}(1/k^2)$ \cite{nesterov1983method}. Nesterov’s accelerated gradient method, which uses a momentum-type extrapolation step $\tilde{w}_k = w_k + \beta_k(w_k - w_{k-1})$ followed by a gradient update at $\tilde{w}_k$, achieves the optimal $\mathcal{O}(1/k^2)$ rate \cite{nesterov1983method,nesterov_lectures_2018,bottou_optimization_2018}, while standard gradient descent converges with a distance to the optimal value decaying with a rate $1/k$. This accelerated rate cannot be improved in general \cite{nesterov1983method,bottou_optimization_2018}, and in stochastic settings the acceleration can at best improve constants but not the $\mathcal{O}(1/k)$ rate of SGD.

For constrained convex problems $\min_{w\in C} f(w)$ with a closed convex set $C$, projected gradient descent (pGD) performs a gradient step \footnote{If the gradient does not exist, we can replace it by a subgradient. Strictly speaking, the projected (sub)gradient descent (pGD) method is not a descent method, as we don't necessarily have $f(x_{t+1}) \leq f(x_t)$ in each iteration \cite{bubeck_convex_2015}.} followed by an orthogonal projection onto $C$. Because the projection only enforces feasibility, pGD inherits the same complexity as unconstrained gradient methods. Bubeck \cite{bubeck_convex_2015} shows that, when $f$ is convex with Lipschitz-continuous gradients ($\beta$-smooth), pGD with a constant step size $\eta = 1/\beta$ achieves a sublinear decrease rate \footnote{This rate can be found in \cite{bubeck_convex_2015} Page 43, Theorem 3.7, and Page 43, Theorem 3.7.} $\mathcal{O}(1/k)$, with the constant depending on the smoothness $\beta$ and starting point; if the objective is only Lipschitz (possibly non-smooth), the projected subgradient method with an appropriate diminishing step size ($\eta \propto k^{-1/2}$) reduces the gap $f(w_k)-f(w^*)$ at the slower rate \footnote{This rate can be found in \cite{bubeck_convex_2015} Page 37, Theorem 3.2.} $\mathcal{O}(k^{-1/2})$. Finally, when $f$ is both $\alpha$-strongly convex and $\beta$-smooth, pGD with a fixed step size of $1/\beta$ enjoys a convergence rate \footnote{This rate can be found in \cite{bubeck_convex_2015} Page 51, Theorem 3.10.} $\mathcal{O}(\exp{-k/\kappa})$, i.e. the distance between the iterates and the minimiser contracts at an exponential rate determined by the condition number $\kappa=\beta/\alpha$ (multiplied by a factor dependent on initialisation); if the function $f$ is $\alpha$-strongly convex and $L$-Lipschitz, then pGD with a diminishing step size $\eta \propto 1/k$ yields the rate \footnote{This rate can be found in \cite{bubeck_convex_2015} Page 50, Theorem 3.9.} $\mathcal{O}(1/k)$.  These results show that pGD matches the known $\mathcal{O}(1/k)$ and $\mathcal{O}(1/\sqrt{k})$ rates of gradient descent and subgradient methods for smooth and non-smooth convex objectives, while retaining linear convergence in the strongly convex case.

\subsection{Sampling methods}
Sampling from a prescribed distribution $p(\mathbf{z})$, either fully or partially known, is not a easy task \cite{Bayesian_signal_processing_Joseph}. When the full PDF is known, methods such as inverse transform sampling \cite{Devroye1986}, rejection sampling \cite{vonNeumann1951}, importance sampling \cite{Kahn1949}, and Quasi-Monte Carlo methods \cite{Niederreiter1992} can be used to generate samples. Approximate inference methods provide a pathway to produce (quasi) samples, these include Markov chain Monte Carlo (MCMC) and variational inference (VI) approaches. MCMC methods such as Metropolis-Hastings (MH) sampling \cite{Metropolis1953}, Gibbs sampling \cite{Geman1984}, Langevin Monte Carlo (LMC) \cite{Roberts1996}, Hamiltonian Monte Carlo (HMC) \cite{Duane1987}, slice sampling \cite{Neal2003}, nested sampling \cite{skilling_nested_2004}, etc, are commonly used in sampling complex geometries. Variational inference (VI) methods such as mean-field VI \cite{Jordan1999introduction}, stochastic VI \cite{Hoffman2013}, black-box VI \cite{Ranganath2014}, variational autoencoders (VAEs) \cite{kingma_auto-encoding_2022}, etc, are widely used in many applications \cite{Jordan1999introduction,Roberts2002AR}. Particle-based VI (ParVI) methods, such as Stein variational gradient descent (SVGD) \cite{Liu2016SVGD}, Sequential Monte Carlo (SMC) \cite{Doucet2001}, particle-based energetic VI (EVI) \cite{Wang2021EVI}, electrostatics-based ParVI (EParVI \cite{huang2024EParVI}), material point method based ParVI (MPM-ParVI \cite{huang_variational_2024-1}), smoothed particle hydrodynamics based ParVI (SPH-ParVI \cite{huang_variational_2024}), are efficient and less affected by the curse of dimensionality. Some methods (e.g. slice sampling, Metropolis-Hastings) are gradient-free, while others, e.g. HMC, LMC and most VI methods, utilise the gradient information $\nabla_{\mathbf{z}} p(\mathbf{z})$ to guide sampling, accommodating intractable densities in Bayesian inference at higher computational cost. Deterministic methods \footnote{Physics-based ParVI methods such as EParVI, SPH-ParVI and MPM-ParVI are also deterministic.}, such as integrated nested Laplace approximations (INLA) \cite{Rue2009INLA}, offer fast inference for latent Gaussian models, achieving high accuracy with reduced computational cost compared to MCMC \footnote{INLA has proven particularly effective for models with structured additive predictors and latent Gaussian random fields, where traditional MCMC methods may be too slow or impractical.}.

\textit{Stratified sampling}, as a special Monte Carlo random sampling, reduces variance, compared to simple random sampling, via sub-region sampling. It divides the sample space into non-overlapping strata, samples from each (proportionally to the strata probability), and combines the weighted results, preserving unbiasedness while reducing variance by ensuring all regions are represented. For example, we want to estimate $\mathbb{E}[Y]$ for a real-valued random variable $Y$, we partition the sample space into $N$ disjoint strata $A_1, \dots, A_N$ such that $P\left( Y \in \bigcup_{i=1}^N A_i \right) = 1$. If we denote $p_i=P(Y\in A_i)$ as the weight of strata $i$, $\mu_i=\mathbb{E}[Y\mid Y\in A_i]$, $\sigma_i^2=\mathrm{Var}(Y\mid Y\in A_i)$ be the population mean and variance of strata $i$, and overall variance $\sigma^2$, it is shown that (see Appendix.\ref{app:simple_and_stratified_random_sampling}), $E(\bar{Y}) = \sum_{i=1}^N p_i \mu_i = \mu$ and $\mathrm{Var}(\bar{Y}) \approx \sum_{i=1}^N \frac{p_i^2 \sigma_i^2}{n_i}= \frac{1}{n}\sum_{i=1}^N \frac{p_i^2 \sigma_i^2}{\alpha_i}$ where $\alpha_i=\frac{n_i}{n}$, $n_i$ being the number of samples drawn from strata $i$, and $n=n_1+n_2+...+n_N$ is the totoal number of samples being drawn.
Compared to simple random sampling which draws $Y_1, \dots, Y_n$ i.i.d. from the sample space, stratified sampling pre-specifies the fraction of samples to draw from each $A_i$,
then generate them from the conditional distribution $p(Y \mid Y \in A_i)$.
This guarantees that each stratum is represented according to $p_i$, preserving unbiasedness while typically reducing variance \cite{glasserman_monte_2003}.

\paragraph{Convergence rate of Monte Carlo sampling}
To quantify the Monte Carlo sampling error, we can view finite sample approximation of a target density as an empirical measure obtained from $S$ \textit{i.i.d.} samples. Classical Monte Carlo analysis shows \cite{mcbook} that, if we estimate an expectation $\mu=\mathbb{E}[Y]$ by the average $\hat{\mu}_S = \frac{1}{S}\sum_{i=1}^S Y_i$, where $\{Y_i\}_{i=1}^S$ are $S$ \textit{i.i.d.} samples from a distribution with mean $\mu$ and variance $\sigma^{2}$, then its root-mean-squared error obeys $\sqrt{\mathbb{E}[(\hat{\mu}_S-\mu)^2]}=\sigma/\sqrt{S}$, where $\sigma^2$ is the variance of $Y$. Equivalently, the sample mean $\hat{\mu}_S$ converges to $\mu$ at an $\mathcal{O}(S^{-1/2})$ rate. This $S^{-1/2}$ rate is very slow but independent of dimensionality \cite{mcbook}. It balances accuracy and efficiency: improving precision by a factor of ten requires a hundred-fold increase in sample size $S$. A similar conclusion holds in the variance analysis of Monte Carlo integration \cite{Veach1997robust}: for an integrable function $f$, the variance of the estimator $F_S=\frac{1}{S}\sum_{i=1}^S f(X_i)$ scales as $\sigma[F_S] = \sigma[f(X_1)]/\sqrt{S}$, i.e. the standard deviation (RMS error) decays as $S^{-1/2}$. \textit{Chebyshev’s inequality} and the \textit{central limit theorem} imply \cite{Veach1997robust} probabilistic bounds on the absolute error $|F_S - \mathbb{E}[f(X_1)]|$, showing that for any fixed threshold $\epsilon$, the probability that the absolute error exceeds $\epsilon$ decreases at a rate of order $S^{-1/2}$.

Convergence of stratified sampling is $\mathcal{O}(S^{-1/2})$, which is observed from the variance $\mathrm{Var}(\bar{Y})$ of a stratified sampler. This asymptotic convergence rate is the same as standard Monte Carlo \footnote{Also same rate as simple random sampling, but typically with a smaller constant. See Appendix.\ref{app:simple_and_stratified_random_sampling}.}, but with a reduced variance constant (see Appendix.\ref{app:simple_and_stratified_random_sampling}). In particular, stratified sampling with Neyman optimal allocation minimizes variance by distributing samples in proportion to both the stratum probability and its standard deviation, i.e. $n_i \propto p_i\sigma_i$, yielding potentially substantial efficiency gains over naive random sampling.

These general results extend to density estimation: the empirical distribution obtained from $S$ \textit{i.i.d.} samples converges to the true density at the same $S^{-1/2}$ rate in integrated distances such as the $L^1$ or total-variation norm, because the mean of any integrable function under the empirical distribution differs from its true mean by $\mathcal{O}(S^{-1/2})$. Specifically, when approximating $p(\mathbf{z})$ by a Gaussian mixture with $N$ components and $M$ samples per component (total number of samples $S=NM$), the sampling error (in expectation) scales as $\mathcal{O}\big((NM)^{-1/2}\big)$.  This reflects the fundamental Monte Carlo convergence limit: variance reduction techniques or quasi-Monte Carlo sequences \cite{practicalqmc} may improve the constant but cannot improve the $\mathcal{O}(S^{-1/2})$ asymptotic rate.

\subsection{Sampling with GMMs}
Sampling from GMMs is pivotal in generative modeling \cite{bishop_pattern_2006} and clustering \cite{reynolds_gaussian_2009}, with diverse methods leveraging GMMs' multi-modal nature to generate samples. 
For a mixture distribution, it has the discrete categorical weights. To sample from a GMM, traditionally one would first choose a Gaussian component with probability proportionally to its weight (i.e. the assignment), and then draw samples from the chosen component. Morningstar et al. \cite{morningstar_automatic_2020} used the idea of \textit{stratified sampling} to sample GMMs, they draw one sample from each component individually
and compute a weighted average over the sample, producing a sample drawn from the GMM. This is justified by the principle often used in mixture-based variational inference that, the expectation under a mixture distribution is the weighted sum of expectations under the individual components \cite{morningstar_automatic_2020}:
\begin{equation} \label{eq:stratified_sampling_morningstar}
\mathbb{E}_{q(z)}[f(z)] = \int f(z) \left( \sum_{k=1}^K \alpha_k q_k(z) \right) dz
= \sum_{k=1}^K \alpha_k \int f(z) q_k(z) dz
= \sum_{k=1}^K \alpha_k \mathbb{E}_{q_k(z)}[f(z)]
\end{equation}
The expectation can then be conveniently done using the reparameterization trick, if the component distributions (e.g. Gaussians) are reparameterizable. 

Classic sampling methods, e.g.  MH \cite{metropolis_equation_1953}, HMC \cite{duane_hybrid_1987,homan_no-u-turn_nodate}, LMC \cite{Roberts1996} and ParVI \cite{Liu2016SVGD}, can be used to generate samples from GMMs. However, they need to evaluate the GMM density or its gradient, which can be expensive.
Variational inference (VI) methods provide an efficient tool for approximating GMMs, they can outperform traditional MCMC technique (e.g. HMC) even for small datasets \cite{kucukelbir_automatic_2015,blei_variational_2017}. MFVI \cite{jordan_introduction_1999,xing_generalized_2002,blei_variational_2017}, for example, simplifies complex dependencies among variables to enable efficient inference and learning. MFVI can be framed as a tractable alternative to exact or sampling-based inference in models such as Boltzmann machines or belief networks \cite{jordan_introduction_1999}. 
Xing et al. \cite{xing_generalized_2002} propose a generalized mean-field (GMF) algorithm for exponential family graphical models, extending the traditional MFVI by clustering variables into disjoint subsets, which allows for more structured approximations while preserving convergence guarantees and lower bounds on likelihood. These MFVI frameworks are applicable to GMMs.
Blei et al. \cite{blei_variational_2017} provide a detailed statistical exposition of MFVI with a specific worked example on Bayesian Gaussian mixture models. They demonstrated how the posterior over mixture component means and assignments can be approximated by fully factorized variational distributions, and explained how coordinate ascent can be used to optimise the evidence lower bound (ELBO). This work emphasizes practical implementation, scalability, and the accuracy-efficiency tradeoff compared to MCMC. 

Automatic differentiation variational inference (ADVI \cite{kucukelbir_automatic_2016,kucukelbir_automatic_2015}) provides another efficient tool for learning probabilistic models, particularly for fitting large data. For GMMs, ADVI approximates the posterior distribution of the GMM parameters $p(\boldsymbol{\theta} | \boldsymbol{z})$ using a variational posterior $q(\boldsymbol{\theta})$ (typically a mean-field Gaussian in the transformed real coordinate space), which minimizes the Kullback-Leibler (KL) divergence or maximizes the evidence lower bound (ELBO). ADVI automatically derives the efficient variational inference algorithm, and doesn't require conjugacy. It does not directly sample from the individual GMM components $\mathcal{N}(\mu_k, \sigma_k)$ as the component parameters are unknown; instead, it samples from the variational posterior $q(\boldsymbol{\theta})$. It uses Monte Carlo gradient estimates and mini-batch sub-sampling, which can introduce high ELBO variance, particularly in high-dimensional settings with many components or high-dimensional data. ADVI is well integrated into modern probabilistic programming languages such as Stan \cite{carpenter_stan_2017} and PyMC \cite{salvatier_probabilistic_2015} for convenient use. However, like traditional VI methods, ADVI are forced to be unimodal in order to facilitate use of the reparameterization trick \footnote{VI methods are restricted to 'reparameterizable' distributions for which such a transformation from the base distribution to the target exists. The choice of posterior in ADVI thus is limited \cite{morningstar_automatic_2020}.} \cite{morningstar_automatic_2020}.

Diffusion models, particularly score-based generative models (SGMs \cite{song_generative_2019,song_maximum_2021,song_score-based_2021,huang_classification_2022}), have gained attention as a powerful approach for sampling from complex distributions such as GMMs, especially when mixture components are not well separated. These models learn the score function $\nabla_{\mathbf{z}} \log p(\mathbf{z})$ from data, allowing them to sample from multi-modal and high-dimensional distributions without relying on strong separation assumptions \cite{chen_sampling_2023,gatmiry_learning_2025}. Once the score function is learned, samples can be generated using either Langevin dynamics \cite{song_generative_2019} or stochastic differential equations (SDEs) \cite{song_score-based_2021}, both of which exploit the score estimates to guide the sampling process. Under certain regularity conditions, such as bounded log-Sobolev constants, SGMs can sample with polynomial complexity \cite{chen_sampling_2023}. However, learning an accurate score function from data is itself a computationally intensive task and remains a key challenge in practice. Despite this, diffusion-based sampling provides a robust and flexible alternative to traditional methods for sampling GMMs, particularly for overlapping modes or high dimensions.

The GMA method proposed in this work integrates the sampling and optimisation procedures - they are done under the same hood. We first assign Gaussian components, then sample each component, and use these fixed samples in the optimisation procedure to evaluate the KL divergence objective, finally we re-sample these samples as per the optimised component weights.
Perhaps the closest methodologies to our method are the \textit{boosting variational inference} (BVI \cite{guo_boosting_2017}) method, the \textit{Gaussian mixture ADVI} (GM-ADVI \cite{morningstar_automatic_2020}) method, and the \textit{spline ADVI} (S-ADVI \cite{shao_nonparametric_2024}) method.
Guo et al. \cite{guo_boosting_2017} proposed a Laplacian gradient boosting variational inference (BVI) method which, inspired by the idea of gradient boosting, incrementally constructs a posterior approximation as a mixture of parametric distributions (e.g. Gaussians). Unlike MFVI, BVI can flexibly capture multi-modality, general covariance, and complex posterior shapes by adding mixture components iteratively to minimize the KL divergence from the true posterior. BVI starts with single base Gaussian distribution and incrementally adds new base Gaussian components to it in a greedy manner, i.e. in each iteration it searches for a new Gaussian component which most steeply (approximately) decreases the KL divergence between the GMM and the target, which involves a two stage search: optimising weights using SGD (a convex optimisation problem) and searching for the optimal component using functional gradient descent (non-convex). This iterative, boosting or greedy optimisation procedure is guaranteed \cite{zhang_sequential_2003} to converge at rate $\mathcal{O}(1/k)$. To find the new, optimal weight and base distribution, BVI alternately optimises both: the new base distribution is sought via functional gradient boosting \footnote{Seeking optimal $\boldsymbol{\mu}, \boldsymbol{\Sigma}$ for a new Gaussian basis distribution is a non-convex optimisation problem. \cite{guo_boosting_2017} optimises a variational objective, i.e. the Taylor expansion of the KL divergence at fixed weights, following its functional gradient - essentially the expected log gap between the GMM and the unnormalised target (the \textit{residual log-density}), plus a regularisation term preventing degeneracy. See Eq.(17) on Page 8 in \cite{guo_boosting_2017}.}, which additively perturbs a current solution to minimize the exclusive KL divergence objective; optimal weights are found by optimising the convex, exclusive KL divergence objective given the chosen distribution. During weights optimisation, to satisfy the \textit{Robbins-Monro conditions} \cite{robbins_stochastic_1951}, the step size is chosen to be $c/k$.

BVI aims to find a flexible GMM which expressively approximates the Bayesian posterior, as conventional VI candidate family, e.g. Gaussian, is limited by their approximation power e.g. enforcing uni-mode when approximating multi-modal densities \footnote{For example, Laplace’s approximation analytically approximates the posterior distribution by fitting a Gaussian centered at the MAP estimate, with its precision given by the observed Fisher information at that point. Depending on the fitting objective (e.g. forward or inverse KL divergence), there are mode-seeking and mode-averaging behaviours.}. The candidate GMM is obtained by incrementally adding new Gaussian components to it with minimum KL divergence criteria (this iterative approach for constructing GMM was also used by Li \& Barron \cite{li_mixture_1999}). However, there are major issues with BVI: first, high computational cost involved. Alternately optimise the weights $\{w_i\}$ and Gaussian component parameters (means $\boldsymbol{\mu}_i$ and variances $\Sigma_i$) is computationally expensive: the weight optimisation step is a convex optimisation problem, and can be efficiently solved by SGD; optimizing the means $\boldsymbol{\mu}_i$ and variances $\Sigma_i$, however, is a non-convex optimisation problem, and heuristic, approximate optimisation approach which involves computing the Hessian of the log residual-density for Laplace approximation, using numerical approximation such as finite difference, is adopted in \cite{guo_boosting_2017}. 
When augmenting the GMM, BVI iteratively samples from the previous GMM, evaluates the residual log-density at these sample positions, finds the optimal mean and variance for the new Gaussian component, fix the new component and optimise weights, and repeat this alternating optimisation procedure till convergence. In each iteration, BVI has to draw samples from the previous GMM as the GMM is dynamically evolving, which is expensive, and these samples are discarded after each iteration. Also, optimising the approximate, Taylor expanded KL divergence objective with heuristics (local Laplace approximation) may require extra efforts (e.g. manually stabilising the tails of the log residual-density) and more iterations to converge. Moreover, The computational cost increases polynomially with the dimension. There are shortcuts for Hessian, e.g. using a diagonal approximation, which reduces the cost to linear dimension dependence, however, it sacrifices for slow convergence. 
Second, BVI has some hyper-parameters such as the regularisation parameter $\lambda$ and the initial base Gaussian to be chosen.
Therefore, while BVI offers a flexible mixture-based approximation, it also poses implementation and interpretation challenges \cite{kobyzev_normalizing_2021,shao_nonparametric_2024}. Theoretical properties of BVI such as convergence were studied by Locatello et al. \cite{locatello_boosting_2018}.

Morningstar et al. \cite{morningstar_automatic_2020} proposed the GM-ADVI method, which approximates a posterior distribution using a GMM. To train this GMM, they use a stratified ELBO (SELBO) objective. The ELBO, and by extension SELBO, is equivalent to maximizing a lower bound on the log-likelihood of the data and can be expressed as:
\begin{equation} \label{eq:SELBO}
    \mathcal{L}_{\text{SELBO}} (\theta) := \sum_{i=1}^{N} w_{i} \cdot \mathbb{E}_{\mathbf{z}_i \sim q_{i,\theta}(\mathbf{z})} \left[ \log \left( \frac{\bar{p}(\mathbf{z})}{q_{i,\theta}(\mathbf{z})} \right) \right]
\end{equation}
which allows fitting a mixture posterior using ADVI. 
However, because maximizing the ELBO is equivalent to minimizing the exclusive KL divergence, it exhibits mode-seeking behavior, and low-density regions between modes can block exploration. To avoid this component collapse and encourage exploration, Morningstar et al. applied the importance weighted estimate of the evidence (the importance weighted autoencoder \textit{IWAE}, originally proposed by Burda et al. \cite{burda_importance_2016}), which weight the unlikely samples less and produces higher-variance posterior to better cover the space, to the case of mixture posteriors, and derived the stratified-IWAE (SIWAE) objective:
\begin{equation} \label{eq:SIWAE}
\mathcal{L}_{\text{SIWAE}} (\theta) := 
\mathbb{E}_{\{z_{i,j} \sim q_{i,\theta}(\mathbf{z})\}_{i=1,j=1}^{N,M}} 
\left[ 
\log \left( \frac{1}{M} \sum_{j=1}^{M} \sum_{i=1}^{N} 
w_i \cdot \frac{\bar{p}(\mathbf{z}_{i,j})}
{q_{\theta}(\mathbf{z}_{i,j})} 
\right) 
\right]
\end{equation}
$\mathcal{L}_{\text{SIWAE}}$ is a tighter lower bound on the evidence than the IWAE and SELBO objectives when the number of components $N>1$. Importance sampling with the SIWAE objective allows the learned posterior to have higher variance, which helps it better cover the space and capture multiple modes. Both SELBO and SIWAE are designed to be simple augmentations of existing variational inference code \cite{morningstar_automatic_2020}.
Once the optimization is complete, they use the learned GMM to generate final samples. This is done by randomly choosing a Gaussian component based on its learned weight (i.e. choosing a component index), then drawing a single sample from the chosen Gaussian component, and repeating this process to generate the desired number of samples. 
One may note that, our exclusive KL divergence objective Eq.\ref{eq:KL_divergence_objective4} is the same as the negative SELBO in Eq.\ref{eq:SELBO}. Therefore, following the (projected) gradient descent (\ref{eq:KL_divergence_objective4_gradient_GMM1_MC_cc}) dynamics, GMA sampling can also lead to mode collapse, meaning that any accessible, largest discrepancy mode will be fitted, if high density regions are separated by low-density regions and initial Gaussian components do not fully cover all modes.

S-ADVI \cite{shao_nonparametric_2024} approximates a posterior using a mixture of spline basis functions, whose parameters are estimated using stochastic backpropagation \cite{graves_stochastic_2016} with a different objective (IWAE) to ours. To generate samples from the estimated spline mixture, it first uses Metropolis-Hastings (MH \cite{metropolis_equation_1953}) to sample each spline density, then randomly picks a subset samples and builds a concrete distribution based on the optimised weights, which approximates the target. By this, it gets around the two-stage sampling procedure as it is not straightforward in their case.

These GMM sampling and variational inference methods collectively offer trade-offs between computational efficiency, flexibility, and sample quality, enabling a wide spectrum of applications, from Bayesian inference to tasks such as image classification and reconstruction \cite{shao_nonparametric_2024}. 
Our GMA method differs itself from these 3 methods in that, 
first, as a sampling method, GMA features a two-stage sampling scheme with a focus on efficiently generating samples from a GMM which approximates the target, balancing sampling efficiency and sample quality (accuracy) is of priority in our design. To achieve this, we only optimise the GMM weights for fixed components, and only sample each component once. These samples are re-used during the optimisation process to evaluate the KL divergence between the current GMM configuration and the target. As a result, the one-time Gaussian sampling (Step 3 in Algo.\ref{algo:WGMA-sampling-pgd}) is linear in the number of samples $NM$ and has quadratic dependence on the dimension $d$; the stratified sampling cost at the end is linearly dependent on the number of samples $NM$. BVI, GM-ADVI, and S-ADVI all focus on approximating a Bayesian posterior following the VI paradigm, samples generated from GMM is only used to evaluate their objectives for the purpose of optimisation; they are discarded and not re-used (e.g. BVI samples the GMM in each iteration, S-ADVI uses MH to generate samples from each spline, GM-ADVI draws samples from each Gaussian component in each iteration).
Second, our optimisation objective is different from, and simpler than, that of BVI, GM-ADVI, and S-ADVI. We optimise the exclusive KL divergence (\ref{eq:KL_divergence_objective4_cc}) between fixed Gaussian components and target, evaluated at pre-sampled sample positions, to obtain optimal weights, while BVI optimises a regularised, exclusive KL divergence between a current GMM configuration and a new GMM with added component (similar to the iterative GMM optimisation method used in \cite{li_mixture_1999}), GM-ADVI optimises the SIWAE objective which is a tighter bound than both IWAE and the traditional ELBO to find the optimal GMM approximation, S-ADVI optimises the IWAE objective, a tighter log-likelihood lower bound than ELBO, to shape the spline approximation. The IWAE objective \cite{burda_importance_2016} is similar but different to our weights optimisation objective. To apply first-order opsitmisation method (e.g. projected gradient descent in our case), one needs to compute the gradient. Computing the gradient of our objective in each optimisation iteration involves only calculating the expected log gap between current configuration and target (\ref{eq:KL_divergence_objective4_gradient_GMM1_cc}), which can be conveniently approximated by an MC gradient estimator (\ref{eq:KL_divergence_objective4_gradient_GMM1_MC_cc}) evaluated at fixed sample positions. The whole optimisation procedure in our case is simpler than those in the other three methods (e.g. BVI issues an alternating optimisation scheme which involves computing Hessians of the log gap via numerical approximation). 
Third, we address that fitting a GMM to a target PDF is a constrained optimisation problem, for which projected gradient descent is used. The constrain $0 \leq w_i \leq 1$ is principally enforced in our optimisation procedure without inducing much extra cost, while other methods use heuristics to enforce this (e.g. BVI enforces summation of weights to unity without non-negativity guarantee).

While WGMA is simple and efficient, it has several limitations:
(i) Initialization sensitivity. Naive “uniform” placement of centers (e.g. linear grids or interpolants) is infeasible in high dimensions due to the curse of dimensionality: the number of components required to cover a hyper-rectangle grows exponentially with the dimension $d$. A more informed strategy is to place centers near high-score or high-curvature regions of the target, e.g. using MAP points and local precision from
$\nabla_{\mathbf{z}}\log p(\mathbf{z})$ and $-\nabla_{\mathbf{z}}^2\log p(\mathbf{z})$ (Laplace-style). Practical alternatives include: running a few short optimizations from random starts to harvest candidate modes; $k$-means on a small pilot sample (Appendix.\ref{sec:constructing_LMA_from_empirical_samples}); or low-discrepancy designs (e.g. Sobol sequence).
(ii) Limited expressiveness with fixed components.
With means/covariances fixed, WGMA can only reweight the existing components; if centers miss important regions or shapes are badly mis-specified, the best achievable fit under $\mathrm{KL}(q_{\mathbf{w}} \| p)$ may be biased (mode under-coverage or overspread mass). Good, anisotropic initialization mitigates this; further remedies include tempering, entropy regularization, component splitting, or a brief EM search to adjust locations/shapes.
(iii) High-dimensional targets.
In large $d$, distance concentration and heterogeneous scales make isotropic proposals brittle and exacerbate (i)-(ii). Although mixture-sieve theory provides rates for density estimation as $N \to \infty$, practical accuracy for complex, multi-modal posteriors may reply on e.g. multi-stage refinement.
LMA addresses (i) by placing components at multiple modes with curvature-informed (often anisotropic) covariances. EM-GMA relaxes (ii) by optimizing means and covariances jointly with weights, improving coverage when the initial bank is imperfect (the sample bank is refreshed in each iteration, see Appendix.\ref{app:em_GMA}). In practice, a Laplace-initialised start followed by weight-only WGMA, and, if needed, a brief EM polish, balances robustness and cost.

\section{Discussion}\label{sec:discussion}

\subsection{Geometry-aware preconditioning and constraints}
Preconditioning (e.g. Adam second moments or weight-decay scales) effectively whitens the parameter space: steep directions are down-scaled and flat directions up-scaled. This makes Gaussian proposals closer to isotropic in transformed coordinates, stabilizes density evaluations, and lowers MC-gradient variance under a fixed sample bank, improving convergence at negligible overhead. Box constraints (e.g. positivity of scale parameters in hierarchical BLR) are enforced naturally in the projected-gradient or mirror-descent update on the simplex.

\subsection{Placing and learning components}
\textbf{Laplace-mixture initialization.} Beyond random/uniform placements or mode perturbations, seeding components around one or two MAP modes with covariance from the observed Fisher (Laplace approximation) injects curvature information and reduces both overspreading and premature collapse. LMA can thus serve as an initialier for WGMA.

\noindent\textbf{EM refinement.} After a weight-only fit, a few EM steps can jointly adjust means, covariances, and weights to correct residual misalignment; this improves fit but adds extra cost, so we keep it optional to preserve simplicity. Our main pipeline WGMA (Algo.\ref{algo:WGMA-sampling-pgd}) optimizes weights with fixed components.
\smallskip

\noindent\textbf{Weight initialisation.} Uniform $w^{(0)}_i=1/N$ is a stable default on the simplex. Random simplex draws add diversity but may slow early progress or induce premature sparsity. We therefore use uniform weights in Step 2 of Algo.\ref{algo:WGMA-sampling-pgd}.

\subsection{Choosing \texorpdfstring{$(N,M,K)$}{(N,M,K)} and step-size schedules}
In GMA, user-defined hyperparameters include: number of components $N$, samples per component $M$, iterations $K$, centres $\{\mu_i\}$, covariances $\{\Sigma_i\}$, and a learning-rate schedule $\{\eta_k\}$.

\noindent\textbf{Learning rates $\eta_k$.} Although $\eta_k=\eta/k$ satisfies Robbins-Monro, in practice it can decay too fast for noisy, non-strictly-convex objectives and freeze weights early. Robust choices are
\[
\eta_k=\frac{\eta}{k+k_0}\ (k_0 \in [50,200]),\qquad
\eta_k=\frac{\eta}{\sqrt{k+k_0}},\qquad
\text{or a two-phase constant }\eta\ \to\ \text{decay}
\]
Mirror descent (multiplicative weights)
$\tilde w_i^{(k)} \propto w_i^{(k-1)} \exp(-\eta_k g_i^{(k)})$, $w^{(k)}=\tilde w^{(k)}/\|\tilde w^{(k)}\|_1$
removes Euclidean projection and is typically smoother. Practically, Polyak averaging smoothes oscillations.
\smallskip

\noindent\textbf{Number of components $N$.} Greedy constructions add components by minimizing $KL(q_{\theta} \| p)$ \cite{li_mixture_1999,guo_boosting_2017}; penalized-likelihood criteria with slope heuristics offer data-driven $N$ \cite{maugis_non_2011,maugis2011data_driven}. Bayesian nonparametrics, e.g. \textit{Dirichlet Process Mixture Model} (DPMM), let the effective number of components adapt to data \cite{gorur_dirichlet_2010,rinaldi_hdpgmm_2022}, while finite-mixture priors yield consistent $N$ under conditions \cite{miller_mixture_2018,ohn_optimal_2023}.

\noindent\textbf{Number of samples per component $M$.} For fixed $N$, increasing $M$ lowers MC-gradient variance (\ref{eq:KL_divergence_objective4_gradient_GMM1_MC_cc}) and often stabilizes optimisation more than increasing $N$. From a density-estimation viewpoint, allowing $N$ to grow with $n$, where $n$ is the dataset size that defines the posterior $p(\theta\mid x_{1:n})$, yields mixture-sieve rates under Hellinger loss \cite{genovese_rates_2000}.
\smallskip

\noindent\textbf{Iterations $K$.} Convergence can require nontrivial $K$, especially with noisy gradients; it also relies on the learning rate used.

\subsection{Preventing mode collapse}
\textbf{Small variances induce mode collapse.}
With fixed components $q_i$ and weights $\mathbf{w}$, the mixture is $q_{\mathbf{w}}(z)=\sum_i w_i q_i(z)$ and the objective is
$\mathrm{KL}(q_{\mathbf{w}} \| p)=\int q_{\mathbf{w}}(z) [\log q_{\mathbf{w}}(z)-\log p(z)] \mathrm{d}z$.
The weight gradient satisfies (\ref{eq:KL_divergence_objective4_gradient_GMM1_cc})
\begin{align*}
g_i
&:= \partial_{w_i} \mathrm{KL}(q_{\mathbf{w}} \| p)
= \int q_i(z) \big[\log q_{\mathbf{w}}(z)-\log p(z)+1\big] \mathrm{d}z \\
&= \mathbb{E}_{z\sim q_i}\!\big[\log q_{\mathbf{w}}(z)-\log p(z)\big] + \text{const}
\end{align*}
where the additive constant is identical across $i$ and cancels in multiplicative updates.
For isotropic Gaussians $q_i(z)=\mathcal N(z;\mu_i,\sigma^2 I)$,
\[
\log q_i(z)=-\tfrac{d}{2}\log(2\pi\sigma^2)-\tfrac{\|z-\mu_i\|^2}{2\sigma^2},
\quad
\Delta_{ab}(z):=\log q_a(z)-\log q_b(z)
= -\frac{\|z-\mu_a\|^2-\|z-\mu_b\|^2}{2\sigma^2}
\]
when $\sigma^2$ is tiny, even small distance gaps yield huge log-likelihood contrasts; consequently $g_a-g_b$ becomes large.
With multiplicative (mirror descent) updates
$w_i^{(k+1)}\propto w_i^{(k)}\exp\{-\eta  g_i^{(k)}\}$, the ratio evolves as
\[
\frac{w_a^{(k+1)}}{w_b^{(k+1)}}
=\frac{w_a^{(k)}}{w_b^{(k)}} \exp\!\big\{-\eta [g_a^{(k)}-g_b^{(k)}]\big\}
\]
so large gradient gaps drive near-binary weights in a few steps (collapse). Intuitively, overly sharp components make mass concentrated on a single narrow Gaussian and under-covering the posterior (biased, over-confident forecasts).
\textit{Practical fixes:} enlarge or anneal $\sigma^2$; temper the target early ($\beta<1$); add an entropy penalty on $\mathbf{w}$; use anisotropic (curvature-aware) covariances; and prefer increasing samples per component $M$ over shrinking the step size excessively.

\subsection{Objective choice and diagnostics}

\noindent\textbf{Inclusive \textit{vs} exclusive KL.} Minimizing the exclusive $KL(q \| p)$ is mode-seeking and may under-cover isolated modes. Approximations to the inclusive $KL(p \| q)$ are mode-covering but require sampling from $p$. In the weight-only setting (i.e. WGMA) we target the exclusive KL and mitigate under-coverage via tempering, entropy regularization (Section.\ref{subsec:stabilizers}), and LMA initialization; inclusive objectives are used in EM-GMA (Section.\ref{subsec:em_GMA}).

\noindent\textbf{Indicators to monitor.}
\textit{(a) Weight entropy.} $H(\mathbf{w})=-\sum_{i=1}^N w_i\log w_i$ detects degeneracy: $H(\mathbf{w})\approx\log N$ indicates a well-spread (uniform) bank; $H(\mathbf{w})\ll \tfrac{1}{2}\log N$ flags collapse (most mass on a few components).
\textit{(b) Top-$k$ mass.} $M_k=\sum_{i=1}^k w_{(i)}$ with $w_{(1)}\ge\cdots\ge w_{(N)}$ tracks concentration; a rapid rise signals over-concentration even if $H(\mathbf{w})$ looks acceptable.
\textit{(c) Iterative movement.} The step-to-step $\ell_1$ change $\Delta_k=\|\mathbf{w}^{(k)}-\mathbf{w}^{(k-1)}\|_1$ is a robust convergence proxy. Target $\Delta_k\in[10^{-3},10^{-2}]$ early (healthy progress); declare practical stationarity once $\Delta_k<10^{-4}$ for a stable window.

\subsection{Memory efficient implementation}
The naive code that materialized $N$ full $d\times d$ covariances (e.g. $\sim 500\times 6000\times 6000$) is infeasible. A safe implementation, as used in our experiments, never builds dense covariances and computes log-pdfs by dot products under isotropic or structured (diag/low-rank) forms. Largest arrays are the flattened sample matrix $(NM)\times d$ and the log-pdf matrix $(NM)\times N$; both fit in typical high-RAM sessions. 

\subsection{Extensions in high dimensions}
High-dimensional GMM learning has statistical query (SQ) lower bounds \cite{diakonikolas_statistical_2017}, so some hardness is inherent. Practically, we mitigate it via target-informed placements (LMA initialiser; anisotropic covariances), variance-aware schedules (tempering, annealing, entropy), and the multi-stage coarse$\to$refine runs (Section.\ref{sec:refined_GMA}). A Bayesian alternative places priors on means, covariances, and weights; hierarchical or non-conjugate priors with MCMC are viable but more costly \cite{gorur_dirichlet_2010,rinaldi_hdpgmm_2022}. Choosing covariance priors in high dimensions is delicate \cite{jing_variance_2024}.

\section{Conclusion}\label{sec:conclusion}

We developed the Gaussian Mixture Approximation (GMA) method to approximate an unnormalised target $\bar p(z)$ with a mixture $q_{\theta}(z)=\sum_{i=1}^N w_i \mathcal N(z;\mu_i,\Sigma_i)$ and optimise a chosen subset of the mixture parameters $\theta=(\mathbf{w},\{\mu_i,\Sigma_i\}_{i=1}^N)$ by minimizing the reverse KL on a \textit{bank} of samples drawn from the current components. The optimisation is VI-like in spirit, i.e. fitting within the GMM family, but the KL is evaluated at fixed sample locations from the bank, decoupling expensive re-drawing from the inner loop and keeping gradients stable. After optimisation, we form the final particles by \textit{stratified resampling} from the bank according to the learned weights; this is a sampling step that yields an empirical ensemble for posterior prediction and uncertainty quantification. Stratified resampling reduces variance relative to simple Monte Carlo by enforcing per-component allocation; see Appendix.\ref{app:simple_and_stratified_random_sampling}.

Within this blueprint, \textit{weights-only GMA (WGMA)} fixes the component locations and shapes $\{(\mu_i,\Sigma_i)\}_{i=1}^N$ and optimises only the mixture weights $\mathbf{w}\in\Delta^{N-1}$ using projected or mirror descent on the reverse KL estimated on the fixed bank. Because means and covariances are not updated, the inner loop is extremely lightweight and scales well to large models or structured parameter subsets (e.g. head-only updates in LLMs). After convergence, stratified resampling converts the fitted mixture into particles that directly support posterior summaries.

To improve the placement and anisotropy of components from the outset, we use a \textit{Laplace Mixture Approximation (LMA)} to initialise GMA near high-density regions. Components are seeded at one or more posterior modes with covariances informed by local curvature (often anisotropic, from the Hessian or Fisher information), which provides strong initial coverage and mitigates overspreading or premature collapse. \textit{It computes them in closed form from the Laplace formula at each mode}: means/covariances come from curvature, and weights come from Laplace evidence contributions (Appendix.\ref{app:LMA}). LMA provides a competent standalone approximation using Laplace-calibrated weights and curvature-informed covariances around each mode; it can also serves as a robust initializer for WGMA - a short WGMA pass can then reweight the fixed Laplace components for additional accuracy at negligible cost. LMA is useful in multi-modal or heterogeneous-scale settings.

When weights-only optimisation is too restrictive, \textit{EM-GMA} relaxes this constraint and \textit{dynamically optimises} means, covariances, and weights. A population-EM procedure computes responsibilities on a fixed or periodically refreshed bank (e.g. via self-normalised importance sampling under the current mixture) and performs M-step updates of $(\mathbf{w},\{\mu_i,\Sigma_i\})$. Periodic bank refresh realigns sample support with the evolving mixture, improves mass coverage when the initial bank is imperfect, and reduces bias from stale proposals. In practice, EM-GMA is most effective when warm-started from LMA, followed by a brief EM polish that adjusts locations and shapes before handing back to a fast weights-only refinement if desired.

All variants benefit from a common set of stabilisers and implementation choices. Geometry-aware preconditioning (e.g. using second-moment or Fisher-scaled coordinates) improves conditioning and lowers gradient variance; tempering and entropy regularisation prevent early weight collapse; Polyak tail iterate-averaging stabilises noisy gradients without biasing early exploration. For WGMA, mirror-descent updates on the simplex avoid explicit projection; for EM-GMA, covariance regularisation and eigenvalue floors ensure numerically stable M-steps. Memory-safe logpdf computations using isotropic, diagonal, or low-rank structures avoid dense $d\times d$ materialisation and make the approach practical at scale. Together, these elements make GMA a cohesive framework that trades off speed and coverage via WGMA, LMA, and EM-GMA while preserving a unified optimise-then-resample workflow.

Across a comprehensive suite of 1D and 2D multi-modal geometries, i.e. connected and isolated tri-modal bumps, four-modal Gaussians, moon, double-banana, wave, and star-shaped densities, and Neal’s funnel, WGMA matched or exceeded the accuracy of established baselines, at substantially lower time cost. We compared against 7 competitors: Metropolis-Hastings (MH), Hamiltonian Monte Carlo (HMC), Langevin Monte Carlo (LMC), Stein Variational Gradient Descent (SVGD), Mean-Field ADVI (MFVI-ADVI), and Gaussian-Mixture ADVI (GM-ADVI), in addition to our GMA methods. With appropriate hyperparameter tuning, GMA reliably captured all modes in the 1D/2D benchmarks; on the funnel and the star-shaped density it matched gradient-based samplers and outperformed variational baselines under the same sample budget.

On real-world applications, WGMA produced calibrated posteriors for \textit{Bayesian logistic regression} and \textit{hierarchical BLR} (Radon), and yielded accurate, low-overhead ensembles for \textit{Bayesian language models} when restricted to feasible parameter subsets (e.g. head-only), once bank scores were precomputed. WGMA also delivered strong fits in \textit{hierarchical Bayesian symbolic regression} for pendulum discovery and in a \textit{Bayesian LSTM} for mortality forecasting. For dynamical systems, WGMA handled the \textit{Lotka-Volterra} model effectively, while \textit{LMA} enabled efficient \textit{SIR} inference and powered \textit{Bayesian optimal experimental design} for logistic dose-response via an LMA prior. Finally, \textit{sensor-network localization} benefited from mass-covering mixtures produced by \textit{EM-GMA}, which were effective both for direct sampling and as warm starts for fast WGMA refinement in higher-dimensional settings.

On the theoretical side, we formalized this optimize-then-resample blueprint: the learned mixture \(q_{\mathbf{w}^\star}\) inherits the universal approximation power of GMMs, while the stratified resampling stage provides a law-of-large-numbers estimator of expectations under \(q_{\mathbf{w}^\star}\). An error decomposition clarifies how mixture approximation error, optimization error, and sampling error contribute to downstream risk, and guides practical diagnostics. 

\textbf{Limitations and outlook.} High-dimensional Gaussian-mixture learning is intrinsically hard without additional structure; poor bank placement or overly sharp components can induce mode under-coverage and weight collapse in WGMA. LMA and EM-GMA mitigate these issues at additional cost by learning locations and shapes. A practical recipe is to initialize with LMA, run WGMA for fast weight fitting, and optionally apply a brief EM-GMA polish when coverage matters most. Overall, GMA offers a flexible bridge between MCMC and VI, combining the simplicity and speed of variational optimization, and provides a practical solution for posterior inference, model comparison, and decision-making.

\section*{Code Availability}
All codes used in this work is available at: \url{https://github.com/YongchaoHuang/GMA}

\newpage
\bibliographystyle{plain}
\bibliography{references}

\newpage
\appendix

\section{Gradient of exclusive KL divergence} \label{app:gradient_of_exclusive_KL_divergence}

\subsection{$q_{\mathbf{w}}(\mathbf{z})$ and un-normalised $p(\mathbf{z}) = \frac{\bar{p}(\mathbf{z})}{Z_p}$}
We follow \cite{vieira2014kldivergence} to derive the gradient of the KL divergence $\nabla_{\mathbf{w}} KL(q_{\mathbf{w}}(\mathbf{z}) \| p(\mathbf{z}))$.

\begin{align*}
KL(q_{\mathbf{w}}(\mathbf{z}) \| p(\mathbf{z})) 
&= \sum_{\mathbf{z}} q_{\mathbf{w}}(\mathbf{z}) \log \left( \frac{q_{\mathbf{w}}(\mathbf{z})}{p(\mathbf{z})} \right) \\
&= \sum_{\mathbf{z}} q_{\mathbf{w}}(\mathbf{z}) [\log q_{\mathbf{w}}(\mathbf{z}) - \log p(\mathbf{z})] \\
&= \sum_{\mathbf{z}} q_{\mathbf{w}}(\mathbf{z}) \log q_{\mathbf{w}}(\mathbf{z}) - \sum_{\mathbf{z}} q_{\mathbf{w}}(\mathbf{z}) \log p(\mathbf{z})
\end{align*}
where the first term is entropy of $q_{\mathbf{w}}$, and the second term is the cross-entropy between $q_{\mathbf{w}}$ and $p(\mathbf{z})$.

Let's look at the second term. If we have $p(\mathbf{z}) = \frac{\bar{p}(\mathbf{z})}{Z_p}$, we can further derive: 
\begin{align*}
\sum_{\mathbf{z}} q_{\mathbf{w}}(\mathbf{z}) \log p(\mathbf{z}) 
&= \sum_{\mathbf{z}} q_{\mathbf{w}}(\mathbf{z}) \log \left( \frac{\bar{p}(\mathbf{z})}{Z_p} \right) \\
&= \sum_{\mathbf{z}} q_{\mathbf{w}}(\mathbf{z}) [\log \bar{p}(\mathbf{z}) - \log Z_p ] \\
&= \sum_{\mathbf{z}} q_{\mathbf{w}}(\mathbf{z}) \log \bar{p}(\mathbf{z}) - \sum_{\mathbf{z}} q_{\mathbf{w}}(\mathbf{z}) \log Z_p \\
&= \sum_{\mathbf{z}} q_{\mathbf{w}}(\mathbf{z}) \log \bar{p}(\mathbf{z}) - \log Z_p
\end{align*}

Therefore, the KL divergence objective becomes:

\begin{equation} \label{eq:KL_divergence_objective1}
KL(q_{\mathbf{w}}(\mathbf{z}) \| p(\mathbf{z})) 
= \sum_{\mathbf{z}} q_{\mathbf{w}}(\mathbf{z}) \log q_{\mathbf{w}}(\mathbf{z}) - \sum_{\mathbf{z}} q_{\mathbf{w}}(\mathbf{z}) \log \bar{p}(\mathbf{z}) + \log Z_p
\end{equation}

Since $\log Z_p$ is an additive constant, and we can drop it because we're optimizing KL divergence w.r.t $\mathbf{w}$, which gives:

\begin{equation} \label{eq:KL_divergence_objective2}
    \mathbf{w}^*
    = \arg\min_{\mathbf{w}} KL(q_{\mathbf{w}}(\mathbf{z}) \| p(\mathbf{z})) 
    = \arg\min_{\mathbf{w}} \left[ \sum_{\mathbf{z}} q_{\mathbf{w}}(\mathbf{z}) \log q_{\mathbf{w}}(\mathbf{z}) - \sum_{\mathbf{z}} q_{\mathbf{w}}(\mathbf{z}) \log \bar{p}(\mathbf{z}) \right]
\end{equation}

\paragraph{Gradient} The gradient of the objective in Eq.\ref{eq:KL_divergence_objective2} w.r.t $\mathbf{w}$ is:

\begin{equation} \label{eq:KL_divergence_objective2_gradient}
\begin{aligned}
\nabla_{\mathbf{w}} KL(q_{\mathbf{w}}(\mathbf{z}) \| p(\mathbf{z}))
& = \nabla_{\mathbf{w}} \left[ \sum_{\mathbf{z}} q_{\mathbf{w}}(\mathbf{z}) \log q_{\mathbf{w}}(\mathbf{z}) - \sum_{\mathbf{z}} q_{\mathbf{w}}(\mathbf{z}) \log \bar{p}(\mathbf{z}) \right] \\
&= \sum_{\mathbf{z}} \nabla_{\mathbf{w}} \left[ q_{\mathbf{w}}(\mathbf{z}) \log q_{\mathbf{w}}(\mathbf{z}) \right] - \sum_{\mathbf{z}} \nabla_{\mathbf{w}} \left[ q_{\mathbf{w}}(\mathbf{z}) \log \bar{p}(\mathbf{z}) \right] \\
&= \sum_{\mathbf{z}} \nabla_{\mathbf{w}} q_{\mathbf{w}}(\mathbf{z}) (1 + \log q_{\mathbf{w}}(\mathbf{z})) - \sum_{\mathbf{z}} \nabla_{\mathbf{w}} q_{\mathbf{w}}(\mathbf{z}) \log \bar{p}(\mathbf{z}) \\
&= \sum_{\mathbf{z}} \nabla_{\mathbf{w}} q_{\mathbf{w}}(\mathbf{z}) [1 + \log q_{\mathbf{w}}(\mathbf{z}) - \log \bar{p}(\mathbf{z})] \\
&= \mathbf{1} + \sum_{\mathbf{z}} \nabla_{\mathbf{w}} q_{\mathbf{w}}(\mathbf{z}) [\log q_{\mathbf{w}}(\mathbf{z}) - \log \bar{p}(\mathbf{z}) ]
\end{aligned}
\end{equation}
The first term in the second last line \footnote{Note, one may mistakenly derive $\sum_{\mathbf{z}} \nabla q_{\mathbf{w}}(\mathbf{z}) = \nabla \left[ \sum_{\mathbf{z}} q_{\mathbf{w}}(\mathbf{z}) \right] = \nabla [1] = 0$ for any $q$ which is a probability distribution. However, we cannot apply this logic here because this is a constrained optimisation problem.}: $\sum_{\mathbf{z}} \nabla_{\mathbf{w}} q_{\mathbf{w}}(\mathbf{z}) = \nabla_{\mathbf{w}} \sum_{\mathbf{z}}  q_{\mathbf{w}}(\mathbf{z}) = \nabla_{\mathbf{w}} (\sum_{i-1}^N w_i) = [1,1,...,1]^{\top}$.

\subsection{Un-normalised $q_{\mathbf{w}} (\mathbf{z}) = \frac{\bar{q}_{\mathbf{w}} (\mathbf{z})}{Z_q}$ and $p(\mathbf{z}) = \frac{\bar{p}(\mathbf{z})}{Z_p}$}

If we have un-normalised $\bar{q}_{\mathbf{w}} (\mathbf{z})$, plugging $q_{\mathbf{w}} (\mathbf{z})$, i.e. $q_{\mathbf{w}} (\mathbf{z}) = \frac{1}{Z_q} \bar{q}_{\mathbf{w}} (\mathbf{z})$ into the definition of the sample-based KL divergence \footnote{One can also directly plug  $q_{\mathbf{w}} (\mathbf{z})$, i.e. $q_{\mathbf{w}} (\mathbf{z}) = \frac{1}{Z_q} \bar{q}_{\mathbf{w}} (\mathbf{z})$ into Eq.\ref{eq:KL_divergence_objective2} as a shortcut.}, obtaining:

\begin{align*}
    KL(q_{\mathbf{w}}(\mathbf{z}) \| p(\mathbf{z})) 
&= \sum_{\mathbf{z}} q_{\mathbf{w}}(\mathbf{z}) \log \left( \frac{q_{\mathbf{w}}(\mathbf{z})}{p(\mathbf{z})} \right) \\
&= \sum_{\mathbf{z}} \frac{\bar{q}_{\mathbf{w}}(\mathbf{z})}{Z_q} \log \left( \frac{\bar{q}_{\mathbf{w}}(\mathbf{z}) / Z_q}{p(\mathbf{z})} \right) \\
&= \frac{1}{Z_q} \sum_{\mathbf{z}} \bar{q}_{\mathbf{w}}(\mathbf{z}) \left[ \log \bar{q}_{\mathbf{w}}(\mathbf{z}) - \log Z_q - \log p(\mathbf{z}) \right] \\
&= \frac{1}{Z_q} \left[ \sum_{\mathbf{z}} \bar{q}_{\mathbf{w}}(\mathbf{z}) \log \bar{q}_{\mathbf{w}}(\mathbf{z}) - \sum_{\mathbf{z}} \bar{q}_{\mathbf{w}}(\mathbf{z}) \log Z_q - \sum_{\mathbf{z}} \bar{q}_{\mathbf{w}}(\mathbf{z}) \log p(\mathbf{z}) \right]
\end{align*}

The second term is:
\[
\sum_{\mathbf{z}} \bar{q}_{\mathbf{w}}(\mathbf{z}) \log Z_q = \log Z_q \cdot \int \bar{q}_{\mathbf{w}}(\mathbf{z})   d\mathbf{z} = \log Z_q
\]
so:
\[
KL = \frac{1}{Z_q} \left[ \sum_{\mathbf{z}} \bar{q}_{\mathbf{w}}(\mathbf{z}) \log \bar{q}_{\mathbf{w}}(\mathbf{z}) - \log Z_q - \sum_{\mathbf{z}} \bar{q}_{\mathbf{w}}(\mathbf{z}) \log p(\mathbf{z}) \right]
\]
Further, we have $p(\mathbf{z}) = \frac{\bar{p}(\mathbf{z})}{Z_p}$, which gives:
\begin{equation} \label{eq:KL_divergence_objective3} \tag{\ref{eq:KL_divergence_objective1}b}
\begin{aligned}
    KL &= \frac{1}{Z_q} \left[ \sum_{\mathbf{z}} \bar{q}_{\mathbf{w}}(\mathbf{z}) \log \bar{q}_{\mathbf{w}}(\mathbf{z}) - \log Z_q - \sum_{\mathbf{z}} \bar{q}_{\mathbf{w}}(\mathbf{z}) \log \frac{\bar{p}(\mathbf{z})}{Z_p} \right] \\
    &= \frac{1}{Z_q} \left[ \sum_{\mathbf{z}} \bar{q}_{\mathbf{w}}(\mathbf{z}) \log \bar{q}_{\mathbf{w}}(\mathbf{z}) - \sum_{\mathbf{z}} \bar{q}_{\mathbf{w}}(\mathbf{z}) \log \bar{p}(\mathbf{z}) + \log Z_p - \log Z_q \right] \\
\end{aligned}
\end{equation}

Since $\log Z_p$ in independent of the weights $\mathbf{w}$, it can be dropped during from the optiomisation objective; $\log Z_q = \int \bar{q}_{\mathbf{w}}(\mathbf{z}) d \mathbf{z}$, however, relies on $\mathbf{w}$, which cannot be dropped. We therefore arrive at:

\begin{equation} \label{eq:KL_divergence_objective4_1}
    \mathbf{w}^*
    = \arg\min_{\mathbf{w}} KL(q_{\mathbf{w}}(\mathbf{z}) \| p(\mathbf{z})) 
    = \arg\min_{\mathbf{w}} \frac{1}{Z_q} \left[  \sum_{\mathbf{z}} \bar{q}_{\mathbf{w}}(\mathbf{z}) \log \bar{q}_{\mathbf{w}}(\mathbf{z}) - \sum_{\mathbf{z}} \bar{q}_{\mathbf{w}}(\mathbf{z}) \log \bar{p}(\mathbf{z}) - \log Z_q \right] 
\end{equation}
which is in general not a convex objective \footnote{This non-convexity is why finding the global minimum is challenging, and algorithms often find a local minimum instead.} in $\mathbf{w}$.

\subsection{GMMs with $Z_q=\sum_{i=1}^N w_i = 1$}
For GMMs, specifically, we have $Z_q=\int \sum_{i=1}^N w_i \cdot \mathcal{N}(\mathbf{z};\boldsymbol{\mu}_i,\Sigma_i) d\mathbf{z} = \sum_{i=1}^N w_i \int \mathcal{N}(\mathbf{z};\boldsymbol{\mu}_i,\Sigma_i) d\mathbf{z} = \sum_{i=1}^N w_i = 1$. As a result, we have $\bar{q}_{\mathbf{w}}(\mathbf{z})=q_{\mathbf{w}}(\mathbf{z})=\sum_{i=1}^N w_i \cdot \mathcal{N}(\mathbf{z};\boldsymbol{\mu}_i,\Sigma_i)$, which simplifies Eq.\ref{eq:KL_divergence_objective4_1} to:

\begin{equation} \label{eq:KL_divergence_objective4} \tag{\ref{eq:KL_divergence_objective2}b}
    \arg\min_{\mathbf{w}} KL(q_{\mathbf{w}}(\mathbf{z}) \| p(\mathbf{z})) 
    = \arg\min_{\mathbf{w}} \left[ \sum_{\mathbf{z}} q_{\mathbf{w}}(\mathbf{z}) \log q_{\mathbf{w}}(\mathbf{z}) - \sum_{\mathbf{z}} q_{\mathbf{w}}(\mathbf{z}) \log \bar{p}(\mathbf{z}) \right] 
\end{equation}
This objective now becomes \textbf{convex} in $\mathbf{w}$ because: the first term (negative entropy of $q_{\mathbf{w}}(\mathbf{z})$) is a summation (or integral) of a convex function\footnote{$q_{\mathbf{w}}(\mathbf{z}) \log q_{\mathbf{w}}(\mathbf{z})$ is convex because $x \log x$ is convex and $q_{\mathbf{w}}(\mathbf{z}) = \sum_{i=1}^N w_i \cdot \mathcal{N}(\mathbf{z};\boldsymbol{\mu}_i,\Sigma_i)$ is linear in $\mathbf{w}$, the composition of a convex function with a linear function is convex.} $q_{\mathbf{w}}(\mathbf{z}) \log q_{\mathbf{w}}(\mathbf{z})$, which is again convex; the second term is a linear function of $\mathbf{w}$. Combing both yields a convex objective. Jointly optimising $\boldsymbol{\theta} = \{w_i, \boldsymbol{\mu}_i, \boldsymbol{\Sigma}_i\}_{i=1}^N$ with the KL divergence objective is non-convex and difficult in general \cite{guo_boosting_2017}; however, choosing the optimal weight alone is a convex optimisation problem \cite{guo_boosting_2017}.

\paragraph{Gradient} 
Similar to Eq.\ref{eq:KL_divergence_objective2_gradient}, the gradient of the KL divergence objective in Eq.\ref{eq:KL_divergence_objective4} \textit{w.r.t.} $\mathbf{w}$ is:

\begin{equation} \label{eq:KL_divergence_objective4_gradient}
\begin{aligned}
\nabla_{\mathbf{w}} KL(q_{\mathbf{w}}(\mathbf{z}) \| p(\mathbf{z}))
& = \nabla_{\mathbf{w}} \left[  \sum_{\mathbf{z}} q_{\mathbf{w}}(\mathbf{z}) \log q_{\mathbf{w}}(\mathbf{z}) - \sum_{\mathbf{z}} q_{\mathbf{w}}(\mathbf{z}) \log \bar{p}(\mathbf{z}) \right] \\
&= \sum_{\mathbf{z}} \nabla_{\mathbf{w}} \left[ q_{\mathbf{w}}(\mathbf{z}) \log q_{\mathbf{w}}(\mathbf{z}) \right] - \sum_{\mathbf{z}} \nabla_{\mathbf{w}} \left[ q_{\mathbf{w}}(\mathbf{z}) \log \bar{p}(\mathbf{z}) \right] \\
&= \sum_{\mathbf{z}} \nabla_{\mathbf{w}} q_{\mathbf{w}}(\mathbf{z}) (1 + \log q_{\mathbf{w}}(\mathbf{z})) - \sum_{\mathbf{z}} \nabla_{\mathbf{w}} q_{\mathbf{w}}(\mathbf{z}) \log \bar{p}(\mathbf{z}) \\
&= \sum_{\mathbf{z}} \nabla_{\mathbf{w}} q_{\mathbf{w}}(\mathbf{z}) [1 + \log q_{\mathbf{w}}(\mathbf{z}) - \log \bar{p}(\mathbf{z})] \\
&= \mathbf{1} + \sum_{\mathbf{z}} \nabla_{\mathbf{w}} q_{\mathbf{w}}(\mathbf{z}) [\log q_{\mathbf{w}}(\mathbf{z}) - \log \bar{p}(\mathbf{z}) ]
\end{aligned}
\end{equation}
Again, the unity term cannot be eliminated from the last second line as it is constrained optimsiation
Eq.\ref{eq:KL_divergence_objective4_gradient} shows that, the gradient of the discrepancy between the target $p(\mathbf{z})$ and the approximator $q_{\mathbf{w}}(\mathbf{z})$, in the case of GMMs, is the sum of the weighted gap at all sample positions plus unity. It is thus an averaged direction which, as one can imagine, can be affected by noise or outlier samples.

We further develop Eq.\ref{eq:KL_divergence_objective4_gradient}. 
As $q_{\mathbf{w}}(\mathbf{z}) = \sum_{i=1}^N w_i \cdot \mathcal{N}(\mathbf{z}; \boldsymbol{\mu}_i, \Sigma_i)$, we have:
\[
   \nabla_{w_i} q_{\mathbf{w}}(\mathbf{z}) = \frac{\partial q_{\mathbf{w}}(\mathbf{z})}{\partial w_j} = \mathcal{N}(\mathbf{z}; \boldsymbol{\mu}_j, \Sigma_j)
\]
as only the $j$-th term is relevant when differentiation w.r.t. $w_j$. Therefore, 

\[
\nabla_{\mathbf{w}} q_{\mathbf{w}}(\mathbf{z}) = 
\begin{bmatrix}
\mathcal{N}(\mathbf{z}; \boldsymbol{\mu}_1, \Sigma_1) \\
\mathcal{N}(\mathbf{z}; \boldsymbol{\mu}_2, \Sigma_2) \\
\vdots \\
\mathcal{N}(\mathbf{z}; \boldsymbol{\mu}_N, \Sigma_N)
\end{bmatrix}
\]

The gradient of the KL divergence objective in Eq.\ref{eq:KL_divergence_objective4_gradient} then becomes:
\begin{equation} \label{eq:KL_divergence_objective4_gradient_GMM0}
\nabla_{\mathbf{w}} \mathrm{KL}(q_{\mathbf{w}}(\mathbf{z}) \| p(\mathbf{z}))
    = \mathbf{1} + \sum_{\mathbf{z}} 
    \left( 
    \begin{bmatrix}
    \mathcal{N}(\mathbf{z}; \boldsymbol{\mu}_1, \Sigma_1) \\
    \vdots \\
    \mathcal{N}(\mathbf{z}; \boldsymbol{\mu}_N, \Sigma_N)
    \end{bmatrix}
    \cdot 
    [\log q_{\mathbf{w}}(\mathbf{z}) - \log \bar{p}(\mathbf{z})]
    \right)
\end{equation}
The gradient $\nabla_{\mathbf{w}} \mathrm{KL}(q_{\mathbf{w}}(\mathbf{z}) \| p(\mathbf{z}))$ is a vector of dimension $N$ (of the same length as $\mathbf{w}$). Each element of the vector corresponds to $\nabla_{w_i} KL$ (in continuous form):
\begin{equation} \tag{\ref{eq:KL_divergence_objective4_gradient_GMM0}b} \label{eq:KL_divergence_objective4_gradient_GMM0b}
    \nabla_{w_i} KL = 1 + \int \mathcal{N}(\mathbf{z}; \boldsymbol{\mu}_i, \Sigma_i) \left[ \log q_{\mathbf{w}}(\mathbf{z}) - \log \bar{p}(\mathbf{z}) \right] d\mathbf{z} = 1 + \mathbb{E}_{\mathbf{z} \sim \mathcal{N}_i} \left[ \log q_{\mathbf{w}}(\mathbf{z}) - \log \bar{p}(\mathbf{z}) \right]
\end{equation}
which can be estimated using $M$ samples $\{\mathbf{s}_{i,j}\}$ drawn from $\mathcal{N}_i$:
\begin{equation} \label{eq:KL_divergence_objective4_gradient_GMM1}
\begin{aligned}
    \nabla_{w_i} KL 
    &\approx 1 + \sum_{j=1}^M \mathcal{N}(\mathbf{s}_{i,j}; \boldsymbol{\mu}_i, \Sigma_i) [\log q_{\mathbf{w}}(\mathbf{s}_{i,j}) - \log \bar{p}(\mathbf{s}_{i,j})] \\
    &= 1 + \mathbb{E}_{\mathbf{s} \sim \mathcal{N}_{\boldsymbol{\mu}_i,\Sigma_i}} [ \log \frac{q_{\mathbf{w}}(\mathbf{s})}{\bar{p}(\mathbf{s})} ]
\end{aligned}
\end{equation}
where $q_{\mathbf{w}}(\mathbf{z}=\mathbf{s}_{i,j}) = \sum_{i=1}^N w_i \cdot \mathcal{N}(\mathbf{s}_{i,j}; \boldsymbol{\mu}_i, \Sigma_i)$. 

We can use a Monte Carlo estimator for the second term, which avoids multiplying the normal PDFs $\mathcal{N}(\mathbf{s}_{i,j}; \boldsymbol{\mu}_i, \Sigma_i)$:
\begin{equation} \label{eq:KL_divergence_objective4_gradient_GMM1_MC}
    \nabla_{w_i} KL \approx 1 + \frac{1}{M} \sum_{j=1}^M \left[ \log q_{\mathbf{w}}(\mathbf{s}_{i,j}) - \log \bar{p}(\mathbf{s}_{i,j}) \right]    
\end{equation}
with $\mathbf{s}_{i,j} \sim \mathcal{N} (\mathbf{z};\boldsymbol{\mu}_i, \Sigma_i)$. 

Finally, the $i$-th weight $w_i$ can be updated as per projected gradient descent (pGD):
\begin{equation}
\begin{aligned}
    v_i^{(k)} &= w_i^{(k-1)} - \eta \cdot \left[ \nabla_{\mathbf{w}} KL \right]_i^{(k-1)} \\
    w_i^{(k)} &= \text{Proj}_{\Delta}(v_i^{(k)})     \\
\end{aligned}
\end{equation}
where $\Delta$ is the probability simplex $\Delta = \{ \mathbf{w} \in \mathbb{R}^N \mid w_i \ge 0 \text{ and } \sum_{i=1}^N w_i = 1 \}$.

We can check the quality of the approximation via \footnote{Note, the heuristic $Z_q \approx \tfrac{1}{NM}\sum_{i,j} q_{\mathbf{w}}(\mathbf{s}_{i,j}) \approx 1$ is not a valid normalisation check unless $s_{i,j} \sim  q_{\mathbf{w}}$.} $Z_q \approx \frac{1}{N M} \sum_{i=1}^N \sum_{j=1}^M q_{\mathbf{w}}(\mathbf{s}_{i,j}) \approx 1$. This condition can be used to fine tune $N$ and $M$ in practical implementation.

Note the second term in Eq.\ref{eq:KL_divergence_objective4_gradient_GMM1} represents the expected discrepancy between the approximating GMM and the unnormalised target \footnote{It is termed the \textit{residual log-density} in \cite{guo_boosting_2017}.}. 
Therefore, the intuition being, this gradient direction drives weights towards regions that helps fix large gaps \cite{guo_boosting_2017}.
If we have $q_{\mathbf{w}}(\mathbf{z}) \propto \bar{p}(\mathbf{z})$ satisfied everywhere (similar to importance sampling), then we would have equal, constant $\nabla_{w_i} KL$ for all weights, which meets the Karush-Kuhn-Tucker (KKT) optimality conditions for a constrained optimisation problem \footnote{The reasoning behind this KKT optimality condition comes from the geometry of constrained optimisation: when all the gradient components are equal, a proper constrained optimisation method (e.g. projected gradient descent) will make no further changes to the weights, because any update would either be zero or move the solution out of the feasible set in a way that the projection step cancels out - when all gradient components are equal, the update step was entirely orthogonal to the feasible set, the projection step cancels it out completely, and the weights do not change.}: \textit{at the optimal point, the gradients for all weights that are greater than zero (i.e. $w_i>0$) must be equal to the same constant value, and the gradients for weights that are zero ($w_i=0$) can be larger}.

\section{Robbins-Monro step-size conditions}  \label{app:Robbins_Monro}
The stochastic approximation framework of Robbins and Monro \cite{robbins1951stochastic} provides conditions under which iterative updates with noisy gradients converge to a stationary point. In particular, if the sequence of learning rates $\{\eta_k\}_{k\geq 1}$ satisfies
\[
\sum_{k=1}^\infty \eta_k = \infty
\qquad \text{and} \qquad
\sum_{k=1}^\infty \eta_k^2 < \infty
\]
then the updates converge almost surely to a solution of the underlying fixed-point equation, provided the noise variance is bounded.

The learning rate schedule $\eta_k = \eta_0/k$ satisfies these conditions. Indeed, the harmonic series diverges:
\[
\sum_{k=1}^\infty \eta_k = \eta_0 \sum_{k=1}^\infty \frac{1}{k} = \infty
\]
while the series of squared step sizes converges:
\[
\sum_{k=1}^\infty \eta_k^2 = \eta_0^2 \sum_{k=1}^\infty \frac{1}{k^2} 
= \eta_0^2  \frac{\pi^2}{6} < \infty
\]
Therefore, the Robbins-Monro conditions are fulfilled. Consequently, when using $\eta_k = \tfrac{\eta_0}{k}$ in the projected gradient updates of the mixture weights, the stochastic approximation converges to a local minimum of the $\mathrm{KL}(q_{\mathbf{w}}\Vert p)$ objective, ensuring stability and asymptotic consistency of the algorithm.

\textit{Remark.} other diminishing step-size schedules are sometimes employed, e.g. $\eta_k=\tfrac{\eta_0}{\sqrt{k}}$. However, in this case $\sum_k \eta_k^2 = \infty$, which violates the Robbins-Monro conditions and may cause oscillatory behaviour. On the other hand, schedules with faster decay, such as $\eta_k=\tfrac{\eta_0}{k^{1+\epsilon}}$ for $\epsilon>0$, ensure square-summability but also yield $\sum_k \eta_k < \infty$, which halts progress prematurely. The choice $\eta_k=\tfrac{\eta_0}{k}$ is therefore considered the canonical balance: it decays slowly enough to guarantee continual exploration (divergent sum), yet fast enough to control noise accumulation (convergent squared sum).

Fig.\ref{fig:robbins-munro-schedules} compares three common diminishing learning-rate schedules: (i) the \textit{harmonic decay} $\eta_k=\eta_0/k$, (ii) the \textit{square-root decay} $\eta_k=\eta_0/\sqrt{k}$, and (iii) a \textit{superlinear decay} $\eta_k=\eta_0/k^{1.1}$. The harmonic schedule is the unique case that satisfies both Robbins-Monro conditions, striking the right balance between persistent exploration and noise control. In contrast, the square-root decay decreases too slowly, leading to $\sum_k \eta_k^2 = \infty$ and potential oscillations in weights evolution, while the superlinear decay decreases too quickly, leading to $\sum_k \eta_k < \infty$ and premature stagnation. This illustrates why the harmonic choice $\eta_k=\eta_0/k$ is widely regarded as the canonical schedule for stochastic approximation algorithms.

\begin{figure}[H]
    \centering
    \includegraphics[width=0.65\linewidth]{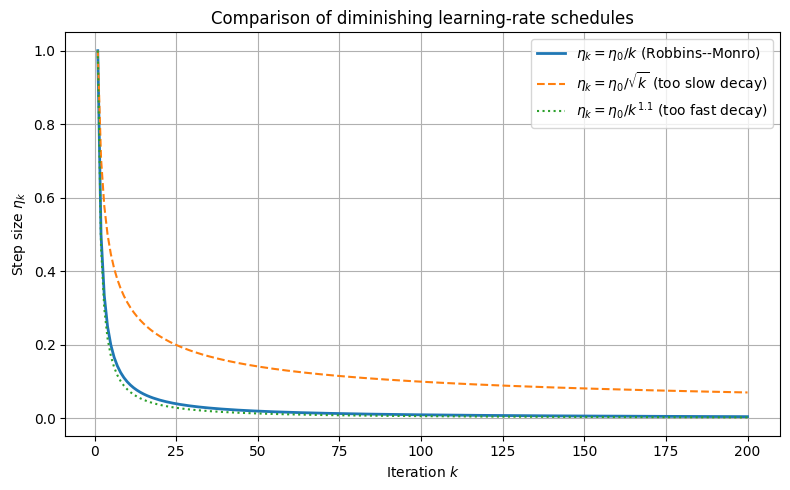}
    \caption{Comparison of diminishing learning-rate schedules. Only the harmonic schedule $\eta_k=\eta_0/k$ satisfies both Robbins-Monro conditions, ensuring convergence while controlling variance.}
    \label{fig:robbins-munro-schedules}
\end{figure}

\section{Computation of distributional distances} \label{app:distributional_distance_metrics}
Here we briefly introduce the 5 distributional discrepancy measures \cite{nguyen_convergence_2013} used in our experiments: the Kullback-Leibler (KL) divergence, 1D Wasserstein distance, Kolmogorov-Smirnov (KS) statistic, maximum mean discrepancy (MMD), and total variation (TV) distance.

\paragraph{KL divergence} 
The Kullback-Leibler (KL) divergence \cite{kullback_information_1951} measures the information lost when one distribution is used to approximate another. For two sets of samples, $\hat{q}_1$ and $\hat{q}_2$, this first estimate their respective probability mass functions (PMFs), generally using histograms over a shared set of bins. Let $\hat{p}_i$ be the proportion of samples from $\hat{q}_1$ and $\hat{q}_i$ be the proportion from $\hat{q}_2$ that fall into the $i$-th bin, the KL divergence is the sum of the log differences in probabilities, weighted by the probabilities of the first distribution \footnote{The KL divergence is not a true distance metric as it's asymmetric ($D_{KL}(\hat{p}||\hat{q}) \neq D_{KL}(\hat{q}||\hat{p})$). An symmetric alternative is the \textit{Jensen-Shannon divergence}. For the value to be finite, any bin with samples from $\hat{q}_1$ must also contain samples from $\hat{q}_2$ to avoid taking the log of zero.}:
\[
KL(\hat{p} || \hat{q}) = \sum_{i=1}^K \hat{p}_i \log\left(\frac{\hat{p}_i}{\hat{q}_i}\right)
\]
this \textit{inclusive} KL divergence gives the 'information gain' from using the true distribution $\hat{p}$ instead of the approximating distribution $\hat{q}$. The \textit{exclusive} KL divergence, also known as the reverse KL divergence, is expressed by swapping the roles of the two distributions:
\[
KL(\hat{q} || \hat{p}) = \sum_{i=1}^K \hat{q}_i \log\left(\frac{\hat{q}_i}{\hat{p}_i}\right)
\]

\paragraph{Wasserstein distance}
To compute the 1D Wasserstein distance \footnote{In 1D, the Wasserstein distance is a simple function of the order statistics.} (also known as earth mover's distance, EMD \cite{vapnik_necessary_1982}) between two sets of samples, we sort both sets and compare their empirical cumulative distribution functions (ECDFs). The 1-Wasserstein distance integrates the area between the ECDFs, which can be approximated by summing the absolute differences between the sorted samples of equal length\footnote{For unequal lengths, one can consider a new set combining sorted unique values from both sets.} $M$:
\[
W_1(\hat{q}_1, \hat{q}_2) = \frac{1}{M} \sum_{i=1}^M \left| x_{(i)} - y_{(i)} \right|
\]
where $x_{(1)}, \ldots, x_{(M)}$ are the sorted samples from $\hat{q}_1$, $y_{(1)}, \ldots, y_{(M)}$ are the sorted samples from $\hat{q}_2$.

\paragraph{KS statistic}
To compute the 1D KS statistic \footnote{The KS statistic is the basis for the KS test, which is a non-parametric test for the equality of continuous, one-dimensional probability distributions.} \cite{smirnov1939estimation}, we also first sort both sets and compare their empirical cumulative distribution functions (ECDFs). However, rather than integrating the area between the ECDFs, the KS statistic is simply the maximum absolute vertical distance between them at any point:
\[
KS(\hat{q}_1,\hat{q}_2) = \sup_x \left| F_1(x) - F_2(x) \right|
\]
where $F_1(x)$ and $F_2(x)$ are the ECDFs of the first and second samples, respectively, and $\sup_x$ gives the largest value of the absolute difference across all possible samples - resulting a value between\footnote{A value of 0 indicates identical ECDFs, while a value of 1 indicates no overlap between the ranges of the two samples.} 0 and 1.

\paragraph{MMD}
To compute MMD \cite{gretton2007kernel}, we compare two distributions by mapping them into a high-dimensional feature space using a kernel function \footnote{A kernel function, such as the Gaussian RBF kernel, computes the similarity between two points in a high-dimensional feature space without explicitly performing the mapping. The choice of kernel is crucial to the metric's performance.}. The MMD is then the distance between the mean embeddings of the two distributions in that space. For two sets of samples, $\{x_1, \ldots, x_M\} \sim \hat{q}_1$ and $\{y_1, \ldots, y_M\} \sim \hat{q}_2$, the squared MMD can be directly calculated from samples \footnote{This empirical $MMD^2$ is biased; an unbiased version of $MMD^2$ avoids using the same sample point twice when calculating the within-sample similarities, which prevents a positive bias that occurs from the $k(x, x)$ $k(y,y)$ terms.}:
\[
\text{MMD}^2(\hat{q}_1, \hat{q}_2) = \frac{1}{M^2} \sum_{i,j=1}^M k(x_i, x_j) + \frac{1}{M^2} \sum_{i,j=1}^M k(y_i, y_j) - \frac{2}{M^2} \sum_{i=1}^M \sum_{j=1}^M k(x_i, y_j)
\]
where $k(\cdot, \cdot)$ is the chosen kernel function. MMD balances the average similarity within each sample against the average similarity between the two samples. If the distributions are similar, these values will be close, and the MMD will be near zero.

\paragraph{Total variation (TV) distance} To compute the TV \cite{cam_sufficiency_1964}, we compare the probability mass functions (PMFs) of two distributions to find the largest possible divergence between the probabilities they assign to any single event. For two sets of samples, $\hat{q}_1$ and $\hat{q}_2$, this first requires estimating their PMFs, normally by creating histograms over a shared set of bins \footnote{The TV distance ranges from 0 (identical distributions) to 1 (distributions with no overlap). For continuous data, the choice and number of bins can affect the final estimated distance.}. Let $\hat{p}_i$ and $\hat{q}_i$ be the proportion of samples from each set that fall into the $i$-th bin, the TV distance is half the sum of the absolute differences of these proportions across all $B$ bins:
\[
d_{TV}(\hat{q}_1, \hat{q}_2) = \frac{1}{2} \sum_{i=1}^B \left| \hat{p}_i - \hat{q}_i \right|
\]
which quantifies the total difference between the two empirical distributions across all possible outcomes.

Of the five measures, the 1D Wasserstein distance, KS statistic, and TV distance are true distance metrics, as they are symmetric and satisfy the triangle inequality. MMD is also symmetric, but it only qualifies as a true distance metric when a characteristic (injective mapping) kernel (e.g. RBF) is used. In contrast, the KL divergence is not a true distance metric (what's why it is referred to as a divergence rather than a distance) - it is asymmetric and does not satisfy the triangle inequality.

\section{Stratified sampling} \label{app:simple_and_stratified_random_sampling}

As stratified sampling is concerned in the second stage of our GMA sampling method, here we present some properties of stratified sampling from a mathematical statistics perspective. Following discussions are mainly based on evidences from the three classic books: Cochran \cite{cochran_sampling_1977}, Boucheron \cite{boucheron_concentration_2013} and Rice \cite{rice_mathematical_2007}.

Compared to naive, simple random sampling, stratified Monte Carlo sampling reduces variance via sub-region sampling. It divides the sample space into non-overlapping strata, samples from each (proportionally to the strata probability), and combines the weighted results, preserving unbiasedness while reducing variance by ensuring all regions are represented. 
For example, we want to estimate $\mathbb{E}[Y]$ for a real-valued random variable $Y$, straightforwardly, we can either use \textit{simple random sampling} or \textit{stratified sampling} to sample from its underlying distribution or drawing samples from an existing population space. 
Simple random sampling randomly draws $n$ samples $\{Y_i\}_{i=1}^n$ from the sample space of $Y$. Let $N_0$ be the population size, we have the variance of a random sampler \cite{rice_mathematical_2007}:

\begin{theorem} [\textit{variance of simple random sampler, Rice \cite{rice_mathematical_2007}}]
\label{thm:variance_of_random_sampler}
With simple random sampling, we have
\[
\mathrm{Var}(\bar{Y}) =
\frac{\sigma^2}{n} \left( \frac{N_0 - n}{N_0 - 1} \right)
\]
\[
= \frac{\sigma^2}{n} \left( 1 - \frac{n - 1}{N_0 - 1} \right)
\]
where $\sigma^2 = \mathrm{Var}(Y)$.
\end{theorem}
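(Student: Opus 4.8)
The plan is to treat this as sampling \emph{without replacement} from a finite population of fixed values $y_1,\dots,y_{N_0}$ with mean $\mu=\frac{1}{N_0}\sum_{k=1}^{N_0} y_k$ and variance $\sigma^2=\frac{1}{N_0}\sum_{k=1}^{N_0}(y_k-\mu)^2$; the finite-population correction $(N_0-n)/(N_0-1)$ is the hallmark of this regime and must emerge from the covariance between distinct draws. First I would write the sample mean as $\bar{Y}=\frac{1}{n}\sum_{i=1}^{n}Y_i$, where $Y_i$ is the value obtained on the $i$-th draw, and use bilinearity of variance to reduce the whole computation to two quantities:
\[
\mathrm{Var}(\bar{Y})=\frac{1}{n^2}\Bigl[\sum_{i=1}^{n}\mathrm{Var}(Y_i)+\sum_{i\neq j}\mathrm{Cov}(Y_i,Y_j)\Bigr]
\]

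Second, I would establish the marginal law of each draw. By the symmetry of simple random sampling without replacement, every $Y_i$ is uniformly distributed over $\{y_1,\dots,y_{N_0}\}$, so $\mathbb{E}[Y_i]=\mu$ and $\mathrm{Var}(Y_i)=\sigma^2$ for all $i$. This disposes of the first sum, contributing $n\sigma^2$. The remaining obstacle---and the only genuinely nontrivial step---is the common covariance $c:=\mathrm{Cov}(Y_i,Y_j)$ for $i\neq j$, which is precisely where the dependence induced by non-replacement enters.

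The cleanest route to $c$ is a degenerate-case argument rather than a direct double sum over joint pair probabilities. Taking $n=N_0$ forces the sample to equal the whole population, so $\bar{Y}$ is the constant $\mu$ and $\mathrm{Var}\bigl(\sum_{i=1}^{N_0}Y_i\bigr)=0$. Substituting into the decomposition gives $0=N_0\sigma^2+N_0(N_0-1)\,c$, hence
\[
c=\mathrm{Cov}(Y_i,Y_j)=-\frac{\sigma^2}{N_0-1}
\]
(Alternatively one can compute $c$ directly from $\mathbb{E}[Y_iY_j]$ using the pair-selection probability $1/[N_0(N_0-1)]$ together with the identity $\bigl(\sum_k y_k\bigr)^2=\sum_k y_k^2+\sum_{k\neq l}y_ky_l$; this yields the same value but is more laborious.)

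Finally, with the $n(n-1)$ off-diagonal terms each equal to $c$, I would assemble
\[
\mathrm{Var}(\bar{Y})=\frac{1}{n^2}\Bigl[n\sigma^2-n(n-1)\frac{\sigma^2}{N_0-1}\Bigr]=\frac{\sigma^2}{n}\Bigl(1-\frac{n-1}{N_0-1}\Bigr)
\]
and then rewrite $1-\frac{n-1}{N_0-1}=\frac{N_0-n}{N_0-1}$ to recover the first displayed form, establishing the equivalence of the two expressions. The main difficulty is confined to the covariance evaluation; once the degenerate-case trick supplies $c$, the rest is elementary bookkeeping.
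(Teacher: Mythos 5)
Your proof is correct. Note that the paper itself gives no argument here---it simply defers to the proof of Theorem B on p.~208 of Rice---so your write-up supplies what the paper omits. Your overall decomposition (variance of the sample mean into $n$ diagonal terms plus $n(n-1)$ identical covariance terms, with exchangeability giving $\mathrm{Var}(Y_i)=\sigma^2$ for each draw) is exactly the standard route taken in Rice. The one place you diverge is the evaluation of $c=\mathrm{Cov}(Y_i,Y_j)$: Rice proves a separate lemma computing $\mathbb{E}[Y_iY_j]$ directly from the pair-selection probability $1/[N_0(N_0-1)]$, whereas you obtain $c=-\sigma^2/(N_0-1)$ by the degenerate-case observation that $\mathrm{Var}\bigl(\sum_{i=1}^{N_0}Y_i\bigr)=0$ when the sample exhausts the population. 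The degenerate-case trick is slicker and avoids the double-sum bookkeeping, at the mild cost of relying on the fact that $c$ is the same for every pair (which your exchangeability argument already justifies); either way the finite-population correction $\frac{N_0-n}{N_0-1}=1-\frac{n-1}{N_0-1}$ falls out identically.
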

\begin{proof}
    See proof of Theorem B on pp.208 in \cite{rice_mathematical_2007}.
\end{proof}

The factor $\left( 1 - \frac{n - 1}{N_0 - 1} \right)$ is termed \textit{finite population correction}, the ratio $n/N_0$ is called \textit{the sampling fraction}. When the sampling fraction is small, the standard error (standard deviation) of $\bar{Y}$ becomes \cite{rice_mathematical_2007}:
\begin{equation} \label{eq:approx_variance_of_simple_random_sampling}
    \sigma_{\bar{Y}} \approx \frac{\sigma}{\sqrt{n}}
\end{equation}
that is, the standard error of the random sample mean is inversely proportional to the square root of the sample size. To double the accuracy, the sample size must be quadrupled.

Stratified sampling partitions the sample space into $N$ disjoint strata $A_1, \dots, A_N$ such that $P\left( Y \in \bigcup_{i=1}^N A_i \right) = 1$. 
Let $p_i=P(Y\in A_i)=N_i/N_0$ be the proportion (weight) of strata $i$, $\mu_i=\mathbb{E}[Y\mid Y\in A_i]$, $\sigma_i^2=\mathrm{Var}(Y\mid Y\in A_i)$ be the population mean and variance of strata $i$. Let $N_i$ denote the population size of strata $i$, and the total population size is $N_0=N_1+N_2+...+N_N$; we draw $n_i$ i.i.d. samples from the $N_i$ samples in this stratum, with $\sum_i n_i=n$. We assume that the samples from different strata are independent of one another (i.e. sample independence across strata) \cite{rice_mathematical_2007}.

On the population level, by the law of total expectation (linearility of expectation), we have \cite{rice_mathematical_2007}:
\[
\mathbb{E}[Y]
= \sum_{i=1}^N P(Y \in A_i)   \mathbb{E}[Y \mid Y \in A_i]
= \sum_{i=1}^N p_i   \mathbb{E}[Y \mid Y \in A_i]
\]

If we denote the population means as $\mu$, and we have already the mean of each strata $\mu_i, i=1,2,...,N$, the above equation states 
\begin{equation}
    \mu=\sum_{i=1}^N p_i \mu_i
\end{equation}

As within each strata, $n_i$ samples $\{Y_{ij}\}_{j=1}^{n_i}$ are taken, the sample mean $\bar{Y}_i$ in strata $i$ is:
\begin{equation} \label{eq:sample_mean_of_stratified_sampler}
    \bar{Y}_i=\frac{1}{n_i}\sum_{j=1}^{n_i} Y_{ij}
\end{equation}

In analogy to the preceding relationship between the overall population mean and the population means of the various strata, we have the relation between sample means \cite{rice_mathematical_2007, kennedy2016montecarlo}:
\begin{equation} \label{eq:sample_mean_stratified_sampling}
\bar{Y}= \sum_{j=1}^{n_i} p_i \bar{Y}_i
\end{equation}

\begin{theorem}[\textit{Unbiasedness of stratified sampler, Rice \cite{rice_mathematical_2007}}]
\label{thm:unbiasedness_of_stratified_sampler}
The stratified estimate, $\bar{Y}$, of the population mean is unbiased.
\end{theorem}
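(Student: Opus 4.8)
The plan is to reduce the claim to the unbiasedness of the simple random sample mean within each stratum, and then assemble the pieces by linearity of expectation. First I would write out the stratified estimator explicitly as $\bar{Y} = \sum_{i=1}^N p_i \bar{Y}_i$, with the within-stratum average $\bar{Y}_i = \frac{1}{n_i}\sum_{j=1}^{n_i} Y_{ij}$ from Eq.\ref{eq:sample_mean_of_stratified_sampler}, treating the allocation counts $n_i$ as fixed (deterministic) constants subject to $\sum_i n_i = n$. Because the weights $p_i = N_i/N_0$ are non-random constants, taking expectations and invoking linearity immediately gives $\mathbb{E}[\bar{Y}] = \sum_{i=1}^N p_i   \mathbb{E}[\bar{Y}_i]$.

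Second I would establish the key per-stratum identity $\mathbb{E}[\bar{Y}_i] = \mu_i$. The essential observation is that each $Y_{ij}$ is a draw from the conditional distribution of $Y$ given $Y \in A_i$ (equivalently, a simple random draw from the $N_i$ units comprising stratum $i$), so that $\mathbb{E}[Y_{ij}] = \mu_i$ for every $j$. Averaging over the $n_i$ draws and applying linearity once more yields $\mathbb{E}[\bar{Y}_i] = \frac{1}{n_i}\sum_{j=1}^{n_i}\mu_i = \mu_i$; this is precisely the unbiasedness of the simple random sample mean applied within stratum $i$, the within-stratum analogue of the setting underlying Theorem.\ref{thm:variance_of_random_sampler}.

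Finally I would substitute back to obtain $\mathbb{E}[\bar{Y}] = \sum_{i=1}^N p_i \mu_i$, and identify the right-hand side with the population mean $\mu$ via the law-of-total-expectation decomposition $\mu = \sum_{i=1}^N p_i \mu_i$ recorded above. This closes the argument in three short steps.

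The main obstacle lies not in the algebra, which is routine, but in making the modeling assumptions that license the step $\mathbb{E}[Y_{ij}] = \mu_i$ fully explicit: one must insist that the within-stratum draws are genuinely sampled from the conditional law (or equivalently by simple random sampling from the $N_i$ units of the stratum), and that the allocation $n_i$ is fixed rather than data-dependent. Were the $n_i$ random instead, e.g.\ under post-stratification where stratum membership is observed only after sampling, the clean linearity argument would need to be replaced by a tower-property computation conditioning on the realized allocation, and unbiasedness would hold only conditionally or require further independence hypotheses. I would therefore state the fixed-allocation and conditional-sampling assumptions plainly at the outset, after which the three steps above follow directly.
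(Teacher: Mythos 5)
Your proposal is correct and follows essentially the same route as the paper's proof: linearity of expectation over the fixed weights $p_i$, per-stratum unbiasedness $\mathbb{E}[\bar{Y}_i]=\mu_i$, and the identification $\sum_i p_i\mu_i=\mu$. Your explicit statement of the fixed-allocation and conditional-sampling assumptions is a welcome addition that the paper's terse derivation leaves implicit (and your direct appeal to $\mu=\sum_i p_i\mu_i$ avoids the paper's intermediate step $\frac{1}{N_0}\sum_i n_i\mu_i$, which appears to contain a typo, $n_i$ in place of $N_i$).
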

\begin{proof}
\begin{align*}
E(\bar{Y}) &= \sum_{i=1}^N p_i E(\bar{Y}_i) \\
&=\sum_{i=1}^N p_i \mu_i \\
&= \frac{1}{N_0} \sum_{i=1}^N n_i \mu_i \\
&= \mu
\end{align*}
\end{proof}

Since we assume sample independence across strata and within each stratum a simple random sample is taken, the variance of $\bar{Y}$ can be calculated \cite{rice_mathematical_2007}:

\begin{theorem}[\textit{Variance of stratified sampler, Rice \cite{rice_mathematical_2007}}]
\label{thm:variance_of_stratified_sampler}
The variance of the stratified sample mean is 
\[
\mathrm{Var}(\bar{Y}) =
\sum_{i=1}^N p_i^2 \left( \frac{1}{n_i} \right)
\left( 1 - \frac{n_i - 1}{N_i - 1} \right) \sigma_i^2
\]
\end{theorem}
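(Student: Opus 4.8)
The plan is to exploit the linear structure of the stratified estimator together with cross-stratum independence, thereby reducing the variance computation to a weighted sum of per-stratum variances, each of which is already supplied by the finite-population result in Theorem~\ref{thm:variance_of_random_sampler}. In effect, the entire statement reduces to a diagonalisation step plus a substitution, so almost no new machinery is required.

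First I would start from the defining identity $\bar Y=\sum_{i=1}^N p_i \bar Y_i$ in Eq.\ref{eq:sample_mean_stratified_sampling}, where each $\bar Y_i$ is the within-stratum sample mean of Eq.\ref{eq:sample_mean_of_stratified_sampler}. Since the weights $p_i=N_i/N_0$ are deterministic constants, $\mathrm{Var}(\bar Y)$ is the variance of a linear combination of the random variables $\bar Y_1,\dots,\bar Y_N$. The essential structural input is the standing assumption of sample independence across strata: because the draws in stratum $i$ are taken independently of those in any stratum $i'\neq i$, the sample means $\bar Y_i$ and $\bar Y_{i'}$ are independent and all cross-covariances vanish. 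This collapses the bilinear variance expansion to its diagonal,
$$
\mathrm{Var}(\bar Y)=\sum_{i=1}^N p_i^2 \mathrm{Var}(\bar Y_i),
$$
which is the single simplification that makes everything else routine.

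Next I would evaluate each $\mathrm{Var}(\bar Y_i)$ in isolation. Within stratum $i$ the sampling scheme is exactly a simple random sample of size $n_i$ drawn without replacement from a finite population of size $N_i$ with population variance $\sigma_i^2$, so Theorem~\ref{thm:variance_of_random_sampler} applies verbatim at the stratum level and gives
$$
\mathrm{Var}(\bar Y_i)=\frac{\sigma_i^2}{n_i}\left(1-\frac{n_i-1}{N_i-1}\right).
$$
Substituting this into the diagonalised expression above yields precisely the claimed formula, completing the argument.

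The algebra here is short, so the main obstacle is conceptual bookkeeping rather than calculation. The first care-point is to make the cross-stratum independence assumption explicit and confirm it is what annihilates the covariance terms (independence implies zero covariance, though not conversely); without this hypothesis the diagonalisation fails and covariances would need to be tracked. The second is ensuring the finite-population correction from Theorem~\ref{thm:variance_of_random_sampler} is instantiated \emph{locally}, with $N_i$ and $n_i$ in place of the global $N_0$ and $n$; conflating the stratum-level and population-level quantities would introduce a spurious correction factor. Once these two points are handled, no further estimates arise and the proof closes immediately.
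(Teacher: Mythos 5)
Your proposal is correct and matches the paper's own proof essentially line for line: both use cross-stratum independence to diagonalise $\mathrm{Var}\bigl(\sum_i p_i \bar{Y}_i\bigr)$ into $\sum_i p_i^2\,\mathrm{Var}(\bar{Y}_i)$, then apply Theorem~\ref{thm:variance_of_random_sampler} within each stratum with $n_i$ and $N_i$ in place of $n$ and $N_0$. Your explicit care-points about independence killing the covariances and instantiating the finite-population correction locally are exactly the right (implicit) checks in the paper's argument.
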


\begin{proof}
Since the $\bar{Y}_i$ are independent,
\[
\mathrm{Var}(\bar{Y}) = \sum_{i=1}^N p_i^2 \mathrm{Var}(\bar{Y}_i)
\]
Within each strata, simple random sampling is performed, from Theorem.\ref{thm:variance_of_random_sampler}, we have
\[
\mathrm{Var}(\bar{Y}_i) =
\frac{1}{n_i} \left( 1 - \frac{n_i - 1}{N_i - 1} \right) \sigma_i^2
\]
Therefore, the desired result follows.
\end{proof}

If the sampling fractions $n_i/N_i$ within all strata are small, we have \footnote{The variance of the stratified sampler (estimator) is also derived as Eq.(5.39) in \cite{kennedy2016montecarlo}.}:
\begin{equation} \label{eq:sample_variance_of_stratified_sampler}
\mathrm{Var}(\bar{Y}) \approx \sum_{i=1}^N \frac{p_i^2 \sigma_i^2}{n_i}
= \frac{1}{n}\sum_{i=1}^N \frac{p_i^2 \sigma_i^2}{\alpha_i},
\qquad \alpha_i:=\frac{n_i}{n}
\end{equation}
Therefore, $\mathrm{RMSE}(\bar{Y})=\mathcal{O}(n^{-1/2})$, which is same $1/\sqrt{n}$ rate as simple random sampling, but typically with a smaller constant (we shall elaborate in later sections).
Compared to simple random sampling which draws $Y_1, \dots, Y_n$ i.i.d. from $Y$, stratified sampling pre-specifies the fraction of samples to draw from each $A_i$, then generate them from the conditional distribution $p(Y \mid Y \in A_i)$.
This guarantees that each stratum is represented according to $p_i$, preserving unbiasedness while typically reducing variance \cite{glasserman_monte_2003}.

The question now being, if the resources allow only a total of $n$ samples to be sampled, how to choose $n_1, \ldots, n_L$ to minimize $\mathrm{Var}(\bar{Y})$ subject to the constraint $n_1 + \cdots + n_L = n$? In general, there are two strategies used in stratified sampling to allocate the sample draws: \textit{proportional allocation} and \textit{Neyman-optimal allocation}.

\paragraph{Neyman-optimal allocation} we allocate as per $n_i \propto p_i\sigma_i$:

\begin{theorem}[\textit{Variance-based, Neyman optimal allocation, Rice \cite{rice_mathematical_2007}}]
\label{thm:Neyman_allocation}
The sample sizes $n_1, \ldots, n_N$ that minimize $\mathrm{Var}(\bar{Y})$, subject to the constraint $n_1 + \cdots + n_N = n$, are given by
\[
n_i = n \frac{p_i\sigma_i}{\sum_{j=1}^N p_j\sigma_j}
\]
yielding
\[
\mathrm{Var}(\bar{Y})
= \frac{1}{n}\Big(\sum_{i=1}^N p_i\sigma_i\Big)^2
\]
neglecting the finite population correction.
\end{theorem}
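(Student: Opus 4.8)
The plan is to treat this as a smooth constrained optimization problem and solve it by Lagrange multipliers, starting from the small-sampling-fraction variance formula already established. From Theorem.\ref{thm:variance_of_stratified_sampler} and Eq.\ref{eq:sample_variance_of_stratified_sampler}, neglecting the finite population correction, the objective to minimize is $f(n_1,\dots,n_N)=\sum_{i=1}^N p_i^2\sigma_i^2/n_i$ subject to the linear budget constraint $\sum_{i=1}^N n_i = n$. I would relax the integrality of the $n_i$ and optimize over positive reals, which is the standard allocation convention.

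First I would form the Lagrangian $\mathcal{L}=\sum_i p_i^2\sigma_i^2/n_i + \lambda(\sum_i n_i - n)$ and impose stationarity, $\partial\mathcal{L}/\partial n_i = -p_i^2\sigma_i^2/n_i^2 + \lambda = 0$. Solving each equation gives $n_i = p_i\sigma_i/\sqrt{\lambda}$, so that every optimal $n_i$ is proportional to $p_i\sigma_i$. Substituting into the constraint $\sum_i n_i = n$ fixes the multiplier through $\sqrt{\lambda}=(\sum_j p_j\sigma_j)/n$, and back-substitution yields precisely the claimed allocation $n_i = n\,p_i\sigma_i/\sum_j p_j\sigma_j$.

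Next I would evaluate the optimized variance by plugging this $n_i$ back into $f$: each summand becomes $p_i^2\sigma_i^2 \cdot (\sum_j p_j\sigma_j)/(n\,p_i\sigma_i) = (p_i\sigma_i)(\sum_j p_j\sigma_j)/n$, and summing over $i$ collapses to $\tfrac{1}{n}(\sum_i p_i\sigma_i)^2$, as stated. To certify that this critical point is a genuine global minimizer rather than a saddle, I would invoke convexity: on the positive orthant each map $n_i\mapsto 1/n_i$ is strictly convex, hence $f$ is strictly convex, the feasible set is an affine slice and thus convex, so the unique stationary point is the global minimum. I would also mention a shorter, differentiation-free route via Cauchy-Schwarz: taking $a_i=p_i\sigma_i/\sqrt{n_i}$ and $b_i=\sqrt{n_i}$ gives $(\sum_i p_i\sigma_i)^2 \le (\sum_i p_i^2\sigma_i^2/n_i)(\sum_i n_i) = n\,f$, with equality exactly when $n_i\propto p_i\sigma_i$; this simultaneously delivers the bound $\mathrm{Var}(\bar Y)\ge \tfrac1n(\sum_i p_i\sigma_i)^2$ and the equality (optimality) condition.

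There is no deep obstacle here; the computation is routine. The only points requiring care are (i) the continuous relaxation, since the true $n_i$ must be nonnegative integers, so the formula is the continuous optimum and a feasible integer allocation requires rounding that perturbs the variance only at lower order; and (ii) the neglected finite population correction, so the statement is understood as the asymptotic (small sampling fraction) optimum consistent with Eq.\ref{eq:sample_variance_of_stratified_sampler}. I would flag both as standing conventions rather than gaps in the argument.
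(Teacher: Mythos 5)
Your proof is correct. The paper itself does not prove Theorem~\ref{thm:Neyman_allocation}; it simply defers to Rice (Theorem A, pp.~232 and Corollary A, pp.~233), so you have supplied a complete argument where the paper gives only a citation. Your primary route (Lagrangian stationarity giving $n_i \propto p_i\sigma_i$, the constraint fixing the multiplier, back-substitution collapsing the variance to $\tfrac{1}{n}\bigl(\sum_i p_i\sigma_i\bigr)^2$, and strict convexity of $n_i \mapsto 1/n_i$ on the positive orthant certifying a global minimum) is the standard derivation and is carried out without error. Notably, your Cauchy--Schwarz alternative is exactly the technique the paper does spell out for the structurally identical range-based allocation in Corollary~\ref{cor:optimal_range_allocation} (where it minimizes $\sum_i c_i^2/\alpha_i$ over the simplex with $c_i = p_i(b_i-a_i)$, and mentions Lagrange multipliers as the alternative route); you have simply swapped which of the two arguments is primary. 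The Cauchy--Schwarz route has the minor advantage of delivering the lower bound $\mathrm{Var}(\bar Y) \ge \tfrac{1}{n}\bigl(\sum_i p_i\sigma_i\bigr)^2$ for \emph{every} feasible allocation in one line, with the equality condition identifying the optimum, whereas the Lagrangian route requires the separate convexity check you correctly include. Your caveats about the continuous relaxation of the integer $n_i$ and the neglected finite population correction are appropriate and consistent with how the paper states the result.
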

\begin{proof}
    See Theorem A on pp.232 and Corollary A on pp.233 in \cite{rice_mathematical_2007}.
\end{proof}

This theorem gives the (optimal) allocation strategy which minimizes variance for a given $n$ (equal per-sample cost). 

\paragraph{Proportional allocation} we allocate number of samples as per $n_i=n p_i$:

\begin{theorem}[\textit{Proportional allocation, Rice \cite{rice_mathematical_2007}}]
\label{thm:proportional_allocation}
Using the proportional allocation strategy $n_i=n p_i$ with the constraint $n_1 + \cdots + n_N = n$, we have
\[
\mathrm{Var}(\bar{Y})=\frac{1}{n}\sum_{i=1}^N p_i \sigma_i^2
\le \frac{1}{n} \mathrm{Var}(Y)
\]
\end{theorem}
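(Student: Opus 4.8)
The plan is to combine the stratified-variance formula already established with the classical law of total variance. First I would start from the small-sampling-fraction expression in Eq.\ref{eq:sample_variance_of_stratified_sampler}, namely $\mathrm{Var}(\bar{Y}) \approx \frac{1}{n}\sum_{i=1}^N \frac{p_i^2 \sigma_i^2}{\alpha_i}$ with $\alpha_i = n_i/n$, and substitute the proportional-allocation choice $n_i = n p_i$ (equivalently $\alpha_i = p_i$). This immediately collapses the quadratic weights: each summand becomes $p_i^2 \sigma_i^2 / (n p_i) = p_i \sigma_i^2 / n$, yielding the stated equality
\[
\mathrm{Var}(\bar{Y}) = \frac{1}{n}\sum_{i=1}^N p_i \sigma_i^2.
\]
This first step is purely algebraic and requires nothing beyond the finite-population-correction simplification already invoked in Theorem.\ref{thm:variance_of_stratified_sampler}.

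For the inequality, the key is to recognise $\sum_{i=1}^N p_i \sigma_i^2$ as the \emph{within-stratum} (conditional) variance and to account for the missing \emph{between-stratum} contribution. I would introduce the stratum label $J$ with $P(J=i)=p_i$, so that $\sigma_i^2 = \mathrm{Var}(Y \mid J=i)$ and $\mu_i = \mathbb{E}[Y \mid J=i]$. The law of total variance then reads $\mathrm{Var}(Y) = \mathbb{E}[\mathrm{Var}(Y \mid J)] + \mathrm{Var}(\mathbb{E}[Y \mid J])$, where the first term is exactly $\sum_{i=1}^N p_i \sigma_i^2$ and the second is $\sum_{i=1}^N p_i (\mu_i - \mu)^2$ with $\mu = \sum_i p_i \mu_i$ the population mean (using Theorem.\ref{thm:unbiasedness_of_stratified_sampler}). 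Since the between-stratum term is a weighted sum of squares it is nonnegative, whence
\[
\sum_{i=1}^N p_i \sigma_i^2 = \mathrm{Var}(Y) - \sum_{i=1}^N p_i (\mu_i - \mu)^2 \le \mathrm{Var}(Y),
\]
and therefore $\mathrm{Var}(\bar{Y}) \le \frac{1}{n}\mathrm{Var}(Y)$. I would then add a short remark that equality holds iff all $\mu_i$ coincide, i.e. when stratification captures no between-group signal; this makes precise that proportional allocation never does worse than simple random sampling and strictly improves whenever the strata differ in mean.

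The main obstacle is mild and amounts to bookkeeping the finite-population-correction terms. The theorem is stated cleanly for small sampling fractions, and only in that regime does the total-variance identity apply verbatim to the right-hand side $\tfrac{1}{n}\mathrm{Var}(Y)$. If one instead insists on the exact finite-population variances, the comparison object on the right must be replaced by the simple-random-sampling variance of Theorem.\ref{thm:variance_of_random_sampler}, and the argument becomes a term-by-term comparison of correction factors rather than a one-line decomposition. I would therefore present the result in the small-fraction regime (as the statement does), so that the clean law-of-total-variance step carries the whole inequality, and flag the exact finite-population comparison as the more delicate refinement left to the cited references \cite{rice_mathematical_2007,cochran_sampling_1977}.
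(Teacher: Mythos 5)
Your proof is correct and follows essentially the same route as the paper: the equality comes from substituting $\alpha_i = p_i$ into the stratified-variance formula, and the inequality rests on the within/between decomposition $\mathrm{Var}(Y) = \sum_i p_i \sigma_i^2 + \sum_i p_i(\mu_i - \mu)^2$, which the paper derives by expanding $(x_{ij}-\mu)^2$ over the finite population and you obtain by invoking the law of total variance with the stratum label as the conditioning variable — the same identity in different clothing. Your closing remark on the equality case and on the finite-population-correction bookkeeping is consistent with the paper's treatment, which likewise works in the small-sampling-fraction regime.
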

\begin{proof}
    Proof of the equality can be found in the proof of Theorem B on pp.234 in \cite{rice_mathematical_2007}.
    The inequality can be proved as follows. First, we prove this relation \footnote{This relation is also derived as Eq.(5.44) in \cite{kennedy2016montecarlo}.} between the overall population variance $\sigma^2=\mathrm{Var}(Y)$ and the strata variances $\sigma_i^2$: $\sigma^2=\mathrm{Var}(Y)=\sum_i p_i\sigma_i^2+\mathrm{Var}(\mu_i)$.

    The overall population variance can be expressed as
    \[
    \sigma^2 = \frac{1}{N_0} \sum_{i=1}^N \sum_{j=1}^{N_i} (x_{ij} - \mu)^2
    \]
    with $N_0=N_1+N_2+...+N_N$.
    As
    \[
    (x_{ij} - \mu)^2 
    = \left[(x_{ij} - \mu_i) + (\mu_i - \mu)\right]^2
    = (x_{ij} - \mu_i)^2 + 2(x_{ij} - \mu_i)(\mu_i - \mu) + (\mu_i - \mu)^2
    \]
    When this expression is summed over $j$, the middle term on the RHS becomes zero since 
    $N_i \mu_i = \sum_{j=1}^{N_i} x_{ij}$, so we have
    \[
    \sum_{j=1}^{N_i} (x_{ij} - \mu)^2
    = \sum_{j=1}^{N_i} (x_{ij} - \mu_i)^2 + N_i (\mu_i - \mu)^2
    \]
    \[
    = N_i \sigma_i^2 + N_i (\mu_i - \mu)^2
    \]
    Dividing both sides by $N_0$ and summing over $i$, we have
    \[
    \sigma^2 = \sum_{i=1}^N p_i \sigma_i^2 + \sum_{i=1}^N p_i (\mu_i - \mu)^2
    =\sum_{i=1}^N p_i\sigma_i^2+\mathrm{Var}(\mu_i)
    \]
    which gives:
    $
    \frac{1}{n} \sum_{i=1}^N p_i \sigma_i^2 = \frac{1}{n} \sigma^2 - \frac{1}{n} \mathrm{Var}(\mu_i)
    = \frac{1}{n} \mathrm{Var}(Y) - \frac{1}{n} \mathrm{Var}(\mu_i)
    $.
    Therefore, we have:
    $\mathrm{Var}(\bar{Y})=\frac{1}{n}\sum_{i=1}^N p_i \sigma_i^2 \le \frac{1}{n} \mathrm{Var}(Y)$
\end{proof}

Comparing Neyman optimal allocation and proportional allocation, we have their variance difference:

\begin{theorem}[\textit{Variance difference of Neyman optimal allocation and proportional allocation, Rice \cite{rice_mathematical_2007}}]
\label{thm:comparing_variances_of_Neyman_and_proportional_stratified_sampling}
With stratified random sampling, the difference between the variance of the estimate of the population mean based on proportional allocation and the variance of that estimate based on optimal allocation is
\[
\mathrm{Var}(\bar{Y}_{proportional}) - \mathrm{Var}(\bar{Y}_{Neyman}) 
= \frac{1}{n} \sum_{i=1}^N p_i (\sigma_i - \bar{\sigma})^2
\]
where $\bar{\sigma} = \sum_{i=1}^N p_i \sigma_i$, ignoring the finite population correction.
\end{theorem}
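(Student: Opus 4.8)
The plan is to reduce the claim to a purely algebraic identity by substituting the two variance expressions that have already been established. From Theorem.\ref{thm:proportional_allocation} the proportional-allocation estimator satisfies $\mathrm{Var}(\bar{Y}_{proportional})=\tfrac{1}{n}\sum_{i=1}^N p_i\sigma_i^2$, while Theorem.\ref{thm:Neyman_allocation} gives $\mathrm{Var}(\bar{Y}_{Neyman})=\tfrac{1}{n}\big(\sum_{i=1}^N p_i\sigma_i\big)^2$, both with the finite population correction neglected. Subtracting the second from the first, the common factor $1/n$ may be pulled out, leaving the task of showing that $\sum_i p_i\sigma_i^2-\big(\sum_i p_i\sigma_i\big)^2$ equals $\sum_i p_i(\sigma_i-\bar{\sigma})^2$.

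First I would recognise the left-hand difference as a weighted variance of the stratum standard deviations $\{\sigma_i\}$ under the weights $\{p_i\}$. Concretely, treating $\sigma$ as a discrete quantity taking value $\sigma_i$ with probability $p_i$, the expression $\sum_i p_i\sigma_i^2-\big(\sum_i p_i\sigma_i\big)^2$ is exactly $\mathbb{E}_p[\sigma^2]-(\mathbb{E}_p[\sigma])^2$. The key computation is then the standard bias-variance expansion: writing $\bar{\sigma}=\sum_i p_i\sigma_i$ and expanding $(\sigma_i-\bar{\sigma})^2=\sigma_i^2-2\bar{\sigma}\sigma_i+\bar{\sigma}^2$, I would sum against $p_i$ and invoke the two normalisations $\sum_i p_i=1$ and $\sum_i p_i\sigma_i=\bar{\sigma}$. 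The cross term collapses to $-2\bar{\sigma}^2$ and the constant term to $+\bar{\sigma}^2$, so that $\sum_i p_i(\sigma_i-\bar{\sigma})^2=\sum_i p_i\sigma_i^2-\bar{\sigma}^2$, which is precisely the claimed difference once the $1/n$ factor is reinstated.

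There is no genuine analytic obstacle here: once the two prior variance formulas are in hand, the result is a one-line consequence of the weighted-variance identity. The only points requiring care are bookkeeping rather than depth, namely ensuring the finite population correction is dropped consistently in both allocation formulas (so that the leading-order $1/n$ terms are directly comparable), and confirming that $\bar{\sigma}$ is the $p$-weighted mean of the $\sigma_i$ and not some unweighted average. As an immediate corollary worth recording, since $\sum_i p_i(\sigma_i-\bar{\sigma})^2\ge 0$ with equality if and only if all $\sigma_i$ coincide, the identity confirms that Neyman allocation never has larger variance than proportional allocation, and strictly improves upon it whenever the within-stratum standard deviations are heterogeneous.
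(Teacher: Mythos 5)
Your proposal is correct and follows essentially the same route as the paper's proof: substitute the variance formulas from Theorem.\ref{thm:proportional_allocation} and Theorem.\ref{thm:Neyman_allocation}, factor out $1/n$, and verify that $\sum_i p_i\sigma_i^2-\big(\sum_i p_i\sigma_i\big)^2=\sum_i p_i(\sigma_i-\bar{\sigma})^2$ by expanding the square and collecting terms. The weighted-variance interpretation you add is a nice framing but does not change the argument.
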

\begin{proof}
We can submit the results for variance from Theorem.\ref{thm:Neyman_allocation} and Theorem.\ref{thm:proportional_allocation} and obtain:
\[
\mathrm{Var}(\bar{Y}_{proportional}) - \mathrm{Var}(\bar{Y}_{Neyman}) 
= \frac{1}{n} \left[ \sum_{i=1}^N p_i \sigma_i^2 
- \left( \sum_{i=1}^N p_i \sigma_i \right)^2 \right]
\]
The term within the large brackets equals $\sum_{i=1}^N p_i (\sigma_i - \bar{\sigma})^2$, which can be verified by expanding the square and collecting terms.
\end{proof}

Theorem.\ref{thm:comparing_variances_of_Neyman_and_proportional_stratified_sampling} essentially hints that\footnote{Note that, this conclusion can also be derived using weighted Cauchy-Schwarz inequality: 
take the vectors$u_i = \sigma_i,\quad v_i = 1$ in the weighted inner product$\langle u, v\rangle_p := \sum_{i=1}^N p_i u_i v_i$, then the weighted Cauchy-Schwarz inequality says: $\left( \sum_{i=1}^N p_i u_i v_i \right)^2 \le \left( \sum_{i=1}^N p_i u_i^2 \right) \left( \sum_{i=1}^N p_i v_i^2 \right)$. Substitute $u_i = \sigma_i$, $v_i = 1$: $\left( \sum_{i=1}^N p_i \sigma_i \right)^2 \le \left( \sum_{i=1}^N p_i \sigma_i^2 \right) \left( \sum_{i=1}^N p_i \cdot 1^2 \right)$. As $\sum_{i=1}^N p_i = 1$, so: $\left( \sum_{i=1}^N p_i \sigma_i \right)^2 \le \sum_{i=1}^N p_i \sigma_i^2$, with equality iff all $\sigma_i$ are equal (or all but one $p_i$ are zero).}, $\mathrm{Var}(\bar{Y}_{proportional}) \leq \mathrm{Var}(\bar{Y}_{Neyman})$. Equality holds when the variances of each strata $\sigma_i, i=1,2,...,N$ are all the same, then proportional allocation yields the same results as optimal allocation. The more variable these variances are, the better it is to use Neyman optimal allocation to yield smaller variance \cite{rice_mathematical_2007}.
 
We can also compare the variance under simple random sampling with the variance under proportional allocation. The variance of simple random sampling, ignoring the finite sample correction factor, is $\mathrm{Var}(\bar{Y}) = \frac{\sigma^2}{n}$ (Theorem.\ref{thm:variance_of_random_sampler}), and the variance of the stratified sampling using proportional allocation strategy is $\mathrm{Var}(\bar{Y})=\frac{1}{n}\sum_{i=1}^N p_i \sigma_i^2$ (Theorem.\ref{thm:proportional_allocation}), together they give:

\begin{theorem}[\textit{Variance difference of simple random sampling and proportional stratified sampling, Rice \cite{rice_mathematical_2007}}]
\label{thm:comparing_variances_of_simpleRandom_and_proportional_stratified_sampling}
    The difference between the variance of the mean of a simple random sample and the variance of the mean of a stratified random sample based on proportional allocation is
    \[
    \mathrm{Var}(\bar{Y}) - \mathrm{Var}(\bar{Y}_{proportional}) 
    = \frac{1}{n} \sum_{i=1}^N p_i (\mu_i - \mu)^2
    \]
    , neglecting the finite population correction.
\end{theorem}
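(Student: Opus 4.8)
The plan is to derive the identity directly by subtracting the two variance formulas already established, using the analysis-of-variance (total-variance) decomposition proven en route to Theorem~\ref{thm:proportional_allocation}. First I would recall the two ingredients: from Theorem~\ref{thm:variance_of_random_sampler}, with the finite population correction neglected, the simple-random-sample mean has variance $\mathrm{Var}(\bar Y) = \sigma^2/n$ where $\sigma^2 = \mathrm{Var}(Y)$; and from Theorem~\ref{thm:proportional_allocation} the proportionally-allocated stratified mean has variance $\mathrm{Var}(\bar Y_{proportional}) = \tfrac{1}{n}\sum_{i=1}^N p_i\sigma_i^2$. Both are stated under the same small-sampling-fraction approximation, so they may be compared on equal footing.

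The key structural fact is the total-variance decomposition already obtained inside the proof of Theorem~\ref{thm:proportional_allocation}, namely
$$\sigma^2 = \sum_{i=1}^N p_i \sigma_i^2 + \sum_{i=1}^N p_i(\mu_i - \mu)^2,$$
which splits the population variance into the (probability-weighted) average within-stratum variance plus the between-stratum variance $\mathrm{Var}(\mu_i)$. I would simply substitute this into $\mathrm{Var}(\bar Y) = \sigma^2/n$, obtaining
$$\mathrm{Var}(\bar Y) = \frac{1}{n}\Big(\sum_{i=1}^N p_i\sigma_i^2 + \sum_{i=1}^N p_i(\mu_i-\mu)^2\Big),$$
and then subtract the proportional-allocation variance. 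The within-stratum terms cancel identically, leaving exactly $\tfrac{1}{n}\sum_{i=1}^N p_i(\mu_i-\mu)^2$, which is the claim.

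There is essentially no hard step here; the result is a one-line consequence of the ANOVA identity combined with the two previously derived variance expressions. The only point requiring care is \emph{consistency of approximations}: both variance formulas must be taken with the finite population correction dropped (equivalently, in the small sampling-fraction regime), since otherwise the stratum-level and population-level correction factors differ and the cancellation becomes only approximate. With that caveat the identity is exact, and it immediately yields the practically important corollary that $\mathrm{Var}(\bar Y) \ge \mathrm{Var}(\bar Y_{proportional})$, with equality if and only if all stratum means coincide — so proportional stratification never increases variance and strictly reduces it precisely when the strata differ in mean.
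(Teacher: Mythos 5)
Your proposal is correct and follows essentially the same route as the paper: it invokes the total-variance decomposition $\sigma^2=\sum_i p_i\sigma_i^2+\sum_i p_i(\mu_i-\mu)^2$ established in the proof of Theorem~\ref{thm:proportional_allocation}, divides by $n$, and subtracts the proportional-allocation variance so that the within-stratum terms cancel. The added remark on keeping the finite-population-correction approximation consistent across both formulas is a sensible, if minor, point of care.
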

\begin{proof}
    We have already obtained 
    $
    \frac{1}{n} \sum_{i=1}^N p_i \sigma_i^2 = \frac{1}{n} \sigma^2 - \frac{1}{n} \mathrm{Var}(\mu_i)
    $
    in the proof of Theorem.\ref{thm:proportional_allocation}.
    Therefore, we have
    $
    \mathrm{Var}(\bar{Y}) - \mathrm{Var}(\bar{Y}_{proportional}) = \frac{1}{n} \mathrm{Var}(\mu_i)
    = \frac{1}{n} \sum_{i=1}^N p_i (\mu_i - \mu)^2
    $.   
\end{proof}

Combining Theorem.\ref{thm:comparing_variances_of_Neyman_and_proportional_stratified_sampling} and Theorem.\ref{thm:comparing_variances_of_simpleRandom_and_proportional_stratified_sampling}, we therefore note that, stratified random sampling with proportional allocation always gives a \textit{smaller} variance than does simple random sampling, providing that the finite population correction is ignored \cite{rice_mathematical_2007}. Further, in general we have:
\[
 \mathrm{Var}(\bar{Y}_{Neyman}) \leq \mathrm{Var}(\bar{Y}_{proportional}) \leq \mathrm{Var}(\bar{Y})
\]

\paragraph{Asymptotic normality of stratified samples} 
Noting the variance of stratified samples (Eq.\ref{eq:sample_variance_of_stratified_sampler}):
\[
\mathrm{Var}(\bar{Y}) \approx \sum_{i=1}^N \frac{p_i^2 \sigma_i^2}{n_i}
= \frac{1}{n}\sum_{i=1}^N \frac{p_i^2 \sigma_i^2}{\alpha_i},
\qquad \alpha_i:=\frac{n_i}{n} \in (0,1)
\]
as per \textit{central limit theorem}, we have
\[
    \sqrt{n} (\bar{Y} - \mu) \xrightarrow{d} 
\mathcal{N}  \left(0, \sum_{i=1}^N \frac{p_i^2\sigma_i^2}{\alpha_i}\right)
\]

\paragraph{A simple concentration bound (bounded strata)}
If $Y \in [a_i, b_i]$ almost surely on stratum $i$, then applying 
\textit{Hoeffding's inequality} \cite{hoeffding_probability_1963} 
for independent bounded means to the stratified sample mean yields the following 
direct proposition (cf.\ \cite[Sec.2]{boucheron_concentration_2013}; see also \cite{cochran_sampling_1977} for the variance case).

\begin{proposition}[\textit{Hoeffding-type concentration bound for stratified sampling}]
\label{prop:concentration_bound_for_stratified_sampling}
Let $\mu = \mathbb{E}[Y]$ and $\bar{Y}$ be the stratified sample mean based on 
$n_i$ i.i.d. samples from each stratum $i$ of probability mass 
$p_i = P(Y \in A_i)$. 
If $Y \in [a_i, b_i]$ almost surely on $A_i$, then for any error $\varepsilon > 0$,
\[
\Pr  \left(|\bar{Y} - \mu| \ge \varepsilon\right)
\le 
2\exp  \left(
-\frac{2 \varepsilon^2}{\sum_{i=1}^N \frac{p_i^2 (b_i-a_i)^2}{n_i}}
\right).
\]
\end{proposition}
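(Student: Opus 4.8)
The plan is to recognize that the stratified sample mean $\bar Y$ is, up to the fixed allocation coefficients, a sum of mutually independent bounded random variables, and then to apply Hoeffding's inequality directly. First I would rewrite $\bar Y = \sum_{i=1}^N p_i \bar Y_i$ using the per-stratum sample mean (Eq.\ref{eq:sample_mean_of_stratified_sampler}) and expand it into a single linear combination over all individual draws,
\[
\bar Y = \sum_{i=1}^N \sum_{j=1}^{n_i} \frac{p_i}{n_i}\, Y_{ij}
\]
so that $\bar Y = \sum_{i,j} \alpha_{ij} Y_{ij}$ with deterministic coefficients $\alpha_{ij} = p_i/n_i$. The $n = \sum_i n_i$ variables $\{Y_{ij}\}$ are mutually independent: within each stratum by the i.i.d.\ sampling assumption, and across strata by the cross-stratum independence assumed in the model (Appendix.\ref{app:simple_and_stratified_random_sampling}).

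Next I would assemble the two ingredients Hoeffding's inequality requires: the mean and the per-summand ranges. The mean is already settled, since Theorem.\ref{thm:unbiasedness_of_stratified_sampler} gives $\mathbb E[\bar Y] = \mu$. For the ranges, the conditional boundedness hypothesis $Y \in [a_i, b_i]$ almost surely on $A_i$ implies each draw $Y_{ij}$ lies in $[a_i, b_i]$, whence the scaled summand $\alpha_{ij} Y_{ij}$ lies in an interval of width $\tfrac{p_i}{n_i}(b_i - a_i)$. Summing the squared widths over all $n$ summands collapses the inner index $j$, as each stratum contributes $n_i$ identical terms:
\[
\sum_{i=1}^N \sum_{j=1}^{n_i} \left(\frac{p_i}{n_i}(b_i - a_i)\right)^2 = \sum_{i=1}^N \frac{p_i^2 (b_i - a_i)^2}{n_i}
\]

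Finally I would invoke Hoeffding's inequality for sums of independent bounded variables \cite{hoeffding_probability_1963} with deviation threshold $\varepsilon$, which yields
\[
\Pr\!\left(|\bar Y - \mu| \ge \varepsilon\right) \le 2\exp\!\left(-\frac{2\varepsilon^2}{\sum_{i=1}^N \frac{p_i^2(b_i - a_i)^2}{n_i}}\right)
\]
exactly as claimed. Since the argument is a direct reduction, there is no deep obstacle; the only point demanding care is the bookkeeping in the range computation, namely tracking that each stratum contributes $n_i$ summands each of width $\tfrac{p_i}{n_i}(b_i - a_i)$, so that the factor $n_i$ from counting summands cancels one power from the squared coefficient $(p_i/n_i)^2$, leaving the $1/n_i$ dependence. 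I would also remark in passing that the denominator $\sum_i p_i^2(b_i-a_i)^2/n_i$ mirrors the stratified-variance expression of Theorem.\ref{thm:variance_of_stratified_sampler} (Eq.\ref{eq:sample_variance_of_stratified_sampler}), with $(b_i-a_i)^2$ serving as a sub-Gaussian variance proxy in place of $\sigma_i^2$, which situates the bound as the tail counterpart of the variance computation.
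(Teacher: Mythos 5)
Your proposal is correct and follows essentially the same route as the paper's proof: expand $\bar Y$ into a single sum of independent, bounded, deterministically weighted draws and invoke Hoeffding's inequality, with the $j$-index collapsing to produce the $1/n_i$ dependence. There is, however, one substantive difference worth flagging, and it is in your favour. You bound each summand $\alpha_{ij}Y_{ij}$ by the interval $[\alpha_{ij}a_i,\alpha_{ij}b_i]$ of width $\tfrac{p_i}{n_i}(b_i-a_i)$, which gives $\sum_{i,j}(\text{width})^2=\sum_i p_i^2(b_i-a_i)^2/n_i$ and hence exactly the constant $2\varepsilon^2$ in the exponent claimed by the proposition. The paper instead centres each summand, writes $X_{ij}=\tfrac{p_i}{n_i}(Y_{ij}-\mu_i)$, and then bounds it symmetrically by $\pm\tfrac{p_i}{n_i}(b_i-a_i)$; since $Y_{ij}-\mu_i$ actually lies in $[a_i-\mu_i,\,b_i-\mu_i]$, an interval of width only $b_i-a_i$, this symmetric containment doubles the range, quadruples the sum of squared ranges, and the paper's proof consequently terminates at the weaker bound
\[
\Pr\left(|\bar Y-\mu|\ge\varepsilon\right)\le 2\exp\left(-\frac{\varepsilon^2}{2\sum_{i=1}^N p_i^2(b_i-a_i)^2/n_i}\right),
\]
which does not match the constant in the proposition statement. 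Your tighter bookkeeping is the version that actually delivers the stated inequality, so in this instance the blind proof repairs a factor-of-four slack in the paper's own argument. Your closing remark relating $\sum_i p_i^2(b_i-a_i)^2/n_i$ to the stratified variance of Theorem.\ref{thm:variance_of_stratified_sampler} is apt and consistent with the paper's subsequent discussion of range-based versus variance-based allocation.
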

\begin{proof}
This is an application of Hoeffding’s inequality for weighted sums of independent bounded variables.  
The steps follow \cite[Theorem.2.2]{boucheron_concentration_2013}, specialised to the case of independent strata means with weights $p_i$.
 
On the population level, we have
\[
\mu = \mathbb{E}[Y] = \sum_{i=1}^N p_i \mu_i, 
\quad \mu_i := \mathbb{E}[Y \mid Y \in A_i]
\]
Draw $n_i$ i.i.d. samples $Y_{i1},\dots,Y_{in_i}$ from $Y \mid Y \in A_i$, independently across strata, and assume
\[
Y_{ij} \in [a_i,b_i] \quad \text{a.s. for all } i,j
\]
The stratified estimator is
\[
\bar{Y} = \sum_{i=1}^N p_i \bar{Y}_i,
\qquad
\bar{Y}_i := \frac{1}{n_i}\sum_{j=1}^{n_i} Y_{ij},
\]
with $\mathbb{E}[\bar{Y}] = \mu$.

The estimation error can be written as
\[
\bar{Y} - \mu
= \sum_{i=1}^N p_i(\bar{Y}_i - \mu_i)
= \sum_{i=1}^N \sum_{j=1}^{n_i} \frac{p_i}{n_i} (Y_{ij} - \mu_i)
=: \sum_{i=1}^N \sum_{j=1}^{n_i} X_{ij}
\]
The variables $X_{ij}$ are independent, mean-zero, and bounded:
\[
Y_{ij} - \mu_i \in [a_i - \mu_i,  b_i - \mu_i]
\quad \Rightarrow \quad
X_{ij} \in \left[ -\frac{p_i}{n_i}(b_i-a_i),  \frac{p_i}{n_i}(b_i-a_i) \right]
\]
so each has range length
\[
\mathrm{range}(X_{ij}) = \frac{2 p_i}{n_i}(b_i-a_i)
\]

Hoeffding’s inequality for sums of independent bounded mean-zero variables states that \footnote{
Here we used a flattened form of Hoeffding’s inequality. The original Hoeffding’s inequality \cite{boucheron_concentration_2013}, which applies to independent sequences, states that \cite{boucheron_concentration_2013}:
If $X_1, \dots, X_n$ are independent, with $X_k \in [a_k, b_k]$ a.s.
Let$S = \sum_{k=1}^n (X_k - \mathbb{E}X_k)$, then for any $t > 0$, we have $\Pr(S \ge t)  \le  \exp  \left( -\frac{2t^2}{\sum_{k=1}^n (b_k - a_k)^2} \right)$.
It does not care how the $X_k$ are labeled, i.e. we can have double summation running over two indices, flatten it into single summation and re-label it (which just expands the terms for summation), then apply Hoeffding’s inequality.
For our case, we further need the two-sided Hoeffding’s inequality: 
if $X_k$ are bounded in $[a_k, b_k]$, then $-X_k$ are bounded in $[-b_k, -a_k]$ with the same range lengths $b_k-a_k$.
The one-sided bound applies equally to $\Pr(S \le -t)$. Therefore, 
$\Pr(|S| \ge t)  \le  \exp  \left( -\frac{2t^2}{\sum_{k=1}^n (b_k - a_k)^2} \right)
 +  \exp  \left( -\frac{2t^2}{\sum_{k=1}^n (b_k - a_k)^2} \right)
=2 \exp  \left( -\frac{2t^2}{\sum_{k=1}^n (b_k - a_k)^2} \right)
$.
}, for any $\varepsilon>0$,
\[
\Pr  \left( \left| \sum_{i,j} X_{ij} \right| \ge \varepsilon \right)
\le 2\exp  \left( -\frac{2 \varepsilon^2}{\sum_{i,j} \mathrm{range}(X_{ij})^2} \right)
\]
Summing over $j$ in each stratum yields
\[
\sum_{i,j} \mathrm{range}(X_{ij})^2
= \sum_{i=1}^N n_i \left( \frac{2 p_i}{n_i}(b_i-a_i) \right)^2
= \sum_{i=1}^N \frac{4 p_i^2 (b_i-a_i)^2}{n_i}
\]
Substituting into the Hoeffding bound gives
\[
\Pr  \left(|\bar{Y} - \mu| \ge \varepsilon\right)
\le 
2\exp  \left(
-\frac{2 \varepsilon^2}{\sum_{i=1}^N \frac{4 p_i^2 (b_i-a_i)^2}{n_i}}
\right)
= 
2\exp  \left(
-\frac{\varepsilon^2}{2 \sum_{i=1}^N \frac{p_i^2 (b_i-a_i)^2}{n_i}}
\right)
\]
\end{proof}

This theorem gives tighter \footnote{
If we apply Hoeffding's inequality directly to the $N$ terms $p_i(\bar Y_i-\mu_i)$, we only know $\bar{Y}_i\in[a_i,b_i]$, so each term has range $p_i(b_i-a_i)$. Hoeffding then gives the (valid but loose) bound$\Pr  \big(|\bar Y-\mu|\ge \varepsilon\big) \le 2\exp  \Bigg(-\frac{2\varepsilon^2}{\sum_{i=1}^N p_i^2(b_i-a_i)^2}\Bigg)$, which does not improve with $n_i$. That’s because bounding $\bar Y_i$ by its range ignores that $\bar{Y}_i$ is an average of $n_i$ bounded i.i.d. variables and therefore concentrates faster.} concentration when the within-stratum ranges (or variances) are small and/or allocation $n_i$ is chosen well.

\paragraph{Convergence rate from the concentration bound.}
Let $Z = \bar{Y} - \mu$. With a fixed allocation $n_i = \alpha_i n$ ($\alpha_i>0,\ \sum_{i=1}^N \alpha_i=1$), Proposition.\ref{prop:concentration_bound_for_stratified_sampling} implies
\begin{equation} \label{eq:range_based_constant}
    \Pr(|Z| \ge t) \le 2\exp  \left(-\frac{n  t^2}{C}\right),
    \qquad
    C := 2 \sum_{i=1}^N \frac{p_i^2 (b_i-a_i)^2}{\alpha_i}.
\end{equation}
Using the tail-integral identity for the nonnegative variable $X = Z^2$:
\[
\mathrm{RMSE}^2 = \mathbb{E}[Z^2] 
= \int_{0}^{\infty} \Pr(|Z| \ge t) 2t  dt
\]
and plugging in the Hoeffding bound gives
\[
\mathrm{RMSE}^2
 \le  \int_{0}^{\infty} 2 e^{-\frac{n t^2}{C}} \cdot 2t   dt
= 4 \int_{0}^{\infty} e^{-\frac{n t^2}{C}}   t   dt.
\]
With the change of variables $u = \tfrac{n}{C}t^2$, $du = \tfrac{2n}{C} t  dt$, we get $t dt = \tfrac{C}{2n}   du$, so
\[
\mathrm{RMSE}^2
 \le  4 \cdot \frac{C}{2n} \int_{0}^{\infty} e^{-u} du
= \frac{2C}{n}
\]
Therefore,
\begin{equation} \label{eq:range_based_RMSE_general}
    \mathrm{RMSE} \le \sqrt{\frac{2C}{n}} = \mathcal{O}(n^{-1/2}).
\end{equation}
Thus, stratified sampling matches the standard Monte Carlo convergence rate \footnote{
In Monte Carlo methods, the estimation error is $\mathcal{O}(\sigma / \sqrt{n})$, where $\sigma$ is the standard deviation of the quantity being estimated. For direct Monte Carlo, $\sigma$ is simply the standard deviation of the target variable, while in MCMC it is more complex. Accuracy can be improved by increasing the sample size $n$ or reducing $\sigma$ through variance reduction techniques \cite{kennedy2016montecarlo}.
} but improves the constant $\sqrt{C}$ whenever strata are homogeneous and sample allocation is well chosen. 
'Improves the constant means: stratification + good allocation does not change the slope of the error decay curve on a log-log plot (still slope $-1/2$), but it lowers the curve vertically because $\sqrt{C}$ is smaller.
$C$ aggregates the within-stratum ranges $(b_i-a_i)$, the strata probabilities $p_i$, and the allocation fractions $\alpha_i$. If strata are homogeneous (i.e. small within-stratum ranges), then each $(b_i-a_i)$ is small, and so $C$ is small.
By choosing $n_i = \alpha_i n$ to minimize $C$ (e.g. $\alpha_i \propto p_i(b_i-a_i)$ for range-based bounds, or $\alpha_i \propto p_i \sigma_i$ for variance-based Neyman allocation), we further reduce $C$ compared to naive proportional allocation.
When $C$ is much smaller due to homogeneity and optimal allocation, the same $n$ gives a much smaller error bound than standard Monte Carlo.

\begin{corollary}[Range-based optimal allocation]
\label{cor:optimal_range_allocation}
Let $C(\alpha)=2 \sum_{i=1}^N \dfrac{p_i^2 (b_i-a_i)^2}{\alpha_i}$ 
with $\alpha_i>0$ and $\sum_{i=1}^N \alpha_i=1$. 
Define $c_i:=p_i (b_i-a_i)$. 
Then $C(\alpha)$ is minimized by
\[
\alpha_i^\star = \frac{c_i}{\sum_{j=1}^N c_j},
\]
with minimal value
\[
C_{\min} = 2 \Big(\sum_{i=1}^N p_i(b_i-a_i)\Big)^2,
\]
yielding
\[
\mathrm{RMSE} \le \frac{2 \sum_{i=1}^N p_i(b_i-a_i)}{\sqrt{n}}.
\]
\end{corollary}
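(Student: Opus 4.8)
The plan is to recognize that minimizing $C(\alpha)$ over the simplex is a standard convex allocation problem: writing $c_i:=p_i(b_i-a_i)\ge 0$, we have $C(\alpha)=2\sum_{i=1}^N c_i^2/\alpha_i$, and each summand $c_i^2/\alpha_i$ is convex in $\alpha_i$ on $\alpha_i>0$, so the constrained minimizer is unique. Rather than grinding through Lagrange stationarity conditions, I would obtain both the minimizer and the minimal value in one stroke via the Cauchy--Schwarz inequality, which is the most economical route here.

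First I would use the constraint $\sum_{i=1}^N\alpha_i=1$ and Cauchy--Schwarz applied to the vectors $(c_i/\sqrt{\alpha_i})_i$ and $(\sqrt{\alpha_i})_i$:
\[
\Big(\sum_{i=1}^N c_i\Big)^2 = \Big(\sum_{i=1}^N \frac{c_i}{\sqrt{\alpha_i}}\,\sqrt{\alpha_i}\Big)^2 \le \Big(\sum_{i=1}^N \frac{c_i^2}{\alpha_i}\Big)\Big(\sum_{i=1}^N \alpha_i\Big) = \sum_{i=1}^N \frac{c_i^2}{\alpha_i},
\]
which immediately gives $C(\alpha)=2\sum_i c_i^2/\alpha_i \ge 2\big(\sum_i c_i\big)^2$. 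The next step is to read off the equality case: Cauchy--Schwarz is tight exactly when $(c_i/\sqrt{\alpha_i})$ is proportional to $(\sqrt{\alpha_i})$, i.e.\ $c_i/\alpha_i$ is constant across $i$; together with $\sum_i\alpha_i=1$ this forces $\alpha_i^\star = c_i/\sum_{j=1}^N c_j$, which is precisely the claimed optimum and lies in the open simplex whenever every $c_i>0$. Substituting $c_i=p_i(b_i-a_i)$ then yields $C_{\min}=2\big(\sum_{i=1}^N p_i(b_i-a_i)\big)^2$.

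The final step is purely mechanical: feed $C_{\min}$ into the previously established bound $\mathrm{RMSE}\le\sqrt{2C/n}$ from Eq.\ref{eq:range_based_RMSE_general}. With $C=C_{\min}$ this gives $\mathrm{RMSE}\le\sqrt{4\big(\sum_i c_i\big)^2/n}=2\sum_{i=1}^N p_i(b_i-a_i)/\sqrt{n}$, matching the stated allocation bound (using $c_i\ge 0$ to drop the absolute value).

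I do not expect a serious obstacle; the only care points are bookkeeping rather than substance. The first is boundary degeneracy: if some $c_i=0$ — a stratum on which $Y$ is a.s.\ constant ($b_i=a_i$) or has zero mass ($p_i=0$) — then $\alpha_i^\star=0$ sits on the boundary of the simplex, so I would either restrict the optimization to the strata with $c_i>0$, or note that the corresponding summand in $C(\alpha)$ vanishes and contributes nothing, making the argument go through by a limiting convention. The second is merely interpretive: I would flag that this is the range-based analogue of Neyman variance-optimal allocation (Theorem.\ref{thm:Neyman_allocation}), with the deterministic range $(b_i-a_i)$ playing the role of the stratum standard deviation $\sigma_i$, so the same ``allocate in proportion to $p_i\times$spread'' rule re-emerges.
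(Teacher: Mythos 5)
Your proposal is correct and follows essentially the same route as the paper's proof: Cauchy--Schwarz applied to $(c_i/\sqrt{\alpha_i})$ and $(\sqrt{\alpha_i})$, reading off the equality case to get $\alpha_i^\star \propto c_i$, and substituting $C_{\min}$ into the bound $\mathrm{RMSE}\le\sqrt{2C/n}$. Even your handling of the $c_i=0$ boundary case matches the paper's remark.
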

\begin{proof} (an alternative proof routine is via Lagrange multipliers.)

Note
\[
C(\alpha)=2\sum_{i=1}^N \frac{c_i^2}{\alpha_i}
\quad\text{with}\quad
\alpha_i>0,\ \sum_{i=1}^N \alpha_i=1.
\]
By Cauchy-Schwarz applied to the vectors 
$\big(\tfrac{c_i}{\sqrt{\alpha_i}}\big)_{i=1}^N$ and $\big(\sqrt{\alpha_i}\big)_{i=1}^N$,
\[
\Big(\sum_{i=1}^N c_i\Big)^2
=\Big(\sum_{i=1}^N \tfrac{c_i}{\sqrt{\alpha_i}}\cdot \sqrt{\alpha_i}\Big)^2
\le \Big(\sum_{i=1}^N \tfrac{c_i^2}{\alpha_i}\Big)\Big(\sum_{i=1}^N \alpha_i\Big)
=\sum_{i=1}^N \tfrac{c_i^2}{\alpha_i}.
\]
Multiplying both sides by $2$ gives the universal lower bound
\[
C(\alpha) \ge  2\Big(\sum_{i=1}^N c_i\Big)^2.
\]
Equality in Cauchy-Schwarz holds \textit{iff} there exists $\lambda>0$ such that
$\tfrac{c_i}{\sqrt{\alpha_i}}=\lambda\sqrt{\alpha_i}$ for all $i$, i.e.
$\alpha_i \propto c_i$. Enforcing $\sum_i \alpha_i=1$ yields the minimizer
\[
\alpha_i^\star=\frac{c_i}{\sum_{j=1}^N c_j}
\quad (i=1,\dots,N),
\]
and the minimal value
\[
C_{\min}=2\Big(\sum_{i=1}^N c_i\Big)^2
=2\Big(\sum_{i=1}^N p_i(b_i-a_i)\Big)^2.
\]
Using the bound $\mathrm{RMSE}\le \sqrt{2C/n}$ from the concentration argument then gives
\begin{equation} \label{eq:range_based_RMSE_optimal}
\mathrm{RMSE} \le \sqrt{\frac{2C_{\min}}{n}}
=\frac{2\sum_{i=1}^N p_i(b_i-a_i)}{\sqrt{n}}.
\end{equation}
\textit{Edge case:} if some $c_i=0$, the optimal rule sets $\alpha_i^\star=0$ for those $i$
(achieved as a limit within the constraint $\alpha_i>0$), and the same formulas hold.
\end{proof}

\begin{remark}[Equal-range case: proportional allocation is range-optimal]
Assume all strata have the same bounded range, $b_i-a_i \equiv r$. 
Then in the range-based Hoeffding constant
\[
C(\alpha)=2\sum_{i=1}^N \frac{p_i^2(b_i-a_i)^2}{\alpha_i},
\]
we have $c_i:=p_i(b_i-a_i)=p_i r$, so the minimizer in Corollary.\ref{cor:optimal_range_allocation} is 
$\alpha_i^\star=\dfrac{c_i}{\sum_j c_j}=\dfrac{p_i r}{r}=p_i$, i.e. proportional allocation. 
The minimal constant is
\[
C_{\min}=2\Big(\sum_{i=1}^N p_i r\Big)^2=2r^2,
\]
so the Hoeffding-based bound yields 
$\mathrm{RMSE}\le \sqrt{2C_{\min}/n}= \dfrac{2r}{\sqrt{n}}$.

\textit{Alternative check from proportional variance:}
under proportional allocation, Theorem.\ref{thm:proportional_allocation} gives 
$\mathrm{Var}(\bar Y)=\frac{1}{n}\sum_i p_i\sigma_i^2$. 
With bounded support and equal range, Popoviciu's inequality on variances implies 
$\sigma_i^2\le r^2/4$, hence 
$\mathrm{Var}(\bar Y)\le \frac{1}{n}\sum_i p_i (r^2/4)=\frac{r^2}{4n}$, 
which is consistent with the Hoeffding-based constant $C_{\min}=2r^2$ since 
$\Pr(|\bar Y-\mu|\ge t)\le 2\exp  \big(-{nt^2}/{(2r^2)}\big)$ integrates to an 
RMSE of order $2r/\sqrt{n}$.
\end{remark}

\begin{corollary}[Variance-based, Neyman optimal allocation \cite{neyman_two_1992}]
\label{cor:neyman_allocation_RMSE}
Under the variance-based, Neyman optimal allocation (Theorem.\ref{thm:Neyman_allocation}), 
stratified sampling achieves the standard Monte Carlo rate $\mathcal{O}(n^{-1/2})$ 
with the smallest possible RMSE constant $\sum_{i=1}^N p_i \sigma_i$:
\[
\mathrm{RMSE} \approx \frac{\sum_{i=1}^N p_i \sigma_i}{\sqrt{n}}.
\]
\end{corollary}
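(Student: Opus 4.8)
The plan is to read off the RMSE directly from the variance computed under Neyman allocation, using unbiasedness to identify mean-squared error with variance. First I would invoke Theorem \ref{thm:unbiasedness_of_stratified_sampler}, which gives $\mathbb{E}[\bar{Y}]=\mu$; hence the mean-squared error carries no bias contribution and $\mathrm{RMSE}^2=\mathbb{E}[(\bar{Y}-\mu)^2]=\mathrm{Var}(\bar{Y})$. This reduction is the only conceptual step, and it is immediate from the definition of RMSE for an unbiased estimator.

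Next I would substitute the Neyman-optimal variance supplied by Theorem \ref{thm:Neyman_allocation}. That theorem, under the allocation $n_i = n\,p_i\sigma_i/\sum_j p_j\sigma_j$ and neglecting the finite-population correction, yields $\mathrm{Var}(\bar{Y}) = \tfrac{1}{n}\big(\sum_{i=1}^N p_i\sigma_i\big)^2$. Taking square roots gives $\mathrm{RMSE}\approx \big(\sum_{i=1}^N p_i\sigma_i\big)/\sqrt{n}$, exhibiting both the $\mathcal{O}(n^{-1/2})$ decay and the explicit constant $\sum_i p_i\sigma_i$; the approximate equality is exactly what the dropped finite-population factor introduces.

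For the claim that this constant is the \emph{smallest} achievable, I would appeal to the optimality already embedded in Theorem \ref{thm:Neyman_allocation}: the allocation $n_i\propto p_i\sigma_i$ is by construction the minimizer of $\mathrm{Var}(\bar{Y})$ over all feasible $\{n_i\}$ with $\sum_i n_i=n$, so no competing allocation can yield a smaller RMSE constant. Should a self-contained argument be preferred over citing the theorem, I would reproduce the one-line Cauchy-Schwarz estimate used in Corollary \ref{cor:optimal_range_allocation}: writing $\alpha_i=n_i/n$ and applying Cauchy-Schwarz to the vectors $(p_i\sigma_i/\sqrt{\alpha_i})$ and $(\sqrt{\alpha_i})$ gives $\sum_i p_i^2\sigma_i^2/\alpha_i \ge (\sum_i p_i\sigma_i)^2$ with equality iff $\alpha_i\propto p_i\sigma_i$, which pins down both the minimizer and the minimal constant simultaneously.

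The computation is genuinely routine because Theorem \ref{thm:Neyman_allocation} does the heavy lifting; the only point demanding care is bookkeeping around the finite-population correction. The exact variance carries factors $\big(1-(n_i-1)/(N_i-1)\big)$, and the clean constant $\sum_i p_i\sigma_i$ emerges only in the small-sampling-fraction (large-population) regime, so I would state explicitly that the result is asymptotic in that sense, which is precisely what the $\approx$ in the statement signals.
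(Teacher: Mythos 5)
Your proposal is correct and follows essentially the same route as the paper: substitute the Neyman-allocation variance from Theorem \ref{thm:Neyman_allocation} and take square roots, with the minimality of the constant inherited from the optimality already proved there. Your explicit appeal to unbiasedness (to equate RMSE with $\sqrt{\mathrm{Var}}$) and the optional Cauchy--Schwarz argument are sensible elaborations of steps the paper leaves implicit, but they do not change the argument.
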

\begin{proof}
From Theorem.\ref{thm:Neyman_allocation}, the variance of the stratified sample mean 
under the Neyman optimal allocation is
\[
\mathrm{Var}(\bar{Y}) = \frac{1}{n} \left( \sum_{i=1}^N p_i \sigma_i \right)^2.
\]
Taking square roots gives the root-mean-square error
\begin{equation} \label{eq:Neyman_RMSE}
\mathrm{RMSE} = \sqrt{\mathrm{Var}(\bar{Y})} 
= \frac{\sum_{i=1}^N p_i \sigma_i}{\sqrt{n}}.
\end{equation}

The $n^{-1/2}$ dependence confirms the standard Monte Carlo convergence rate, and 
$\sum_{i=1}^N p_i\sigma_i$ is the minimal RMSE constant achievable under any allocation, 
by optimality of the Neyman allocation.
\end{proof}

\begin{remark}[Range-based \textit{vs} variance-based allocation]
Both Corollary.\ref{cor:optimal_range_allocation} and Corollary.\ref{cor:neyman_allocation_RMSE}
minimize $\sum_i w_i^2 / \alpha_i$ subject to $\sum_i \alpha_i=1$,
with $\alpha_i \propto w_i$.  
The difference lies in the choice of $w_i$:
\begin{itemize}
    \item \textit{Hoeffding-bound (range-based):} $w_i = p_i (b_i-a_i)$, robust when only range bounds are known.
    \item \textit{Variance-based (Neyman):} $w_i = p_i \sigma_i$, optimal when variances are available.
\end{itemize}
\end{remark}

\begin{remark}[Proportional allocation as a special case]
From Corollary.\ref{cor:optimal_range_allocation}, the range-based optimal allocation satisfies 
$\alpha_i^\star \propto p_i (b_i-a_i)$.  
If all strata have the same range, $b_i-a_i \equiv r$ for all $i$, then
$\alpha_i^\star \propto p_i r = p_i$, 
so the range-based optimal allocation reduces to the proportional allocation 
of Theorem.\ref{thm:proportional_allocation}.  
Similarly, in the variance-based Neyman allocation 
(Theorem.\ref{thm:Neyman_allocation}), if all standard deviations 
$\sigma_i$ are equal, then $\alpha_i^\star \propto p_i\sigma_i = p_i$, 
again recovering proportional allocation.  
Thus, proportional allocation is a special case of both optimal rules when strata are homogeneous.
\end{remark}

We now illustrate the theoretical results from Corollaries.\ref{cor:optimal_range_allocation} and.\ref{cor:neyman_allocation_RMSE}
using a synthetic stratified sampling experiment.  
4 strata are specified with probabilities $p_i$, ranges $(b_i-a_i)$, and within-stratum standard deviations $\sigma_i$ given in Table.\ref{tab:strata_params}.  
For each allocation rule, i.e. proportional (Theorem.\ref{thm:proportional_allocation}), range-optimal (Corollary.\ref{cor:optimal_range_allocation}), and variance-based Neyman (Corollary.\ref{cor:neyman_allocation_RMSE}), we compute the RMSE constant from the corresponding closed-form formula, as summarised in Table.\ref{tab:rmse_constants}.  
We then plot the predicted $\mathrm{RMSE} \approx \text{RMSE const}/\sqrt{n}$ curves \footnote{
The RMSE constant here refers to the denominator $\sqrt{2C}$ in Eq.\ref{eq:range_based_RMSE_general}, $2\sum_{i=1}^N p_i(b_i-a_i)$ in Eq.\ref{eq:range_based_RMSE_optimal} and $\sum_{i=1}^N p_i \sigma_i$ in Eq.\ref{eq:Neyman_RMSE}.
} over sample sizes $n\in[10^2,10^5]$ on both linear and log-log axes to compare their absolute accuracy levels (vertical offsets) and verify the common $\mathcal{O}(n^{-1/2})$ convergence rate.

\begin{figure}[H]
    \centering
    \includegraphics[width=0.95\linewidth]{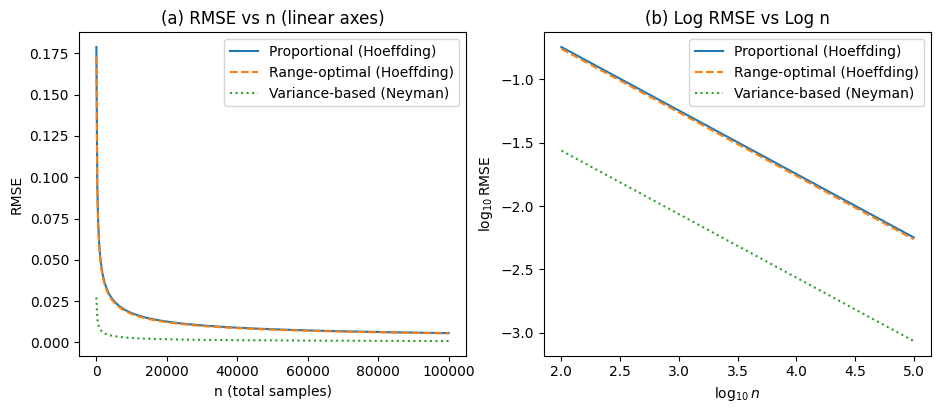}
    \caption{RMSE (=$\text{RMSE const}/\sqrt{n}$) values under different stratified allocations computed using the strata parameters in Table.\ref{tab:strata_params} and constants in Table.\ref{tab:rmse_constants}. 
    Panel (a) shows RMSE decay versus total sample size $n$ on linear axes; 
    panel (b) shows the same data on a log-log scale, highlighting the $\mathcal{O}(n^{-1/2})$ convergence rate with different vertical offsets determined by the RMSE constants. 
    Variance-based (Neyman) allocation achieves the smallest constant, followed by range-optimal allocation, with proportional allocation having the largest.}
    \label{fig:rmse_allocation_dual}
\end{figure}

\begin{table}[H]
\centering
\caption{Example strata parameters used in Fig..\ref{fig:rmse_allocation_dual}.}
\label{tab:strata_params}
\begin{tabular}{lccc}
\toprule
Stratum $i$ & $p_i$ & range $(b_i-a_i)$ & $\sigma_i$ \\
\midrule
1 & 0.10 & 0.50 & 0.12 \\
2 & 0.25 & 1.20 & 0.40 \\
3 & 0.35 & 0.70 & 0.22 \\
4 & 0.30 & 0.90 & 0.28 \\
\bottomrule
\end{tabular}
\end{table}

\begin{table}[H]
\centering
\small
\caption{RMSE constants under different allocations (same $n$). 
Two-sided Hoeffding is used for bounded-range cases, so 
$\mathrm{RMSE} \lesssim \text{RMSE const}/\sqrt{n}$.}
\label{tab:rmse_constants}
\begin{tabular}{lcc}
\toprule
Allocation & Constant & RMSE const. \\
\midrule
Proportional (Hoeffding) 
& $C_{\mathrm{H,prop}}=2  \sum_i p_i(b_i-a_i)^2=1.5990$
& $2\sqrt{\sum_i p_i(b_i-a_i)^2}=1.7882$ \\
Range-optimal (Hoeffding) 
& $C_{\mathrm{H,min}}=2\big(\sum_i p_i(b_i-a_i)\big)^2=1.49645$
& $2\sum_i p_i(b_i-a_i)=1.7300$ \\
Variance-based (Neyman) 
& $V_{\mathrm{Neyman}}=(\sum_i p_i\sigma_i)^2=0.074529$
& $\sum_i p_i\sigma_i=0.2730$ \\
\bottomrule
\end{tabular}
\end{table}

\noindent\textbf{Derivation of constants in Table.\ref{tab:rmse_constants}}
For the two-sided Hoeffding bound with allocation $\alpha=\{\alpha_i\}_{i=1}^N$, the tail constant is (Eq.\ref{eq:range_based_constant})
\[
C(\alpha) = 2\sum_{i=1}^N \frac{p_i^2(b_i-a_i)^2}{\alpha_i},
\]
and integrating the bound yields the RMSE estimate (Eq.\ref{eq:range_based_RMSE_general})
\[
\mathrm{RMSE}  \le  \sqrt{\frac{2 C(\alpha)}{n}}.
\]
(i) \textit{Proportional (Hoeffding)}: Setting $\alpha_i=p_i$ in the above $C(\alpha)$ gives
\[
C_{\mathrm{H,prop}}=2\sum_{i=1}^N p_i(b_i-a_i)^2.
\]
Using Table.\ref{tab:strata_params}, $\sum_i p_i(b_i-a_i)^2 = 0.7995$, hence
$C_{\mathrm{H,prop}}=1.5990$ and RMSE constant $=2\sqrt{0.7995}\approx 1.7882$.

(ii) \textit{Range-optimal (Hoeffding)}: From Corollary.\ref{cor:optimal_range_allocation} (also Eq.\ref{eq:range_based_RMSE_optimal}),
\[
C_{\mathrm{H,min}} = 2\big(\sum_{i=1}^N p_i(b_i-a_i)\big)^2.
\]
Here $\sum_i p_i(b_i-a_i) = 0.8650$, so $C_{\mathrm{H,min}}=1.49645$ and RMSE constant $=1.7300$.

(iii) \textit{Variance-based (Neyman)}: From Theorem.\ref{thm:Neyman_allocation} (also Eq.\ref{eq:Neyman_RMSE}),
\[
V_{\mathrm{Neyman}} = \big(\sum_{i=1}^N p_i\sigma_i\big)^2.
\]
Using Table.\ref{tab:strata_params}, $\sum_i p_i\sigma_i = 0.2730$, giving $V_{\mathrm{Neyman}}=0.074529$ and RMSE constant $=0.2730$ (Corollary.\ref{cor:neyman_allocation_RMSE}).

\begin{remark}[Link to allocation theory]
The constants in Table.\ref{tab:rmse_constants} correspond to the 
$\sqrt{C}$ and $\sqrt{V}$ terms from Corollaries.\ref{cor:optimal_range_allocation} 
and.\ref{cor:neyman_allocation_RMSE}.  
Proportional allocation is a special case of the range-based optimal allocation when all strata have identical ranges, $b_i-a_i \equiv r$, in which case $\alpha_i^\star \propto p_i$ and $C_{\mathrm{H,prop}} = 2 r^2$.  
Figure.\ref{fig:rmse_allocation_dual} confirms the theory: 
all allocations decay as $\mathcal{O}(n^{-1/2})$, but with different vertical offsets, smallest for Neyman allocation and largest for proportional allocation.
\end{remark}

\section{Improved WGMA sampling} \label{app:improved_WGMA_algorithm}

Following the discussion in Section.\ref{sec:methodology}, here we implement some efficient strategies for improving the GMA sampling algorithm, for example: (1) pre-computes the GMM density values $\mathcal{N}(\mathbf{s}_{i,j}; \boldsymbol{\mu}_i, \Sigma_i)$, and target density values $\log\bar{p}(\mathbf{s}_{i,j})$. (2) Replace the gradient (\ref{eq:KL_divergence_objective4_gradient_GMM1_MC_cc}) by its Monte Carlo estimator (\ref{eq:KL_divergence_objective4_gradient_GMM1_MC_cc}). The first improved GMA variant is presented in Section.\ref{app:precomputing_density_values}, the second variant is presented in Section.\ref{app:MC_gradient_estimator}. We further combine both strategies, yielding the optimal GMA sampler, in Section.\ref{app:combine_precomputing_and_MC_gradient_estimator}. As a companion, we also present a naive, heuristic, GD based GMA sampling method in Section.\ref{app:GMA_with_GD_and_heuristics}.

\subsection{WGMA sampling algorithm with GD and heuristics} \label{app:GMA_with_GD_and_heuristics}
Below we present the GMA sampling which uses standard GD with heuristic clipping and re-normalisation in each iteration.

The overall computational complexity of this heuristic, GD based GMA-sampling algorithm is the same as the pGD based GMA algorithm in Algo.\ref{algo:WGMA-sampling-pgd}, i.e. $\mathcal{O}(K N^2 M d^2)$, dominated by the nested loops within the iterative weight update (Step 4).

\begin{algorithm}[H]
\footnotesize
\caption{WGMA-sampling: sampling via Gaussian mixture approximation (with GD and heuristics)}
\label{algo:GMA-sampling-heuristic}
\textbf{Input:} Number of Gaussian components $N$; number of samples per component $M$; number of iterations $K$; target unnormalised density \(\bar{p}(\mathbf{z})\); initial means \(\{\boldsymbol{\mu}_i\}_{i=1}^N\); initial covariance matrices \(\{\Sigma_i\}_{i=1}^N\); initial learning rate \(\eta_0\). \\
\textbf{Output:} Ensemble of selected samples \(\{\mathbf{s}_{\text{selected}}\}\) approximating samples from $p(\mathbf{z})$.

\vspace{1mm}\hrule\vspace{1mm}

\begin{algorithmic}[1]
\STATE Initialize an empty set \(\mathcal{S} = \{\}\) to store selected samples. \hfill\textit{\(\mathcal{O}(1)\)}
\STATE Initialize weight vector \(\mathbf{w}^{(0)} = [w_1^{(0)}, \ldots, w_N^{(0)}]^\top\) with $w_i^{(0)} \sim \mathcal{U}(0, 1)$ and normalize \(\sum_{i=1}^N w_i^{(0)} = 1\). \hfill\textit{\(\mathcal{O}(N)\)}
\STATE Draw $M$ samples \(\{\mathbf{s}_{i,j}\}_{j=1}^M\) from each Gaussian \(\mathcal{N}(\boldsymbol{\mu}_i, \Sigma_i)\) for $i = 1, 2, \ldots, N$ using standard Gaussian sampling (e.g. \(\boldsymbol{\epsilon} \sim \mathcal{N}(0, I)\), \(\mathbf{s}_{i,j} = \boldsymbol{\mu}_i + L_i \boldsymbol{\epsilon}\), where $L_i L_i^\top = \Sigma_i$). \hfill\textit{\(\mathcal{O}(N M d^2)\), where $d$ is dimension}
\FOR{$k = 1$ to $K$}
    \FOR{$i = 1$ to $N$}
        \STATE Compute GMM density \(q_{\mathbf{w}}^{(k-1)}(\mathbf{s}_{i,j}) = \sum_{i=1}^N w_l^{(k-1)} \cdot \mathcal{N}(\mathbf{s}_{i,j}; \boldsymbol{\mu}_l, \Sigma_l)\) for all $j = 1, 2, \ldots, M$. \hfill\textit{\(\mathcal{O}(N M d^2)\)}
        \STATE Compute gradient component for $w_i$ (\ref{eq:KL_divergence_objective4_gradient_GMM1_cc}): 
        \[
        g_i = 1 +  \sum_{j=1}^M \mathcal{N}(\mathbf{s}_{i,j}; \boldsymbol{\mu}_i, \Sigma_i) [\log q_{\mathbf{w}}^{(k-1)}(\mathbf{s}_{i,j}) - \log \bar{p}(\mathbf{s}_{i,j})]
        \]
        \hfill\textit{\(\mathcal{O}(M d^2)\)}
    \ENDFOR
    \STATE Update weight vector \(\mathbf{w}^{(k)} = \mathbf{w}^{(k-1)} - \frac{\eta_0}{k} \cdot \nabla_{\mathbf{w}} KL(q_{\mathbf{w}}(\mathbf{z}) \| p(\mathbf{z}))\), where \(\nabla_{\mathbf{w}} KL = [g_1, g_2, \ldots, g_N]^\top\). Ensure $0 \leq w_i \leq 1$ via e.g. clipping.  \hfill\textit{\(\mathcal{O}(N)\)}
    \STATE Re-normalize weights: \(\mathbf{w}^{(k)} = \frac{\mathbf{w}^{(k)}}{\sum_{i=1}^N w_i^{(k)}}\) to ensure \(\sum_{i=1}^N w_i^{(k)} = 1\). \hfill\textit{\(\mathcal{O}(N)\)}
\ENDFOR
\STATE Set final weights \(\mathbf{w}^* = \mathbf{w}^{(K)}\). \hfill\textit{\(\mathcal{O}(1)\)}
\STATE Compute component selection probabilities \(\mathbf{p} = \mathbf{w}^* / \sum_{i=1}^N w_i^*\). \hfill\textit{\(\mathcal{O}(N)\)}
\STATE Generate ensemble samples: for each $m = 1$ to $N \cdot M$, draw index $i_m \sim \text{Categorical}(\mathbf{p})$ and append $\mathbf{s}_{i_m, j_m}$ (where $j_m \sim \text{Uniform}(\{1, 2, \ldots, M\})$) to \(\mathcal{S}\). \hfill\textit{\(\mathcal{O}(N M)\)}
\STATE Return the set \(\mathcal{S}\). \hfill\textit{\(\mathcal{O}(1)\)}
\end{algorithmic}
\end{algorithm}

\subsection{Pre-computing density values} \label{app:precomputing_density_values}

\begin{algorithm}[H]
\footnotesize
\caption{WGMA-sampling: sampling via Gaussian mixture approximation (with pGD and pre-computing density values)}
\label{algo:GMA-sampling-precomputing-pgd}
\textbf{Input:} Number of Gaussian components $N$; number of samples per component $M$; number of iterations $K$; target unnormalised density \(\bar{p}(\mathbf{z})\); initial means \(\{\boldsymbol{\mu}_i\}_{i=1}^N\); initial covariance matrices \(\{\Sigma_i\}_{i=1}^N\); initial learning rate \(\eta_0\). \\
\textbf{Output:} Ensemble of selected samples \(\{\mathbf{s}_{\text{selected}}\}\) approximating samples from $p(\mathbf{z})$.

\vspace{1mm}\hrule\vspace{1mm}

\begin{algorithmic}[1]
\STATE Initialize an empty set \(\mathcal{S} = \{\}\) for storing selected samples. \hfill\textit{\(\mathcal{O}(1)\)}
\STATE Initialize weight vector \(\mathbf{w}^{(0)}\) on the probability simplex, e.g. $w_i^{(0)} \sim \mathcal{U}(0, 1)$ and normalize. \hfill\textit{\(\mathcal{O}(N)\)}
\STATE Draw $M$ samples \(\{\mathbf{s}_{i,j}\}_{j=1}^M\) from each Gaussian \(\mathcal{N}(\boldsymbol{\mu}_i, \Sigma_i)\) for $i = 1, 2, \ldots, N$ using standard Gaussian sampling, e.g. \(\boldsymbol{\epsilon} \sim \mathcal{N}(0, I)\), \(\mathbf{s}_{i,j} = \boldsymbol{\mu}_i + L_i \boldsymbol{\epsilon}\), where $L_i L_i^\top = \Sigma_i$. \hfill\textit{\(\mathcal{O}(N M d^2)\)}
\STATE \textbf{Pre-compute GMM PDFs}: create a matrix \(\mathbf{P} \in \mathbb{R}^{(NM) \times N}\). For each sample \(\mathbf{s}_{i,j}\) and each component \(l\), compute \(P_{(i-1)M+j, l} = \mathcal{N}(\mathbf{s}_{i,j}; \boldsymbol{\mu}_l, \Sigma_l)\). \hfill\textit{\(\mathcal{O}(N^2 M d^2)\)}
\STATE \textbf{Pre-compute target densities}: create a vector \(\mathbf{p}_{\text{target}} \in \mathbb{R}^{NM}\). For each sample \(\mathbf{s}_{i,j}\), compute \(\left(\mathbf{p}_{\text{target}}\right)_{(i-1)M+j} = \log\bar{p}(\mathbf{s}_{i,j})\). \hfill\textit{\(\mathcal{O}(NM \cdot C_p)\)}
\FOR{$k = 1$ to $K$}
    \STATE Compute mixture densities for all samples: \(\mathbf{q}^{(k-1)} = \mathbf{P} \cdot \mathbf{w}^{(k-1)}\). \hfill\textit{\(\mathcal{O}(N^2 M)\)}
    \STATE Compute gradient vector \(\mathbf{g} = [g_1, \ldots, g_N]^\top\):
    \FOR{$i = 1$ to $N$}
        \STATE \(g_i = 1 +  \sum_{j=1}^M P_{(i-1)M+j, i} \cdot [\log q^{(k-1)}_{(i-1)M+j} - (\mathbf{p}_{\text{target}})_{(i-1)M+j}]\)
    \ENDFOR
    \hfill\textit{\(\mathcal{O}(N M)\)}
    \STATE Take gradient descent: \(\mathbf{v}^{(k)} = \mathbf{w}^{(k-1)} - \frac{\eta_0}{k} \cdot \mathbf{g}\). \hfill\textit{\(\mathcal{O}(N)\)}
    \STATE Project onto simplex: \(\mathbf{w}^{(k)} = \text{Proj}_{\Delta}(\mathbf{v}^{(k)})\), where \(\text{Proj}_{\Delta}\) is the projection onto the set \(\{\mathbf{w} | w_i \ge 0, \sum w_i = 1\}\). \hfill\textit{\(\mathcal{O}(N \log N)\)}
\ENDFOR
\STATE Set final weights \(\mathbf{w}^* = \mathbf{w}^{(K)}\), and component selection probabilities \(\mathbf{p} = \mathbf{w}^*\).  \hfill\textit{\(\mathcal{O}(1)\)}
\STATE Generate ensemble samples: for each $m = 1$ to $N \cdot M$, draw index $i_m \sim \text{Categorical}(\mathbf{p})$ and append $\mathbf{s}_{i_m, j_m}$ (where $j_m \sim \text{Uniform}(\{1, \ldots, M\})$) to \(\mathcal{S}\). \hfill\textit{\(\mathcal{O}(N M)\)}
\STATE Return the set \(\mathcal{S}\). \hfill\textit{\(\mathcal{O}(1)\)}
\end{algorithmic}
\end{algorithm}

The complexity is now best described by separating the one-time pre-computation cost from the iterative cost.

\paragraph{Pre-computation cost}
This is a significant, one-time cost incurred before the optimisation loop begins.
Sample generation (Step 3): generating $N \cdot M$ samples, where each generation involves a matrix-vector product of cost $\mathcal{O}(d^2)$, results in a total cost of $\mathcal{O}(N M d^2)$. 
PDF matrix (Step 4): this is the most expensive single step. We evaluate the PDF of each of the $N \cdot M$ samples under each of the $N$ Gaussian components. This requires $N^2 M$ evaluations of the Gaussian PDF, each costing $\mathcal{O}(d^2)$. The total cost is $\mathcal{O}(N^2 M d^2)$. 
In Step 5, we evaluate the unnormalised, target density $\bar{p}(\mathbf{z})$ at the $N M$ sample positions. This costs $\mathcal{O}(NM \cdot C_p)$, where the constant $C_p$ represents the one-time computational cost of evaluating $\bar{p}(\mathbf{z})$ for a single sample point $\mathbf{z}$ - it depends entirely on how complex the function $\bar{p}(\mathbf{z})$ is.
The total pre-computation cost is dominated by the PDF matrix calculation, making it $\mathcal{O}(N^2 M d^2)$.

\paragraph{Iterative cost}
By using the pre-computed matrix $\mathbf{P}$, the cost of each iteration (Steps 6-12) is drastically reduced.
Mixture density (Step 7): computing the vector $\mathbf{q}$ of mixture densities for all $N \cdot M$ samples is now a matrix-vector product between $\mathbf{P}$ ($(NM) \times N$) and $\mathbf{w}$ ($N \times 1$). This costs $\mathcal{O}((NM) \times N) = \mathcal{O}(N^2 M)$.
Gradient calculation (Steps 8-11): calculating the gradient vector $\mathbf{g}$ involves a loop over $N$ components, with an inner summation over $M$ samples. All values inside the sum are now simple lookups in the pre-computed tables. This step costs $\mathcal{O}(N M)$.
The cost per iteration is dominated by the mixture density calculation, making it $\mathcal{O}(N^2 M)$. The cost of the gradient projection step ($\mathcal{O}(N \log N)$) is subsumed by the much more expensive mixture density calculation ($\\mathcal{O}(N^2 M)$) that occurs within each iteration.

\paragraph{Overall complexity}
The total complexity of the optimised algorithm is the sum of the pre-computation and the total iterative costs:
\[
\text{Overall computational complexity} = \underbrace{\mathcal{O}(N^2 M d^2)}_{\text{Pre-computation}} + \underbrace{\mathcal{O}(K \cdot N^2 M)}_{\text{K iterations}}
\]

This is a significant improvement over the original algorithm's $\mathcal{O}(K N^2 M d^2)$. The expensive dependency on dimensionality $d^2$ has been moved outside the main loop and is no longer multiplied by the number of iterations $K$. This will result in a much faster implementation, especially for problems with high dimensionality or requiring many iterations.

\subsection{Using Monte Carlo gradient estimator} \label{app:MC_gradient_estimator}

Introducing the MC estimator (Eq.\ref{eq:KL_divergence_objective4_gradient_GMM1_MC_cc}) for gradient is theoretically insightful, it also marginally reduces the cost of the gradient calculation step in Line 8: originally in Algo.\ref{algo:WGMA-sampling-pgd} it was $\mathcal{O}(M d^2)$ because it evaluated $M$ Gaussian PDFs; the new Monte Carlo estimator has a cost of $\mathcal{O}(M \cdot C\_p)$, as it no longer evaluates those PDFs. $C_p$ again represents the one-time computational cost of evaluating $\bar{p}(\mathbf{z})$ for a single sample point $\mathbf{z}$ - it depends entirely on how complex the function $\bar{p}(\mathbf{z})$ is. 

The asymptotic complexity, however, doesn't change as the computational bottleneck has not been addressed. The main computational bottleneck is in Line 7, where the GMM density $q_{\mathbf{w}}$ is computed for all $M$ samples. This step, which costs $\mathcal{O}(N M d^2)$ for each component $i$, is still inside the $i$-loop. This means the total cost for each iteration $k$ is still dominated by $N \times \mathcal{O}(N M d^2) = \mathcal{O}(N^2 M d^2)$.

\begin{algorithm}[H]
\footnotesize
\caption{WGMA-sampling: sampling via Gaussian mixture approximation (with pGD and MC gradient estimator)}
\label{algo:GMA-sampling-MCEstimator}
\textbf{Input:} Number of Gaussian components $N$; number of samples per component $M$; number of iterations $K$; target unnormalised density \(\bar{p}(\mathbf{z})\); initial means \(\{\boldsymbol{\mu}_i\}_{i=1}^N\); initial covariance matrices \(\{\Sigma_i\}_{i=1}^N\); initial learning rate \(\eta_0\). \\
\textbf{Output:} Ensemble of selected samples \(\{\mathbf{s}_{\text{selected}}\}\) approximating samples from $p(\mathbf{z})$.

\vspace{1mm}\hrule\vspace{1mm}

\begin{algorithmic}[1]
\STATE Initialize an empty set \(\mathcal{S} = \{\}\) to store selected samples. \hfill\textit{\(\mathcal{O}(1)\)}
\STATE Initialize weight vector \(\mathbf{w}^{(0)}\) on the probability simplex, e.g. $w_i^{(0)} \sim \mathcal{U}(0, 1)$ and normalize. \hfill\textit{\(\mathcal{O}(N)\)}
\STATE Draw $M$ samples \(\{\mathbf{s}_{i,j}\}_{j=1}^M\) from each Gaussian \(\mathcal{N}(\boldsymbol{\mu}_i, \Sigma_i)\) for $i = 1, 2, \ldots, N$ using standard Gaussian sampling (e.g. \(\boldsymbol{\epsilon} \sim \mathcal{N}(0, I)\), \(\mathbf{s}_{i,j} = \boldsymbol{\mu}_i + L_i \boldsymbol{\epsilon}\), where $L_i L_i^\top = \Sigma_i$). \hfill\textit{\(\mathcal{O}(N M d^2)\), where $d$ is dimension}
\FOR{$k = 1$ to $K$}
    \STATE Compute gradient vector \(\mathbf{g} = [g_1, g_2, \ldots, g_N]^\top\):
    \FOR{$i = 1$ to $N$}
        \STATE Compute GMM density \(q_{\mathbf{w}}^{(k-1)}(\mathbf{s}_{i,j}) = \sum_{i=1}^N w_l^{(k-1)} \cdot \mathcal{N}(\mathbf{s}_{i,j}; \boldsymbol{\mu}_l, \Sigma_l)\) for all $j = 1, \ldots, M$. \hfill\textit{\(\mathcal{O}(N M d^2)\)}
        \STATE Compute gradient component using Monte Carlo estimator (\ref{eq:KL_divergence_objective4_gradient_GMM1_MC_cc}): 
        \[
        g_i = 1 + \frac{1}{M} \sum_{j=1}^M \left[ \log q_{\mathbf{w}}^{(k-1)}(\mathbf{s}_{i,j}) - \log \bar{p}(\mathbf{s}_{i,j}) \right]
        \]
        \hfill\textit{\(\mathcal{O}(M \cdot C_p)\)}
    \ENDFOR
    \STATE Take gradient descent: \(\mathbf{v}^{(k)} = \mathbf{w}^{(k-1)} - \frac{\eta_0}{k} \cdot \mathbf{g}\). \hfill\textit{\(\mathcal{O}(N)\)}
    \STATE Project onto simplex: \(\mathbf{w}^{(k)} = \text{Proj}_{\Delta}(\mathbf{v}^{(k)})\), where \(\text{Proj}_{\Delta}\) is the projection onto the set \(\{\mathbf{w} | w_i \ge 0, \sum w_i = 1\}\). \hfill\textit{\(\mathcal{O}(N \log N)\)}
\ENDFOR
\STATE Set final weights \(\mathbf{w}^* = \mathbf{w}^{(K)}\), and component selection probabilities \(\mathbf{p} = \mathbf{w}^*\). \hfill\textit{\(\mathcal{O}(1)\)}
\STATE Generate ensemble samples: for each $m = 1$ to $N \cdot M$, draw index $i_m \sim \text{Categorical}(\mathbf{p})$ and append $\mathbf{s}_{i_m, j_m}$ (where $j_m \sim \text{Uniform}(\{1, \ldots, M\})$) to \(\mathcal{S}\). \hfill\textit{\(\mathcal{O}(N M)\)}
\STATE Return the set \(\mathcal{S}\). \hfill\textit{\(\mathcal{O}(1)\)}
\end{algorithmic}
\end{algorithm}

\subsection{Combining pre-computation and MC gradient estimation} \label{app:combine_precomputing_and_MC_gradient_estimator}

To achieve a significant speedup, we combine the density pre-computation strategy with MC gradient estimation strategy. Pre-computing the PDF matrix (as in Section.\ref{app:precomputing_density_values}) removes the expensive $\mathcal{O}(K N^2 M d^2)$ term, while using the correct Monte Carlo gradient (as in Section.\ref{app:MC_gradient_estimator}) ensures the algorithm is theoretically sound.

\begin{algorithm}[H]
\footnotesize
\caption{WGMA-sampling: sampling via Gaussian mixture approximation (with pGD, pre-computing density values and MC estimator)}
\label{algo:GMA-sampling-optimal}
\textbf{Input:} Number of Gaussian components $N$; number of samples per component $M$; number of iterations $K$; target unnormalised density \(\bar{p}(\mathbf{z})\); initial means \(\{\boldsymbol{\mu}_i\}_{i=1}^N\); initial covariance matrices \(\{\Sigma_i\}_{i=1}^N\); initial learning rate \(\eta_0\). \\
\textbf{Output:} Ensemble of selected samples \(\{\mathbf{s}_{\text{selected}}\}\) approximating samples from $p(\mathbf{z})$.

\vspace{1mm}\hrule\vspace{1mm}

\begin{algorithmic}[1]
\STATE Initialize an empty set \(\mathcal{S} = \{\}\) for storing selected samples. \hfill\textit{\(\mathcal{O}(1)\)}
\STATE Initialize weight vector \(\mathbf{w}^{(0)}\) on the probability simplex, e.g. $w_i^{(0)} \sim \mathcal{U}(0, 1)$ and normalize. \hfill\textit{\(\mathcal{O}(N)\)}
\STATE Draw $M$ samples \(\{\mathbf{s}_{i,j}\}_{j=1}^M\) from each Gaussian \(\mathcal{N}(\boldsymbol{\mu}_i, \Sigma_i)\) for $i = 1, 2, \ldots, N$ using standard Gaussian sampling (e.g. \(\boldsymbol{\epsilon} \sim \mathcal{N}(0, I)\), \(\mathbf{s}_{i,j} = \boldsymbol{\mu}_i + L_i \boldsymbol{\epsilon}\), where $L_i L_i^\top = \Sigma_i$). \hfill\textit{\(\mathcal{O}(N M d^2)\), where $d$ is dimension}
\STATE \textbf{Pre-compute GMM PDFs}: create a matrix \(\mathbf{P} \in \mathbb{R}^{(NM) \times N}\). For each sample \(\mathbf{s}_{i,j}\) and each component \(l\), compute \(P_{(i-1)M+j, l} = \mathcal{N}(\mathbf{s}_{i,j}; \boldsymbol{\mu}_l, \Sigma_l)\). \hfill\textit{\(\mathcal{O}(N^2 M d^2)\)}
\STATE \textbf{Pre-compute target densities}: create a vector \(\mathbf{p}_{\text{target}} \in \mathbb{R}^{NM}\). For each sample \(\mathbf{s}_{i,j}\), compute \(\left(\mathbf{p}_{\text{target}}\right)_{(i-1)M+j} = \log\bar{p}(\mathbf{s}_{i,j})\). \hfill\textit{\(\mathcal{O}(NM \cdot C_p)\)}
\FOR{$k = 1$ to $K$}
    \STATE Compute mixture densities for all samples: \(\mathbf{q}^{(k-1)} = \mathbf{P} \cdot \mathbf{w}^{(k-1)}\). \hfill\textit{\(\mathcal{O}(N^2 M)\)}
    \STATE Compute gradient vector \(\mathbf{g} = [g_1, \ldots, g_N]^\top\):
    \FOR{$i = 1$ to $N$}
        \STATE \(g_i = 1 + \frac{1}{M} \sum_{j=1}^M \left[ \log q^{(k-1)}_{(i-1)M+j} - (\mathbf{p}_{\text{target}})_{(i-1)M+j} \right]\)
    \ENDFOR
    \hfill\textit{\(\mathcal{O}(N M)\)}
    \STATE Take gradient step: \(\mathbf{v}^{(k)} = \mathbf{w}^{(k-1)} - \frac{\eta_0}{k} \cdot \mathbf{g}\). \hfill\textit{\(\mathcal{O}(N)\)}
    \STATE Project onto simplex: \(\mathbf{w}^{(k)} = \text{Proj}_{\Delta}(\mathbf{v}^{(k)})\), where \(\text{Proj}_{\Delta}\) is the projection onto the set \(\{\mathbf{w} | w_i \ge 0, \sum w_i = 1\}\). \hfill\textit{\(\mathcal{O}(N \log N)\)}
\ENDFOR
\STATE Set final weights \(\mathbf{w}^* = \mathbf{w}^{(K)}\), and component selection probabilities \(\mathbf{p} = \mathbf{w}^*\). \hfill\textit{\(\mathcal{O}(1)\)}
\STATE Generate ensemble samples: for each $m = 1$ to $N \cdot M$, draw index $i_m \sim \text{Categorical}(\mathbf{p})$ and append $\mathbf{s}_{i_m, j_m}$ (where $j_m \sim \text{Uniform}(\{1, \ldots, M\})$) to \(\mathcal{S}\). \hfill\textit{\(\mathcal{O}(N M)\)}
\STATE Return the set \(\mathcal{S}\). \hfill\textit{\(\mathcal{O}(1)\)}
\end{algorithmic}
\end{algorithm}

The complexity of this optimal algorithm is identical to the pre-computation strategy, as the change to the gradient estimator does not affect the dominant term in the iterative loop.

\paragraph{Density pre-computation cost}
The one-time cost is dominated by the creation of the PDF matrix $P$ (Step 4), which requires evaluating each of the $N \cdot M$ samples under each of the $N$ Gaussian components. This cost is $\mathcal{O}(N^2 M d^2)$.

\paragraph{Iterative cost}
Within each of the $K$ iterations, the cost is dominated by the matrix-vector product used to compute the mixture densities for all samples (Step 7). This costs $\mathcal{O}(N^2 M)$. The gradient calculation (Steps 8-11) is now a highly efficient $\mathcal{O}(NM)$ operation, as it only involves lookups and summations.

\paragraph{Overall complexity}
The total complexity is the sum of the pre-computation and the total iterative costs:
\[
\text{Overall computational complexity} = \underbrace{\mathcal{O}(N^2 M d^2)}_{\text{Pre-computation}} + \underbrace{\mathcal{O}(K \cdot N^2 M)}_{\text{K iterations}}
\]
This combined algorithm represents the best of all strategies: it is computationally efficient by moving the expensive, $d^2$-dependent calculations outside the main loop, and it is theoretically sound by using the unbiased Monte Carlo estimator for the gradient.

Memory scales as $\mathcal{O}(NM d + N d^2 + N^2 M)$ for storing the bank, covariances, and the PDF matrix $P$.

\subsubsection*{\textit{Anti-mode-collapse schedules for pGD-GMA sampling}}

When implementing pGD-GMA in Algo.\ref{algo:GMA-sampling-optimal}, we introduce annealing schedules to mitigate premature collapse of the mixture weights onto a single component. Unlike the mirror descent variant, projected gradient descent applies a direct Euclidean update followed by projection back onto the probability simplex. Nevertheless, stabilization is achieved through tempering and entropy regularization.

\begin{itemize}
    \item \textbf{Tempering $\beta_k$:}  
    during gradient computation, the log-likelihood terms are scaled by a tempering factor $\beta_k$:
    \begin{equation} \label{eq:tempering_pGD_GMA}
    \text{diffs} = \log q_w(x_j) - \frac{\beta_k   \log p(x_j) - \mu_{\ell}}{\sigma_{\ell}}
    \end{equation}
    Early in training, $\beta_k$ is set to a small value (e.g. $0.3$), reducing the influence of noisy gradients. As iterations progress, $\beta_k \to 1$, recovering the true posterior objective.

    \item \textbf{Entropy regularization $\lambda_k$:}  
    to discourage degeneracy, the gradient is augmented with a decaying entropy term:
    \begin{equation} \label{eq:entropy_regularisation_pGD_GMA}
    g^{(k)}_i  \leftarrow  g^{(k)}_i + \lambda_k \big(1 + \log w^{(k)}_i\big) 
    \end{equation}
    This pushes up very small weights and suppresses overly large ones, spreading probability mass across multiple components. As $k$ increases, $\lambda_k$ is annealed toward zero, allowing the final distribution to concentrate freely.
\end{itemize}

Together, the tempering and entropy regularization schedules stabilize the projected gradient updates by encouraging broad exploration at early iterations and sharper posterior concentration at later stages. Unlike mirror descent (see below section and Appendix.\ref{app:mirror_descent}), pGD-GMA does not employ explicit temperature rescaling and convex mixing, relying instead on simplex projection combined with these anti-collapse schedules.

\subsubsection*{\textit{Improve numerical stability via log-sum-exp}}
When evaluating mixture densities of the form
\[
q(x) = \sum_{j=1}^N w_j  \mathcal{N}(x \mid \mu_j, \Sigma_j)
\]
direct computation may suffer from \textit{numerical underflow} in high dimensions, since each Gaussian density can be extremely small when $x$ lies far from most component means. To stabilise, we instead compute
\[
\log q(x) = \log \Bigg( \sum_{j=1}^N w_j \exp\big(\log \mathcal{N}(x \mid \mu_j, \Sigma_j)\big) \Bigg)
\]
which can still be underflow; we apply the \textit{log-sum-exp} trick:
\[
\log q(x) = m + \log \Bigg( \sum_{j=1}^N w_j \exp\big(\log \mathcal{N}(x \mid \mu_j, \Sigma_j) - m\big) \Bigg)
\]
where $m = \max_j \log \mathcal{N}(x \mid \mu_j, \Sigma_j)$. This ensures at least one term inside the exponential equals $1$, while all others are bounded by $1$, thereby avoiding underflow and preserving stable evaluations of mixture densities even in high-dimensional parameter spaces.

\subsection{Mirror descent (MD) based GMA sampling} \label{app:mirror_descent_GMA}

An alternative to the projected gradient descent (pGD) step is to employ a mirror descent update with KL geometry, also known as the multiplicative weights update. This avoids the explicit Euclidean projection step and ensures that the weights remain strictly positive and normalized on the simplex throughout the optimization. 

Specifically, instead of taking a step 
\[
\mathbf{v}^{(k)} = \mathbf{w}^{(k-1)} - \eta_k   \mathbf{g}^{(k)}, \quad 
\mathbf{w}^{(k)} = \text{Proj}_{\Delta}(\mathbf{v}^{(k)}),
\]
we perform the mirror descent update
\[
\tilde{w}_i^{(k)}  \propto  w_i^{(k-1)} \exp  \left(-\eta_k   g_i^{(k)}\right), 
\quad 
w_i^{(k)} = \frac{\tilde{w}_i^{(k)}}{\sum_{l=1}^N \tilde{w}_l^{(k)}}
\]

This update can be seen as performing gradient descent in the dual space defined by the negative entropy potential, yielding a natural geometry for probability vectors. It automatically enforces the non-negativity and normalization constraints without requiring an explicit projection step. For detailed derivation of the MD, using a negative entropy function, can be found in Appendix.\ref{app:mirror_descent}. The step size $\eta_k$ can follow a decaying schedule such as $\eta_k = \eta_0 / \sqrt{k + k_0}$. Note that this square-root decay no longer satisfies the Robbins-Monro conditions, but is often preferred in practice due to its slower decay and improved robustness to gradient noise.

\begin{algorithm}[H]
\footnotesize
\caption{WGMA-sampling: sampling via Gaussian mixture approximation (with mirror descent, pre-computing density values and MC estimator)}
\label{algo:GMA-sampling-mirror}
\textbf{Input:} Number of Gaussian components $N$; number of samples per component $M$; number of iterations $K$; target unnormalised density \(\bar{p}(\mathbf{z})\); initial means \(\{\boldsymbol{\mu}_i\}_{i=1}^N\); initial covariance matrices \(\{\Sigma_i\}_{i=1}^N\); initial learning rate \(\eta_0\). \\
\textbf{Output:} Ensemble of selected samples \(\{\mathbf{s}_{\text{selected}}\}\) approximating samples from $p(\mathbf{z})$.

\vspace{1mm}\hrule\vspace{1mm}

\begin{algorithmic}[1]
\STATE Initialize an empty set \(\mathcal{S} = \{\}\). 
\STATE Initialize weight vector \(\mathbf{w}^{(0)}\) on the probability simplex, e.g. uniform.
\STATE Draw \(M\) samples \(\{\mathbf{s}_{i,j}\}_{j=1}^M\) from each Gaussian component \(i=1,\dots,N\). 
\STATE Pre-compute the Gaussian PDF matrix \(\mathbf{P} \in \mathbb{R}^{(NM)\times N}\) with entries \(P_{(i-1)M+j, l} = \mathcal{N}(\mathbf{s}_{i,j}; \boldsymbol{\mu}_l,\Sigma_l)\). 
\STATE Pre-compute the target log-densities \(\mathbf{p}_{\text{target}} \in \mathbb{R}^{NM}\), with \((\mathbf{p}_{\text{target}})_{(i-1)M+j} = \log \bar{p}(\mathbf{s}_{i,j})\).
\FOR{$k = 1$ to $K$}
    \STATE Compute mixture densities for all samples: \(\mathbf{q}^{(k-1)} = \mathbf{P} \cdot \mathbf{w}^{(k-1)}\).
    \STATE For each component \(i\), compute the Monte Carlo gradient:
    \[
    g_i^{(k)} = 1 + \frac{1}{M} \sum_{j=1}^M \Big[ \log q^{(k-1)}_{(i-1)M+j} - (\mathbf{p}_{\text{target}})_{(i-1)M+j} \Big].
    \]
    \STATE Mirror descent update:
    \[
    \tilde{w}_i^{(k)}  \propto  w_i^{(k-1)} \exp  \left(-\eta_k   g_i^{(k)}\right), \qquad
    w_i^{(k)} = \frac{\tilde{w}_i^{(k)}}{\sum_{l=1}^N \tilde{w}_l^{(k)} }.
    \]
\ENDFOR
\STATE Set final weights \(\mathbf{w}^* = \mathbf{w}^{(K)}\).
\STATE Sample ensemble: for each output sample, draw index \(i\sim \text{Categorical}(\mathbf{w}^*)\) and pick a random local sample \(\mathbf{s}_{i,j}\).
\STATE Return \(\mathcal{S}\).
\end{algorithmic}
\end{algorithm}

\paragraph{Complexity.}
The pre-computation and gradient estimation costs remain unchanged from the pGD variant:
\[
\mathcal{O}(N^2 M d^2)  +  \mathcal{O}(K \cdot N^2 M).
\]
Replacing projection with the multiplicative weights update reduces the per-iteration overhead from \(\mathcal{O}(N \log N)\) to \(\mathcal{O}(N)\), while guaranteeing that weights remain in the simplex. The mirror descent update is therefore not only more natural for probability vectors, but also slightly more efficient.

When implementing MD-GMA, anti-mode-collapse tricks, similar to those used in pGD-GMA but with enhanced techniques, can be applied, we discuss these in the following section.

\section{Mirror descent for mixture weights optimisation}
\label{app:mirror_descent}

Here we explain the mirror descent (MD \cite{blair_problem_1985,nemirovski_robust_2009}) method used for optimizing the mixture weights in the GMA sampler. We also contrast MD with projected gradient descent (pGD). The derivation closely follows the updates implemented in our algorithm.

\paragraph{Proximal view of mirror descent on the simplex}
We aim to minimize an objective $f(w)$ over the probability simplex
\[
\Delta := \{ w \in \mathbb{R}^N : w_i \ge 0, \ \sum_{i=1}^N w_i = 1 \}.
\]
A single MD step solves the KL-regularized proximal subproblem
\[
w^{(k+1)} = \arg\min_{w \in \Delta} 
\Big\{ \langle g^{(k)}, w \rangle + \tfrac{1}{\eta_k} D_{\psi}(w \| w^{(k)}) \Big\},
\]
where $g^{(k)} = \nabla f(w^{(k)})$, $\eta_k>0$ is the step size, and $D_{\psi}$ is the Bregman divergence induced by the mirror map $\psi$.

\begin{definition}[Bregman divergence]
Given a strictly convex and differentiable mirror map $\psi : \mathbb{R}^d \to \mathbb{R}$, 
the \textit{Bregman divergence} between $w, u \in \mathbb{R}^d$ with respect to $\psi$ is
\[
D_{\psi}(u  \|  w) := \psi(u) - \psi(w) - \langle \nabla \psi(w),  u - w \rangle.
\]
\end{definition}

\paragraph{Negative-entropy mirror map}
For problems on the simplex, the canonical mirror map is the negative entropy
\[
\psi(w) = \sum_{i=1}^N w_i \log w_i
\]
whose Bregman divergence is the forward KL divergence \cite{cmu15850notes2020}
\[
D_{\psi}(u \| v) = \sum_{i=1}^N u_i \log \frac{u_i}{v_i}
\]
The proximal subproblem then becomes
\[
w^{(k+1)} = \arg\min_{w\in\Delta} 
\Big\{ \langle g^{(k)}, w\rangle + \tfrac{1}{\eta_k} \mathrm{KL}(w \| w^{(k)}) \Big\}.
\]

\paragraph{Closed-form solution: exponentiated Gradient}
To solve the KL-regularized proximal subproblem
\[
w^{(k+1)} = \arg\min_{w\in\Delta} 
\Big\{ \langle g^{(k)}, w\rangle + \tfrac{1}{\eta_k} \mathrm{KL}(w \| w^{(k)}) \Big\}
\]
we expand the KL divergence:
\[
\mathrm{KL}(w \| w^{(k)}) = \sum_{i=1}^N w_i \log \frac{w_i}{w^{(k)}_i}
\]
The optimization problem becomes
\[
\min_{w \in \Delta}   \sum_{i=1}^N \Big( g^{(k)}_i w_i + \tfrac{1}{\eta_k} w_i \log \tfrac{w_i}{w^{(k)}_i} \Big).
\]
Introducing a Lagrange multiplier $\lambda$ for the simplex constraint $\sum_i w_i = 1$, the Lagrangian is
\[
\mathcal{L}(w,\lambda) = \sum_{i=1}^N \Big( g^{(k)}_i w_i + \tfrac{1}{\eta_k} w_i \log \tfrac{w_i}{w^{(k)}_i} \Big)
+ \lambda\Big(\sum_{i=1}^N w_i - 1\Big)
\]
Setting derivatives w.r.t.~$w_i$ to zero gives
\[
g^{(k)}_i + \tfrac{1}{\eta_k}\Big(\log \tfrac{w_i}{w^{(k)}_i} + 1\Big) + \lambda = 0
\]
Re-arranging:
\[
\log \tfrac{w_i}{w^{(k)}_i} = -\eta_k g^{(k)}_i - 1 - \eta_k \lambda
\]
Exponentiating both sides yields
\[
w_i = w^{(k)}_i \exp(-\eta_k g^{(k)}_i) \cdot \exp(-1 - \eta_k \lambda)
\]
The factor $\exp(-1 - \eta_k \lambda)$ is the same for all $i$ and is fixed by the constraint $\sum_i w_i=1$. Thus we obtain the normalized \textit{exponentiated gradient} update:
\begin{equation} \label{eq:MD_exponential_update}
    w^{(k+1)}_i = \frac{w^{(k)}_i \exp(-\eta_k g^{(k)}_i)}{\sum_j w^{(k)}_j \exp(-\eta_k g^{(k)}_j)}
\end{equation}
This multiplicative update preserves nonnegativity and automatically enforces the simplex constraint.

We use mirror descent (MD) for GMM weights optimisation based on following considerations:
\begin{itemize}
    \item \textit{Simplex constraints:} MD is a natural fit when optimizing distributions over the simplex, as in GMA mixture weights.  
    \item \textit{Better conditioning:} in high dimensions, the entropy geometry exploited by MD often provides more stable updates compared with Euclidean projection.  
    \item \textit{Sparser solutions:} exponentiated-gradient style updates in MD tend to promote sparsity in the learned weight distribution, effectively pruning away redundant mixture components.  
\end{itemize}

\subsubsection*{\textit{Anti-mode-collapse schedules for MD-GMA}}

When implementing MD-GMA, we further introduce several annealing schedules to prevent the mixture weights from collapsing onto a single component too early. These schedules are applied directly in the weight update step.

\begin{itemize}
    \item \textbf{Temperature $\tau_k \ge 1$:}  
    In the exponentiated-gradient update, the raw update is scaled by a temperature factor:
    \begin{equation} \label{eq:temperature_factor_MD_GMA}
    \log w^{\text{prop}} = \frac{\log w^{(k)} - \eta_k g^{(k)}}{\tau_k}
    \end{equation}
    When $\tau_k > 1$, the distribution is flattened, corresponding to stronger KL regularization that prevents sharp peaks. As $k$ increases, $\tau_k$ is annealed toward $1$, allowing the weights to concentrate more sharply on high-probability components.

    \item \textbf{Convex mixing $\alpha_k$ (with uniform shrinkage as a special case).}
    After computing the proposed update $w^{\text{prop}}$ via exponentiated gradient, form a convex combination\footnote{The superscript $\text{prop}$ denotes the proposed (intermediate) update.}:
    \begin{equation}\label{eq:convex_mixing_MD_GMA}
      \tilde w^{(k+1)} = (1-\alpha_k)  w^{(k)}  +  \alpha_k  w^{\text{prop}}
    \end{equation}
    This mixing smooths fluctuations and prevents abrupt changes in the weights that can trigger premature collapse. Typically $\alpha_k$ starts around $0.2$ and decays toward a small value such as $0.05$.
    
    \textit{Uniform shrinkage as a special case.}  
    Let $\mathbf{u}   =   \frac{1}{N}\mathbf{1}$ be the uniform distribution on the simplex. Taking $w^{\text{prop}}=\mathbf{u}$ in Eq.\ref{eq:convex_mixing_MD_GMA} yields the uniform-mixing update
    \begin{equation}\label{eq:uniform_shrink}
      w^{(k+1)} = (1-\tau_k)  w^{(k)}  +  \tau_k  \mathbf{u}
    \end{equation}
    which guarantees a per-component floor $w^{(k+1)}_i \ge \tau_k/N$ and thus mitigates mode collapse. In practice we may apply both steps: first Eq.\ref{eq:convex_mixing_MD_GMA} with $\alpha_k$, then Eq.\ref{eq:uniform_shrink} with a small $\tau_k   \in   [10^{-3},10^{-2}]$. Both updates are convex combinations, so they preserve non-negativity and the simplex constraint $\sum_i w_i = 1$.

    \item \textbf{Tempering $\beta_k$:}  
    The log-likelihood contribution of each local sample is multiplied by $\beta_k$ when computing the standardized target:
    \begin{equation} \label{eq:tempering_MD_GMA}
    \text{diffs} = \log q_w(x_j) - \frac{\beta_k   \log p(x_j) - \mu_{\ell}}{\sigma_{\ell}}
    \end{equation}
    where $\mu_{\ell}, \sigma_{\ell}$ are running statistics for standardization. Early in training, $\beta_k$ is set to a smaller value (e.g. $0.3$) so that the algorithm downweights the noisy, high-variance target log-likelihood. As $k$ increases, $\beta_k$ anneals toward $1$, gradually shifting the optimization focus toward the true posterior.

    \item \textbf{Entropy regularization $\lambda_k$:}
    We augment the objective with a (negative) entropy penalty
    \[
    \mathcal{R}(w) = \sum_{i=1}^N w_i \log w_i
    \quad\text{(with the convention } 0\log 0 := 0\text{)}
    \]
    A single coordinate’s contribution is $r(w_i) = w_i\log w_i$, whose derivative is
    \[
    \frac{\partial}{\partial w_i}  r(w_i)
    = \frac{\partial}{\partial w_i}  \big( w_i \log w_i \big)
    = \log w_i + 1.
    \]
    Therefore,
    \[
    \nabla \mathcal{R}(w) = \big(1+\log w_1,  1+\log w_2, \ldots,  1+\log w_N\big)^\top,
    \]
    and adding the term $\lambda_k \mathcal{R}(w)$ to the objective contributes the explicit gradient
    \begin{equation} \label{eq:entropy_regularisation_MD_GMA}
    g^{(k)}_i  \leftarrow  g^{(k)}_i  +  \lambda_k \big(1 + \log w^{(k)}_i\big)
    \end{equation}
    Intuitively, this pushes up very small weights (for which $\log w_i \ll 0$) and pushes down very large ones, encouraging spread spread out across components, preventing degeneracy where a single weight dominates (i.e. mode collapse). As training progresses, $\lambda_k$ is reduced toward zero, allowing the final distribution to sharpen.
    
    \textit{Remarks:}
    \begin{itemize}
    \item \textit{Relation to KL to uniform.} Up to an additive constant, $\mathcal{R}(w)$ is the forward KL divergence to the uniform distribution: $\mathrm{KL}(w\|u) = \sum_i w_i \log\frac{w_i}{1/N} = \mathcal{R}(w) + \log N$. Thus $\lambda_k \mathcal{R}(w)$ encourages $w$ to stay closer to uniform early on.
    \item \textit{Curvature.} The Hessian of $r(w_i)$ is $r''(w_i) = 1/w_i$, so $\nabla^2 \mathcal{R}(w) = \mathrm{diag}(1/w_i)$ on the interior of the simplex; small $w_i$ imply large curvature that resists further shrinkage.
    \item \textit{Numerics.} To avoid $\log 0$, we use a floor such as $\log(\max\{w_i,\varepsilon\})$ with $\varepsilon \in [10^{-12},10^{-8}]$ in code.
    \item \textit{Annealing.} We start with a larger $\lambda_k$ to promote exploration and reduce it toward zero as $k$ grows, allowing the final solution to sharpen once collapse is no longer a risk.
    \end{itemize}
\end{itemize}

Together, these schedules (temperature, convex mixing, tempering, and entropy regularization) provide a controlled annealing mechanism. Early iterations emphasize stability and exploration across mixture components, while later iterations allow the weights to focus more precisely on high-likelihood regions.

\paragraph{Gradient estimation}
Given pre-drawn local samples $\{x_j\}_{j=1}^{NM}$ around each component mean, the mixture log-density is
\[
\log q_w(x_j) = \log \sum_{i=1}^N w_i   \mathcal{N}(x_j;\mu_i, \sigma^2 I).
\]
The stochastic gradient estimator is
\[
g^{(k)}_i \approx 1 + \frac{1}{M}\sum_{j \in \text{component}(i)} \Big( \log q_w(x_j) - \text{standardized target}(x_j) \Big)
\]
with an additional entropy term $\lambda_k (1 + \log w_i^{(k)})$. The standardized form of the (tempered) target log-density:
$$
\text{standardized target}(x_j) = \frac{\beta_k   \log p(x_j) - \mu_{\ell}}{\sigma_{\ell}}
$$

\paragraph{Final optimisation objective with anti-collapse schedules}
Combining the mechanisms above, the effective optimization problem at iteration $k$ can be expressed as
\[
\min_{w \in \Delta}  
\Big\langle g^{(k)}_{\text{temp}, \beta},   w \Big\rangle
 +  \tfrac{1}{\eta_k} D_{\psi}(w  \|  w^{(k)})
 +  \lambda_k \sum_{i=1}^N w_i \log w_i
\]
where:
\begin{itemize}
    \item $g^{(k)}_{\text{temp}, \beta}$ is the tempered and standardized gradient estimate at iteration $k$, incorporating the tempering factor $\beta_k$ that downweights the target log-likelihood in early iterations.
    \item $D_{\psi}$ is the Bregman divergence induced by the negative entropy mirror map, with an additional \textit{temperature scaling} $\tau_k$ applied inside the proximal step:
    \[
    \log w^{\text{prop}} = \frac{\log w^{(k)} - \eta_k g^{(k)}_{\text{temp}, \beta}}{\tau_k}
    \]
    \item $\lambda_k \sum_i w_i \log w_i$ is the explicit entropy penalty that pushes weights toward uniformity early on.
\end{itemize}
The proposed multiplicative update $w^{\text{prop}}$ is then combined with the previous iterate using convex mixing:
\[
w^{(k+1)} = (1-\alpha_k)  w^{(k)} + \alpha_k  w^{\text{prop}}
\]

The final MD step in our GMA algorithm jointly incorporates these schedules, producing a robust and stable annealed optimization path on the simplex, with
\begin{itemize}
    \item $\tau_k$ flattens the exponentiated-gradient update, enforcing stronger KL regularization early on.
    \item $\alpha_k$ convexly mixes the proposed update with the previous iterate to smooth the trajectory of weights.
    \item $\beta_k$ tempers the contribution of the noisy target log-likelihood, gradually shifting attention toward the true posterior.
    \item $\lambda_k$ penalizes low-entropy solutions, explicitly adding an entropy gradient to discourage premature collapse.
\end{itemize}

\paragraph{Comparison to projected gradient descent (pGD)}
For reference, the pGD update takes an additive step followed by Euclidean projection:
\[
\tilde{w} = w^{(k)} - \eta_k g^{(k)}, \qquad
w^{(k+1)} = \Pi_{\Delta}(\tilde{w})
\]
where $\Pi_{\Delta}$ is projection onto the simplex.  
In contrast, MD yields the multiplicative update
\[
w^{(k+1)} \propto w^{(k)} \odot \exp(-\eta_k g^{(k)})
\]
pGD uses Euclidean geometry, while MD uses KL (entropy) geometry, which is often more natural for probability vectors. MD automatically maintains positivity and tends to preserve entropy, avoiding the spiky solutions pGD can create after projection.

In short,
\begin{itemize}
    \item MD is a KL-proximal update that produces exponentiated-gradient steps.
    \item Temperature $\tau_k$ and convex mixing $\alpha_k$ prevent weight collapse.
    \item Tempering $\beta_k$ and entropy penalty $\lambda_k$ further stabilize learning.
    \item Compared with PGD, MD is better suited to simplex constraints, since it preserves positivity and maintains spread across components.
\end{itemize}

\paragraph{Workflow of MD with stabilization strategies}

The whole flow of the MD method, with stabilization strategies applied to GMM weights optimisation, is shown in Fig.\ref{fig:mirror-descent-flow}.

\begin{figure}[H]
\centering
\begin{tikzpicture}[node distance=1.8cm, auto, >=Latex, thick]
\tikzset{
  block/.style = {rectangle, draw, fill=blue!10, rounded corners,
                  minimum height=1cm, minimum width=4.8cm,
                  align=center},
  side/.style  = {rectangle, draw, fill=green!10, rounded corners,
                  minimum height=1cm, minimum width=4.8cm,
                  align=center},
  line/.style  = {draw, -{Latex[length=3mm]}}
}

% Nodes
\node[block] (init)
  {Initialize $w^{(0)}  \in  \Delta$; draw locals;\\
   precompute $\log\mathcal{N}(x;\mu_i,\sigma^2 I)$ and stats of $\log\bar{p}(x)$};

\node[block, below=of init] (grad)
  {Compute MC gradient $g^{(k)}$ on tempered target ($\beta_k$)\\
   and add entropy term:\\[2pt]
   $g^{(k)} \leftarrow g^{(k)} + \lambda_k\big(1+\log w^{(k)}\big)$};

\node[block, below=of grad] (md)
  {Mirror descent with temperature $\tau_k$ (proposal):\\[4pt]
   $z = \dfrac{\log w^{(k)} - \eta_k g^{(k)}}{\tau_k}$\\[3pt]
   $w^{\text{prop}} = \mathrm{softmax}(z)$\\[6pt]
   \textit{(PGD alternative):}\quad
   $w^{\text{prop}} = \Pi_{\Delta}  \big(w^{(k)}-\eta_k g^{(k)}\big)$};

\node[block, below=of md] (mix)
  {Convex mixing (stabilization):\\[2pt]
   $\displaystyle w^{(k+1)}=(1-\alpha_k) w^{(k)}+\alpha_k w^{\text{prop}}$};

\node[block, below=of mix] (polyak)
  {Polyak/tail averaging over last $L$ iterates:\\[2pt]
   $\displaystyle \bar{w}^{(K)}=\frac{1}{L}\sum_{t=K-L+1}^{K} w^{(t)}$};

\node[block, below=0.5cm of polyak] (final)
  {Final weights $\bar{w}^{(K)}$ (or $w^{(K)}$); sample ensemble};

% Side node for tempering
\node[side, right=of grad] (temp)
  {Tempering: scale target $\log \bar{p}(x)$ \\ by $\beta_k\in(0,1]$ before gradient};

% Arrows
\path[line] (init) -- (grad);
\path[line] (grad) -- (md);
\path[line] (md) -- (mix);
\path[line] (mix) -- (polyak);
\path[line] (polyak) -- (final);

% Side arrow for tempering -> gradient
% \draw[line] (temp.west) -- ++(-2.8,0) |- (grad.east);
\draw[line, shorten >=6pt, shorten <=6pt] (temp.west) -- (grad.east);

\end{tikzpicture}
\caption{Stabilized mirror-descent GMA workflow. Tempering ($\beta_k$) modifies the target inside the gradient; entropy regularization is added to the gradient; the MD step uses temperature $\tau_k$; convex mixing stabilizes updates; Polyak averaging reduces variance. pGD shown as an alternative to the MD proposal.}
\label{fig:mirror-descent-flow}
\end{figure}
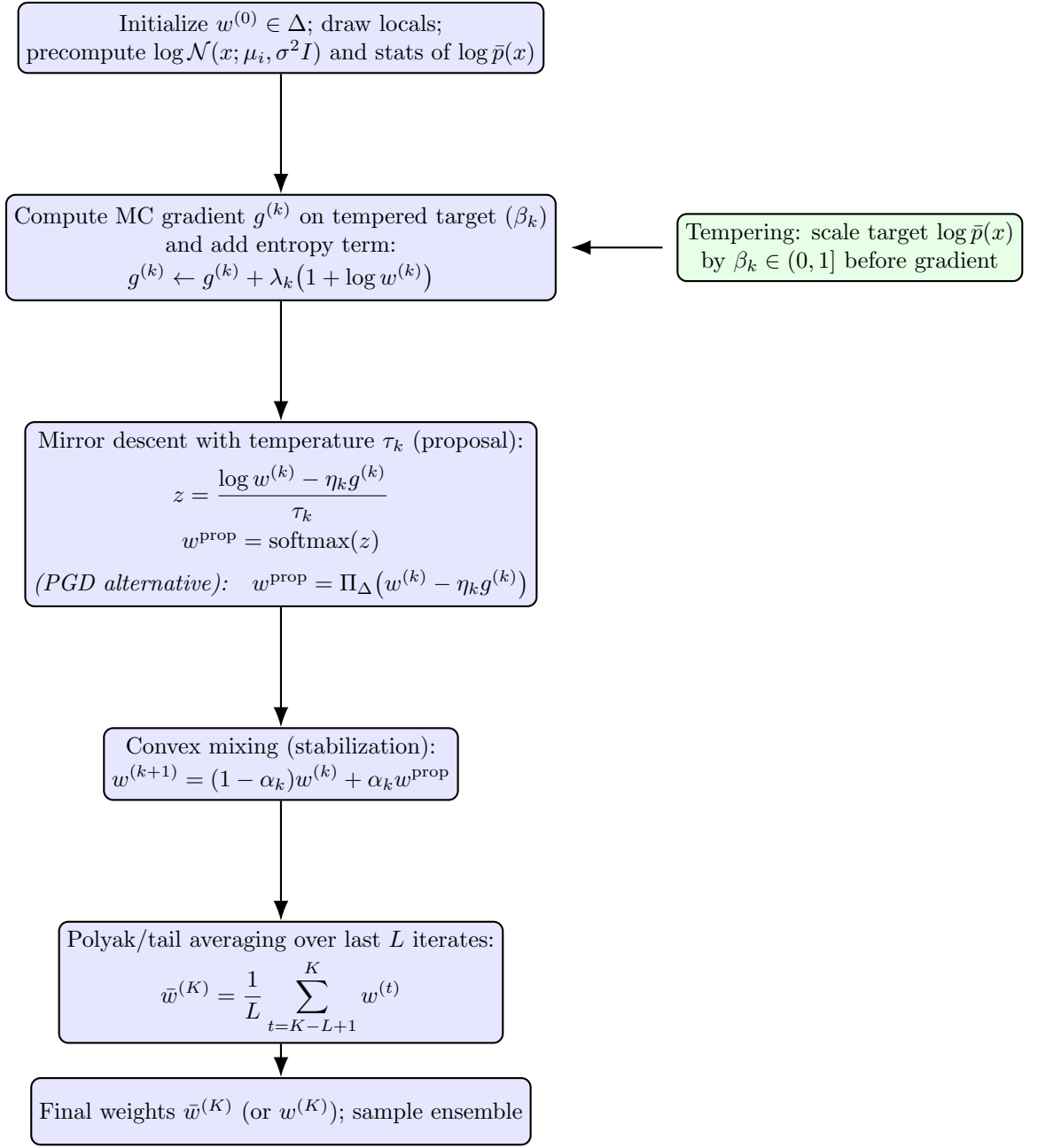

\section{Laplace mixture approximation} \label{app:LMA}

Here we extent our discussion of LMA in Section.\ref{subsec:LMA}. As discussed, one way to improve GMM sampling is to fit a GMM to the target distribution via Laplace approximation. Classic Laplace approximation fits a uni-modal Gaussian at the maximum of the posterior \cite{Mackay1998Choice}; here we propose fitting a GMM to multi-modes of the posterior. Discovery of these multi-modes are out of the scope; it can be done via e.g. a multi-start optimisation procedure. 
The Laplace approximation approach places the GMM centered around the posterior MAP modes, with precision equaling to the observed Fisher information.
After fitting this GMM (we call it \textit{Laplace mixtures}) to the target (in a similar fashion as GMM weights optimisation), we can directly sample from each component using stratified sampling, or use the fitted Laplace mixture as a warm start for GMA sampling (i.e. finer weights optimisation and stratified sampling).

\subsection{Laplace approximation (LA)}
Here we use a Bayesian posterior distribution $p(\theta \mid \mathcal{D})$ as our target. Let $\bar{p}(\theta)$ denote the \textit{unnormalised} target density on $\theta\in\mathbb{R}^d$:
\[
p(\theta \mid \mathcal{D}) \propto \bar{p}(\theta) = p(\mathcal{D}\mid\theta) p(\theta)
\]
and let $\ell(\theta)=\log \bar{p}(\theta) = \log p(\mathcal{D}, \theta) = \log p(\mathcal{D} \mid \theta) + \log p(\theta)$ be the (unnormalised) log-posterior density. 

Let $\hat{\theta}$ be a (local) MAP estimate, found by maximizing $\ell(\theta)$:
\[
\hat{\theta} = \arg\max_{\theta} \ell(\theta)
\]
We Taylor-expand $\ell(\theta)$ to the second order around $\hat{\theta}$. Since $\hat{\theta}$ is a local \textit{maximum}, the first-order term vanishes, and we arrive at:
\[
\ell(\theta) \approx \ell(\hat{\theta}) - \frac{1}{2} (\theta - \hat{\theta})^\top H(\hat{\theta}) (\theta - \hat{\theta}), \quad 
H(\hat{\theta}) = -\nabla^2 \ell(\theta)\big|_{\theta=\hat{\theta}}
\]
Since $\hat{\theta}$ is a local maximum, the Hessian $H(\hat{\theta})$ is positive definite. Exponentiating this quadratic form gives a Gaussian approximation centered around the mode $\hat{\theta}$:
\[
p(\theta \mid \mathcal{D}) \approx \mathcal{N}(\theta \mid \hat{\theta}, H^{-1}(\hat{\theta}))
\]
The exact form of this approximation can be derived as follows.
First, exponentiating the $\ell(\theta)$ gives:
\[
p(\theta \mid \mathcal{D}) \propto \exp(\ell(\hat{\theta})) \exp\left( - \frac{1}{2} (\theta - \hat{\theta})^\top H(\hat{\theta}) (\theta - \hat{\theta}) \right)
\]
where $\exp(\ell(\hat{\theta}))$ is a constant, we have found the unnormalised form of our Gaussian approximation.
For the approximation to be a valid probability density, it must integrate to 1. We find the \textit{normalising constant} $Z$ by integrating the functional part of the expression over the entire parameter space:
\[
Z = \int_{\mathbb{R}^d} \exp\left( - \frac{1}{2} (\theta - \hat{\theta})^\top H(\hat{\theta}) (\theta - \hat{\theta}) \right)   d\theta
\]
This is a standard Gaussian integral, whose general solution is $(2\pi)^{d/2} |\Sigma|^{1/2}$, where $\Sigma$ is the covariance matrix. In our case, the inverse covariance (precision) matrix is $H(\hat{\theta})$, so the covariance matrix is $\Sigma = H^{-1}(\hat{\theta})$. Plugging in this in gives:
\[
Z = (2\pi)^{d/2} |H^{-1} (\hat{\theta})|^{1/2} = (2\pi)^{d/2} |H(\hat{\theta})|^{-1/2}
\]

Finally, the normalised probability density is constructed by dividing the unnormalised form by the constant $Z$:
\begin{align*}
p(\theta \mid \mathcal{D}) & \approx \frac{1}{Z} \exp\left( - \frac{1}{2} (\theta - \hat{\theta})^\top H(\hat{\theta}) (\theta - \hat{\theta}) \right) \\
& = \frac{|H(\hat{\theta})|^{1/2}}{(2\pi)^{d/2}} \exp\left( - \frac{1}{2} (\theta - \hat{\theta})^\top H(\hat{\theta}) (\theta - \hat{\theta}) \right) \\
& = \mathcal{N}(\theta \mid \hat{\theta}, H^{-1}(\hat{\theta}))
\end{align*}
which is the \textit{classic Laplace approximation}. We have two immediate observations:
\begin{itemize}
    \item \textbf{Local covariance:} $\Sigma \approx H^{-1}(\hat{\theta})$ captures curvature (and thus uncertainty) near the mode.
    \item \textbf{Laplace approximation of the evidence} (marginal likelihood, normalizing constant)
    \footnote{Derivation of the Laplace evidence: let $\ell(\theta)=\log p(\mathcal{D},\theta)$. Then $p(\mathcal{D})=\int p(\mathcal{D},\theta) d\theta=\int \exp\{\ell(\theta)\} d\theta$. Quadratically expand $\ell$ at the MAP $\hat\theta$: $\ell(\theta)\approx \ell(\hat\theta)-\tfrac12(\theta-\hat\theta)^\top H(\hat\theta)(\theta-\hat\theta)$ with $H(\hat\theta)=-\nabla^2\ell(\hat\theta)\succ 0$. Substituting gives
        $$
        p(\mathcal{D})\approx e^{\ell(\hat\theta)} \int \exp \Big\{-\tfrac12(\theta-\hat\theta)^\top H(\hat\theta)(\theta-\hat\theta)\Big\} d\theta
        $$
        Using the standard Gaussian integral $\int \exp\{-\tfrac12(\mathbf{x}-\mu)^\top \Sigma^{-1}(\mathbf{x}-\mu)\} d\mathbf{x}=(2\pi)^{d/2}|\Sigma|^{1/2}$ yields
        $$
        p(\mathcal{D})\approx (2\pi)^{d/2} |H(\hat\theta)|^{-1/2} \exp\{\ell(\hat\theta)\}
        $$
        \textit{Occam's razor intuition:} The term $e^{\ell(\hat\theta)} = p(\mathcal{D}|\hat{\theta})p(\hat{\theta})$ rewards data fit; $|H(\hat\theta)|^{-1/2}$ penalises sharp (complex) posteriors by shrinking the volume around the peak.
    }
    :
    \[
    p(\mathcal{D}) \approx (2\pi)^{d/2} \left| H(\hat{\theta}) \right|^{-1/2} \exp\{\ell(\hat{\theta})\}
    \]
\end{itemize}
This “local evidence” is useful for weighting multiple modes.

\subsection{Laplace-mixture approximation (LMA)}

While the \textit{classical} Laplace approximation replaces the target locally by a Gaussian, here we aim to approximate the unnormalised posterior with \textit{Laplacian mixtures}.
Suppose we locate $J\ge 1$ local modes $\{\hat\theta_j\}_{j=1}^J$ (e.g. via multi-start optimisation of $\ell$). For each mode, we form the observed Fisher information, which is the negative Hessian of the log-posterior $\ell$:
\[
H_j = - \nabla^2 \ell(\theta)\big|_{\theta=\hat\theta_j}
\]
The local covariance for mode $j$ is $H_j^{-1}$.

To avoid the overly concentrated nature of Laplace approximations, we define an inflated covariance matrix $\Sigma_j$ for each mode. The base covariance $H_j^{-1}$ is broadened by an inflation factor\footnote{Since we aim to scale the standard deviations by $\kappa$, the covariance matrix is scaled by $\kappa^2$.} $\kappa \ge 1$:
\[
\Sigma_j = \kappa^2 H_j^{-1}
\]
A principled set of \textit{preliminary} mixture weights is then obtained from the local evidence formula for each component:
\[
\tilde w_j \propto \bar{p}(\hat\theta_j) (2\pi)^{d/2} |\Sigma_j|^{1/2},
\qquad
w_j = \frac{\tilde w_j}{\sum_{r=1}^J \tilde w_r}
\]

The design of above weights is to make each weight $\tilde w_j$ proportional to the \textit{local evidence} ($p(\mathcal{D})_j$) of that component. In the above, the unnormalised posterior at the mode is the exponentiated log-posterior: $\exp\{\ell(\hat{\theta}_j)\} = \bar{p}(\hat\theta_j)$, and the determinant of the covariance is proportional to the inverse determinant of the Hessian: $|\Sigma_j|^{1/2} = |\kappa^2 H_j^{-1}|^{1/2} \propto |H_j|^{-1/2}$.
Observing these, we see that the weight formula is a direct rearrangement of the evidence formula, representing a product of the peak's ``height'' and ``volume'':
\[
\tilde w_j \propto \underbrace{\bar{p}(\hat\theta_j)}_{\substack{\text{Height of} \\ \text{the peak}}} \times \underbrace{(2\pi)^{d/2} |\Sigma_j|^{1/2}}_{\substack{\text{Volume of} \\ \text{the peak}}} \approx p(\mathcal{D})_j
\]
In essence, the weight for each Gaussian component is determined by approximating the total probability mass in its local region of the posterior. This is done by multiplying the height of the posterior peak ($\bar{p}(\hat\theta_j)$) by its effective volume or ``width'' ($(2\pi)^{d/2} |\Sigma_j|^{1/2}$), giving a principled way to decide how much influence each component should have in the final mixture.

This yields the final \textbf{Laplace mixture}, which is a Gaussian Mixture Model (GMM):
\[
q_0(\theta) = \sum_{j=1}^{J} w_j \mathcal{N} \big(\theta; \hat\theta_j,\Sigma_j\big)
\]
which can be used for direct sampling (stratified sampling) or as an initialiser for GMA sampling. The LMA procedure is presented in the following:

\begin{tcolorbox}[boxrule=1pt, colback=white]
\small
\centering
\textit{\textbf{Procedure: Laplace mixture approximation (LMA)}}
\vspace{2mm}
\vspace{2mm}
\begin{enumerate}
    \item \textbf{Find Modes:} Given a target log-posterior density $\ell(\theta)$, find a set of $J$ distinct local maxima (modes), $\{\hat\theta_1, \dots, \hat\theta_J\}$, typically via multi-start optimisation.

    \item \textbf{Compute Hessians:} At each mode $\hat\theta_j$, compute the negative Hessian matrix (i.e. the observed Fisher information):
    \[ H_j = - \nabla^2 \ell(\theta)\big|_{\theta=\hat\theta_j} \]

    \item \textbf{Set Inflated Covariances:} For each mode, define an inflated covariance matrix $\Sigma_j$ by scaling the inverse Hessian with a factor $\kappa \ge 1$:
    \[ \Sigma_j = \kappa^2 H_j^{-1} \]

    \item \textbf{Calculate Mixture Weights:} Compute a weight $w_j$ for each component based on its local evidence. First, find the un-normalised weights $\tilde w_j$:
    \[ \tilde w_j \propto \exp\{\ell(\hat\theta_j)\} (2\pi)^{d/2} |\Sigma_j|^{1/2} \]
    where $\ell(\theta) = \log p(\mathcal{D} \mid \theta) + \log p(\theta)$ is the sum of the log-likelihood and log-prior.
    Then, normalise them to sum to one:
    \[ w_j = \frac{\tilde w_j}{\sum_{r=1}^J \tilde w_r} \]

    \item \textbf{Construct GMM:} Combine the modes (as means), the inflated covariances, and the weights to form the final Gaussian Mixture Model (GMM) proposal distribution:
    \[ q_0(\theta) = \sum_{j=1}^{J} w_j \mathcal{N} \big(\theta; \hat\theta_j,\Sigma_j\big) \]
\end{enumerate}
\end{tcolorbox}

\section{EM refinement of GMM approximation} \label{app:em_GMA}

As proposed in Section.\ref{subsec:em_GMA}, here we describe how to refine a Gaussian mixture $q_{\boldsymbol{\theta}}(\mathbf{z})=\sum_{i=1}^N w_i \mathcal{N}(\mathbf{z};\boldsymbol{\mu}_i,\Sigma_i)$ against an \textit{unnormalised} target $\bar{p}(\mathbf{z})$ by (population) EM. Unlike the weights-only reverse-KL update used in GMM approximation, EM targets the \textit{inclusive} KL:
\[
\boldsymbol{\theta}^\star  \in \arg\min_{\boldsymbol{\theta}}  KL \big(p  \|  q_{\boldsymbol{\theta}}\big)
= \arg\max_{\boldsymbol{\theta}}  \mathbb{E}_{p} \left[\log q_{\boldsymbol{\theta}}(\mathbf{Z})\right],
\quad p(\mathbf{z})=\bar{p}(\mathbf{z})/Z_p
\]
which encourages \textit{mass covering} \cite{Le2017reversekl}. Expectations under $p$ are intractable but can be approximated by self-normalised importance sampling (SNIS) from any proposal $r(\mathbf{z})$.

\subsection{Latent-variable augmentation and EM objective}
Introduce a component label $C\in\{1,\dots,N\}$ with
\(
q_{\boldsymbol{\theta}}(\mathbf{z},C=i)=w_i \mathcal{N}(\mathbf{z};\boldsymbol{\mu}_i,\Sigma_i).
\)
The standard EM auxiliary function (population form) is
\begin{align}
Q(\boldsymbol{\theta}\mid\boldsymbol{\theta}^{(t)})
&= \mathbb{E}_{p(\mathbf{z})} \mathbb{E}_{q(C\mid \mathbf{z};\boldsymbol{\theta}^{(t)})}
\left[\log q_{\boldsymbol{\theta}}(\mathbf{z},C)\right] \nonumber\\
&= \mathbb{E}_{p(\mathbf{z})}\Big[\sum_{i=1}^N r_i^{(t)}(\mathbf{z})\big(\log w_i + \log \mathcal{N}(\mathbf{z};\boldsymbol{\mu}_i,\Sigma_i)\big)\Big], \label{eq:Qfun}
\end{align}
where the current \textit{responsibilities} are
\[
r_i^{(t)}(\mathbf{z}) = q(C=i\mid \mathbf{z};\boldsymbol{\theta}^{(t)})
= \frac{w_i^{(t)} \mathcal{N}(\mathbf{z};\boldsymbol{\mu}_i^{(t)},\Sigma_i^{(t)})}
{\sum_{\ell=1}^N w_\ell^{(t)} \mathcal{N}(\mathbf{z};\boldsymbol{\mu}_\ell^{(t)},\Sigma_\ell^{(t)})}.
\]

\subsection{SNIS approximation of the $p$-expectation}
Let $\{\mathbf{z}_m\}_{m=1}^M \sim r(\mathbf{z})$ be a fixed \textit{bank} (e.g. the existing GMA bank, or fresh draws from $q_{\boldsymbol{\theta}^{(t)}}$). Define unnormalised IS weights
\(
\tilde{\omega}_m = \bar{p}(\mathbf{z}_m)/r(\mathbf{z}_m)
\)
and normalised weights
\(
\omega_m=\tilde{\omega}_m / \sum_{s=1}^M \tilde{\omega}_s.
\)
Then
\[
\mathbb{E}_{p}[f(\mathbf{Z})]  \approx  \sum_{m=1}^M \omega_m  f(\mathbf{z}_m),
\qquad
Q(\boldsymbol{\theta}\mid\boldsymbol{\theta}^{(t)})
 \approx 
\sum_{m=1}^M \omega_m \sum_{i=1}^N r_{i m}^{(t)} \big(\log w_i + \log \mathcal{N}(\mathbf{z}_m;\boldsymbol{\mu}_i,\Sigma_i)\big),
\]
where $r_{i m}^{(t)} \equiv r_i^{(t)}(\mathbf{z}_m)$.

\paragraph{Choice of proposal $r$.}
Common and convenient choices are: (i) the current mixture $r=q_{\boldsymbol{\theta}^{(t)}}$ (“self-consistency”, often called population EM
\footnote{
\textit{Population EM} maximises the population objective $\theta^{(t+1)}=\arg\max_\theta \mathbb{E}_{p}[\log q_\theta(\mathbf Z)]$, equivalently minimising the inclusive $\mathrm{KL}(p\|q_\theta)$, where $p(\mathbf z)=\bar p(\mathbf z)/Z_p$. Since direct expectations under $p$ are intractable, we use self-normalised importance sampling (SNIS): at sweep $t$, draw a bank $\{\mathbf z_m\}_{m=1}^M\sim r^{(t)}(\mathbf z)$ (commonly $r^{(t)}=q_{\theta^{(t)}}$), set unnormalised weights $\tilde\omega_m=\bar p(\mathbf z_m)/r^{(t)}(\mathbf z_m)$, normalise $\omega_m=\tilde\omega_m/\sum_s\tilde\omega_s$, and approximate $\mathbb{E}_p[f(\mathbf Z)]\approx\sum_m \omega_m f(\mathbf z_m)$. With responsibilities $r_{im}^{(t)}=\dfrac{w_i^{(t)}\mathcal N(\mathbf z_m;\mu_i^{(t)},\Sigma_i^{(t)})}{\sum_{\ell} w_\ell^{(t)}\mathcal N(\mathbf z_m;\mu_\ell^{(t)},\Sigma_\ell^{(t)})}$, the EM auxiliary function satisfies $Q(\theta\mid\theta^{(t)})\approx \sum_m \omega_m \sum_i r_{im}^{(t)}\big(\log w_i+\log \mathcal N(\mathbf z_m;\mu_i,\Sigma_i)\big)$, yielding the standard M-step updates with $p$-weighted sufficient statistics. Optionally, the bank is \textit{refreshed} each sweep (population/self-consistent EM) by resampling from $q_{\theta^{(t)}}$.}
), or (ii) a fixed, wider bank distribution (e.g. the Laplace-mixture initialiser). In case (i), the IS weights simplify to
\(
\tilde{\omega}_m = \bar{p}(\mathbf{z}_m)/q_{\boldsymbol{\theta}^{(t)}}(\mathbf{z}_m).
\)

\subsection{E-step (responsibilities)}
Given $\boldsymbol{\theta}^{(t)}$ and the bank $\{\mathbf{z}_m\}$,
\[
r_{i m}^{(t)}
= \frac{w_i^{(t)} \mathcal{N}(\mathbf{z}_m;\boldsymbol{\mu}_i^{(t)},\Sigma_i^{(t)})}
{\sum_{\ell=1}^N w_\ell^{(t)} \mathcal{N}(\mathbf{z}_m;\boldsymbol{\mu}_\ell^{(t)},\Sigma_\ell^{(t)})},
\qquad
\omega_m \propto \frac{\bar{p}(\mathbf{z}_m)}{r(\mathbf{z}_m)}
\]
Define the IS-weighted effective counts
\(
N_i^{(t)} = \sum_{m=1}^M \omega_m  r_{i m}^{(t)}.
\)

\subsection{M-step (closed-form updates)}
Maximising the SNIS approximation to Eq.\ref{eq:Qfun} yields the usual mixture updates with \textit{$p$-weighted} sufficient statistics:

\begin{equation} \label{eq:EM_Mstep}
\begin{aligned}
    w_i^{(t+1)} &= \frac{N_i^{(t)}}{\sum_{j=1}^N N_j^{(t)}} = N_i^{(t)} \quad (\text{since } \sum_i N_i^{(t)}=1),\\[2pt]
    \boldsymbol{\mu}_i^{(t+1)} &= \frac{1}{N_i^{(t)}} \sum_{m=1}^M \omega_m  r_{i m}^{(t)}  \mathbf{z}_m,\\[2pt]
    \Sigma_i^{(t+1)} &= \frac{1}{N_i^{(t)}} \sum_{m=1}^M \omega_m  r_{i m}^{(t)}  (\mathbf{z}_m-\boldsymbol{\mu}_i^{(t+1)})(\mathbf{z}_m-\boldsymbol{\mu}_i^{(t+1)})^\top  +  \lambda I
\end{aligned}
\end{equation}

A small ridge $\lambda I$ (covariance inflation) stabilises updates when $N_i^{(t)}$ is small. Diagonal or tied-covariance constraints are obtained by projecting $\Sigma_i^{(t+1)}$ accordingly.

\subsection{The EM-based GMA sampler}

The EM-based procedure is described below:

\begin{tcolorbox}[boxrule=1pt, colback=white]
\small
\centering
\textit{\textbf{Procedure: GMM refinement via population EM}}
\vspace{2mm}

\begin{enumerate}
    \item \textbf{Step 1 (optional): Bank refresh.} Draw a new bank $\{\mathbf{z}_m\}_{m=1}^M \sim r^{(t)}(\mathbf{z})$ (e.g. $r^{(t)}=q_{\boldsymbol{\theta}^{(t)}}$), then set
    \[
    \tilde{\omega}_m = \frac{\bar{p}(\mathbf{z}_m)}{r^{(t)}(\mathbf{z}_m)}, 
    \qquad
    \omega_m = \frac{\tilde{\omega}_m}{\sum_{s=1}^M \tilde{\omega}_s}.
    \]

    \item \textbf{Step 2: E-step (responsibilities) and effective counts.} For the (refreshed or fixed) weighted bank $\{(\mathbf{z}_m,\omega_m)\}_{m=1}^M$ and current $\boldsymbol{\theta}^{(t)}$,
    \[
    r_{im}^{(t)} = \frac{w_i^{(t)} \mathcal{N}(\mathbf{z}_m;\boldsymbol{\mu}_i^{(t)},\Sigma_i^{(t)})}
    {\sum_{\ell=1}^N w_\ell^{(t)} \mathcal{N}(\mathbf{z}_m;\boldsymbol{\mu}_\ell^{(t)},\Sigma_\ell^{(t)})},
    \qquad
    N_i^{(t)} = \sum_{m=1}^M \omega_m  r_{im}^{(t)}
    \]

    \item \textbf{Step 3: M-step (updates).} Update parameters as per Eq. Eq.\ref{eq:EM_Mstep}:
    \[
    \begin{aligned}
    w_i^{(t+1)} &= \frac{N_i^{(t)}}{\sum_{j=1}^N N_j^{(t)}} = N_i^{(t)},\\[2pt]
    \boldsymbol{\mu}_i^{(t+1)} &= \frac{1}{N_i^{(t)}} \sum_{m=1}^M \omega_m  r_{im}^{(t)}  \mathbf{z}_m,\\[2pt]
    \Sigma_i^{(t+1)} &= \frac{1}{N_i^{(t)}} \sum_{m=1}^M \omega_m  r_{im}^{(t)}  
    (\mathbf{z}_m-\boldsymbol{\mu}_i^{(t+1)})(\mathbf{z}_m-\boldsymbol{\mu}_i^{(t+1)})^\top + \lambda I
    \end{aligned}
    \]

    \item \textbf{Step 4: Use the refined proposal.} After $\tau$ sweeps, use $q_{\boldsymbol{\theta}^{(\tau)}}$ for (i) direct sampling, or (ii) importance sampling / resampling with
    \(
    w(\mathbf{z}) \propto \bar{p}(\mathbf{z})/q_{\boldsymbol{\theta}^{(\tau)}}(\mathbf{z})
    \),
    or (iii) as a warm-up for reverse-KL GMA weight sharpening.
\end{enumerate}
\end{tcolorbox}

After a few EM iterations, we can use the resulting GMM approximator $q_{\boldsymbol{\theta}^{(\tau)}} \approx p(\theta)$ to perform:
\begin{enumerate}
\item \textbf{Direct sampling:} draw $\mathbf{z}\sim q_{\boldsymbol{\theta}^{(\tau)}}$ for posterior approximation.
\item \textbf{Importance sampling / resampling:} use $q_{\boldsymbol{\theta}^{(\tau)}}$ as proposal, with weights
\(
w(\mathbf{z}) \propto \bar{p}(\mathbf{z})/q_{\boldsymbol{\theta}^{(\tau)}}(\mathbf{z})
\)
to obtain weighted or resampled draws.
\item \textbf{Hybrid with weights-only GMA:} keep $\{\Sigma_i\}$ fixed (e.g. Laplace covariances), alternate a few EM steps (updating $\{w_i,\boldsymbol{\mu}_i\}$) with the reverse-KL weights-only GMA sampling (e.g. with pGD or MD) to sharpen modes.
\end{enumerate}

\subsection{Some notes}
\begin{itemize}
\item \textit{Objective.} EM with exact $p$-expectations monotonically decreases $KL(p\|q_{\boldsymbol{\theta}})$; with SNIS it is approximate and improves with larger, better-covering banks.
\item \textit{Refreshing sample banks.} Choosing $r=q_{\boldsymbol{\theta}^{(t)}}$ and regenerating the bank each iteration yields a \textit{population EM} loop; using a fixed bank is cheaper but may limit exploration if the bank is narrow.
\item \textit{Mode coverage \textit{vs} sharpness.} Inclusive (forward) KL (EM) is mass-covering and tends to broaden components; the reverse/exclusive-KL weights-only update is mode-seeking. In practice, we found an \textit{EM warm-up} from GMM (e.g. a Laplace-mixture initialiser) followed by reverse-KL weight sharpening to be effective.
\item \textit{Complexity.} Each EM sweep costs $\mathcal{O}(NMd^2)$ for Gaussian PDFs plus covariance updates; using precomputed $\log \mathcal{N}(\mathbf{z}_m;\boldsymbol{\mu}_i,\Sigma_i)$ and Cholesky factors reduces overhead (cf. Appendix.\ref{app:precomputing_density_values} on pre-computation of GMM density).
\item \textit{Data-driven nature of EM.}
EM is inherently \textit{data-driven}: it maximises $\mathbb{E}_{p}[\log q_{\theta}(\mathbf Z)]$ and thus requires samples to approximate expectations. In classical settings these are observed data; in our posterior-approximation setting we instead rely on a \textit{bank} of samples with importance weights (e.g. SNIS) to stand in for draws from $p(\mathbf z)=\bar p(\mathbf z)/Z_p$. Consequently, the quality of EM updates is limited by the coverage of this bank: if it undersamples high-probability regions, EM will not “discover” them and can converge to biased, over-narrow solutions. Practical safeguards include refreshing the bank from a broader proposal (e.g. $q_{\theta^{(t)}}$), monitoring \textit{effective sample size} (ESS), using covariance inflation/ridge ($\lambda I$) when effective counts are small, and when needed, tempering or broadening the proposal to ensure adequate exploration.
\item \textit{EM-GMA vs traditional EM.} Traditional EM (on data) maximises the empirical log-likelihood $\tfrac{1}{N}\sum_{n}\log q_{\theta}(x_n)$ using responsibilities $r_{nk}$ over the observed dataset; EM-GMA instead minimises the inclusive $\mathrm{KL}(p\|q_{\theta})$ (equivalently maximises $\mathbb{E}_{p}[\log q_{\theta}(Z)]$) by SNIS on a proposal-drawn bank $\{(z_m,\omega_m)\}$ (typically from $q_{\theta^{(t)}}$), replacing sums over data by importance-weighted sums $\sum_m \omega_m r_{mk}$. EM-GMA needs only the unnormalised target $\bar p(z)$ (no normalising constant), is mass-covering, and shares the same E/M structure and moment updates as EM, with $p$-weighted sufficient statistics.
\end{itemize}

\section{LSTM for mortality modelling: hyper-parameter settings and further results of Bayesian LSTM using pGD-GMA and MD-GMA sampling} \label{app:Bayesian_LSTM_further_results}

Here we present the settings of hyper-parameters used in our mortality modelling experiments, and the results from pGD-GAM with multi-step rolling forecasting, and MD-GMA with single and multi-step rolling forecasting schemes.

\subsection{LSTM for mortality modelling: hyper-parameter settings}

\paragraph{Common data and model settings (all experiments).}
We fix the lookback window to $L=52$ weeks and the forecast horizon to $H=52$ weeks. Features and targets are standardised with \texttt{StandardScaler} fitted on the training split only. The PyTorch LSTM has \texttt{input\_dim}$=1$ (univariate series), \texttt{hidden\_dim}$=16$, \texttt{num\_layers}$=3$, \texttt{output\_dim}$=1$, \texttt{batch\_first}$=\texttt{True}$; hidden/cell states are re-initialised to zeros in each forward pass. Random seeds are fixed to $(111)$ (my favorite seed for Python programming) for NumPy and PyTorch. The train/test split uses the last $H=52$ sequences as test data.

\paragraph{Classic LSTM hyper-parameters:}
\begin{itemize}\setlength{\itemsep}{2pt}
  \item \textit{Network:} LSTM$(\text{input\_dim}=1,\ \text{hidden\_dim}=16,\ \text{num\_layers}=3)$ + Linear$(16  \to  1)$; $\approx 5,585$ trainable parameters.
  \item \textit{Loss \& optimiser:} MSE loss; \textit{Adam} \cite{kingma_adam_2017} with learning rate $0.01$.
  \item \textit{Training schedule:} $1,000$ epochs, full-batch updates (all sequences per epoch).
  \item \textit{Forecasting modes:} both \textit{one-step} and \textit{multi-step rolling} were run:
    \begin{itemize}\setlength{\itemsep}{1pt}
      \item One-step: uses true inputs only.
      \item Multi-step: autoregressive rollout. For \textit{train} rolling fits we seed the window with the \textbf{first} $52$ observed points; for \textit{test} rolling forecasts we seed with the \textbf{last} $52$ observed points from the training span (to avoid peeking).
    \end{itemize}
  \item \textit{Standardisation:} features ( $X$ ) scaled per time step using the training set; targets ( $y$ ) scaled and then inverted to report results in log-index and index space.
\end{itemize}

\paragraph{Bayesian LSTM with pGD-GMA hyper-parameters.}
\begin{itemize}\setlength{\itemsep}{2pt}
  \item \textit{Model/prior:} same LSTM architecture as above; elementwise Gaussian prior on $\theta$: $\theta \sim \mathcal{N}(0,1)$; observation scale prior $\sigma \sim \mathrm{HalfNormal}(1)$.
  \item \textit{Posterior target:} log-likelihood $\sum_t \log \mathcal{N}(y^{\text{(scaled)}}_t \mid f_\theta(X_t), \sigma)$ computed on the training split.
  \item \textit{Warm start:} Gaussian component centers are initialised to the point estimate $\theta^\star$ of weights from the trained classic LSTM; $\log \sigma$ initialised by $\log \widehat{\sigma}$ with $\widehat{\sigma}=\sqrt{\mathrm{MSE}_{\text{train}}}$ (on scaled targets). We infer over $(\theta,\log\sigma)$.
  \item \textit{Local proposal cloud:} $N=200$ Gaussian components; $M=30$ local samples per component; isotropic covariance $\sigma^2_{\text{loc}} I$ with $\sigma^2_{\text{loc}}=5\times 10^{-4}$.
  \item \textit{Initialisation around warm start:} jitter scales \texttt{init\_scale\_theta}$=0.02$, \texttt{init\_scale\_logsig}$=0.05$.
  \item \textit{Optimisation over mixture weights (pGD):}
    \begin{itemize}\setlength{\itemsep}{1pt}
      \item Iterations $K=100$; step size $\eta_k=\eta_0/\sqrt{k+k_0}$ with $\eta_0=0.05$, $k_0=800$.
      \item Gradient uses precomputed log-PDFs and a tempered, standardised target; tempering schedule $\beta_k$ linearly increases from $\beta_{\min}=0.30$ to $1.0$.
      \item Euclidean step followed by projection to the probability simplex.
      \item Entropy penalty coefficient annealed: $\lambda_k = \lambda_0 (1-k/K)$ with $\lambda_0=10^{-2}$.
      \item \textit{Note:} Unlike MD-GMA, pGD-GMA does \textbf{not} use temperature $\tau_k$ or convex mixing $\alpha_k$; only $\beta_k$ and $\lambda_k$ schedules are active.
    \end{itemize}
  \item \textit{Tail averaging:} final weights are averaged over the last $L_{\text{avg}}=75$ iterations.
  \item \textit{Posterior sampling:} draw \texttt{TOTAL\_SAMPLES}$=3000$ parameter vectors by first sampling a component with probability $w_i$ and then a local sample within that component.
  \item \textit{Forecasting:} per-draw predictive paths computed with up to \texttt{max\_draws}$=500$ posterior samples; noise injection during rollout is disabled (mean-path summaries); both \textit{one-step} and \textit{multi-step} modes as described above (same seeding rule for rolling).
\end{itemize}

\paragraph{Bayesian LSTM with MD-GMA (with stabilisation) hyper-parameters.}
\begin{itemize}\setlength{\itemsep}{2pt}
  \item \textit{Model/prior, warm start, local cloud, initialisation:} identical to pGD-GMA (values as above): $N=200$, $M=30$, $K=100$, $\sigma^2_{\text{loc}}=5  \times  10^{-4}$, \texttt{init\_scale\_theta}$=0.02$, \texttt{init\_scale\_logsig}$=0.05$.
  \item \textit{Mirror-descent update (entropy geometry):}
    \begin{itemize}\setlength{\itemsep}{1pt}
      \item Step size $\eta_k=\eta_0/\sqrt{k+k_0}$ with $\eta_0=0.05$, $k_0=800$.
      \item \textit{Temperature} schedule $\tau_k = 1 + (\tau_0-1)(1-k/K)$ with $\tau_0=1.8$ (flattens early updates).
      \item \textit{Convex mixing} coefficient $\alpha_k=\max\{0.05,\ \alpha_0(1-k/K)\}$ with $\alpha_0=0.20$ (smooths trajectories).
      \item \textit{Tempering} $\beta_k$ increases linearly from $\beta_{\min}=0.30$ to $1.0$ (robustifies early gradients).
      \item \textit{Entropy regularisation} $\lambda_k = \lambda_0 (1-k/K)$ with $\lambda_0=10^{-2}$ (prevents weight collapse early).
      \item Update: $\log w^{\text{prop}} = (\log w^{(k-1)} - \eta_k g^{(k)})/\tau_k$, then $w^{(k)}=(1-\alpha_k) w^{(k-1)} + \alpha_k  \mathrm{softmax}(\log w^{\text{prop}})$.
    \end{itemize}
  \item \textit{Tail averaging, sampling, forecasting:} same as pGD-GMA ($L_{\text{avg}}=75$, \texttt{TOTAL\_SAMPLES}$=3000$, \texttt{max\_draws}$=500$, no noise injection in rollout; both one-step and multi-step modes with the same seeding protocol).
\end{itemize}

\subsection{Classic LSTM and Bayesian LSTM (pGD-GMA) with \textit{multi-step} rolling forecasting}

When switching from the \textit{one-step} scheme to the more challenging \textit{multi-step} rolling forecasting mode, both classic and Bayesian LSTM models show a notable drop in accuracy. In this setting, forecasts are generated recursively by feeding predictions back into the input sequence rather than using the true observed values. For training forecasts, only the first $L=52$ observed weeks are provided to initialize the recursion, and for test forecasts, the last $L=52$ points from the training set are used as the starting window. This setup amplifies the risk of error propagation across long horizons. The classic LSTM benchmark achieves $\text{train RMSE} = 0.1828$ and $\text{test RMSE} = 0.1030$, indicating significant error accumulation. The Bayesian LSTM trained with pGD-GMA sampling performs slightly better in terms of robustness, with posterior mean forecasts reaching $\text{train RMSE} = 0.1681$ and $\text{test RMSE} = 0.1196$. While both models under rolling forecasts exhibit greater error than in the one-step setting, the Bayesian formulation maintains coherent uncertainty intervals that widen appropriately across the forecast horizon, reflecting compounding uncertainty. These results highlight the challenges of long-range autoregressive forecasting, while underscoring the value of Bayesian inference in quantifying forecast reliability under demanding conditions.

\begin{figure}[H]
    \centering
    \begin{minipage}{0.43\linewidth}
        \centering
        \includegraphics[width=\linewidth]{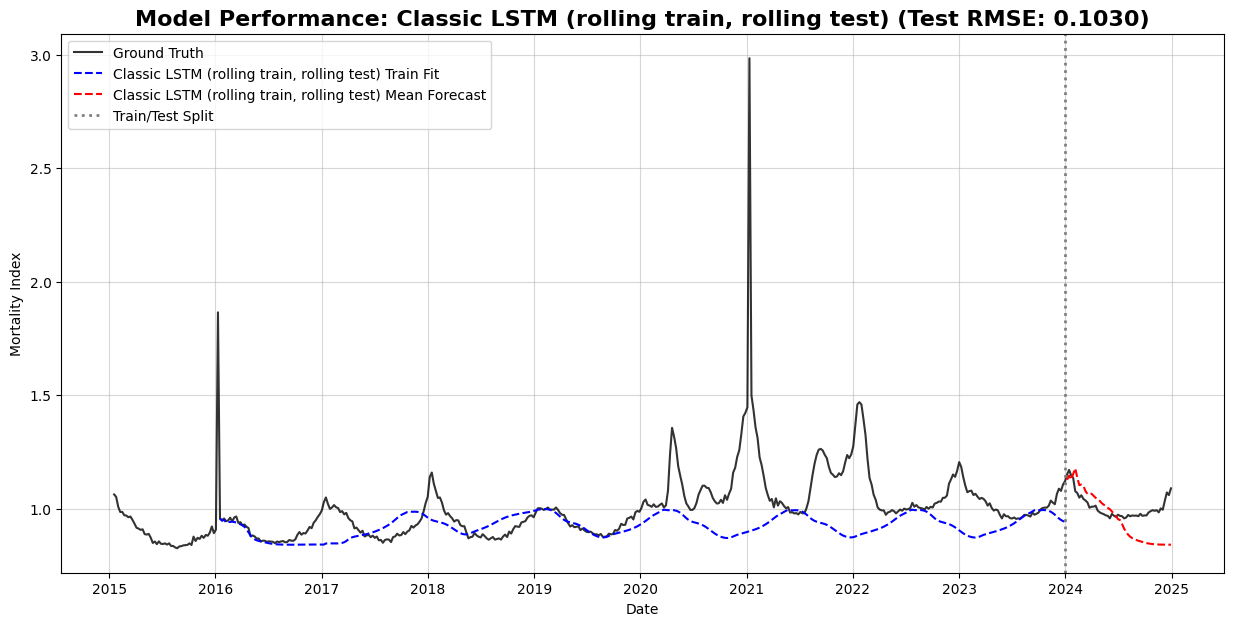}
        \subcaption{Classic LSTM multi-step rolling forecasts.}
        \label{fig:classic_lstm_multiStep}
    \end{minipage}
    \hfill
    \begin{minipage}{0.48\linewidth}
        \centering
        \includegraphics[width=\linewidth]{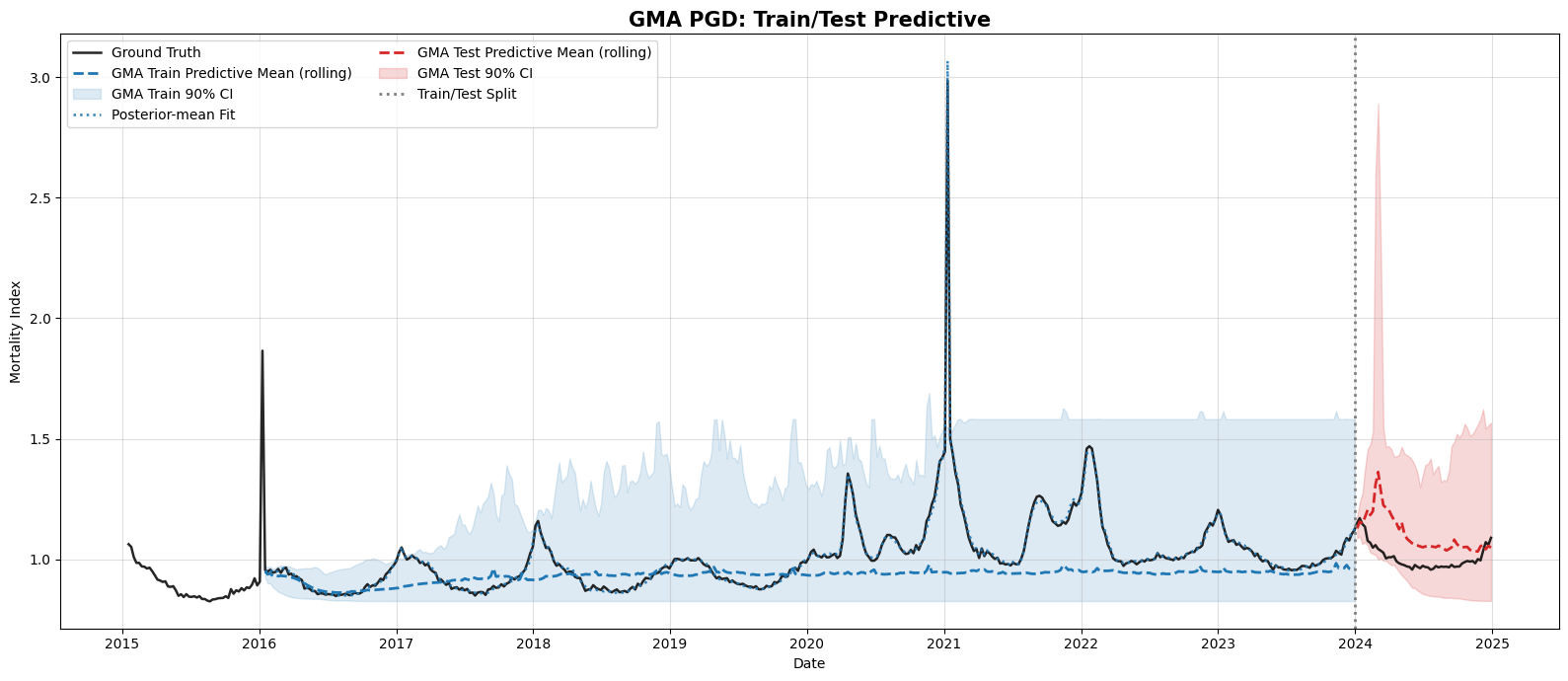}
        \subcaption{Bayesian LSTM multi-step rolling forecasts (pGD-GMA).}
        \label{fig:bayesian_lstm_multiStep}
    \end{minipage}
    \caption{Comparison of multi-step rolling forecasts for U.S. weekly mortality using (a) classic LSTM and (b) Bayesian LSTM with pGD-GMA sampling. Forecasts are initialized with $L=52$ observed weeks (black), with predictions recursively added to the input sequence and rolled forward. Bayesian LSTM additionally provides credible intervals that widen with horizon length.}
    \label{fig:rolling_forecast_results}
\end{figure}

\subsection{Bayesian LSTM (MD-GMA) with \textit{one-step} rolling forecasting}

\paragraph{Results.}
We next evaluate the Bayesian LSTM trained with the mirror descent variant of GMA (MD-GMA) on the U.S.\ mortality index data under the \textit{one-step rolling} forecasting scheme. As shown in Fig.\ref{fig:gma_md_diagnostics}, the MD-GMA algorithm converges rapidly: by iteration $k=100$ (again, runtime $\approx 1$ second), the mixture entropy had stabilized at $H(w)\approx 5.30$, corresponding to an effective number of components $\text{eff}\approx 200$, i.e. dense weights and nearly the full mixture support. Unlike the pGD-GMA sampler, which concentrated most of the posterior mass on a small handful of Gaussian components (effective $\approx 10$), the MD-GMA sampler maintained a much broader distribution of mixture weights. This is reflected in the nearly uniform top-10 component weights (Fig.\ref{fig:gma_md_diagnostics}, right), each accounting for only about $0.5\%$ of the mixture. Such behaviour indicates that MD-GMA emphasizes posterior diversity, avoiding mode-collapse and ensuring exploration across a wider set of parameter hypotheses.

Forecasting performance under MD-GMA remains strong. The posterior mean forecasts achieve $\text{train RMSE} = 0.0215$ and $\text{test RMSE} = 0.0256$, very close to those obtained under the pGD-GMA sampler ($\text{train RMSE} = 0.0170$, $\text{test RMSE} = 0.0255$). Both Bayesian formulations outperform the classic LSTM benchmark in terms of test error: while the classic, deterministic LSTM achieves $\text{test RMSE} = 0.0276$ under one-step forecasting, its lack of uncertainty quantification limits interpretability. By contrast, the MD-GMA posterior predictive intervals (Fig.\ref{fig:Bayesian_LSTM_forecast_MD}) captures both aleatoric variability and epistemic uncertainty.

Overall, these results suggest a clear trade-off between the two optimisation schemes: pGD-GMA yields sharper posterior concentration and more efficient dimensionality reduction (smaller effective component count), whereas MD-GMA favours robustness and diversity by spreading mass across nearly all mixture components. Both approaches deliver comparable point forecast accuracy, but the MD-GMA sampler may be particularly advantageous in regimes where preserving posterior diversity is crucial for downstream decision-making.

\begin{figure}[ht!]
    \centering
    \begin{subfigure}[b]{0.32\textwidth}
        \centering
        \includegraphics[width=\linewidth]{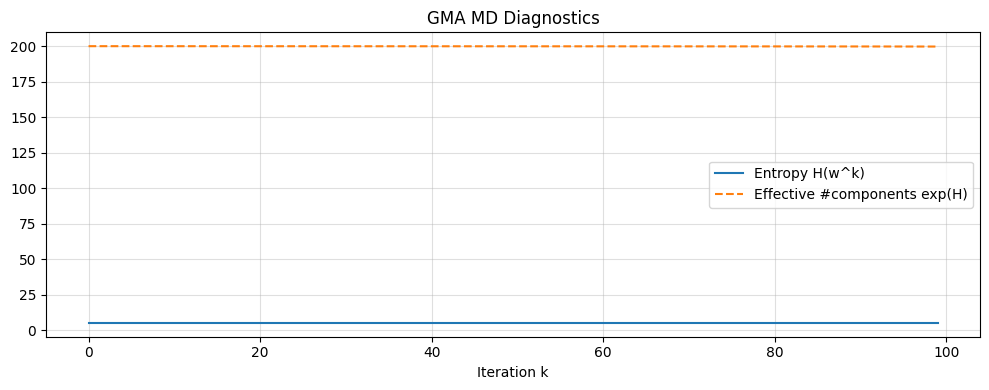}
    \end{subfigure}
    \hfill
    \begin{subfigure}[b]{0.32\textwidth}
        \centering
        \includegraphics[width=\linewidth]{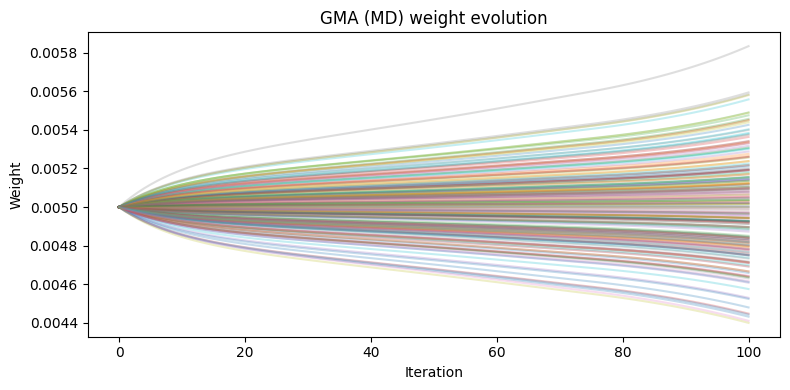}
    \end{subfigure}
    \hfill
    \begin{subfigure}[b]{0.32\textwidth}
        \centering
        \includegraphics[width=\linewidth]{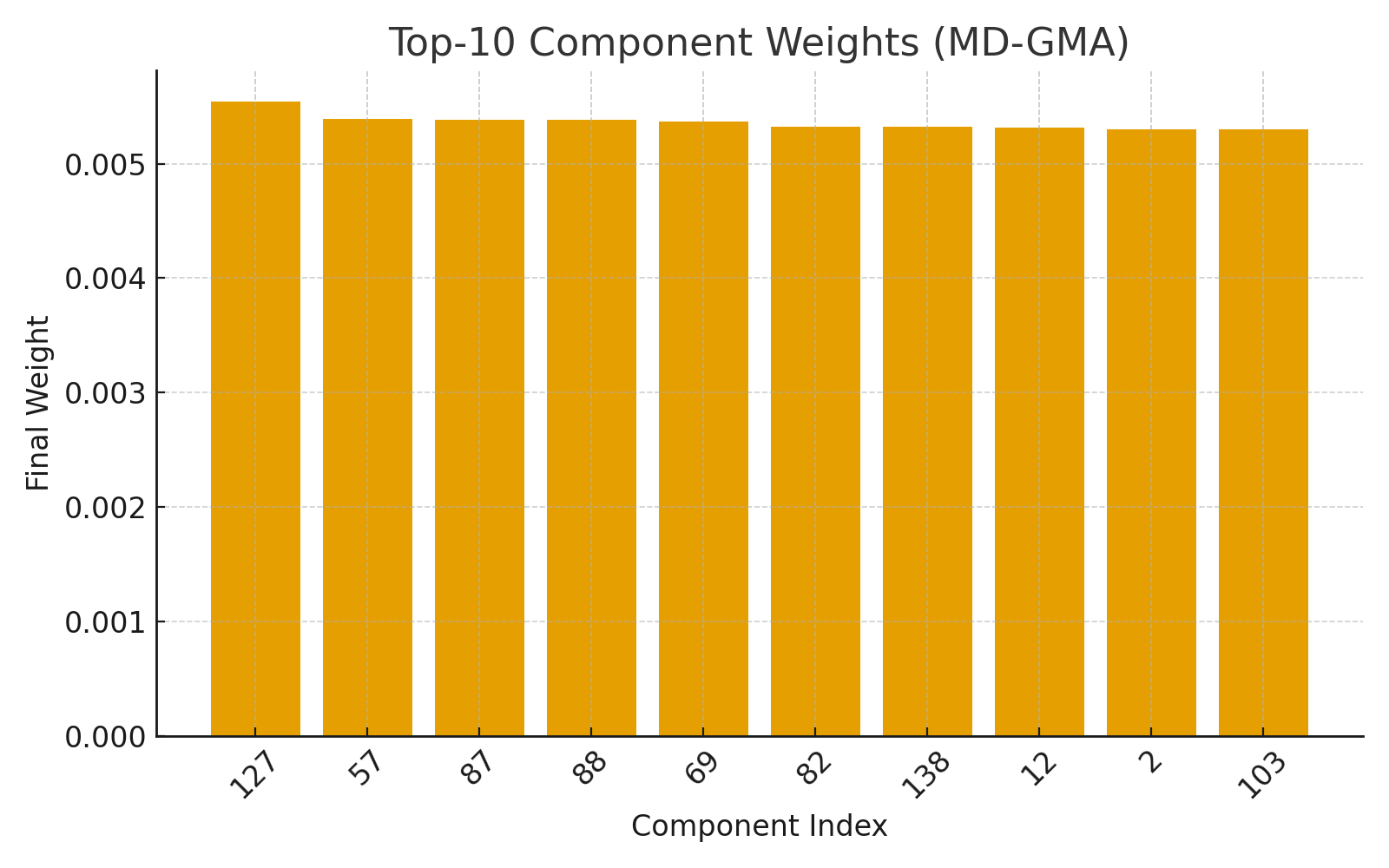}
    \end{subfigure}
    \caption{MD-GMA sampling diagnostics. Left: entropy \& effective component count, middle: weight evolution, right: final top-10 component weights. Compared to pGD-GMA, MD-GMA maintains a nearly uniform allocation across components, avoiding mode collapse.}
    \label{fig:gma_md_diagnostics}
\end{figure}

\begin{figure}[H]
    \centering
    \includegraphics[width=0.65\linewidth]{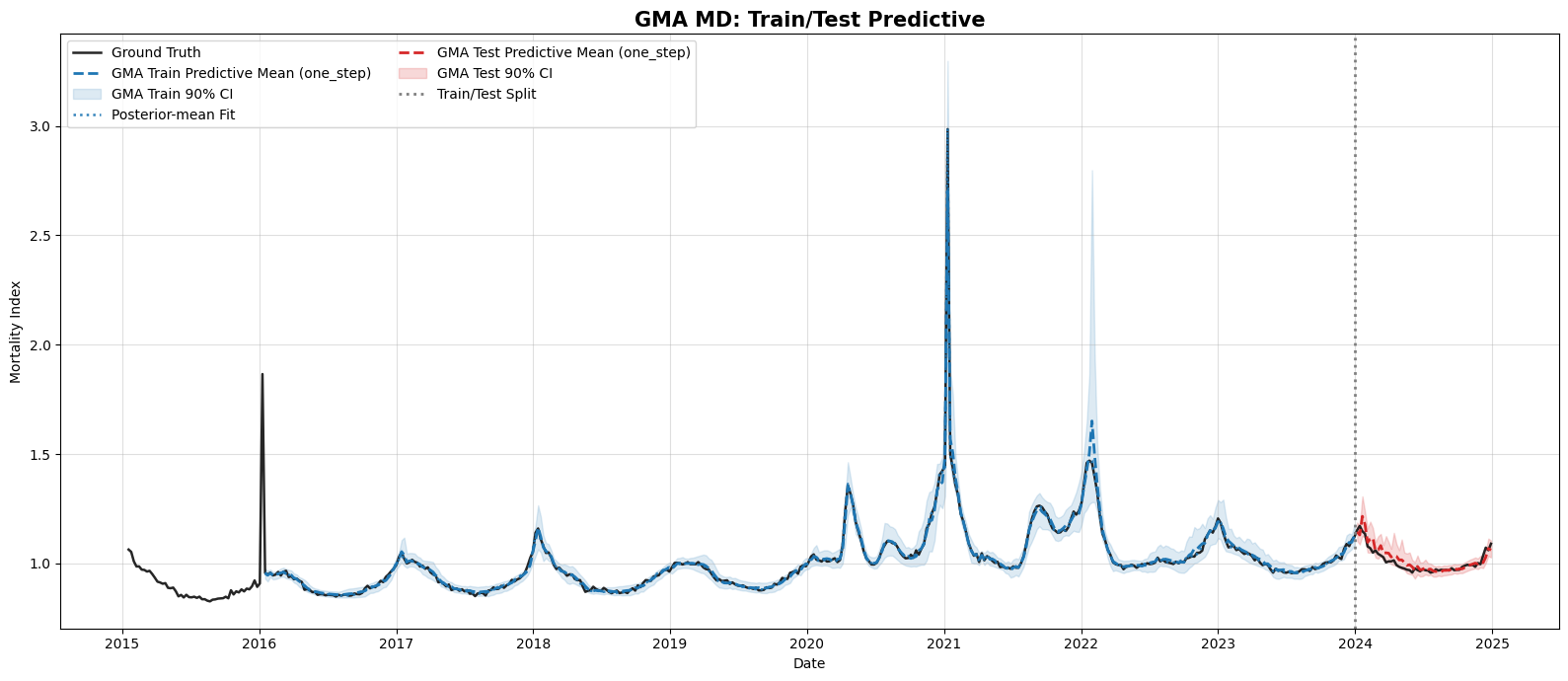}
    \caption{Bayesian LSTM (\textit{one-step rolling}) forecasts of weekly U.S.\ mortality index using MD-GMA sampling. Predictive mean trajectories (red) track the observed values (black), while the predictive intervals (shaded) remain broad, reflecting the higher entropy posterior induced by MD-GMA.}
    \label{fig:Bayesian_LSTM_forecast_MD}
\end{figure}

\subsection{Bayesian LSTM (MD-GMA) with \textit{multi-step} rolling forecasting}

When applying the MD-GMA sampler under the more demanding \textit{multi-step rolling} forecasting scheme, the Bayesian LSTM again demonstrates competitive performance. The posterior mean forecasts yield $\text{train RMSE} = 0.1775$ and $\text{test RMSE} = 0.1154$. Compared with the one-step MD-GMA results ($\text{train RMSE} = 0.0215$, $\text{test RMSE} = 0.0256$), this represents a substantial increase in error, reflecting the cumulated forecasting errors and the difficulty of recursively propagating uncertainty over longer horizons. Nevertheless, MD-GMA remains broadly comparable to the multi-step pGD-GMA sampler (whose $\text{train RMSE} = 0.1681$, $\text{test RMSE} = 0.1196$), and both Bayesian approaches outperform the classic LSTM benchmark ($\text{train RMSE} = 0.1828$, $\text{test RMSE} = 0.1030$) in terms of robustness, particularly by providing coherent uncertainty quantification. 

Relative to pGD-GMA, the MD-GMA intervals (Fig.\ref{fig:Bayesian_LSTM_forecast_multiStep_MD}) are slightly wider, consistent with the higher entropy posterior maintained by MD optimisation. This broader spread mitigates overconfidence and leads to marginally smaller test RMSE. Overall, the results confirm that both Bayesian formulations are more resilient than the classic deterministic LSTM when faced with compounding error in multi-step forecasts, with MD-GMA offering a more conservative (wider), but still accurate, quantification of forecast uncertainty.

\begin{figure}[H]
    \centering
    \includegraphics[width=0.65\linewidth]{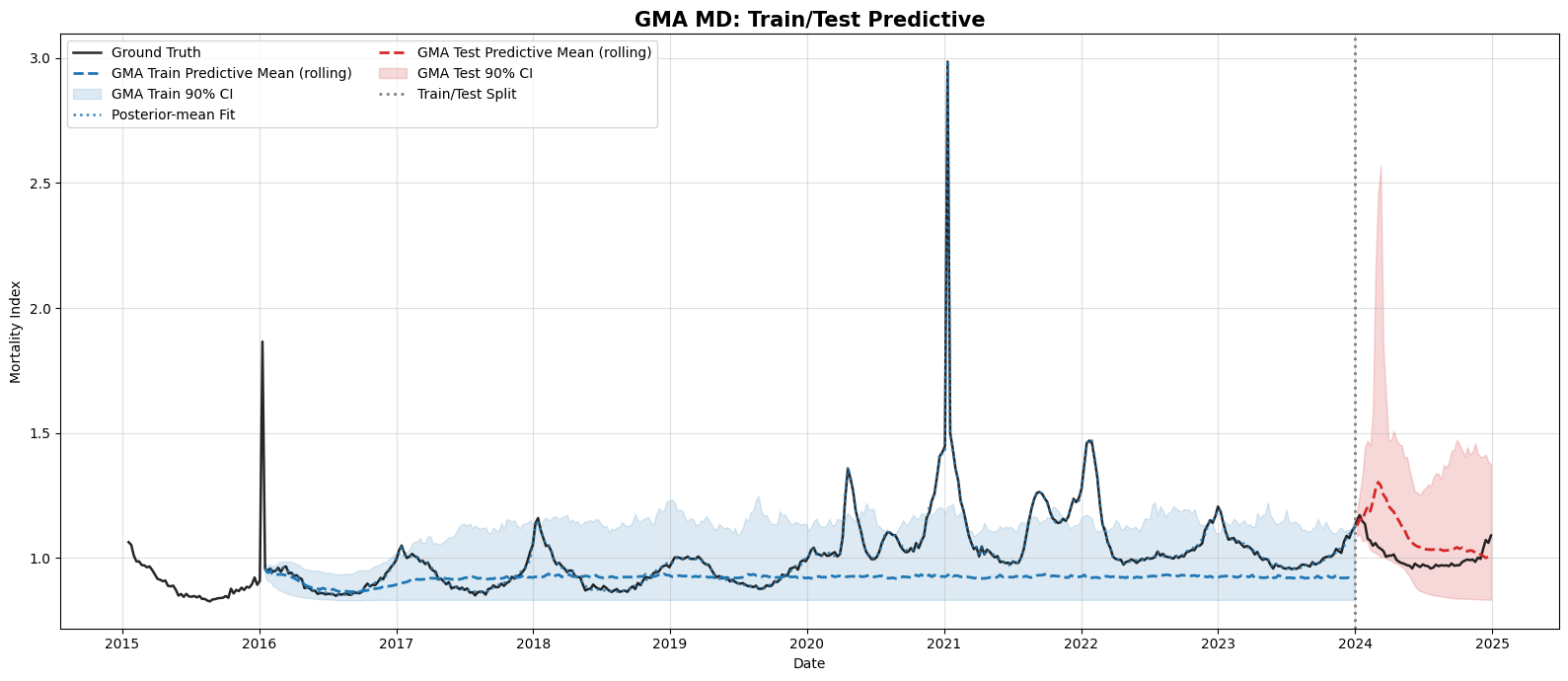}
    \caption{Bayesian LSTM (\textit{multi-step rolling}) forecasts of weekly U.S.\ mortality index using MD-GMA sampling. The predictive mean trajectory (red) captures long-term trends, while the shaded predictive intervals reflect compounding uncertainty across the forecast horizon.}
    \label{fig:Bayesian_LSTM_forecast_multiStep_MD}
\end{figure}

\section{Metrics used in language modelling experiments}
\label{app:LLM_metrics}

For completeness, we expand the metric definitions used in the language modelling experiments in Section.\ref{sec:BLM}. 
All metrics are computed at the \textit{token level}, i.e. with respect to the final tokenizer outputs.

\begin{itemize}
    \item \textbf{Negative log-likelihood (NLL):} the average of the negative logarithm of the predicted probability assigned to the correct token.
    \[
    \mathrm{NLL} = - \frac{1}{T}\sum_{t=1}^T \log p_\theta(y_t \mid x_{\leq t})
    \]

    \item \textbf{Perplexity:} the exponential of the cross-entropy. 
    For two distributions $p$ (true) and $q$ (model), cross-entropy is 
    \[
    H(p,q) = -\sum_y p(y)\log q(y).
    \]
    In language modeling the true distribution $p$ is unknown, but in supervised training/evaluation we only observe one true token $y_t$ at each step. 
    Thus we approximate $p(y)$ with a one-hot empirical distribution,
    \[
    p(y) = 
    \begin{cases}
    1 & y = y_t, \\
    0 & y \neq y_t
    \end{cases}
    \]
    which reduces the cross-entropy to 
    \[
    H(p,q) = -\log q(y_t)
    \]
    Averaging across all tokens $t=1,\ldots,T$ gives
    \[
    \frac{1}{T}\sum_{t=1}^T - \log q(y_t \mid x_{\leq t})
    \]
    which is exactly the \textit{negative log-likelihood} (NLL). 
    Hence, in our case $\mathrm{NLL} \equiv H(p,q)$, and perplexity is a monotone transform:
    \[
    \mathrm{PPL} = \exp(\mathrm{NLL}).
    \]
    Perplexity can be interpreted as the effective average branching factor of the model’s predictions (e.g. $\mathrm{NLL}=2$ nats implies $\mathrm{PPL}\approx 7.4$, meaning the model behaves as if choosing uniformly among about seven tokens per step).

    \item \textbf{Accuracy:} the fraction of correct top-1 predictions across tokens.
    \[
    \mathrm{Accuracy} = \frac{1}{T}\sum_{t=1}^T \mathbf{1}\{\arg\max_v p_\theta(v\mid x_{\leq t}) = y_t\}
    \]

    \item \textbf{Brier score:} the mean squared error of predicted probabilities, computed as the squared difference between the predicted probability assigned to the correct token and the actual outcome indicator, averaged over all tokens.
    \[
    \mathrm{Brier} = \frac{1}{T}\sum_{t=1}^T \Big( p_\theta(y_t \mid x_{\leq t}) - 1 \Big)^2 + \sum_{v\neq y_t} \Big(p_\theta(v\mid x_{\leq t}) - 0\Big)^2
    \]
\end{itemize}

\section{Constructing LMA from empirical samples}
\label{sec:constructing_LMA_from_empirical_samples}

Here we present the detailed process of constructing an Laplace mixture, via clustering and GMA refining, from samples draw from a target distribution, as used in the BOED example in Section.\ref{subsec:BOED}.

\paragraph{Inputs.}
Imagine that we have some empirical samples \footnote{For example, in Section.\ref{subsec:BOED}, we obtained these empirical samples by fitting a separate (penalized) logistic model per cell line $i$ on centered log-dose in the historical data $\mathcal{H}$, yielding one MAP estimate $\hat\theta_i=(\hat\alpha_i,\hat\beta_i)^\top$; repeating this yields multiple empirical samples of $\theta$.} $\{\hat\theta_i\}$ drawn from the prior $p(\theta)$.

\paragraph{Step 1: find local ``modes'' via $k$-means.}
Cluster $\{\hat\theta_i\}$ into $J$ groups using $k$-means (any Lloyd-style algorithm suffices).  
Let $\mathcal{C}_j=\{i:\text{label}(i)=j\}$ be the index set for cluster $j$.
Define the component location and covariance
\[
\mu_j
=\frac{1}{|\mathcal{C}_j|}\sum_{i\in\mathcal{C}_j}\hat\theta_i,
\qquad
\Sigma_j
=\underbrace{\frac{1}{\max(|\mathcal{C}_j|-1,1)}
\sum_{i\in\mathcal{C}_j}(\hat\theta_i-\mu_j)(\hat\theta_i-\mu_j)^\top}_{\text{empirical covariance}}
 + \lambda I_2
\]
with a small ridge $\lambda>0$ to ensure $\Sigma_j\succ 0$.  
Initialize weights by cluster size: $w_j^{(0)} \propto |\mathcal{C}_j|$, then normalize $\sum_j w_j^{(0)}=1$.

\paragraph{Step 2: weights-only Gaussian mixture refinement (WGMA).}
Keep $\{(\mu_j,\Sigma_j)\}_{j=1}^J$ fixed and refine the mixture weights $\mathbf{w}$ on the simplex to better match the empirical samples.  
Write the LMA surrogate as
\[
q_{\mathrm{LMA}}(\theta;\mathbf{w})
=\sum_{j=1}^J w_j \mathcal{N}(\theta;\mu_j,\Sigma_j),
\quad
w_j\ge 0, \sum_j w_j=1
\]
Using the empirical support $\{\theta_m\}_{m=1}^M=\{\hat\theta_i\}$ with uniform masses $w_m=\tfrac{1}{M}$,
minimize the reverse-KL (equivalently the cross-entropy) loss
\[
\mathcal{L}(\mathbf{w})
=-\frac{1}{M}\sum_{m=1}^M
\log \Bigg(\sum_{j=1}^J w_j \phi_j(\theta_m)\Bigg),
\qquad
\phi_j(\theta)=\mathcal{N}(\theta;\mu_j,\Sigma_j)
\]
The gradient is
\[
\frac{\partial \mathcal{L}}{\partial w_j}
= -\frac{1}{M}\sum_{m=1}^M
\frac{\phi_j(\theta_m)}{\sum_{\ell=1}^J w_\ell \phi_\ell(\theta_m)}
\]
Update $\mathbf{w}$ with exponentiated-gradient (MD) steps on the simplex:
\[
w_j \leftarrow
\frac{ w_j \exp \big(-\eta \partial \mathcal{L}/\partial  w_j\big)}
{\sum_{\ell=1}^J  w_\ell \exp \big(-\eta \partial \mathcal{L}/\partial  w_\ell\big)},
\quad j=1,\dots,J
\]
with stepsize $\eta>0$, iterating until convergence.

\begin{algorithm}[H]
\caption{WGMA (weights-only) refinement for LMA initialiser}
\begin{algorithmic}[1]
\STATE \textbf{Input:} empirical support $\{\theta_m\}_{m=1}^M$, fixed $\{(\mu_j,\Sigma_j)\}_{j=1}^J$, init $\mathbf{w}^{(0)}$, stepsize $\eta$.
\STATE Precompute $\Phi_{mj} \leftarrow \phi_j(\theta_m)$ for all $m,j$.
\FOR{$t=0,1,2,\dots$}
  \STATE $d_j \leftarrow -\frac{1}{M}\sum_{m=1}^M \frac{\Phi_{mj}}{\sum_{\ell=1}^J  w_\ell^{(t)} \Phi_{m\ell}}$ \hfill (gradient component)
  \STATE $\tilde w_j \leftarrow  w_j^{(t)} \exp(-\eta d_j)$
  \STATE $ w_j^{(t+1)} \leftarrow \tilde w_j \Big/ \sum_{\ell=1}^J \tilde w_\ell$ \hfill (renormalize)
\ENDFOR
\STATE \textbf{Output:} refined weights $\mathbf{w}$; mixture $q_{\mathrm{LMA}}(\theta)=\sum_j  w_j \mathcal{N}(\theta;\mu_j,\Sigma_j)$.
\end{algorithmic}
\end{algorithm}

\paragraph{Practical notes.}
(i) Use a small ridge $\lambda$ (e.g. $10^{-2}$) to stabilize $\Sigma_j$.  
(ii) If a cluster size is small, reseed or borrow a few nearest neighbors before computing its moments.  
(iii) $J$ can be selected by simple heuristics (elbow plot) or held fixed across runs.  
(iv) Because WGMA optimizes only $\mathbf{w}$, the procedure is fast and robust, yet it captures multi-modality and skew present in the empirical cloud.

\section{Inference of the star-shaped density: settings and implementation}
\label{app:star_shape_implementations}

Here we provide the implementations details of the experiment conducted in Section.\ref{subsec:star_shaped_density}.

\paragraph{Target.}
Five-component equal-weight Gaussian mixture on a radius-$1.5$ circle; each arm is anisotropic with covariance
$\Sigma_0=\mathrm{diag}(1, 1/\text{skewness})$ with $\text{skewness}=100$, rotated by $2\pi/5$ between arms. The normalized log-density is
\(
\log p(\mathbf z)=\log\Big[\tfrac15\sum_{k=1}^5 \mathcal N(\mathbf z;\boldsymbol\mu_k,\Sigma_k)\Big].
\)
Reference set: $N_\text{ref}=2000$ i.i.d. draws from the ground-truth mixture.

\paragraph{Evaluation.}
We report time-to-samples and the unbiased squared Maximum Mean Discrepancy (MMD):
\[
\widehat{\mathrm{MMD}}^2(X,Y)  = 
\frac{1}{m(m-1)}  \sum_{i\neq j} k(x_i,x_j)
+
\frac{1}{n(n-1)}  \sum_{i\neq j} k(y_i,y_j)
-
\frac{2}{mn} \sum_{i,j} k(x_i,y_j)
\]
where \(k(x,y)=\sum_{\ell=1}^3 \exp(-\gamma_\ell \|x-y\|^2)\).
We set \(\gamma_0 = 1/(2 \mathrm{median}\{ \|x_i-x_j\|^2\})\) from the reference set (upper triangular pairs), and use \(\gamma_\ell\in\{0.5,1,2\}\gamma_0\).
Each method outputs exactly \(N_\text{draw}=2000\) samples.

\paragraph{EM-GMA (ata-free, population EM).}
We maximize \(\mathbb E_{p}[\log q_\theta(\mathbf Z)]\) for a 5-component GMM \(q_\theta\) via EM, where expectations are approximated by self-normalized importance sampling using a bank of \(M_\text{bank}=8192\) draws from the current \(q\); 80 EM sweeps; ridge \(10^{-5}\); initialization: weights uniform, means on a random ring of radius 2.0, covariances identity. After fitting, we draw \(N_\text{draw}\) samples by stratified sampling from the learned mixture.

\paragraph{MH.}
Random-walk Gaussian proposal with covariance \(0.2 I_2\); 1,000 burn-in steps then 2,000 kept draws.

\paragraph{NUTS (PyMC).}
We target the unnormalized mixture via a Potential that cancels a standard-normal base prior:
\(\text{posterior} \propto \exp(\log p(\mathbf z))\).
PyMC \texttt{NUTS} with \texttt{target\_accept}=0.9, \texttt{tune}=1000, 1 chain, 2,000 kept draws.

\paragraph{LMC.}
Overdamped Langevin updates with step size \(10^{-2}\), initialized at \(\mathbf 0\), for 2,000 steps; gradients via PyTorch \textit{autodiff} on the closed-form mixture.

\paragraph{SVGD (JAX).}
2,000 particles initialized i.i.d. standard normal; 1,000 SVGD steps with step size \(10^{-2}\); RBF kernel bandwidth via median heuristic; gradients from the analytic log-density.

\paragraph{MFVI-ADVI (PyMC).}
Mean-field Gaussian variational family on \(\mathbf z\in\mathbb R^2\) with standard Normal prior replaced by a Potential \(\log p(\mathbf z)\). Fit for 20,000 steps (\texttt{pm.fit(method='advi')}), then draw 2,000 posterior samples.

\paragraph{GM-ADVI (JAX; stabilized).}
Diagonal-covariance GMM variational family with \(K_\text{mix}=40\). Parameters: mixture logits, means, and \(\texttt{softplus}\)-parameterized scales with a floor \(10^{-3}\).
Initialization: means copied from the five arm means and tiled to 40 with small Gaussian jitter; logits zero; scales \(\approx 0.3\).
Objective: SIWAE with \(T_\text{siwae}=8\) per component. Regularizers: entropy on logits (weight \(10^{-3}\)) and quadratic scale penalty (weight \(10^{-4}\)). Optimization: \textit{Adam} \cite{kingma_adam_2017} with global-norm clipping 5.0, lr \(10^{-2}\), for 3,000 steps. Exactly 2,000 samples drawn from the final variational mixture.

\paragraph{EVI (particle-based energetic variational inference \cite{Wang2021EVI}).}
2,000 particles initialized i.i.d. \(\mathcal N(0,I)\). Per outer loop: Adagrad updates of the particle flow field
\(
\mathbf v(\theta)=\tfrac{1}{\tau}(\theta-\theta_0)-\nabla_\theta \log p(\theta) - \nabla_\theta \log k_\text{rbf}(\theta)
\)
with RBF kernel interactions (bandwidth via median heuristic), learning rate \(0.1\), \(\tau=0.1\); 20 inner iterations and 5 outer iterations.

\paragraph{Timing.}
We include warm-up/optimization time for NUTS/ADVI/GM-ADVI/EVI and the full iteration budget for SVGD/LMC/MH/EM-GMA. All methods emit exactly 2,000 samples used for \(\widehat{\mathrm{MMD}}^2\).

\end{document}